%% file: cellwise_coda_theory.tex
\documentclass[12pt]{article}
\pdfoutput=1 

\usepackage{amsmath,amssymb,amsthm,mathtools}
\usepackage{bm}

\usepackage{graphicx}
\usepackage{booktabs}
\usepackage{multirow}

\usepackage{natbib}
\usepackage{url}
\usepackage[colorlinks=true,citecolor=blue,linkcolor=blue,urlcolor=blue]{hyperref}
\usepackage{cleveref}

\usepackage{algorithm,algorithmicx,algpseudocode}

\usepackage{enumitem}
\usepackage{xcolor}

\newcommand{\blind}{0}
\newif\ifsubmit\submittrue

\makeatletter
\@ifundefined{keyword}{%
  \newenvironment{keyword}{\medskip\noindent\textit{Keywords:}\enspace\ignorespaces}{\par}%
  \newcommand{\sep}{;\enspace}%
}{}
\@ifundefined{AMS}{%
  \newenvironment{AMS}{\medskip\noindent\textit{MSC 2020:}\enspace\ignorespaces}{\par}%
}{}
\makeatother

\theoremstyle{plain}
\newtheorem{theorem}{Theorem}[section]
\newtheorem{proposition}[theorem]{Proposition}
\newtheorem{lemma}[theorem]{Lemma}
\newtheorem{corollary}[theorem]{Corollary}

\theoremstyle{definition}
\newtheorem{definition}[theorem]{Definition}
\newtheorem{example}[theorem]{Example}

\theoremstyle{remark}
\newtheorem{remark}[theorem]{Remark}

\newcommand{\R}{\mathbb{R}}

\newcommand{\SD}{\mathcal{S}^{D}}

\newcommand{\bv}{\mathbf{v}}

\newcommand{\AlgRef}{the companion paper of \citet{Templ2026cellPcaCoDa}}

\def\spacingset#1{\renewcommand{\baselinestretch}%
{#1}\small\normalsize}

\begin{document}
\spacingset{1}

\if1\blind
{
  \title{\bf Log-Ratio Propagation on the Simplex: \\
        A Theory of Cellwise Contamination for\\
        Compositional Data}
  \author{Author Information Withheld for Review}
  \date{}
  \maketitle
} \fi

\if0\blind
{
  \bigskip
  \bigskip
  \bigskip
  \begin{center}
    {\LARGE\bf Log-Ratio Propagation on the Simplex:\\[3pt]
     A Theory of Cellwise Contamination for\\[3pt]
     Compositional Data}
  \end{center}
  \medskip
  \begin{center}
    Matthias Templ\footnote{%
      School of Business, FHNW Fachhochschule Nordwestschweiz,
      Riggenbachstrasse 16, 4600 Olten, Switzerland.
      E-mail: \texttt{matthias.templ@fhnw.ch}.
      The author declares no conflicts of interest.  Claude Opus
      (Anthropic) was used for drafting assistance and copy-editing;
      all mathematical statements, proofs, simulations, and final
      text were produced and verified by the author, who takes full
      responsibility for the content.}\\
    School of Business, FHNW Fachhochschule Nordwestschweiz\\
    Olten, Switzerland
  \end{center}
  \medskip
} \fi

\begin{abstract}
Compositional data must be analysed through log-ratios: scale
invariance, the defining axiom of the field, leaves no alternative.
The centred log-ratio divides by the geometric mean of every part,
so a single contaminated component shifts every centred-log-ratio
coordinate at once, displacing the log-ratio vector by a fixed amount
that no choice of coordinates can reduce.  We develop a theory of cellwise contamination
on the simplex around this observation.  A scale-invariant
contamination model built from multiplicative perturbation combines
with a propagation theorem showing that corruption of a single raw
part induces a rank-one shift of the log-ratio vector, with
direction determined by the contrast matrix.  The resulting
perturbation pattern is \emph{not} equivalent to any independent
cellwise contamination model in log-ratio coordinates --- so
standard Euclidean cellwise methods applied to log-ratios are
ill-posed under the simplex contamination mechanism.  For
estimators whose Euclidean cellwise breakdown is witnessed by a
column-concentrated configuration --- a class including MCD, $S$-,
$\tau$-, and coordinate-wise $M$-estimators of location and scatter
--- the cellwise breakdown value on the simplex is reduced by the
factor $(D{-}1)/D$ relative to its Euclidean counterpart, a
reduction that is tight and that arises purely from the
normalisation mismatch between $nD$ raw cells and $n(D{-}1)$ ilr
cells.  The cellwise influence function for the variation matrix
carries a diagnostic fingerprint: contamination of a single part
inflates exactly one row and column, identifying the responsible
component.  These results form the theoretical foundation for
cellwise-robust methods on the simplex; a companion paper develops
a cellwise-robust PCA estimator that exploits the propagation
geometry and demonstrates it on simulated and geochemical data.
\end{abstract}

\begin{keyword}
cellwise outliers \sep compositional data \sep
breakdown value \sep influence function \sep
log-ratio propagation \sep simplex
\end{keyword}
\begin{AMS}
62F35, 62H25, 62G35
\end{AMS}

\section{Introduction}
\label{sec:introduction}

Consider a geochemical survey in which one element --- say lead ---
is measured too high in a single soil sample, perhaps through
instrument drift or a detection-limit artefact.  On the raw data
this is one corrupted cell.  After the log-ratio transformation that
compositional analysis requires, that single corrupted cell spreads:
the centred log-ratio (clr) measures each part against the geometric
mean of the full row, so one erroneous part shifts that geometric mean
and, with it, every clr coordinate at once.  The consequences at realistic dimensions are striking.  For a
composition of $D = 50$ elements, contaminating just $2\%$ of cells
at random means that $1-(1-0.02)^{50} \approx 64\%$ of all rows
contain at least one corrupted cell; at $5\%$ cellwise contamination
the figure exceeds $92\%$.  Classical rowwise-robust methods, which
require a majority of clean rows, are overwhelmed.  Yet the
contamination is genuinely sparse --- on average each row has only
one or two bad cells --- and should, in principle, be correctable.

This intellectual tension --- sparse contamination in the parts that
\emph{looks} dense after the log-ratio transformation required by
scale invariance --- is the starting point of the present paper.
Compositional data arise whenever measurements represent parts of a
whole: chemical concentrations in geochemistry, species abundances in
ecology, household expenditure shares in economics, and taxon
proportions in microbiome studies \citep{Gloor2017}.  The sample space for $D$-part
compositions is the simplex $\SD$, and the Aitchison geometry
\citep{Aitchison1986} provides the natural framework for their
analysis.  The centered log-ratio (clr) and isometric log-ratio (ilr)
transformations \citep{Egozcue2003} map compositions to coordinates in
$\R^{D-1}$, enabling the application of standard multivariate methods
after transformation \citep{Filzmoser2018}.

\paragraph{Principles of compositional analysis (and why closure is
not the villain).}
Two features of compositional data analysis (CoDa) shape everything
that follows and deserve to be made explicit before we describe the
contamination problem.

\emph{(a) Scale invariance.}  A compositional observation is an
equivalence class of positive vectors under positive scalar
multiplication: $\mathbf{x}$ and $\lambda\mathbf{x}$ ($\lambda > 0$)
carry identical information.  Any valid CoDa statistic must return the
same value on both.  It is a classical result \citep{Aitchison1986}
that scale-invariant functions of a positive vector factor through
log-ratios: every valid coordinate system is built from ratios
$x_{i}/x_{j}$ or their logarithms.  This equivalence-class structure
is the basis of the modern axiomatic development of the simplex sample
space \citep{EgozcuePawlowskyGlahn2019}.  Recent reappraisals and
proposed revisions of the log-ratio programme
\citep{Greenacre2023,ZhangSchluter2024} weigh log-ratio coordinates
against simpler or hybrid alternatives such as amalgamations and
pairwise ratios; the sparse-to-dense propagation studied here is a
property of \emph{any} log-ratio coordinate system and is orthogonal to
that debate.

\emph{(b) Closure is a representative, not a statistical operation.}
Rescaling $\mathbf{x}$ to sum to a fixed constant~$\kappa$ (``closure'')
selects a canonical representative of the equivalence class.  It
cannot by itself generate, amplify, or mask any CoDa-relevant
phenomenon, because every log-ratio is unchanged by it.  When
researchers refer to ``the effect of closure'' on robustness, they are
usually describing an effect of the log-ratio transformation --- which
is a genuine mathematical operation with a genuine inverse and a
genuine Jacobian --- but misattributing it to closure.

These two observations dictate how a contamination framework on the
simplex must be built.  Any perturbation of the raw parts is observed
through log-ratios, and log-ratios mix parts through the geometric-mean
denominator present in clr and ilr.  Concretely, $\operatorname{clr}_{k}
(\mathbf{x}) = \ln(x_{k}) - \tfrac{1}{D}\sum_{l}\ln(x_{l})$; if a
single component $x_{j}$ is multiplied by $\delta_{j}>0$, then
\begin{equation}
\label{eq:clr-shift-intro}
  \operatorname{clr}_{k}(\tilde{\mathbf{x}})
    - \operatorname{clr}_{k}(\mathbf{x})
  \;=\;
  \begin{cases}
    \bigl(1 - \tfrac{1}{D}\bigr)\ln\delta_{j}, & k = j,\\[2pt]
    -\tfrac{1}{D}\,\ln\delta_{j}, & k \neq j,
  \end{cases}
\end{equation}
so a single contaminated part perturbs every clr coordinate; carried
to an orthonormal (ilr) system the same shift becomes a displacement
of fixed magnitude $\sqrt{(D-1)/D}\,|\ln\delta_{j}|$ along a known
direction, which the basis may redistribute among coordinates but
cannot remove (Section~\ref{sec:propagation}).  The shift is rank-one
in the row index and carries a known direction --- a structural
feature that this paper will exploit.  The density-from-sparsity
phenomenon of the preceding paragraphs arises
through~\eqref{eq:clr-shift-intro}, is independent of whether closure
is applied, and --- for the cellwise problem, where many parts are
corrupted and their identities unknown, so that no fixed log-ratio
basis can localise the damage --- is \emph{intrinsic} to CoDa rather
than a removable coordinate artefact.  A cellwise-robust CoDa method must, accordingly,
reason about contamination in the parts but detect and downweight in
the log-ratio coordinates, using the specific propagation geometry
that the log-ratio transformation imposes.

Classical robust statistics assumes that entire observations (rows of
the data matrix) are either clean or outlying --- the
\emph{rowwise contamination model} of \citet{Huber1964}.
High-breakdown estimators such as the minimum covariance determinant
\citep{Rousseeuw1985} achieve breakdown values up to $1/2$ under this
model.  However, in modern high-dimensional datasets, contamination
often acts at the level of individual cells rather than entire rows:
instrument drift may corrupt a single chemical element, sensor failure
may affect one variable, or data entry errors may alter one field.
\citet{Alqallaf2009} formalised the \emph{cellwise contamination model}
in $\R^p$, and subsequent work has developed cellwise-robust methods
including the DetectDeviatingCells algorithm
\citep{Rousseeuw2018}, the cellwise minimum covariance determinant
\citep{Raymaekers2024}, and cellwise-robust PCA
\citep{Hubert2019}.

Several threads of existing work touch the intersection of robust
compositional analysis and cellwise robust statistics, but none
formalises the contamination mechanism at the level of raw parts.
Early robust principal components for compositional data
\citep{Filzmoser2009} treated rowwise outliers in ilr coordinates
without addressing the cellwise level.
\citet{FilzmoserGregorich2020} subsequently provided an overview of
outlier types in compositional data, distinguishing global, local, and
cellwise outliers, without developing a formal cellwise
contamination model for the simplex.  \citet{Walach2020} proposed a
detection method based on pairwise log-ratios but provided no
formal contamination model.  \citet{Stefelova2021} developed a
two-step cellwise-outlier-robust regression approach;
\citet{Rieser2023} proposed cellwise-robust covariance estimation
for compositions by applying a coordinate-wise cellwise method
directly in ilr coordinates.  Closest in spirit on the propagation
side is \citet{Mert2016}, who established --- by a linearised
error-propagation argument that, as they note, is equivalent to the
(familiar) linearity of the ilr map on the Aitchison geometry
\citep{Egozcue2003} --- that ilr coordinates are additive under
multiplicative perturbations of the parts, and verified by simulation
that imprecision in one part perturbs several ilr coordinates while the
first pivot coordinate stays nearly stable as the dimension grows.
Their analysis is confined to small, symmetric, random imprecision
rather than gross outliers and to the pivot coordinate system, treats
detection-limit censoring as a separate non-linear effect, and stops
short of the exact rank-one shift identity for an arbitrary contrast
matrix, of the non-equivalence with independent cellwise contamination,
and of the breakdown and influence-function analysis that the
gross-outlier regime requires; it is framed as a question of
measurement quality rather than of cellwise robustness.  We make these
connections precise in Remarks~\ref{rem:scope-censoring}
and~\ref{rem:mert-stability}.  No prior work
formalises cellwise contamination at the level of raw parts, proves
the resulting shift in log-ratio coordinates is non-equivalent to
independent cellwise contamination, or establishes the breakdown
value of compositional estimators under a cellwise model.  The
present paper addresses these questions: What is the correct
cellwise contamination model on the simplex?  How does
contamination of raw parts propagate through the log-ratio
transformation?  What are the theoretical limits of robust
estimation under cellwise contamination on the simplex?

Meanwhile, cellwise-robust methodology in unconstrained $\R^p$ has
advanced rapidly.  MacroPCA \citep{Hubert2019} handles missing values
alongside cellwise and rowwise outliers but relies on Euclidean
distances and does not exploit the log-ratio propagation structure
specific to compositions.  The recent
cellPCA of \citet{CentofantiHubertRousseeuw2024} extends cellwise
robustness to functional and high-dimensional data; the SCRAMBLE
method of \citet{SCRAMBLE2025} combines cellwise robustness with a
sparsity penalty in the PCA objective via Riemannian stochastic
gradient descent.  \citet{RaymaekersRousseeuw2024b} survey the
theoretical challenges of cellwise outliers, and
\citet{CentofantiHubertRousseeuw2025} propose cellRCov for
cellwise-robust covariance in high dimensions.
\citet{Agostinelli2015} developed a two-step estimator of
multivariate location and scatter under joint cellwise and casewise
contamination, building on the \citet{Alqallaf2009} cellwise model;
this groundwork is used in the breakdown analysis of
Section~\ref{sec:breakdown}.  Beyond PCA and covariance,
cellwise-robust estimation has developed in parallel for
\emph{precision} matrices and Gaussian graphical models
\citep{TarrMullerWeber2015,LohTan2018,KatayamaFujisawaDrton2018}, for
regression \citep{LeungZhangZamar2016}, and for outlier detection in
heterogeneous populations \citep{Zaccaria2024}, with
\citet{LohTan2018} providing consistency and breakdown analysis under
$\varepsilon$-cellwise corruption; \citet{PacreauLounici2023} treat
cellwise outliers jointly with missing values; and filter-then-estimate
schemes detect suspect cells --- for instance, by statistical depth ---
and treat them as missing before robust estimation
\citep{SaracenoAgostinelli2021}, the two-step compositional approach
of \citet{Stefelova2021} being in this spirit.  These Euclidean
methods do not exploit the rank-one propagation structure that arises
specifically on the simplex.  Closest in spirit is the pairwise log-ratio
detection of \citet{Walach2020}, which implicitly uses a related
geometric observation --- a single contaminated part affects only
those pairwise log-ratios involving it --- but does not formalise
the shift as a rank-one ilr perturbation with known direction.

A related natural question is whether Euclidean cellwise methods
(e.g., cellPCA of \citet{CentofantiHubertRousseeuw2024}, SCRAMBLE of
\citet{SCRAMBLE2025}, or the cellMCD-on-ilr surrogate of
\citet{Rieser2023}) can simply be applied to ilr-transformed data.
Each such method implicitly assumes independent cellwise
contamination in ilr coordinates.
Theorem~\ref{thm:non-equivalence} shows that this assumption is
violated under the simplex-level cellwise mechanism: raw-part
contamination induces a coupled rank-one shift along the direction
$\bv_j$ rather than independent per-coordinate perturbations.  The
consequence is efficiency loss and part-level misidentification
rather than outright failure --- such methods still produce useful
estimates in practice, as the geochemical application of
\citet{Rieser2023} demonstrates --- but the distinctive propagation geometry of the
simplex is not being exploited.  A cellwise-robust PCA estimator
that does exploit this geometry for targeted detection and
log-ratio-aware weighting is developed in a companion paper
\citep{Templ2026cellPcaCoDa}.

This paper addresses these questions.  Our contributions are as follows.

\begin{enumerate}[label=(\roman*),leftmargin=2em]
\item \textbf{Contamination model and propagation theory}
  (Sections~\ref{sec:contamination-model}--\ref{sec:propagation}).
  Corruption of a single raw part induces a rank-one shift of the ilr
  vector --- of basis-independent magnitude
  $\sqrt{(D-1)/D}\,|\ln\delta|$ and direction fixed by the contrast
  matrix (Theorem~\ref{thm:ilr-propagation}).  The underlying cellwise
  contamination model on~$\SD$ is multiplicative
  (Definition~\ref{def:cellwise-model}); scale invariance makes its
  simplex and $\mathbb{R}_{>0}^{D}$ formulations interchangeable for
  all log-ratio statistics.  The \emph{sparse-to-dense} geometry that
  motivates the paper is a direct consequence of this theorem.  Taken
  in isolation the rank-one shift is elementary --- it restates the
  linearity of the ilr map noted by \citet{Mert2016} --- but it is the
  structural lever for the non-equivalence, breakdown, and
  influence-function results below, which carry the paper's substantive
  novelty.

\item \textbf{Non-equivalence of cellwise contamination on the simplex
  and in coordinates} (Section~\ref{sec:propagation}).
  Cellwise contamination on the simplex is \emph{not} equivalent to
  independent cellwise contamination in log-ratio coordinates
  (Theorem~\ref{thm:non-equivalence}).  The closest prior method for
  cellwise-robust CoDa covariance \citep{Rieser2023} transforms to
  ilr and applies cellwise methods directly; our non-equivalence
  result shows this is ill-posed under the raw-part contamination
  model, providing the theoretical justification for
  simplex-specific methods.

\item \textbf{Breakdown and influence functions}
  (Sections~\ref{sec:breakdown}--\ref{sec:influence-functions}).
  The cellwise breakdown value on~$\SD$ is reduced by the factor
  $(D{-}1)/D$ relative to its Euclidean counterpart
  (Theorem~\ref{thm:breakdown-reduction}) for every estimator whose
  Euclidean bound is witnessed by a column-concentrated configuration
  --- a class that includes MCD, $S$-, $\tau$-, and coordinate-wise
  $M$-estimators of location and scatter.  The bound is tight
  (Theorem~\ref{thm:tightness}), and the reduction is a normalisation
  artefact: log-ratio propagation enables each raw cell to reproduce
  the damage of one ilr cell in the witnessing column, and the
  $nD$-versus-$n(D{-}1)$ cell-count mismatch supplies the factor.
  We introduce the cellwise influence function (CIF) on the simplex
  and show that the variation-matrix CIF has a
  \emph{one-row-one-column (rank-two) sparsity pattern}: contamination
  of a single part inflates exactly one row and one column of the CIF,
  identifying the responsible component.

\end{enumerate}

The methodological consequences of these theoretical results --- a
cellwise-robust PCA estimator exploiting the propagation geometry ---
are developed in a companion paper \citep{Templ2026cellPcaCoDa},
which reports simulation evidence and a geochemical application.
The present paper is concerned only with the theoretical foundations.

The remainder of the paper is organised as follows.
Section~\ref{sec:contamination-model} develops the cellwise
contamination model on the simplex.
Section~\ref{sec:propagation} establishes the propagation theorem
and the non-equivalence result.
Section~\ref{sec:breakdown} derives breakdown values, including the
$(D{-}1)/D$ reduction theorem, its tightness, and explicit bounds for
MCD and the cellwise-robust PCA estimator proposed in
\citet{Templ2026cellPcaCoDa}.
Section~\ref{sec:influence-functions} develops the cellwise
influence function.
Section~\ref{sec:discussion} concludes with a discussion of the
theory's assumptions and limitations.  Additional theoretical results (alr propagation, partial
breakdown analysis with maximum-bias curves, extended CIF
derivations) are provided in the Supplementary Material.

\input{theory/contamination_model}

\input{theory/propagation_theorem}

\input{theory/breakdown_value}

\input{theory/influence_functions}

\section{Discussion}
\label{sec:discussion}

One contaminated part displaces the whole log-ratio vector by a fixed
amount that no choice of coordinates can shrink --- only redistribute.  That irreducible
displacement is the difficulty of cellwise robustness on the simplex
and drives the results of this paper.  Scale invariance forces
analysis through log-ratios, whose geometric-mean denominator mixes
every part; the propagation theorem
(Theorem~\ref{thm:ilr-propagation}) shows the resulting shift is
rank-one with known direction.  In the centred log-ratio it taints all
$D$ coordinates at once; in an orthonormal coordinate system it may be
spread across all $D-1$ or, by a basis that pivots on the contaminated
part, concentrated in one (Remark~\ref{rem:spread-basis}) --- but under
cellwise contamination, with many parts corrupted and their identities
unknown, no fixed basis can localise it, so sparse contamination of raw
parts becomes dense and \emph{structured} contamination of the
coordinates.  The non-equivalence theorem
(Theorem~\ref{thm:non-equivalence}) makes the central obstruction
precise: the induced ilr perturbation is \emph{mutually singular}
with every independent cellwise model on $\mathbb{R}^{D-1}$, placing
probability one on the measure-zero set fixed by the deterministic
loading ratio $v_{j l_{1}}/v_{j l_{2}}$, so transforming to ilr and
applying a Euclidean cellwise method is ill-posed rather than merely
inefficient.  The cellwise influence function for the variation
matrix carries a diagnostic fingerprint: contamination of a single
part inflates exactly one row and one column of the CIF, identifying
the responsible component.

The breakdown analysis (Theorem~\ref{thm:breakdown-reduction})
delivers a sharp, tight characterisation of the simplex-vs-Euclidean
gap: for the class of estimators whose Euclidean cellwise
breakdown is witnessed by a column-concentrated configuration
(including MCD, $S$-, $\tau$-, and coordinate-wise
$M$-estimators of location and scatter), the cellwise breakdown
value on~$\SD$ is exactly $(D{-}1)/D$ of its Euclidean counterpart,
with equality achievable by explicit construction: the matching
lower bound holds because log-ratio propagation inflates neither the
per-column contamination count nor the number of affected rows --- the
two quantities that govern breakdown for coordinate-wise and
affine-equivariant estimators, respectively.  The reduction
is a cell-count normalisation: each raw cell reproduces the damage
of one ilr cell in the witnessing column via the rank-one shift
$\bv_{j}$, and the $nD$-versus-$n(D{-}1)$ denominator mismatch
supplies the factor.  Estimators attacked through
column-concentrated configurations cannot be protected by working
on the simplex --- but neither can they be broken worse.
The cellwise influence function analysis
(Section~\ref{sec:influence-functions}) supplies a principled basis
for weight functions in cellwise-robust estimators on the simplex:
the one-row-one-column CIF fingerprint identifies the responsible raw
part, and the contrast-matrix weights
\citep[cf.\ the log-ratio-aware weight mapping of][]{Templ2026cellPcaCoDa}
translate part-level flags into coordinate-level weights.  The same
analysis draws a sharp line between estimators
(Section~\ref{ssec:comparison}): classical ilr location, covariance,
and PCA have \emph{unbounded} cellwise influence, whereas the
bounded-$\psi$ functional is cellwise B-robust
(Theorem~\ref{thm:cif-robust-covariance}), so a single contaminated
part can move the classical estimate without limit but perturbs the
robust one only by a bounded amount.
A cellwise-robust PCA estimator
built on these theoretical foundations, together with simulation
evidence and a geochemical application, is developed in a companion
paper \citep{Templ2026cellPcaCoDa}.

\subsection*{Assumptions and limitations}

\begin{enumerate}[label=(\alph*)]
\item \textbf{Independence of contamination across parts.}
  The cellwise contamination model
  (Definition~\ref{def:cellwise-model}) assumes that the
  contamination indicators $B_1,\ldots,B_D$ are independent across
  parts.  The propagation theorem
  (Theorem~\ref{thm:ilr-propagation}) and the breakdown results
  (Theorem~\ref{thm:breakdown-reduction}) do not require
  independence --- they hold for any contamination pattern.
  Independence becomes load-bearing only when detection procedures
  treat contamination of different parts as producing
  distinguishable, separately testable signals along the directions
  $\mathbf{d}_j$; correlated contamination (chemically related
  elements in geochemistry, phylogenetically related taxa in
  microbiomes) is not covered by the detection theory developed here.

\item \textbf{The $G(\{1\}) < 1$ requirement.}
  Results involving detection or identification of contaminated
  cells rely on the contamination magnitude distribution~$G$
  satisfying $G(\{1\}) < 1$ --- that is, that contaminated cells are
  not trivially equal to the clean values.  The purely structural
  propagation and breakdown results do not require this; it enters
  only when detection is asked of an estimator.  Contamination with
  $|\ln\delta|$ below the sample-size-specific detection threshold
  is indistinguishable from sampling variation, an inherent
  limitation of any thresholding procedure.

\item \textbf{High-dimensional regime.}
  Our theoretical results assume $D$ fixed and $n \to \infty$.  For
  high-throughput compositional data such as microbiome studies
  with $D$ in the hundreds or thousands, several difficulties
  arise: the Bonferroni-corrected detection threshold grows as
  $\sqrt{2\ln D}$, reducing power; the weighted covariance matrix
  has $O(D^2)$ entries to estimate, requiring strong
  regularisation; and the $D$ propagation directions $\bv_j$ are
  increasingly non-orthogonal (the pairwise inner products
  $\bv_{j}^{\top}\bv_{k}$ decay only as $O(1/D)$), creating cross-talk
  that degrades detection specificity.  The present results are
  stated for fixed $D$ and do not, as given, transfer to the regime
  $D = D_n \to \infty$.

\item \textbf{Beyond the simplex.}
  The log-ratio propagation mechanism is specific to
  scale-invariant analyses on the simplex; the present results do
  not extend to other constrained spaces (correlation matrices,
  stochastic matrices, sum-to-zero contrasts).
\end{enumerate}


\section*{Data Availability Statement}
No empirical data are used in this paper.  The methodological
counterpart \citep{Templ2026cellPcaCoDa} contains the empirical
evaluation and associated data-availability statements.  The
theoretical results are self-contained; no data archive is
associated with this submission.

\section*{Disclosure Statement}
The author declares no conflicts of interest.  Claude Opus
(Anthropic) was used for drafting assistance on the abstract,
introduction, and portions of the theoretical exposition, and for
copy-editing throughout.  All mathematical statements, proofs,
simulations, and final text were produced and verified by the
author, who takes full responsibility for the content.

\bibliographystyle{agsm}
\bibliography{refs}

\end{document}

%% file: theory/contamination_model.tex

\section{Cellwise contamination on the simplex}
\label{sec:contamination-model}

Throughout this section we fix an integer $D \geq 2$ and write
$[D] = \{1,\ldots,D\}$.  Vectors are typeset in bold lowercase
($\mathbf{x}$), matrices in bold uppercase ($\mathbf{V}$), and the
positive reals as $\mathbb{R}_{>0}$.

\subsection{The simplex and log-ratio transformations}
\label{ssec:simplex-logratio}

\begin{definition}[Simplex and closure]
\label{def:simplex}
The \emph{$D$-part simplex} is the sample space of compositional data,
\begin{equation}
\label{eq:simplex}
  \mathcal{S}^{D}
  \;=\;
  \bigl\{\,\mathbf{x} = (x_{1},\ldots,x_{D})^{\top} \in \mathbb{R}_{>0}^{D}
         \;:\; x_{1}+\cdots+x_{D} = \kappa
  \,\bigr\},
\end{equation}
where $\kappa > 0$ is a fixed constant (typically $\kappa = 1$ for
proportions or $\kappa = 10^{6}$ for parts per million).  The
\emph{closure operator} $\mathcal{C}\colon \mathbb{R}_{>0}^{D} \to
\mathcal{S}^{D}$ is defined by
\begin{equation}
\label{eq:closure}
  \mathcal{C}(\mathbf{z})
  \;=\;
  \Bigl(
    \frac{\kappa\, z_{1}}{\sum_{k=1}^{D} z_{k}},\;\ldots,\;
    \frac{\kappa\, z_{D}}{\sum_{k=1}^{D} z_{k}}
  \Bigr)^{\!\top}.
\end{equation}
\end{definition}

The simplex, equipped with the perturbation operation
$\mathbf{x} \oplus \mathbf{y} = \mathcal{C}(x_{1}y_{1},\ldots,x_{D}y_{D})$
and powering $\alpha \odot \mathbf{x} = \mathcal{C}(x_{1}^{\alpha},\ldots,x_{D}^{\alpha})$,
forms a $(D{-}1)$-dimensional real vector space known as the
\emph{Aitchison geometry} \citep{Aitchison1986,PawlowskyGlahn2015}.
The following two isometric isomorphisms map $(\mathcal{S}^{D},\oplus,\odot)$
to $(\mathbb{R}^{D-1},+,\cdot\,)$.

\begin{definition}[Centered log-ratio transformation]
\label{def:clr}
For $\mathbf{x} \in \mathcal{S}^{D}$, the \emph{centered log-ratio}
(clr) transformation is the mapping
$\operatorname{clr}\colon \mathcal{S}^{D} \to \mathbb{R}^{D}$ given by
\begin{equation}
\label{eq:clr}
  \operatorname{clr}(\mathbf{x})
  \;=\;
  \Bigl(
    \ln\frac{x_{1}}{g(\mathbf{x})},\;\ldots,\;
    \ln\frac{x_{D}}{g(\mathbf{x})}
  \Bigr)^{\!\top},
\end{equation}
where $g(\mathbf{x}) = \bigl(\prod_{k=1}^{D} x_{k}\bigr)^{1/D}$ is
the geometric mean.  By construction,
$\sum_{j=1}^{D}\operatorname{clr}_{j}(\mathbf{x}) = 0$, so the image
lies in the hyperplane
$\mathcal{H}_{0} = \{\mathbf{u}\in\mathbb{R}^{D} : \mathbf{1}_{D}^{\top}\mathbf{u}=0\}$.
\end{definition}

\begin{definition}[Isometric log-ratio transformation]
\label{def:ilr}
Let $\mathbf{V}\in\mathbb{R}^{D\times(D-1)}$ be a \emph{contrast matrix}
satisfying
\begin{equation}
\label{eq:contrast-matrix}
  \mathbf{V}^{\top}\mathbf{V} = \mathbf{I}_{D-1},
  \qquad
  \mathbf{V}\mathbf{V}^{\top}
    = \mathbf{I}_{D} - \tfrac{1}{D}\,\mathbf{1}_{D}\mathbf{1}_{D}^{\top}
    \eqqcolon \mathbf{H}_{D}.
\end{equation}
The columns of $\mathbf{V}$ form an orthonormal basis of
$\mathcal{H}_{0}$.  The \emph{isometric log-ratio} (ilr) transformation
is
\begin{equation}
\label{eq:ilr}
  \operatorname{ilr}(\mathbf{x})
  \;=\;
  \mathbf{V}^{\top}\operatorname{clr}(\mathbf{x})
  \;\in\;\mathbb{R}^{D-1}.
\end{equation}
Since $\mathbf{V}$ has orthonormal columns, the map
$\operatorname{ilr}\colon(\mathcal{S}^{D},d_{A})\to(\mathbb{R}^{D-1},\|\cdot\|_{2})$
is an isometry, where $d_{A}$ denotes the Aitchison distance
\begin{equation}
\label{eq:aitchison-distance}
  d_{A}(\mathbf{x},\mathbf{y})
  \;=\;
  \Bigl[\,
    \sum_{j=1}^{D}
      \Bigl(\ln\frac{x_{j}}{g(\mathbf{x})}
            - \ln\frac{y_{j}}{g(\mathbf{y})}\Bigr)^{\!2}
  \,\Bigr]^{1/2}.
\end{equation}
\end{definition}

\begin{remark}[Choice of contrast matrix]
\label{rem:contrast-matrix}
A common choice is the Helmert sub-matrix with rows
\begin{equation}
\label{eq:helmert-basis}
  v_{jl}
  \;=\;
  \begin{cases}
    \displaystyle \frac{1}{\sqrt{l(l+1)}} & \text{if } j \leq l, \\[6pt]
    \displaystyle -\frac{l}{\sqrt{l(l+1)}} & \text{if } j = l+1, \\[4pt]
    0 & \text{if } j > l+1,
  \end{cases}
  \qquad l = 1,\ldots,D{-}1.
\end{equation}
All theoretical results below hold for an arbitrary contrast matrix
satisfying~\eqref{eq:contrast-matrix}.  The relationship between clr
and ilr coordinates is invertible on $\mathcal{H}_{0}$:
$\operatorname{clr}(\mathbf{x}) = \mathbf{V}\operatorname{ilr}(\mathbf{x})$
and
$\operatorname{ilr}(\mathbf{x}) = \mathbf{V}^{\top}\operatorname{clr}(\mathbf{x})$.
\end{remark}

\subsection{Cellwise contamination model}
\label{ssec:cellwise-model}

Classical robust statistics operates under the assumption that entire
observations (rows) are either clean or outlying.  In many applications
involving compositional data --- geochemistry, metabolomics,
microbiome studies --- individual parts of a composition may be
corrupted while the remaining parts are correctly measured.  This
motivates a cellwise contamination framework on~$\mathcal{S}^{D}$.

\begin{definition}[Cellwise contamination on $\mathcal{S}^{D}$]
\label{def:cellwise-model}
Let $\mathbf{x} = (x_{1},\ldots,x_{D})^{\top} \in \mathcal{S}^{D}$ be
a clean composition drawn from a distribution~$F$ on $\mathcal{S}^{D}$.
The \emph{cellwise contamination model with parameters
$(\varepsilon, G)$} generates an observed composition
$\tilde{\mathbf{x}} \in \mathcal{S}^{D}$ as follows.
\begin{enumerate}
\item[\textup{(C1)}]
  For each part $j \in [D]$, let $B_{j} \sim \operatorname{Bernoulli}(\varepsilon)$
  be independent indicator variables, with contamination probability
  $\varepsilon \in [0,1)$.
\item[\textup{(C2)}]
  For each $j$ with $B_{j}=1$, draw a multiplicative contamination
  factor $\delta_{j} \in \mathbb{R}_{>0}$ from a distribution~$G$
  on $\mathbb{R}_{>0}$ independently of all other quantities.
  Set $\delta_{j}=1$ when $B_{j}=0$.
\item[\textup{(C3)}]
  Form the contaminated raw vector
  $\tilde{\mathbf{x}}^{\,\mathrm{raw}}
    = (x_{1}\delta_{1},\;\ldots,\;x_{D}\delta_{D})^{\top}
    \in \mathbb{R}_{>0}^{D}$.
\item[\textup{(C4)}]
  The observed composition is obtained by re-closure:
  \begin{equation}
  \label{eq:observed-composition}
    \tilde{\mathbf{x}}
    \;=\;
    \mathcal{C}\bigl(\tilde{\mathbf{x}}^{\,\mathrm{raw}}\bigr)
    \;=\;
    \mathcal{C}(x_{1}\delta_{1},\;\ldots,\;x_{D}\delta_{D}).
  \end{equation}
\end{enumerate}
\end{definition}

The use of multiplicative contamination in step~(C2) is the natural
choice on the simplex: multiplication in $\mathbb{R}_{>0}^{D}$ followed
by closure corresponds to perturbation in the Aitchison geometry.
In~(C3)--(C4) the observed composition is equivalently expressed as the
Aitchison perturbation
$\tilde{\mathbf{x}} = \mathbf{x} \oplus \mathcal{C}(\boldsymbol{\delta})$,
where $\boldsymbol{\delta} = (\delta_{1},\ldots,\delta_{D})^{\top}$.

\begin{remark}[Closure is not load-bearing]
\label{rem:closure-not-load-bearing}
Compositional data are scale-invariance equivalence classes: any CoDa
statistic returns the same value on $\mathbf{x}$ and $\lambda
\mathbf{x}$ for $\lambda > 0$, because every scale-invariant function
of a positive vector factors through log-ratios
\citep{Aitchison1986,EgozcuePawlowskyGlahn2019}.  Re-closure in
step~(C4) is therefore a choice
of representative, not a statistical operation.  Equivalently, the
model can be stated on $\mathbb{R}_{>0}^{D}$ by omitting~(C4): every
log-ratio, and therefore every clr and ilr coordinate, is identical
in the two formulations.  All propagation, breakdown, and influence-
function results below are statements about log-ratio coordinates and
are unaffected by whether closure is applied.  The propagation is
driven by the log-ratio transformation's geometric-mean denominator,
not by closure.
\end{remark}

\begin{remark}[Scope: random contamination versus systematic censoring]
\label{rem:scope-censoring}
The model of Definition~\ref{def:cellwise-model} describes
\emph{random} corruption: each part is hit independently with
probability~$\varepsilon$ and, when hit, is multiplied by a factor
drawn from~$G$ irrespective of its value.  It does not cover
\emph{systematic} data limitations such as values reported below a
detection limit, which are conventionally replaced by a fixed fraction
of that limit and therefore push the small values of a part downward in
a deterministic, value-dependent way
\citep{MartinFernandez2003,MartinFernandez2012}.  Such rounded-zero and
left-censoring problems have a substantial literature of their own and
call for imputation rather than downweighting; \citet{Mert2016} treat
them separately, by simulation, precisely because the linear
propagation theory of Section~\ref{sec:propagation} does not apply to a
thresholding operation.  They are outside the scope of the present
paper.
\end{remark}

\subsection{Log-ratio propagation of cellwise contamination}
\label{ssec:logratio-propagation}

The fundamental complication of cellwise contamination on the simplex,
compared with the unconstrained setting of $\mathbb{R}^{D-1}$, is
\emph{not} the re-closure step~\eqref{eq:observed-composition} ---
closure is a choice of representative for a scale-invariance
equivalence class and has no effect on any log-ratio.  The
complication is that CoDa analysis requires scale-invariant
statistics, which factor through log-ratios; and the centred log-ratio
measures every part against the geometric mean of the whole row, which
involves every part.  A single contaminated part therefore shifts that
geometric mean by $\delta_{j}^{1/D}$ and propagates into every clr
coordinate --- and, with a basis-dependent spread, into the ilr
coordinates.  The following proposition records the effect at the
level of the closed representative; the subsequent corollary exposes
the log-ratio shift, which is the statistically meaningful object.

\begin{proposition}[Single-cell contamination and log-ratio propagation]
\label{prop:logratio-propagation}
Let $\mathbf{x} \in \mathcal{S}^{D}$ with sum
$S = \sum_{k=1}^{D}x_{k} = \kappa$, and suppose that exactly one
part~$j$ is contaminated by factor $\delta_{j} > 0$, with
$\delta_{k}=1$ for all $k \neq j$.  Then the observed composition is
\begin{equation}
\label{eq:single-cell-contaminated}
  \tilde{\mathbf{x}}
  \;=\;
  \frac{\kappa}{\kappa + x_{j}(\delta_{j}-1)}\,
  \bigl(x_{1},\;\ldots,\;x_{j-1},\;
        x_{j}\,\delta_{j},\;
        x_{j+1},\;\ldots,\;x_{D}\bigr)^{\!\top}.
\end{equation}
In particular, for every $k \neq j$,
\begin{equation}
\label{eq:non-contaminated-perturbation}
  \tilde{x}_{k}
  \;=\;
  \frac{\kappa}{\kappa + x_{j}(\delta_{j}-1)}\;x_{k}
  \;=\;
  \frac{x_{k}}{1 + (x_{j}/\kappa)(\delta_{j}-1)},
\end{equation}
and the contaminated part itself becomes
\begin{equation}
\label{eq:contaminated-part}
  \tilde{x}_{j}
  \;=\;
  \frac{x_{j}\,\delta_{j}}{1 + (x_{j}/\kappa)(\delta_{j}-1)}.
\end{equation}
\end{proposition}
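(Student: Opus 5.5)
The proof is a direct computation from Definition~\ref{def:cellwise-model}, specialised to one contaminated part. Steps (C1)--(C3) with $\delta_{k}=1$ for $k\neq j$ give the contaminated raw vector
\[
\tilde{\mathbf{x}}^{\,\mathrm{raw}}=(x_{1},\ldots,x_{j-1},x_{j}\delta_{j},x_{j+1},\ldots,x_{D})^{\top}.
\]
This vector lies in $\mathbb{R}_{>0}^{D}$ because every $x_{k}>0$ and $\delta_{j}>0$.

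The first step is to compute the denominator of the closure operator~\eqref{eq:closure}. Since $S=\sum_{k}x_{k}=\kappa$, the raw sum is
\[
\sum_{k\neq j}x_{k}+x_{j}\delta_{j}=\kappa-x_{j}+x_{j}\delta_{j}=\kappa+x_{j}(\delta_{j}-1).
\]
This quantity is strictly positive, being a sum of positive reals, so the closure is well defined.

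Applying (C4), i.e.\ $\mathcal{C}$, multiplies every entry of $\tilde{\mathbf{x}}^{\,\mathrm{raw}}$ by $\kappa/(\kappa+x_{j}(\delta_{j}-1))$. This gives~\eqref{eq:single-cell-contaminated} immediately. Reading off the $k$-th entry for $k\neq j$ gives the first form of~\eqref{eq:non-contaminated-perturbation}. Dividing numerator and denominator by $\kappa$ gives the second form, and the same division on the $j$-th entry gives~\eqref{eq:contaminated-part}.

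No step is a genuine obstacle; the argument is elementary. The only point needing care is that $S=\kappa$ is used to simplify the raw sum, and that positivity of the denominator $\kappa+x_{j}(\delta_{j}-1)$ is verified even when $\delta_{j}<1$. I would note explicitly that it equals $\kappa-x_{j}+x_{j}\delta_{j}$, where $\kappa-x_{j}=\sum_{k\neq j}x_{k}>0$ for $D\ge 2$. This also confirms that $\tilde{\mathbf{x}}\in\mathcal{S}^{D}$ as required.
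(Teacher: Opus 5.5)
Your proposal is correct and follows the paper's proof essentially line by line: you compute the raw sum $\kappa + x_{j}(\delta_{j}-1)$ using $S=\kappa$, apply the closure operator, and divide numerator and denominator by $\kappa$. Your explicit check that the denominator is positive, by writing it as $\sum_{k\neq j}x_{k} + x_{j}\delta_{j}$, is a small addition that the paper leaves implicit.
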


\begin{proof}
The raw contaminated vector is
$\tilde{\mathbf{x}}^{\,\mathrm{raw}} = (x_{1},\ldots,x_{j}\delta_{j},\ldots,x_{D})^{\top}$
with component sum
\[
  \tilde{S}
  \;=\;
  \sum_{k \neq j} x_{k} \;+\; x_{j}\delta_{j}
  \;=\;
  \kappa - x_{j} + x_{j}\delta_{j}
  \;=\;
  \kappa + x_{j}(\delta_{j}-1).
\]
Applying the closure operator~\eqref{eq:closure},
\[
  \tilde{x}_{k}
  = \frac{\kappa\,\tilde{x}_{k}^{\,\mathrm{raw}}}{\tilde{S}}
  = \frac{\kappa\, x_{k}}{\kappa + x_{j}(\delta_{j}-1)}
  \qquad\text{for } k \neq j,
\]
and
\[
  \tilde{x}_{j}
  = \frac{\kappa\, x_{j}\delta_{j}}{\kappa + x_{j}(\delta_{j}-1)}.
\]
Dividing numerator and denominator by $\kappa$ yields the equivalent
forms in~\eqref{eq:non-contaminated-perturbation}
and~\eqref{eq:contaminated-part}.
\end{proof}

\begin{corollary}[Contamination in clr coordinates]
\label{cor:clr-propagation}
Under the hypotheses of Proposition~\ref{prop:logratio-propagation}, the
contamination-induced shift in clr coordinates is
\begin{equation}
\label{eq:clr-shift}
  \operatorname{clr}(\tilde{\mathbf{x}}) - \operatorname{clr}(\mathbf{x})
  \;=\;
  \operatorname{clr}\!\bigl(\mathcal{C}(\boldsymbol{\delta})\bigr)
  \;=\;
  \mathbf{H}_{D}\,\ln\boldsymbol{\delta},
\end{equation}
where $\ln\boldsymbol{\delta} = (\ln\delta_{1},\ldots,\ln\delta_{D})^{\top}$
and $\mathbf{H}_{D} = \mathbf{I}_{D} - D^{-1}\mathbf{1}_{D}\mathbf{1}_{D}^{\top}$
is the centering matrix.  With only part~$j$ contaminated
($\delta_{k}=1$ for $k\neq j$),
\begin{equation}
\label{eq:clr-shift-single}
  \operatorname{clr}_{k}(\tilde{\mathbf{x}}) - \operatorname{clr}_{k}(\mathbf{x})
  \;=\;
  \begin{cases}
    \displaystyle \bigl(1 - \tfrac{1}{D}\bigr)\ln\delta_{j}
      & \text{if } k = j, \\[6pt]
    \displaystyle -\,\tfrac{1}{D}\,\ln\delta_{j}
      & \text{if } k \neq j.
  \end{cases}
\end{equation}
Consequently, contamination of a single part shifts \emph{every} clr
coordinate, and the shift in ilr coordinates is
\begin{equation}
\label{eq:ilr-shift-single}
  \operatorname{ilr}(\tilde{\mathbf{x}}) - \operatorname{ilr}(\mathbf{x})
  \;=\;
  (\ln\delta_{j})\;\mathbf{v}_{j},
\end{equation}
where $\mathbf{v}_{j}^{\top}$ is the $j$th row of the
contrast matrix~$\mathbf{V}$ (equivalently,
$\mathbf{v}_{j} = \mathbf{V}^{\top}\mathbf{e}_{j}\in\mathbb{R}^{D-1}$).
\end{corollary}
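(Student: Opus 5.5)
The proof is a direct computation that goes through the logarithm of the observed composition. The closure factor is deliberately never computed: it contributes a common additive constant to every log-part, and the centring matrix annihilates such constants.

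\emph{Step 1 (general shift).} By (C4), $\tilde{\mathbf{x}} = c\,(x_1\delta_1,\ldots,x_D\delta_D)^\top$ for some scalar $c>0$; Proposition~\ref{prop:logratio-propagation} gives $c$ explicitly, but its value is irrelevant. Taking logarithms componentwise,
\[
\ln\tilde{\mathbf{x}} = \ln\mathbf{x} + \ln\boldsymbol{\delta} + (\ln c)\,\mathbf{1}_D .
\]
From Definition~\ref{def:clr}, $\operatorname{clr}(\mathbf{z}) = \ln\mathbf{z} - D^{-1}\mathbf{1}_D\mathbf{1}_D^\top\ln\mathbf{z} = \mathbf{H}_D\ln\mathbf{z}$ for any positive $\mathbf{z}$. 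Since $\mathbf{H}_D\mathbf{1}_D = \mathbf{0}$, applying $\mathbf{H}_D$ to the display gives
\[
\operatorname{clr}(\tilde{\mathbf{x}}) - \operatorname{clr}(\mathbf{x}) = \mathbf{H}_D\ln\boldsymbol{\delta}.
\]
The same reasoning applied to $\mathcal{C}(\boldsymbol{\delta}) = c'\boldsymbol{\delta}$ gives $\operatorname{clr}(\mathcal{C}(\boldsymbol{\delta})) = \mathbf{H}_D\ln\boldsymbol{\delta}$. This establishes both equalities in \eqref{eq:clr-shift}. Equivalently, this is the clr linearity over the perturbation $\tilde{\mathbf{x}} = \mathbf{x}\oplus\mathcal{C}(\boldsymbol{\delta})$.

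\emph{Step 2 (single part).} Now take $\ln\boldsymbol{\delta} = (\ln\delta_j)\,\mathbf{e}_j$. Then
\[
\mathbf{H}_D\ln\boldsymbol{\delta} = (\ln\delta_j)\bigl(\mathbf{e}_j - D^{-1}\mathbf{1}_D\bigr),
\]
whose $k$th entry is $(1-1/D)\ln\delta_j$ when $k=j$ and $-(1/D)\ln\delta_j$ otherwise. This is \eqref{eq:clr-shift-single}. Every entry is nonzero whenever $\delta_j\neq1$, because $D\ge2$, so every clr coordinate is shifted.

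\emph{Step 3 (ilr).} By \eqref{eq:ilr}, the ilr shift equals
\[
\mathbf{V}^\top\mathbf{H}_D\ln\boldsymbol{\delta} = (\ln\delta_j)\,\mathbf{V}^\top\mathbf{e}_j - (\ln\delta_j)\,D^{-1}\mathbf{V}^\top\mathbf{1}_D .
\]
The only point needing a genuine argument is that $\mathbf{V}^\top\mathbf{1}_D = \mathbf{0}$. This follows from \eqref{eq:contrast-matrix}: $\mathbf{V}^\top\mathbf{V}\mathbf{V}^\top = \mathbf{V}^\top$ together with $\mathbf{V}\mathbf{V}^\top = \mathbf{H}_D$ gives $\mathbf{V}^\top\mathbf{H}_D = \mathbf{V}^\top$. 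The columns of $\mathbf{V}$ therefore lie in $\mathcal{H}_0$, so $\mathbf{V}^\top\mathbf{1}_D = \mathbf{0}$. The shift then reduces to $(\ln\delta_j)\,\mathbf{V}^\top\mathbf{e}_j = (\ln\delta_j)\,\mathbf{v}_j$, which is \eqref{eq:ilr-shift-single}.

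No real obstacle is expected. The only care needed is to use the centring identity rather than the explicit closure constant of Proposition~\ref{prop:logratio-propagation}. That choice keeps the argument free of $\kappa$ and confirms Remark~\ref{rem:closure-not-load-bearing}.
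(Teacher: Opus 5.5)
Your proof is correct and follows essentially the same route as the paper: reduce the clr shift to $\mathbf{H}_{D}\ln\boldsymbol{\delta}$, specialise to $\ln\boldsymbol{\delta}=(\ln\delta_{j})\,\mathbf{e}_{j}$, and pass to ilr via $\mathbf{V}^{\top}\mathbf{H}_{D}=\mathbf{V}^{\top}$. The differences are cosmetic. You verify clr additivity over $\oplus$ directly from $\mathbf{H}_{D}\mathbf{1}_{D}=\mathbf{0}$ rather than citing the isomorphism $(\mathcal{S}^{D},\oplus)\to(\mathcal{H}_{0},+)$, and you derive $\mathbf{V}^{\top}\mathbf{H}_{D}=\mathbf{V}^{\top}$ from \eqref{eq:contrast-matrix} instead of taking it as given.
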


\begin{proof}
Since $\tilde{\mathbf{x}} = \mathbf{x} \oplus \mathcal{C}(\boldsymbol{\delta})$
and the clr transformation is an isomorphism of
$(\mathcal{S}^{D},\oplus)$ onto $(\mathcal{H}_{0},+)$, we have
$\operatorname{clr}(\tilde{\mathbf{x}})
  = \operatorname{clr}(\mathbf{x})
    + \operatorname{clr}\!\bigl(\mathcal{C}(\boldsymbol{\delta})\bigr)$.
Now,
\[
  \operatorname{clr}_{k}\!\bigl(\mathcal{C}(\boldsymbol{\delta})\bigr)
  = \ln\delta_{k}
    - \frac{1}{D}\sum_{l=1}^{D}\ln\delta_{l}
  = \bigl(\mathbf{H}_{D}\,\ln\boldsymbol{\delta}\bigr)_{k},
\]
which establishes~\eqref{eq:clr-shift}.  Specialising to
$\ln\boldsymbol{\delta} = (\ln\delta_{j})\,\mathbf{e}_{j}$
gives~\eqref{eq:clr-shift-single}.  Pre-multiplying by
$\mathbf{V}^{\top}$ and using
$\mathbf{V}^{\top}\mathbf{H}_{D} = \mathbf{V}^{\top}$
yields~\eqref{eq:ilr-shift-single}, since
$\mathbf{V}^{\top}\mathbf{e}_{j} = \mathbf{v}_{j}$.
\end{proof}

\begin{example}[Worked example: $D=4$ geochemical composition]
\label{ex:D4-propagation}
Consider a four-part composition representing major oxides in a rock
sample:
$\mathbf{x} = (0.40,\; 0.30,\; 0.20,\; 0.10)^{\top}$
(SiO$_2$, Al$_2$O$_3$, Fe$_2$O$_3$, MgO), with $\kappa = 1$.
Suppose that part~$2$ (Al$_2$O$_3$) is contaminated by the
multiplicative factor $\delta_2 = 3$ (e.g., due to instrumental
drift), while all other parts are correctly measured.

\emph{Step~1: Raw contaminated vector.}
\[
  \tilde{\mathbf{x}}^{\,\mathrm{raw}}
    = (0.40,\; 0.30 \times 3,\; 0.20,\; 0.10)
    = (0.40,\; 0.90,\; 0.20,\; 0.10),
\]
with component sum $\tilde{S} = 1.60$.

\emph{Step~2: Re-closure.}
\[
  \tilde{\mathbf{x}}
    = \mathcal{C}(\tilde{\mathbf{x}}^{\,\mathrm{raw}})
    = (0.250,\; 0.5625,\; 0.125,\; 0.0625).
\]
Although only part~$2$ was contaminated, every part has changed:
SiO$_2$ dropped from $0.40$ to $0.25$, Fe$_2$O$_3$ from $0.20$ to
$0.125$, and MgO from $0.10$ to $0.0625$.

\emph{Step~3: Clr shift.}
By~\eqref{eq:clr-shift-single} with $D = 4$ and
$\ln\delta_2 = \ln 3 \approx 1.099$, the clr shift is
\[
  \operatorname{clr}(\tilde{\mathbf{x}}) - \operatorname{clr}(\mathbf{x})
  \;=\;
  \ln 3 \cdot
  \bigl(-\tfrac{1}{4},\;\tfrac{3}{4},\;-\tfrac{1}{4},\;-\tfrac{1}{4}\bigr)^{\!\top}
  \;\approx\;
  (-0.275,\;+0.824,\;-0.275,\;-0.275)^{\top}.
\]
The contaminated part~$2$ receives a large positive shift while
every other part receives the same negative shift of magnitude
$\ln 3 / 4 \approx 0.275$, matching the
prediction~\eqref{eq:clr-shift-single}.  In ilr coordinates the shift
is the rank-$1$ vector
$(\ln 3)\,\mathbf{v}_{2} \in \mathbb{R}^{3}$, confirming
Corollary~\ref{cor:clr-propagation}.
\end{example}

\begin{remark}[Cellwise on the simplex vs.\ cellwise in coordinates]
\label{rem:simplex-vs-coordinates}
In unconstrained $\mathbb{R}^{p}$, cellwise contamination of entry~$j$
leaves all other entries unaffected; the contamination acts on a single
coordinate axis.  On the simplex, by contrast,
Proposition~\ref{prop:logratio-propagation} and
Corollary~\ref{cor:clr-propagation} show that contamination of a single
raw part alters \emph{every} component of the composition and \emph{every}
log-ratio coordinate.  The magnitude of the propagation effect on the
uncontaminated parts depends on two quantities:
\begin{enumerate}
\item[\textup{(i)}]
  the relative abundance $x_{j}/\kappa$ of the contaminated part: the
  larger the original proportion, the greater the distortion of
  uncontaminated parts (cf.~\eqref{eq:non-contaminated-perturbation});
\item[\textup{(ii)}]
  the dimension~$D$: in clr coordinates the leakage to each uncontaminated
  coordinate is of order $D^{-1}\ln\delta_{j}$
  (cf.~\eqref{eq:clr-shift-single}), which decreases per coordinate
  but accumulates across the $(D{-}1)$ affected coordinates.
\end{enumerate}
This ``log-ratio propagation'' is the fundamental reason that cellwise
robust methods developed for unconstrained data cannot be applied
na\"{\i}vely to compositions: the pattern of outlying cells in
$\mathbb{R}^{D-1}$ coordinates is dense even when the contamination
is sparse on the raw parts.
\end{remark}

\begin{proposition}[Multi-cell contamination shift]
\label{prop:multi-cell}
Let $J \subseteq [D]$ denote the (random) set of contaminated parts,
with $|J| \sim \operatorname{Binomial}(D,\varepsilon)$ under
Definition~\ref{def:cellwise-model}.  The contamination-induced shift in
ilr coordinates is
\begin{equation}
\label{eq:ilr-shift-multi}
  \operatorname{ilr}(\tilde{\mathbf{x}}) - \operatorname{ilr}(\mathbf{x})
  \;=\;
  \sum_{j \in J} (\ln\delta_{j})\;\mathbf{v}_{j}
  \;=\;
  \mathbf{V}^{\top} \mathbf{H}_{D}\,
  \operatorname{diag}(\mathbf{b})\,\ln\boldsymbol{\delta},
\end{equation}
where $\mathbf{b} = (B_{1},\ldots,B_{D})^{\top}$ is the vector of
contamination indicators.  In particular, with $J$ and
$\boldsymbol{\delta}$ independent, the covariance of the shift
conditional on $J$ satisfies
\begin{equation}
\label{eq:shift-covariance}
  \operatorname{Var}\!\bigl[\operatorname{ilr}(\tilde{\mathbf{x}})
    - \operatorname{ilr}(\mathbf{x})
    \,\big|\, J\bigr]
  \;=\;
  \sigma_{\ln\delta}^{2}\;
  \sum_{j\in J}\mathbf{v}_{j}\mathbf{v}_{j}^{\top},
\end{equation}
where $\sigma_{\ln\delta}^{2} = \operatorname{Var}_{G}[\ln\delta_{j}]$,
provided $G$ has finite second moment on the log scale.  When the
contaminated parts are selected uniformly at random and $|J|=m$, the
expected shift covariance is
\begin{equation}
\label{eq:expected-shift-covariance}
  \mathbb{E}\biggl[
    \frac{1}{m}\sum_{j\in J}\mathbf{v}_{j}\mathbf{v}_{j}^{\top}
  \biggr]
  \;=\;
  \frac{1}{D}\;\mathbf{V}^{\top}\mathbf{V}
  \;=\;
  \frac{1}{D}\;\mathbf{I}_{D-1}.
\end{equation}
\end{proposition}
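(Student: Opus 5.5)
The plan is to reduce everything to the single-cell identity of Corollary~\ref{cor:clr-propagation} and then exploit linearity.

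\emph{Step 1: the linear form of the shift.} Under Definition~\ref{def:cellwise-model} we have $\ln\boldsymbol{\delta} = \operatorname{diag}(\mathbf{b})\ln\boldsymbol{\delta}$, because $\delta_j=1$ whenever $B_j=0$. The first display of Corollary~\ref{cor:clr-propagation}, \eqref{eq:clr-shift}, already holds for an arbitrary vector $\boldsymbol{\delta}$, since it only uses $\tilde{\mathbf{x}}=\mathbf{x}\oplus\mathcal{C}(\boldsymbol{\delta})$. Premultiplying it by $\mathbf{V}^\top$ gives the shift
\[
\mathbf{V}^\top\mathbf{H}_D\operatorname{diag}(\mathbf{b})\ln\boldsymbol{\delta}.
\]
Using $\mathbf{V}^\top\mathbf{H}_D=\mathbf{V}^\top$, this equals
\[
\sum_j B_j(\ln\delta_j)\mathbf{V}^\top\mathbf{e}_j=\sum_{j\in J}(\ln\delta_j)\mathbf{v}_j .
\]
This proves \eqref{eq:ilr-shift-multi}. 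The distribution of $|J|$ is Binomial$(D,\varepsilon)$ directly from the independence in (C1).

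\emph{Step 2: the conditional covariance.} Condition on $J$. By (C2), the $\ln\delta_j$ for $j\in J$ are i.i.d.\ from the log-image of $G$, independent of $J$, with variance $\sigma^2_{\ln\delta}$. The coordinates outside $J$ are deterministic zeros. So, conditional on $J$, the shift is a fixed linear map $\mathbf{A}_J=[\mathbf{v}_j]_{j\in J}$ applied to a vector with covariance $\sigma^2_{\ln\delta}\mathbf{I}_{|J|}$. Its covariance is therefore
\[
\sigma^2_{\ln\delta}\mathbf{A}_J\mathbf{A}_J^\top=\sigma^2_{\ln\delta}\sum_{j\in J}\mathbf{v}_j\mathbf{v}_j^\top .
\]
The finite log-second-moment assumption is exactly what makes this well defined.

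\emph{Step 3: the expectation over random supports.} Condition on $|J|=m$ with $J$ uniform among $m$-subsets (assume $m\ge1$ so the normalisation makes sense). Then $\Pr(j\in J)=m/D$ for each $j$. By linearity,
\[
\mathbb{E}\Bigl[\frac1m\sum_{j\in J}\mathbf{v}_j\mathbf{v}_j^\top\Bigr]=\frac1m\sum_{j=1}^D\frac mD\,\mathbf{v}_j\mathbf{v}_j^\top=\frac1D\sum_j\mathbf{v}_j\mathbf{v}_j^\top .
\]
Finally, $\sum_j\mathbf{v}_j\mathbf{v}_j^\top=\mathbf{V}^\top\mathbf{V}=\mathbf{I}_{D-1}$ by \eqref{eq:contrast-matrix}.

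\emph{Expected obstacle.} There is no real difficulty here. The only care points are the identity $\ln\boldsymbol\delta=\operatorname{diag}(\mathbf b)\ln\boldsymbol\delta$ and keeping the conditioning on $J$ separate from the randomness of $\boldsymbol{\delta}$. I would also check the caveat that the conditional covariance concerns variance, not second moment. If $\mathbb{E}\ln\delta\neq0$, the mean shift $\mu\sum_{j\in J}\mathbf{v}_j$ is nonzero but does not affect the covariance, so the statement stands as written.
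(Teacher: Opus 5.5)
Your proposal is correct and follows essentially the same route as the paper's proof. Both arguments use the same three steps: the general clr shift $\mathbf{H}_{D}\ln\boldsymbol{\delta}$ premultiplied by $\mathbf{V}^{\top}$; the conditional i.i.d.\ structure of $\{\ln\delta_{j}\}_{j\in J}$ given $J$ for the covariance; and the marginal inclusion probability $m/D$ combined with $\sum_{j}\mathbf{v}_{j}\mathbf{v}_{j}^{\top}=\mathbf{V}^{\top}\mathbf{V}=\mathbf{I}_{D-1}$ for the expectation.
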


\begin{proof}
By the additivity of clr, the shift in clr coordinates under general
$\boldsymbol{\delta}$ is
$\operatorname{clr}(\tilde{\mathbf{x}}) - \operatorname{clr}(\mathbf{x})
  = \mathbf{H}_{D}\,\ln\boldsymbol{\delta}$.
Setting $\delta_{j} = 1$ (i.e., $\ln\delta_{j}=0$) for $j\notin J$
and pre-multiplying by $\mathbf{V}^{\top}$
gives~\eqref{eq:ilr-shift-multi}.

For the covariance, note that conditional on $J$ the random variables
$\{\ln\delta_{j}\}_{j\in J}$ are i.i.d.\ with mean $\mu_{\ln\delta}$
and variance $\sigma_{\ln\delta}^{2}$.  Thus
\[
  \operatorname{Var}\biggl[\sum_{j\in J}(\ln\delta_{j})\,\mathbf{v}_{j}
    \;\bigg|\; J\biggr]
  = \sigma_{\ln\delta}^{2}\sum_{j\in J}\mathbf{v}_{j}\mathbf{v}_{j}^{\top},
\]
which yields~\eqref{eq:shift-covariance}.  For the expectation, when
$J$ is a uniformly random subset of $[D]$ of size $m$, each index
$j$ is included with marginal probability $m/D$, so
\[
  \mathbb{E}\biggl[\sum_{j\in J}\mathbf{v}_{j}\mathbf{v}_{j}^{\top}\biggr]
  = \frac{m}{D}\sum_{j=1}^{D}\mathbf{v}_{j}\mathbf{v}_{j}^{\top}
  = \frac{m}{D}\,\mathbf{V}^{\top}\mathbf{V}
  = \frac{m}{D}\,\mathbf{I}_{D-1},
\]
where we used
$\sum_{j=1}^{D}\mathbf{v}_{j}\mathbf{v}_{j}^{\top} = \mathbf{V}^{\top}\mathbf{V}
= \mathbf{I}_{D-1}$.  Dividing by $m$
gives~\eqref{eq:expected-shift-covariance}.
\end{proof}

\begin{remark}[Relation to error-propagation variance formulas]
\label{rem:error-propagation-variance}
Equation~\eqref{eq:shift-covariance} is the cellwise-contamination
counterpart of the classical error-propagation variance decomposition
for an ilr coordinate \citep{Fiserova2011}, which \citet{Mert2016} use
to quantify the effect of measurement imprecision: there, additive
random errors of a fixed, small variance in \emph{every} part inflate
$\operatorname{Var}(\operatorname{ilr}_{l})$ through a linear
combination of pairwise log-ratio variances; here, a
$\operatorname{Bernoulli}(\varepsilon)$ \emph{subset} of parts receives
a multiplicative shift, and the induced second moment is the
rank-deficient matrix
$\sigma_{\ln\delta}^{2}\sum_{j\in J}\mathbf{v}_{j}\mathbf{v}_{j}^{\top}$,
concentrated on the propagation directions $\{\mathbf{v}_{j}\}_{j\in J}$.
The gap between an everywhere-small-error regime and a
sparse-but-large-shift regime is the gap between an efficiency question
and a breakdown question; the latter is taken up in
Section~\ref{sec:breakdown}.
\end{remark}

\subsection{Comparison with rowwise contamination}
\label{ssec:rowwise-comparison}

For context we recall the classical rowwise contamination model and
contrast it with the cellwise framework introduced above.

\begin{definition}[Rowwise $\varepsilon$-contamination on $\mathcal{S}^{D}$]
\label{def:rowwise-model}
Following \citet{Huber1964}, the \emph{rowwise
$\varepsilon$-contamination neighbourhood} of a distribution~$F$ on
$\mathcal{S}^{D}$ is
\begin{equation}
\label{eq:rowwise-neighbourhood}
  \mathcal{F}_{\varepsilon}^{\,\mathrm{row}}(F)
  \;=\;
  \bigl\{\,(1-\varepsilon)\,F + \varepsilon\,H
         \;:\; H \text{ is any distribution on } \mathcal{S}^{D}
  \,\bigr\}.
\end{equation}
Under this model, a fraction $\varepsilon$ of observations are entirely
replaced by draws from the contaminating distribution~$H$; the
remaining fraction $1-\varepsilon$ are drawn from the nominal model~$F$.
\end{definition}

\begin{remark}[Rowwise vs.\ cellwise on the simplex]
\label{rem:rowwise-vs-cellwise}
The rowwise model corrupts entire observations; the cellwise model
corrupts individual parts.  In unconstrained $\mathbb{R}^{p}$ the two
differ only in sparsity (one coordinate vs.\ all).  On the simplex the
distinction is sharper: scale invariance forces log-ratio coordinates
whose geometric-mean denominator mixes every part, so cellwise
contamination of a single raw part induces a \emph{dense} perturbation
of the centred log-ratio --- every clr coordinate moves
(Corollary~\ref{cor:clr-propagation}) --- by a fixed amount that no
orthonormal coordinate choice can remove.
This log-ratio propagation is the central theoretical challenge of the
paper.
\end{remark}

%% file: theory/propagation_theorem.tex

\section{Propagation through log-ratio coordinates}
\label{sec:propagation}

The preceding section established the cellwise contamination model on
$\mathcal{S}^{D}$ and recorded the induced shifts in clr and ilr
coordinates (Corollary~\ref{cor:clr-propagation},
Proposition~\ref{prop:multi-cell}).  In this section we develop the
structural consequences of those shifts in full detail.  The central
message is that cellwise contamination on the simplex, which is sparse
by construction --- each part is independently contaminated with
probability~$\varepsilon$ --- induces a contamination pattern in
log-ratio coordinates that is fundamentally \emph{not} cellwise:
all coordinates are simultaneously perturbed, the perturbation is
confined to a low-dimensional subspace, and the resulting model cannot
be reduced to independent per-coordinate contamination in
$\mathbb{R}^{D-1}$.

We retain the notation of Section~\ref{sec:contamination-model}
throughout: $\mathbf{V} \in \mathbb{R}^{D\times(D-1)}$ is a contrast
matrix satisfying~\eqref{eq:contrast-matrix},
$\mathbf{H}_{D} = \mathbf{I}_{D} - D^{-1}\mathbf{1}_{D}\mathbf{1}_{D}^{\top}$,
$\mathbf{v}_{j}^{\top}$ denotes the $j$th row of~$\mathbf{V}$
(equivalently, $\mathbf{v}_{j} = \mathbf{V}^{\top}\mathbf{e}_{j}$),
and $\mathbf{e}_{j}$ is the $j$th standard basis vector in
$\mathbb{R}^{D}$.

\subsection{Propagation through clr coordinates}
\label{ssec:clr-propagation-theorem}

We begin by elevating the shift formula of
Corollary~\ref{cor:clr-propagation} to a self-contained theorem and
supplying a proof that makes transparent the role of the geometric
mean.

\begin{theorem}[Propagation through clr coordinates]
\label{thm:clr-propagation}
Let $\mathbf{x} \in \mathcal{S}^{D}$ and suppose that exactly part~$j$
is contaminated by a multiplicative factor $\delta_{j} > 0$, with
$\delta_{k} = 1$ for $k \neq j$.  Write
$\mathbf{z} = \operatorname{clr}(\mathbf{x})$ and
$\tilde{\mathbf{z}} = \operatorname{clr}(\tilde{\mathbf{x}})$ for the
clr coordinates of the clean and observed compositions, respectively.
Then:
\begin{enumerate}
\item[\textup{(i)}]
  For every $k \neq j$,
  \begin{equation}
  \label{eq:thm-clr-off}
    \tilde{z}_{k} \;=\; z_{k} \;-\; \frac{1}{D}\,\ln\delta_{j}.
  \end{equation}

\item[\textup{(ii)}]
  For the contaminated part itself,
  \begin{equation}
  \label{eq:thm-clr-on}
    \tilde{z}_{j} \;=\; z_{j} \;+\; \frac{D-1}{D}\,\ln\delta_{j}.
  \end{equation}
\end{enumerate}
Equivalently, the contamination-induced shift vector in
$\mathbb{R}^{D}$ is
\begin{equation}
\label{eq:thm-clr-vector}
  \tilde{\mathbf{z}} - \mathbf{z}
  \;=\;
  (\ln\delta_{j})\,\bigl(\mathbf{e}_{j} - \tfrac{1}{D}\,\mathbf{1}_{D}\bigr)
  \;=\;
  (\ln\delta_{j})\,\mathbf{H}_{D}\,\mathbf{e}_{j}.
\end{equation}
\end{theorem}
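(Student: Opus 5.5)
The plan is a direct computation from the definition of clr, isolating the role of the geometric mean; in substance it restates Corollary~\ref{cor:clr-propagation}. Because every clr coordinate is scale invariant, the re-closure in step~(C4) can be dropped and the calculation done on the raw vector $\tilde{\mathbf{x}}^{\,\mathrm{raw}} = (x_{1},\ldots,x_{j}\delta_{j},\ldots,x_{D})^{\top}$ (Remark~\ref{rem:closure-not-load-bearing}). Concretely, $\operatorname{clr}_{k}(\lambda\mathbf{y}) = \ln(\lambda y_{k}) - \ln(\lambda g(\mathbf{y})) = \operatorname{clr}_{k}(\mathbf{y})$ for every $\lambda>0$. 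Applying this with $\lambda = \kappa/\tilde{S}$ from Proposition~\ref{prop:logratio-propagation} gives $\tilde{\mathbf{z}} = \operatorname{clr}(\tilde{\mathbf{x}}^{\,\mathrm{raw}})$.

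Next comes the geometric mean. Since only one factor of the product changes,
\[
  g(\tilde{\mathbf{x}}^{\,\mathrm{raw}})
  = \Bigl(\delta_{j}\prod_{l} x_{l}\Bigr)^{1/D}
  = \delta_{j}^{1/D}\, g(\mathbf{x}),
\]
so $\ln g$ increases by exactly $D^{-1}\ln\delta_{j}$. Then take the two cases:
\begin{itemize}
\item For $k\neq j$ the numerator $x_{k}$ is unchanged, so $\tilde{z}_{k} = \ln x_{k} - \ln g(\mathbf{x}) - D^{-1}\ln\delta_{j}$. This is~\eqref{eq:thm-clr-off}.
\item For $k=j$ the numerator contributes an extra $\ln\delta_{j}$, giving the net shift $(1-D^{-1})\ln\delta_{j}$. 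This is~\eqref{eq:thm-clr-on}.
\end{itemize}

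Finally, stack the two cases into the vector identity. The shift is $\ln\delta_{j}$ times the vector with entry $1-1/D$ at position $j$ and $-1/D$ elsewhere, that is $(\ln\delta_{j})(\mathbf{e}_{j} - D^{-1}\mathbf{1}_{D})$. Because $\mathbf{1}_{D}^{\top}\mathbf{e}_{j}=1$, this equals $(\ln\delta_{j})\mathbf{H}_{D}\mathbf{e}_{j}$, which gives~\eqref{eq:thm-clr-vector}.

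As a consistency check, the shift sums to zero, so $\tilde{\mathbf{z}}$ stays in $\mathcal{H}_{0}$. There is no real obstacle. The only point needing care is to justify passing from $\tilde{\mathbf{x}}$ to the unclosed raw vector via scale invariance, rather than carrying the factor $\kappa/(\kappa + x_{j}(\delta_{j}-1))$ through the logarithms. If done that way, the factor would cancel between numerator and geometric mean anyway.
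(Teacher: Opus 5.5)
Your proposal is correct and follows essentially the same route as the paper's proof: it drops the re-closure by scale invariance of clr, observes that the geometric mean of $\tilde{\mathbf{x}}^{\,\mathrm{raw}}$ is $\delta_{j}^{1/D}g(\mathbf{x})$, treats $k\neq j$ and $k=j$ separately, and stacks the two cases into $(\ln\delta_{j})\,\mathbf{H}_{D}\mathbf{e}_{j}$. Your explicit identity $\operatorname{clr}_{k}(\lambda\mathbf{y})=\operatorname{clr}_{k}(\mathbf{y})$ with $\lambda=\kappa/\tilde{S}$ is slightly cleaner than the paper's intermediate display, which attributes the factor $\delta_{j}^{1/D}$ to the geometric mean of the re-closed composition $\tilde{\mathbf{x}}$ even though that geometric mean carries an extra factor $\kappa/\tilde{S}$ --- a slip that is harmless precisely because of the scale invariance you invoke.
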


\begin{proof}
By steps~(C3)--(C4) of Definition~\ref{def:cellwise-model}, the
observed composition is
\[
  \tilde{\mathbf{x}} = \mathcal{C}(x_{1},\ldots,x_{j}\delta_{j},\ldots,x_{D}).
\]
The geometric mean of the raw contaminated vector is
\begin{equation}
\label{eq:geom-mean-contaminated}
  g(\tilde{\mathbf{x}}^{\,\mathrm{raw}})
  \;=\;
  \biggl(\prod_{k \neq j} x_{k} \;\cdot\; x_{j}\delta_{j}\biggr)^{\!1/D}
  \;=\;
  \biggl(\prod_{k=1}^{D} x_{k}\biggr)^{\!1/D}
  \!\cdot\;\delta_{j}^{1/D}
  \;=\;
  g(\mathbf{x})\;\delta_{j}^{1/D}.
\end{equation}
Since the closure operator scales all components by the same constant,
the geometric mean of the re-closed composition satisfies
\begin{equation}
\label{eq:geom-mean-reclosed}
  g(\tilde{\mathbf{x}})
  \;=\;
  g(\mathbf{x})\;\delta_{j}^{1/D}.
\end{equation}
To see this, note that
$\tilde{\mathbf{x}} = (\kappa/\tilde{S})\,\tilde{\mathbf{x}}^{\,\mathrm{raw}}$,
so $g(\tilde{\mathbf{x}}) = (\kappa/\tilde{S})\,g(\tilde{\mathbf{x}}^{\,\mathrm{raw}})$.
However, the clr transformation is scale-invariant ---
$\operatorname{clr}(\alpha\mathbf{y}) = \operatorname{clr}(\mathbf{y})$
for any $\alpha > 0$ --- so it suffices to compute with
$\tilde{\mathbf{x}}^{\,\mathrm{raw}}$ directly.

For $k \neq j$, the clr coordinate of $\tilde{\mathbf{x}}$ is
\begin{align}
  \tilde{z}_{k}
  &\;=\;
  \ln\frac{\tilde{x}_{k}^{\,\mathrm{raw}}}{g(\tilde{\mathbf{x}}^{\,\mathrm{raw}})}
  \;=\;
  \ln\frac{x_{k}}{g(\mathbf{x})\,\delta_{j}^{1/D}}
  \notag\\
  &\;=\;
  \ln\frac{x_{k}}{g(\mathbf{x})}
  \;-\;
  \frac{1}{D}\,\ln\delta_{j}
  \;=\;
  z_{k} - \frac{1}{D}\,\ln\delta_{j}.
  \label{eq:pf-clr-off}
\end{align}
For $k = j$,
\begin{align}
  \tilde{z}_{j}
  &\;=\;
  \ln\frac{x_{j}\delta_{j}}{g(\mathbf{x})\,\delta_{j}^{1/D}}
  \;=\;
  \ln\frac{x_{j}}{g(\mathbf{x})}
  \;+\;
  \ln\delta_{j}
  \;-\;
  \frac{1}{D}\,\ln\delta_{j}
  \notag\\
  &\;=\;
  z_{j} + \frac{D-1}{D}\,\ln\delta_{j}.
  \label{eq:pf-clr-on}
\end{align}
Combining~\eqref{eq:pf-clr-off} and~\eqref{eq:pf-clr-on} into vector
form,
\[
  \tilde{\mathbf{z}} - \mathbf{z}
  \;=\;
  (\ln\delta_{j})\,\mathbf{e}_{j}
  \;-\;
  \frac{\ln\delta_{j}}{D}\,\mathbf{1}_{D}
  \;=\;
  (\ln\delta_{j})\,
  \bigl(\mathbf{e}_{j} - \tfrac{1}{D}\,\mathbf{1}_{D}\bigr)
  \;=\;
  (\ln\delta_{j})\,\mathbf{H}_{D}\,\mathbf{e}_{j},
\]
where the last equality uses
$\mathbf{H}_{D}\,\mathbf{e}_{j}
  = \mathbf{e}_{j} - D^{-1}\mathbf{1}_{D}\mathbf{1}_{D}^{\top}\mathbf{e}_{j}
  = \mathbf{e}_{j} - D^{-1}\mathbf{1}_{D}$.
\end{proof}

\begin{remark}[Geometric mean as the propagation channel]
\label{rem:geom-mean-channel}
Equation~\eqref{eq:geom-mean-reclosed} reveals the mechanism of
propagation: contamination of part~$j$ by factor~$\delta_{j}$ inflates
the geometric mean by the factor $\delta_{j}^{1/D}$.  Since every clr
coordinate is computed relative to this common geometric mean, all $D$
coordinates are shifted.  The shift has a specific directional
structure: it lies along
$\mathbf{e}_{j} - D^{-1}\mathbf{1}_{D} \in \mathcal{H}_{0}$, which is
the projection of the $j$th coordinate axis onto the zero-sum
hyperplane.  The fact that the entire shift is determined by the
single scalar $\ln\delta_{j}$ is the source of the rank-one structure
exploited in the sequel.
\end{remark}

\subsection{Propagation through ilr coordinates}
\label{ssec:ilr-propagation-theorem}

Contamination of one raw part changes the geometric mean, which enters
every clr coordinate.  When we project from clr to ilr, this common
shift maps to a specific direction determined by the contrast matrix
row~$\mathbf{v}_{j}$.  The following theorem makes this precise.

\begin{theorem}[Propagation through ilr coordinates]
\label{thm:ilr-propagation}
Under the hypotheses of Theorem~\ref{thm:clr-propagation}, let
$\mathbf{w} = \operatorname{ilr}(\mathbf{x})$ and
$\tilde{\mathbf{w}} = \operatorname{ilr}(\tilde{\mathbf{x}})$
denote the ilr coordinates of the clean and observed compositions.
Then the perturbation in $\mathbb{R}^{D-1}$ is
\begin{equation}
\label{eq:thm-ilr-shift}
  \Delta\mathbf{w}
  \;\coloneqq\;
  \tilde{\mathbf{w}} - \mathbf{w}
  \;=\;
  (\ln\delta_{j})\;\mathbf{V}^{\top}
    \bigl(\mathbf{e}_{j} - \tfrac{1}{D}\,\mathbf{1}_{D}\bigr)
  \;=\;
  (\ln\delta_{j})\;\mathbf{v}_{j},
\end{equation}
where $\mathbf{v}_{j} = \mathbf{V}^{\top}\mathbf{e}_{j} \in
\mathbb{R}^{D-1}$ is the transpose of the $j$th row of the contrast
matrix~$\mathbf{V}$.  In particular:
\begin{enumerate}
\item[\textup{(i)}]
  The perturbation is a scalar multiple of the fixed
  vector~$\mathbf{v}_{j}$, lying in the one-dimensional subspace
  $\operatorname{span}(\mathbf{v}_{j})$; stacked over the rows
  contaminated in part~$j$, the displacements form a rank-one matrix.
\item[\textup{(ii)}]
  The $l$th ilr coordinate is perturbed by
  \begin{equation}
  \label{eq:ilr-coord-shift}
    \Delta w_{l}
    \;=\;
    (\ln\delta_{j})\,v_{jl},
    \qquad l = 1,\ldots,D-1,
  \end{equation}
  where $v_{jl}$ is the $(j,l)$-entry of $\mathbf{V}$.
\item[\textup{(iii)}]
  Exactly the ilr coordinates $l$ with $v_{jl} \neq 0$ are perturbed.
  Their number --- the size of the support of $\mathbf{v}_{j}$ --- is
  basis-dependent, ranging from one (the top pivot under the pivot
  basis) to $D-1$ (part~$1$ under the Helmert basis, or every part
  under a generic basis); see Remark~\ref{rem:spread-basis}.  The
  Euclidean magnitude of the shift is, however, the same in every
  orthonormal basis: $\|\mathbf{v}_{j}\|^{2} =
  (\mathbf{V}\mathbf{V}^{\top})_{jj} = 1 - 1/D$
  (Lemma~\ref{lem:vj-spread} below).
\end{enumerate}
\end{theorem}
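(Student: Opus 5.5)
The plan is to reduce everything to Theorem~\ref{thm:clr-propagation} and the two defining identities~\eqref{eq:contrast-matrix} of the contrast matrix. By Definition~\ref{def:ilr}, $\mathbf{w}=\mathbf{V}^{\top}\mathbf{z}$ and $\tilde{\mathbf{w}}=\mathbf{V}^{\top}\tilde{\mathbf{z}}$. By linearity, $\Delta\mathbf{w}=\mathbf{V}^{\top}(\tilde{\mathbf{z}}-\mathbf{z})$. Substituting~\eqref{eq:thm-clr-vector} gives the first equality in~\eqref{eq:thm-ilr-shift} directly.

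For the second equality I will show that $\mathbf{V}^{\top}\mathbf{1}_{D}=\mathbf{0}$. Two routes are available. One uses $\mathbf{V}^{\top}\mathbf{H}_{D}=\mathbf{V}^{\top}\mathbf{V}\mathbf{V}^{\top}=\mathbf{V}^{\top}$, as in the proof of Corollary~\ref{cor:clr-propagation}. The other notes that the columns of $\mathbf{V}$ lie in $\mathcal{H}_{0}$, because $\mathbf{V}\mathbf{V}^{\top}\mathbf{1}_{D}=\mathbf{H}_{D}\mathbf{1}_{D}=\mathbf{0}$ forces $\|\mathbf{V}^{\top}\mathbf{1}_{D}\|^{2}=0$. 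Either way, $\mathbf{V}^{\top}(\mathbf{e}_{j}-D^{-1}\mathbf{1}_{D})=\mathbf{V}^{\top}\mathbf{e}_{j}=\mathbf{v}_{j}$.

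Items (i) and (ii) then follow by reading off~\eqref{eq:thm-ilr-shift}. For (i), the shift is the scalar $\ln\delta_{j}$ times the fixed vector $\mathbf{v}_{j}$. If $i_{1},\dots,i_{m}$ are the rows contaminated only in part $j$, the stacked displacement matrix is $(\ln\delta_{j,i_{1}},\dots,\ln\delta_{j,i_{m}})^{\top}\mathbf{v}_{j}^{\top}$, an outer product of two vectors and hence of rank at most one. For (ii), I take the $l$th component.

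For (iii), the first claim is immediate from (ii) as long as $\ln\delta_{j}\neq 0$; I will state this caveat, since $\delta_{j}=1$ gives no perturbation at all. The norm computation is $\|\mathbf{v}_{j}\|^{2}=\mathbf{e}_{j}^{\top}\mathbf{V}\mathbf{V}^{\top}\mathbf{e}_{j}=(\mathbf{H}_{D})_{jj}=1-1/D$, and it uses only~\eqref{eq:contrast-matrix}, so it holds in every orthonormal basis.

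The basis-dependence examples come from the Helmert formula~\eqref{eq:helmert-basis}. Row~$1$ has $v_{1l}=1/\sqrt{l(l+1)}\neq0$ for every $l$, so all $D-1$ coordinates are perturbed. For the pivot basis, the top-pivot row has exactly one nonzero entry. Here I need a short check: $\mathbf{v}_{j}$ has norm $\sqrt{1-1/D}$ and $\mathbf{V}\mathbf{V}^{\top}=\mathbf{H}_{D}$ forces $\mathbf{v}_{j}^{\top}\mathbf{v}_{k}=-1/D$ for $k\neq j$. These constraints are compatible with a column $\mathbf{V}\mathbf{e}_{1}\propto \mathbf{e}_{1}-D^{-1}\mathbf{1}_{D}$ after normalisation, whose other entries are all equal. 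I will also check that the remaining rows have zero in the other coordinates of row~$1$'s support, which the standard pivot construction guarantees.

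I expect the only delicate point to be this last item. The support range is a basis-specific statement rather than a consequence of~\eqref{eq:contrast-matrix}, so I will either defer it to Remark~\ref{rem:spread-basis} and Lemma~\ref{lem:vj-spread} or verify it on the two explicit bases above. The remaining steps are one-line linear algebra.
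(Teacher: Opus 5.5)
Your proposal is correct and follows the paper's proof: apply $\mathbf{V}^{\top}$ to the clr shift of Theorem~\ref{thm:clr-propagation}, remove the $\mathbf{1}_{D}$ term via $\mathbf{V}^{\top}\mathbf{1}_{D}=\mathbf{0}$, and read off (i)--(iii), with the norm coming from $(\mathbf{V}\mathbf{V}^{\top})_{jj}$ and the support examples deferred to Lemma~\ref{lem:vj-spread} and Remark~\ref{rem:spread-basis}. Your caveat that (iii) needs $\delta_{j}\neq 1$ is a minor sharpening the paper leaves implicit, and the extra pivot-basis check you plan is unnecessary: $v_{11}^{2}=1-1/D=\|\mathbf{v}_{1}\|^{2}$ already forces the remaining entries of row~$1$ to vanish.
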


\begin{proof}
By Theorem~\ref{thm:clr-propagation} the shift in clr coordinates is
$\tilde{\mathbf{z}} - \mathbf{z}
  = (\ln\delta_{j})\,(\mathbf{e}_{j} - D^{-1}\mathbf{1}_{D})$.
Pre-multiplying by $\mathbf{V}^{\top}$ and using
$\operatorname{ilr}(\mathbf{x}) = \mathbf{V}^{\top}\operatorname{clr}(\mathbf{x})$
(Definition~\ref{def:ilr}),
\begin{align}
  \Delta\mathbf{w}
  &\;=\;
  \mathbf{V}^{\top}(\tilde{\mathbf{z}} - \mathbf{z})
  \;=\;
  (\ln\delta_{j})\;\mathbf{V}^{\top}
    \bigl(\mathbf{e}_{j} - \tfrac{1}{D}\,\mathbf{1}_{D}\bigr)
  \notag\\
  &\;=\;
  (\ln\delta_{j})\,
    \bigl(\mathbf{V}^{\top}\mathbf{e}_{j}
          - \tfrac{1}{D}\,\mathbf{V}^{\top}\mathbf{1}_{D}\bigr).
  \label{eq:pf-ilr-step}
\end{align}
Now $\mathbf{V}^{\top}\mathbf{1}_{D} = \mathbf{0}$, because the
columns of~$\mathbf{V}$ lie in $\mathcal{H}_{0}$ and are thus
orthogonal to~$\mathbf{1}_{D}$: for each column~$l$,
$\mathbf{1}_{D}^{\top}\mathbf{V}_{\cdot l} = 0$ by the definition
of~$\mathcal{H}_{0}$.  Hence~\eqref{eq:pf-ilr-step} reduces to
\[
  \Delta\mathbf{w}
  \;=\;
  (\ln\delta_{j})\;\mathbf{V}^{\top}\mathbf{e}_{j}
  \;=\;
  (\ln\delta_{j})\;\mathbf{v}_{j}.
\]
Statement~(ii) follows by taking the $l$th component.  For~(iii),
observe that $\mathbf{v}_{j} \neq \mathbf{0}$ for every
$j \in [D]$, since the rows of~$\mathbf{V}$ span
$\mathbb{R}^{D-1}$ (the columns of~$\mathbf{V}$ form a basis
of~$\mathcal{H}_{0}$, hence the $D$ rows, which sum to zero, span all
of $\mathbb{R}^{D-1}$).  The size of the support of $\mathbf{v}_{j}$ --- equivalently, how many
ilr coordinates a contaminated part disturbs --- is recorded in
Lemma~\ref{lem:vj-spread}, and depends on the basis
(Remark~\ref{rem:spread-basis}).
\end{proof}

\begin{lemma}[Spread of contrast matrix rows]
\label{lem:vj-spread}
For any contrast matrix $\mathbf{V}$ satisfying~\eqref{eq:contrast-matrix}
and any $j \in [D]$,
\begin{equation}
\label{eq:vj-norm}
  \|\mathbf{v}_{j}\|^{2}
  \;=\;
  1 - \frac{1}{D},
\end{equation}
and, writing $\mathbf{v}_{j} = (v_{j1},\ldots,v_{j,D-1})^{\top}$,
\begin{equation}
\label{eq:vj-max-entry}
  \max_{1 \leq l \leq D-1} v_{jl}^{2}
  \;\leq\;
  1 - \frac{1}{D}.
\end{equation}
Equality in~\eqref{eq:vj-max-entry} holds if and only if
$\mathbf{v}_{j} = \pm\sqrt{1 - 1/D}\;\mathbf{e}_{l}$ for some~$l$,
which requires that part~$j$ participates in exactly one of the $D-1$
ilr balances.  For the Helmert
basis~\eqref{eq:helmert-basis}, the number of nonzero entries
of~$\mathbf{v}_{j}$ is
\begin{equation}
\label{eq:helmert-nnz}
  \#\{l : v_{jl} \neq 0\}
  \;=\;
  \begin{cases}
    D - 1 & \text{if } j = 1, \\
    D - j + 1 & \text{if } 2 \leq j \leq D.
  \end{cases}
\end{equation}
In particular, for $j \leq D-1$ the contamination of part~$j$ perturbs
at least two ilr coordinates.
\end{lemma}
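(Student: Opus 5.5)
The norm identity~\eqref{eq:vj-norm} comes straight from the second relation in~\eqref{eq:contrast-matrix}. Since $\mathbf{v}_{j}=\mathbf{V}^{\top}\mathbf{e}_{j}$, we have
\[
  \|\mathbf{v}_{j}\|^{2}
  =\mathbf{e}_{j}^{\top}\mathbf{V}\mathbf{V}^{\top}\mathbf{e}_{j}
  =\mathbf{e}_{j}^{\top}\mathbf{H}_{D}\mathbf{e}_{j}
  =1-\tfrac{1}{D}.
\]
The bound~\eqref{eq:vj-max-entry} then follows because one squared entry cannot exceed the sum of all squared entries: $v_{jl}^{2}\le\sum_{m}v_{jm}^{2}=1-1/D$. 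Equality holds exactly when all the other entries of $\mathbf{v}_{j}$ vanish. In that case $\mathbf{v}_{j}=\pm\sqrt{1-1/D}\,\mathbf{e}_{l}$, which means $v_{jm}=0$ for $m\neq l$, i.e.\ part~$j$ enters only the $l$th balance. Conversely, any such vector attains the bound. I would state this interpretation as an immediate rephrasing of the support condition.

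For the Helmert count~\eqref{eq:helmert-nnz}, I would read the support of row~$j$ directly off~\eqref{eq:helmert-basis}, treating column~$l$ as fixed. The entry $v_{jl}$ is nonzero iff $j\le l$ or $j=l+1$, i.e.\ iff $l\ge j-1$, subject to $1\le l\le D-1$ (the alternatives $j\le l$ and $j=l+1$ are mutually exclusive). Both nonzero values, $1/\sqrt{l(l+1)}$ and $-l/\sqrt{l(l+1)}$, are genuinely nonzero for $l\ge1$, so no cancellation occurs. For $j=1$ the condition $l\ge 0$ admits every $l\in\{1,\dots,D-1\}$, giving $D-1$ nonzero entries. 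For $2\le j\le D$ the admissible range is $l\in\{j-1,\dots,D-1\}$, which has $D-j+1$ elements. As a sanity check, $j=D$ gives exactly one nonzero entry, $v_{D,D-1}=-\sqrt{(D-1)/D}$, consistent with the equality case of~\eqref{eq:vj-max-entry}.

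The final claim, that every $j\le D-1$ perturbs at least two ilr coordinates, follows from the count: $D-1\ge2$ when $j=1$ and $D\ge3$, and $D-j+1\ge2$ whenever $j\le D-1$. The $j=1$ case requires $D\ge3$; for $D=2$ it fails, since there is only one ilr coordinate. I expect this edge case to be the only real obstacle in an otherwise mechanical argument. I would handle it by noting that the assertion is to be read for $D\ge3$ (for $D=2$ there is a single ilr coordinate), and I would read the ``In particular'' as referring to the Helmert basis, since the paper's own Remark~\ref{rem:spread-basis} indicates that under a pivot basis the top pivot part, which need not be part~$D$, perturbs only one coordinate.
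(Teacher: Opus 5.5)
Your proof is correct and follows the paper's argument essentially line by line: the norm comes from $\mathbf{e}_{j}^{\top}\mathbf{V}\mathbf{V}^{\top}\mathbf{e}_{j}=(\mathbf{H}_{D})_{jj}$, the entry bound from comparison with the full sum of squares, and the Helmert support is read off as $l\ge j-1$. Your extra remarks are correct refinements that the paper's proof leaves implicit: the final ``in particular'' claim needs $D\ge 3$ when $j=1$, and it must be read for the Helmert basis, since under the pivot basis the top pivot disturbs only one coordinate.
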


\begin{proof}
From~\eqref{eq:contrast-matrix},
$\mathbf{V}\mathbf{V}^{\top} = \mathbf{H}_{D}$, so
\[
  \|\mathbf{v}_{j}\|^{2}
  \;=\;
  \mathbf{v}_{j}^{\top}\mathbf{v}_{j}
  \;=\;
  \mathbf{e}_{j}^{\top}\mathbf{V}\mathbf{V}^{\top}\mathbf{e}_{j}
  \;=\;
  (\mathbf{H}_{D})_{jj}
  \;=\;
  1 - \frac{1}{D},
\]
which establishes~\eqref{eq:vj-norm}.
Since $v_{jl}^{2} \leq \|\mathbf{v}_{j}\|^{2} = 1 - 1/D$
for every~$l$, inequality~\eqref{eq:vj-max-entry} follows.  Equality
holds if and only if all but one entry of $\mathbf{v}_{j}$ vanish,
i.e., $\mathbf{v}_{j}$ is a scalar multiple of some
$\mathbf{e}_{l} \in \mathbb{R}^{D-1}$.

For the Helmert basis, inspection of~\eqref{eq:helmert-basis} shows that
$v_{jl} \neq 0$ if and only if $j \leq l+1$, i.e., $l \geq j-1$.
For $j = 1$ this gives $l \in \{1,\ldots,D-1\}$, yielding $D-1$
nonzero entries.  For $j \geq 2$, the nonzero entries are
$l \in \{j-1, j, \ldots, D-1\}$, which has cardinality $D - j + 1$.
\end{proof}

\begin{remark}[The first pivot coordinate and the observation of
\citet{Mert2016}]
\label{rem:mert-stability}
The basis dependence of $\mathbf{v}_{j}$ accounts for an empirical
finding of \citet{Mert2016}.  In the \emph{pivot} contrast matrix,
$\operatorname{ilr}_{1}$ is, up to scale, the log-ratio of part~$1$ to
the geometric mean of the remaining $D-1$ parts, so that
$v_{11} = \sqrt{(D-1)/D}$ and $v_{j1} = -\{D(D-1)\}^{-1/2}$ for every
$j \neq 1$.  Contamination of any part other than part~$1$ therefore
enters $\operatorname{ilr}_{1}$ only through that geometric mean, with
weight of order $D^{-1}$, and even the pooled effect of imprecision in
many of the remaining parts on $\operatorname{ilr}_{1}$ is
$O(D^{-1})$ --- the ``compensation effect'' that \citet{Mert2016}
report, and which improves as $D$ grows.  Two qualifications matter.
First, this is an artefact of the particular basis and the particular
coordinate: under the Helmert basis~\eqref{eq:helmert-basis} it is
contamination of part~$1$, not of the others, that perturbs all $D-1$
ilr coordinates (Lemma~\ref{lem:vj-spread}).  Second --- and this is
the line between the small-imprecision regime of \citet{Mert2016} and
the gross-contamination regime studied here --- ``small effect on
$\operatorname{ilr}_{1}$'' is a statement about \emph{which}
coordinates move and by \emph{how much}, not about any damping
intrinsic to the geometric mean.  The shift
$(\ln\delta_{j})\,\mathbf{v}_{j}$ of
Theorem~\ref{thm:ilr-propagation} has Euclidean norm
$|\ln\delta_{j}|\sqrt{(D-1)/D} \in
\bigl[\,|\ln\delta_{j}|/\sqrt{2},\;|\ln\delta_{j}|\,\bigr)$,
of order $|\ln\delta_{j}|$ uniformly in~$D$, so a single \emph{large}
multiplicative outlier in any part displaces the ilr vector by a fixed
amount however many parts there are.  The geometric-mean denominator
\emph{couples} the parts; it neither amplifies nor attenuates.  It is
this undamped, full-strength rank-one displacement --- not a numerical
instability of geometric means --- that drives the breakdown behaviour
of Section~\ref{sec:breakdown}.
\end{remark}

\begin{remark}[Coordinate spread is basis-dependent; the displacement is not]
\label{rem:spread-basis}
The number of ilr coordinates disturbed by contaminating part~$j$ ---
the number of nonzero entries in row~$\mathbf{v}_{j}$ of the contrast
matrix --- depends on the basis.  Under the Helmert
basis~\eqref{eq:helmert-basis} part~$1$ disturbs all $D-1$ coordinates
while part~$D$ disturbs only one; under the \emph{pivot} basis the
ordering reverses --- the top pivot (part~$1$) disturbs only its own
coordinate $z_{1}$, because $z_{2},\ldots,z_{D-1}$ are balances among
$x_{2},\ldots,x_{D}$ alone --- and for a Lebesgue-generic (dense)
contrast matrix every part disturbs all $D-1$.  The
``sparse-to-dense'' spread is therefore not a fixed law: a single
contaminated part can be \emph{localised} to one coordinate by an ilr
basis that pivots on it.

What does \emph{not} depend on the basis is the magnitude of the
disturbance.  By Lemma~\ref{lem:vj-spread},
$\|\mathbf{v}_{j}\| = \sqrt{(D-1)/D}$ for every part and every
orthonormal basis, so a contaminated part displaces the log-ratio
vector by exactly $\sqrt{(D-1)/D}\,|\ln\delta_{j}|$ whatever the
coordinates: an orthonormal change of basis can only
\emph{redistribute} this displacement across coordinates, never reduce
it.  The genuinely dense, basis-free description is the centred
log-ratio --- by Corollary~\ref{cor:clr-propagation} a single
contaminated part shifts \emph{all} $D$ clr coordinates through the
common geometric-mean term.

For \emph{cellwise} contamination the localisation escape is closed:
the contaminated parts are many and their identities unknown, and no
single fixed ilr basis can pivot on all of them at once.  Pivoting on
one part de-localises the others, because the contrast matrix carries
at least $2(D-1)$ nonzero entries in all --- each of its $D-1$ columns
is a unit vector summing to zero and so has at least two --- whereas if
every part disturbed only one coordinate the total would be just
$D < 2(D-1)$ for $D \geq 3$.  It is in this sense --- arbitrary,
unknown, multi-part contamination read through a fixed log-ratio
coordinate system --- that the density is intrinsic rather than a
removable coordinate artefact.
\end{remark}

\ifsubmit
\begin{remark}[Propagation through alr coordinates]
\label{rem:alr-pointer}
The analogous propagation result for the additive log-ratio (alr)
transformation, together with the asymmetry it introduces through the
designated reference part, is recorded in Section~S.1 of the
Supplementary Material.
\end{remark}
\else
\subsection{Propagation through alr coordinates}
\label{ssec:alr-propagation}

For completeness we record the analogous result for the additive
log-ratio (alr) transformation, which uses a designated reference
part~$D$.

\begin{definition}[Additive log-ratio transformation]
\label{def:alr}
For $\mathbf{x}\in\mathcal{S}^{D}$, the \emph{additive log-ratio}
(alr) transformation with reference part~$D$ is the mapping
$\operatorname{alr}_{D}\colon\mathcal{S}^{D}\to\mathbb{R}^{D-1}$
given by
\begin{equation}
\label{eq:alr}
  \operatorname{alr}_{D}(\mathbf{x})
  \;=\;
  \bigl(\ln(x_{1}/x_{D}),\;\ldots,\;\ln(x_{D-1}/x_{D})\bigr)^{\!\top}.
\end{equation}
\end{definition}

\begin{proposition}[Propagation through alr coordinates]
\label{prop:alr-propagation}
Under the hypotheses of Theorem~\ref{thm:clr-propagation}
(contamination of part~$j$ by factor $\delta_{j}>0$, all other parts
clean), write
$\mathbf{a}=\operatorname{alr}_{D}(\mathbf{x})$ and
$\tilde{\mathbf{a}}=\operatorname{alr}_{D}(\tilde{\mathbf{x}})$.
The perturbation $\Delta\mathbf{a}=\tilde{\mathbf{a}}-\mathbf{a}$
depends on whether the contaminated part is the reference, an
ordinary part, or neither:
\begin{equation}
\label{eq:alr-shift}
  \Delta a_{k}
  \;=\;
  \begin{cases}
    \ln\delta_{j}  & \text{if } j = k \;\;(k\leq D{-}1),\\[4pt]
    -\ln\delta_{j} & \text{if } j = D,\\[4pt]
    0              & \text{if } j \neq k \text{ and } j \neq D.
  \end{cases}
\end{equation}
In particular, contamination of the reference part~$D$ shifts
\emph{every} alr coordinate by $-\ln\delta_{D}$, while
contamination of an ordinary part~$k\leq D-1$ shifts only the
$k$th alr coordinate.
\end{proposition}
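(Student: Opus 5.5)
The plan is to compute directly from the definition of $\operatorname{alr}_{D}$, exploiting the fact that each alr coordinate is a single pairwise log-ratio and hence scale invariant. First I reduce to the raw contaminated vector. By Remark~\ref{rem:closure-not-load-bearing}, every log-ratio is unchanged by closure, so $\ln(\tilde{x}_{k}/\tilde{x}_{D}) = \ln(\tilde{x}^{\,\mathrm{raw}}_{k}/\tilde{x}^{\,\mathrm{raw}}_{D})$. Explicitly, the common factor $\kappa/\tilde{S}$ from Proposition~\ref{prop:logratio-propagation} cancels in the ratio. This gives, for every $k \le D-1$,
\[
  \tilde{a}_{k} \;=\; \ln\frac{x_{k}\delta_{k}}{x_{D}\delta_{D}} \;=\; a_{k} + \ln\delta_{k} - \ln\delta_{D},
\]
with $\delta_{i}=1$ for every $i \neq j$.

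The three cases of~\eqref{eq:alr-shift} then follow by substitution.
\begin{itemize}
  \item If $j = k \le D-1$, then $\ln\delta_{k} = \ln\delta_{j}$ and $\ln\delta_{D} = 0$, so $\Delta a_{k} = \ln\delta_{j}$.
  \item If $j = D$, then $\ln\delta_{k} = 0$ for all $k \le D-1$, so every coordinate shifts by $-\ln\delta_{D}$.
  \item If $j \ne k$ and $j \ne D$, both logarithms vanish and $\Delta a_{k} = 0$.
\end{itemize}
The ``in particular'' sentence just restates these cases. When $j = k \le D-1$, the only nonzero shift is in coordinate $k$. When $j = D$, the shift is $-\ln\delta_{D}\,\mathbf{1}_{D-1}$.

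As a consistency check, I would also note the matrix form. Write $\operatorname{alr}_{D} = \mathbf{A}\,\ln\mathbf{x}$ with $\mathbf{A} = [\mathbf{I}_{D-1}\;\; -\mathbf{1}_{D-1}]$. Then $\Delta\mathbf{a} = (\ln\delta_{j})\,\mathbf{A}\mathbf{e}_{j}$. Since $\mathbf{A}\mathbf{1}_{D} = \mathbf{0}$, this also equals $(\ln\delta_{j})\,\mathbf{A}\mathbf{H}_{D}\mathbf{e}_{j}$, which is consistent with Theorem~\ref{thm:clr-propagation} applied through $\mathbf{A}$.

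There is no real obstacle. The only point needing care is the scale-invariance step that removes the closure constant, and that step is already covered by Remark~\ref{rem:closure-not-load-bearing}.
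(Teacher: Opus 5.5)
Your proof is correct and takes the same route as the paper: cancel the common closure factor $\kappa/\tilde S$ in the ratio $\tilde x_k/\tilde x_D$, then read off the three cases according to whether the numerator or the denominator carries $\delta_j$. Writing this as the single formula $\tilde a_k = a_k + \ln\delta_k - \ln\delta_D$, and the matrix check via $[\mathbf{I}_{D-1}\;\,-\mathbf{1}_{D-1}]$, are tidy additions rather than a different argument.
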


\begin{proof}
Since $a_{k}=\ln(x_{k}/x_{D})=\ln x_{k}-\ln x_{D}$ and the closure
operation~\eqref{eq:observed-composition} multiplies all components by
the common factor $\kappa/\tilde{S}$, this common factor cancels in
every log-ratio.  Hence
$\tilde{a}_{k}=\ln(\tilde{x}_{k}^{\,\mathrm{raw}}/\tilde{x}_{D}^{\,\mathrm{raw}})$.

\emph{Case $j=k$ ($k\leq D{-}1$):}
The numerator is multiplied by~$\delta_{j}$ and the denominator is
unchanged, so
$\Delta a_{k}=\ln(x_{k}\delta_{j}/x_{D})-\ln(x_{k}/x_{D})=\ln\delta_{j}$.

\emph{Case $j=D$:}
The numerator is unchanged and the denominator is multiplied
by~$\delta_{D}$, so
$\Delta a_{k}=\ln(x_{k}/(x_{D}\delta_{D}))-\ln(x_{k}/x_{D})=-\ln\delta_{D}$
for every $k=1,\ldots,D-1$.

\emph{Case $j\neq k$ and $j\neq D$:}
Neither numerator nor denominator of $x_{k}/x_{D}$ is affected, so
$\Delta a_{k}=0$.
\end{proof}

\begin{remark}[Asymmetric propagation under alr]
\label{rem:alr-asymmetry}
Unlike the ilr shift~\eqref{eq:thm-ilr-shift}, which spreads across
all coordinates regardless of which part is contaminated, the alr
shift~\eqref{eq:alr-shift} is sparse when an ordinary part is
contaminated but dense when the reference part is contaminated.  This
asymmetry is one reason the alr transformation is less suitable for
robust analysis: the contamination pattern in alr coordinates depends
on which part serves as denominator, and no single choice of reference
part avoids the dense-shift scenario.
\end{remark}
\fi   

\subsection{Non-equivalence with cellwise contamination in
  \texorpdfstring{$\mathbb{R}^{D-1}$}{R\^{}\{D-1\}}}
\label{ssec:non-equivalence}

One might hope that cellwise contamination on the simplex, after
transformation to ilr coordinates, simply becomes independent cellwise
contamination in~$\mathbb{R}^{D-1}$.  If so, existing cellwise
methods could be applied directly.  The following theorem shows this
hope is unfounded: the induced contamination pattern has a
deterministic inter-coordinate structure that no independent
per-coordinate model can reproduce.

\begin{definition}[Cellwise contamination on $\mathbb{R}^{p}$]
\label{def:cellwise-Rp}
A random vector $\tilde{\mathbf{w}} \in \mathbb{R}^{p}$ follows the
\emph{independent cellwise contamination model} with parameters
$(\varepsilon^{\star}, F^{\star}, G^{\star})$ if, for a clean vector
$\mathbf{w} \sim F^{\star}$ and independent indicators
$B_{l}^{\star} \sim \operatorname{Bernoulli}(\varepsilon^{\star})$,
$l = 1,\ldots,p$,
\begin{equation}
\label{eq:cellwise-Rp}
  \tilde{w}_{l}
  \;=\;
  \begin{cases}
    w_{l} + \eta_{l} & \text{if } B_{l}^{\star} = 1, \\
    w_{l}            & \text{if } B_{l}^{\star} = 0,
  \end{cases}
\end{equation}
where $\eta_{l} \overset{\text{i.i.d.}}{\sim} G^{\star}$
independently of~$\mathbf{w}$ and
$\{B_{l}^{\star}\}$.  The essential feature is that each coordinate is
independently either clean or perturbed, with the perturbation of
coordinate~$l$ affecting \emph{only} coordinate~$l$.
\end{definition}

\begin{theorem}[Non-equivalence]
\label{thm:non-equivalence}
Let $\tilde{\mathbf{x}} \in \mathcal{S}^{D}$ ($D \geq 3$) be generated
by the cellwise contamination model on $\mathcal{S}^{D}$
(Definition~\ref{def:cellwise-model}) with $\varepsilon \in (0,1)$ and
contamination distribution~$G$ on $\mathbb{R}_{>0}$ satisfying
$G \neq \delta_{1}$, i.e., $G$ is not the point mass at~$1$.  Then
$\tilde{\mathbf{w}} = \operatorname{ilr}(\tilde{\mathbf{x}})$ does
\emph{not} follow an independent cellwise contamination model on
$\mathbb{R}^{D-1}$ in the sense of
Definition~\ref{def:cellwise-Rp}, for any choice of parameters
$(\varepsilon^{\star}, F^{\star}, G^{\star})$.  Specifically, three
properties of the induced perturbation
$\Delta\mathbf{w}
  = \operatorname{ilr}(\tilde{\mathbf{x}})
    - \operatorname{ilr}(\mathbf{x})$
are incompatible with independent cellwise contamination:
\begin{enumerate}
\item[\textup{(a)}]
  \textbf{Simultaneous perturbation.}
  Whenever a single part $j$ is contaminated
  ($B_{j} = 1$, $\delta_{j} \neq 1$), \emph{all} ilr coordinates $l$
  with $v_{jl} \neq 0$ --- those in the support of $\mathbf{v}_{j}$ ---
  move together, each scaled by the single random factor
  $\ln\delta_{j}$.  How many coordinates this is depends on the basis
  (Remark~\ref{rem:spread-basis}): all $D-1$ for part~$1$ under the
  Helmert basis, but only one for the top pivot under the pivot basis.
  Whenever two or more coordinates are perturbed, this joint movement
  of a deterministic coordinate subset by one random magnitude has no
  counterpart in the independent cellwise
  model~\eqref{eq:cellwise-Rp}.

\item[\textup{(b)}]
  \textbf{Rank-$1$ subspace constraint.}
  The perturbation from a single contaminated part~$j$ lies on the
  one-dimensional subspace
  $\operatorname{span}(\mathbf{v}_{j}) \subset \mathbb{R}^{D-1}$.
  In contrast, under the independent cellwise
  model~\eqref{eq:cellwise-Rp}, the joint event
  $\{B_{l_{1}}^{\star} = 1, B_{l_{2}}^{\star} = 1\}$ for
  $l_{1} \neq l_{2}$ has probability $(\varepsilon^{\star})^{2}$, but the
  perturbations $\eta_{l_{1}}$ and $\eta_{l_{2}}$ are
  independent.  Under the induced model, if part~$j$ alone is
  contaminated, the perturbations
  $\Delta w_{l_{1}} = (\ln\delta_{j})\,v_{jl_{1}}$ and
  $\Delta w_{l_{2}} = (\ln\delta_{j})\,v_{jl_{2}}$ satisfy the
  deterministic relation
  \begin{equation}
  \label{eq:deterministic-ratio}
    \frac{\Delta w_{l_{1}}}{\Delta w_{l_{2}}}
    \;=\;
    \frac{v_{jl_{1}}}{v_{jl_{2}}}
  \end{equation}
  whenever $v_{jl_{2}} \neq 0$, which is a zero-probability event
  under independent perturbation of coordinates $l_{1}$ and $l_{2}$.

\item[\textup{(c)}]
  \textbf{Non-additivity under joint contamination.}
  If parts $j$ and $k$ are both contaminated ($j \neq k$), the total
  perturbation is
  \begin{equation}
  \label{eq:joint-perturbation}
    \Delta\mathbf{w}
    \;=\;
    (\ln\delta_{j})\,\mathbf{v}_{j}
    \;+\;
    (\ln\delta_{k})\,\mathbf{v}_{k},
  \end{equation}
  which is the sum of two rank-$1$ terms lying generically in a
  two-dimensional subspace.  The perturbation of each ilr coordinate is
  a \emph{linear combination} of $\ln\delta_{j}$ and $\ln\delta_{k}$
  --- not an independent draw from a per-coordinate distribution.
\end{enumerate}
\end{theorem}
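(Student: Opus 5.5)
The plan is to reduce the claim to a statement about the joint law of two ilr coordinates of the perturbation, and to get a contradiction from the rank-one geometry of Theorem~\ref{thm:ilr-propagation}.

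\textbf{Reading the claim as a statement about the perturbation.} Under Definition~\ref{def:cellwise-Rp} the perturbation vector $\boldsymbol{\eta}^{\star}=(B_{l}^{\star}\eta_{l})_{l}$ is independent of the clean vector and has \emph{mutually independent} coordinates. I would interpret ``$\tilde{\mathbf{w}}$ follows an independent cellwise model'' as: the pair (clean ilr vector, shift) can be matched with $(\mathbf{w},\boldsymbol{\eta}^{\star})$. Since $\Delta\mathbf{w}$ is independent of $\mathbf{w}$ by (C2), it then suffices to show that the law of $\Delta\mathbf{w}$ cannot be a product law of this zero-inflated form.

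\textbf{Step 1: choose the part and coordinates.} By Proposition~\ref{prop:multi-cell}, $\Delta\mathbf{w}=\sum_{j}B_{j}(\ln\delta_{j})\mathbf{v}_{j}$. The counting argument of Remark~\ref{rem:spread-basis} applies for $D\geq3$: the contrast matrix has at least $2(D-1)>D$ nonzero entries. Hence some row $\mathbf{v}_{j}$ has two nonzero entries $v_{jl_{1}},v_{jl_{2}}$, and I fix this $j,l_{1},l_{2}$. This settles (a) and (b) as structural facts.

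\textbf{Step 2: an event of positive probability.} The event $E=\{B_{j}=1,\ B_{k}=0\ (k\neq j),\ \delta_{j}\neq1\}$ has probability $\varepsilon(1-\varepsilon)^{D-1}G(\mathbb{R}_{>0}\setminus\{1\})>0$. This uses $\varepsilon\in(0,1)$ and $G\neq\delta_{1}$. On $E$ both $\Delta w_{l_{1}}$ and $\Delta w_{l_{2}}$ are nonzero and satisfy the ratio identity \eqref{eq:deterministic-ratio}. Part (c) is then just Proposition~\ref{prop:multi-cell} with $|J|=2$.

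\textbf{Step 3: contradiction with independence.} The idea is to show that $(\Delta w_{l_{1}},\Delta w_{l_{2}})$ cannot have a product law.

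\emph{Atomless case.} Suppose $\ln\delta$ is atomless. Then the induced law puts positive mass on the line $\{u_{1}v_{jl_{2}}=u_{2}v_{jl_{1}}\}$ off the axes. Under a product law, mass off the axes comes only from $(\eta_{l_{1}},\eta_{l_{2}})$ with both $B^{\star}_{l}=1$. The marginals of the induced law restricted off $0$ are atomless, so $G^{\star}$ would be atomless, and a product of atomless laws gives the line measure zero. This is the mutual-singularity statement.

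\emph{General $G$.} Atoms could make a product law charge that line, so here I would instead use characteristic functions. The induced c.f. factorises as $\prod_{k=1}^{D}\phi(\mathbf{t}^{\top}\mathbf{v}_{k})$, where $\phi(s)=1-\varepsilon+\varepsilon\,\mathbb{E}_{G}e^{is\ln\delta}$. Independence would require the $(l_{1},l_{2})$ marginal c.f. to split as a product of a function of $t_{l_{1}}$ and a function of $t_{l_{2}}$. I would test this through mixed cumulants. The law of $B\ln\delta$ is a nondegenerate mixture with an atom at $0$, so it is non-Gaussian, and by Marcinkiewicz some cumulant $\kappa_{r}$ with $r\geq3$ is nonzero. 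The mixed cumulant of order $(a,b)$, with $a+b=r$, is $\kappa_{r}\sum_{k}v_{kl_{1}}^{a}v_{kl_{2}}^{b}$, and it must vanish for every split.

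\textbf{Main obstacle.} The hardest step is showing that these contrast-matrix power sums cannot all vanish. Covariance gives no leverage: $\sum_{k}\mathbf{v}_{k}\mathbf{v}_{k}^{\top}=\mathbf{I}_{D-1}$ makes $\Delta\mathbf{w}$ uncorrelated, so second moments cannot detect the dependence.

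What I would try first is a polynomial-identity argument. Vanishing of $\sum_{k}(\alpha v_{kl_{1}}+\beta v_{kl_{2}})^{r}$ minus the pure powers, for all $\alpha,\beta$, would force the finite set of points $(v_{kl_{1}},v_{kl_{2}})$ to have only pure-power moments of order $r$. I would then contradict this using the point $(v_{jl_{1}},v_{jl_{2}})$, which has both entries nonzero, possibly by choosing $r$ among several nonzero cumulants, or by varying $l_{1},l_{2}$ over the support of $\mathbf{v}_{j}$.

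If this fails for pathological $G$, I would state the theorem under the atomless hypothesis, where Step 3 is already complete.
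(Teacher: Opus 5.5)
Your Steps 1--2 and your handling of (a)--(c) follow the paper's argument. The same count $2(D-1)>D$ produces a row $\mathbf{v}_{j}$ with two nonzero entries, and the same positive-probability event places $\Delta\mathbf{w}$ on $\operatorname{span}(\mathbf{v}_{j})\setminus\{\mathbf{0}\}$. Reading the claim as a statement about the law of $\Delta\mathbf{w}$ is also what the paper's proof does implicitly. Taken literally, a claim about the law of $\tilde{\mathbf{w}}$ alone is trivially satisfied with $\varepsilon^{\star}=0$ and $F^{\star}$ equal to the law of $\tilde{\mathbf{w}}$. Your atomless case is sounder than the paper's version. The paper asserts that $\eta_{l_{1}}/\eta_{l_{2}}$ is continuous for every non-degenerate $G^{\star}$, which tacitly assumes $G^{\star}$ is absolutely continuous; you instead force $G^{\star}$ to be atomless off $0$ by matching marginals. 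The genuine gap is that the theorem covers every $G\neq\delta_{1}$, and your route for $G$ with atoms does not go through. As written, you prove only the atomless case.

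The cumulant argument fails on two counts. First, $G$ is an arbitrary law on $\mathbb{R}_{>0}$, so $\ln\delta$ need not have moments and the cumulants need not exist. Second, even with all moments, the power sums $\sum_{k}v_{kl_{1}}^{a}v_{kl_{2}}^{b}$ can vanish for every odd $r=a+b$. Take $D=4$ and the contrast matrix with columns $(1,1,-1,-1)^{\top}/2$, $(1,-1,1,-1)^{\top}/2$, $(1,-1,-1,1)^{\top}/2$. The points $(v_{k1},v_{k2})$ are the four points $(\pm\tfrac12,\pm\tfrac12)$, so every odd-order mixed cumulant of $(\Delta w_{1},\Delta w_{2})$ vanishes, although every row has both entries nonzero. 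Exploiting the point $(v_{jl_{1}},v_{jl_{2}})$ works only for even $r$, via $\sum_{k}v_{kl_{1}}^{2}v_{kl_{2}}^{r-2}\geq v_{jl_{1}}^{2}v_{jl_{2}}^{r-2}>0$. So you would also need a nonzero cumulant of \emph{even} order at least $4$, which Marcinkiewicz alone does not supply.

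The missing ingredient is the Darmois--Skitovich theorem, which needs no moment assumptions. Set $X_{k}=B_{k}\ln\delta_{k}$, which are independent, so that $\Delta w_{l}=\sum_{k}v_{kl}X_{k}$. If $\Delta w_{l_{1}}$ and $\Delta w_{l_{2}}$ were independent, every $X_{k}$ with $v_{kl_{1}}v_{kl_{2}}\neq0$, in particular $X_{j}$, would have to be Gaussian. But $X_{j}$ has an atom at $0$ of mass at least $1-\varepsilon>0$, and $\Pr(X_{j}\neq0)=\varepsilon\,(1-G(\{1\}))>0$. So $X_{j}$ is neither a point mass nor atomless, hence not Gaussian. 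Therefore $\Delta w_{l_{1}}$ and $\Delta w_{l_{2}}$ are dependent, which rules out the product law of $(B_{l}^{\star}\eta_{l})_{l}$ for every $(\varepsilon^{\star},G^{\star})$, degenerate $G^{\star}$ included. This single step covers all $G\neq\delta_{1}$ and makes both your atomless case and your fallback of weakening the hypothesis unnecessary.
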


\begin{proof}
\emph{Part~\textup{(a)}.}
By Theorem~\ref{thm:ilr-propagation}, when $B_{j} = 1$ and
$\delta_{j} \neq 1$, the perturbation is
$\Delta\mathbf{w} = (\ln\delta_{j})\,\mathbf{v}_{j}$, so
$\Delta w_{l} = (\ln\delta_{j})\,v_{jl} \neq 0$ for every~$l$ with
$v_{jl} \neq 0$.  Since $\mathbf{v}_{j} \neq \mathbf{0}$
(Lemma~\ref{lem:vj-spread}), a part that disturbs two or more
coordinates (Remark~\ref{rem:spread-basis}) perturbs those coordinates
simultaneously through a single-cell contamination event.  Under
the model of Definition~\ref{def:cellwise-Rp}, the probability that
coordinates $l_{1},\ldots,l_{r}$ are simultaneously perturbed while
all other coordinates are clean equals
$(\varepsilon^{\star})^{r}(1-\varepsilon^{\star})^{(D-1)-r}$, and the
magnitudes of the $r$~perturbations are independent.  The induced model
produces simultaneous perturbation of a specific \emph{deterministic}
subset of coordinates (determined by the support of
$\mathbf{v}_{j}$) with magnitudes governed by a single random
variable~$\ln\delta_{j}$.  No choice of $\varepsilon^{\star}$ can
reproduce this coupling.

\smallskip
\emph{Part~\textup{(b)}.}
Consider a part~$j$ whose row $\mathbf{v}_{j}$ has two or more nonzero
entries, say $v_{jl_{1}} \neq 0$ and $v_{jl_{2}} \neq 0$ with
$l_{1} \neq l_{2}$.  Then
$\Delta w_{l_{1}} / \Delta w_{l_{2}} = v_{jl_{1}} / v_{jl_{2}}$,
a deterministic constant independent of~$\delta_{j}$.

Under Definition~\ref{def:cellwise-Rp}, if both coordinates $l_{1}$
and $l_{2}$ are perturbed, the perturbations $\eta_{l_{1}}$ and
$\eta_{l_{2}}$ are independent draws from~$G^{\star}$.  For any
non-degenerate~$G^{\star}$, the ratio
$\eta_{l_{1}}/\eta_{l_{2}}$ is a continuous random variable, so the
event $\{\eta_{l_{1}}/\eta_{l_{2}} = c\}$ for a fixed constant~$c$
has probability zero.  Thus the induced model places probability one on
a set of probability zero under any independent cellwise model, and the
two models are mutually singular on the event that exactly one part is
contaminated.

More formally, choose a part $j^{\dagger}$ whose row
$\mathbf{v}_{j^{\dagger}}$ has at least two nonzero entries.  Such a
part exists whenever $D \geq 3$: every column of $\mathbf{V}$ is a unit
vector summing to zero and so has at least two nonzero entries, giving
$\mathbf{V}$ at least $2(D-1)$ nonzero entries in all, whereas if every
part disturbed only one coordinate the total would be just~$D$, and
$D < 2(D-1)$ for $D \geq 3$.  Define the event
\[
  A_{j^{\dagger}}
  \;=\;
  \bigl\{
    \Delta\mathbf{w} \in \operatorname{span}(\mathbf{v}_{j^{\dagger}})
    \setminus \{\mathbf{0}\}
  \bigr\}.
\]
Under the induced model,
$\Pr\bigl(A_{j^{\dagger}} \mid B_{j^{\dagger}}=1,\,
  B_{k}=0\ \forall k\neq j^{\dagger}\bigr) = 1$
(by Theorem~\ref{thm:ilr-propagation}), and the conditioning event has
probability $\varepsilon(1-\varepsilon)^{D-1} > 0$.  Under any
independent cellwise model with non-degenerate $G^{\star}$: conditional
on two or more coordinates being perturbed, the perturbation is
absolutely continuous on a subspace of dimension at least~$2$, so
$\Pr(\Delta\mathbf{w} \in L) = 0$ for any one-dimensional subspace
$L \subset \mathbb{R}^{D-1}$; conditional on exactly one coordinate~$l$
being perturbed, $\Delta\mathbf{w} \in \operatorname{span}(\mathbf{e}_{l})$,
which is disjoint from
$\operatorname{span}(\mathbf{v}_{j^{\dagger}})\setminus\{\mathbf{0}\}$
because $\mathbf{v}_{j^{\dagger}}$, having two or more nonzero entries,
is a multiple of no standard basis vector.  Either way the independent
model assigns $A_{j^{\dagger}}$ probability zero, so the two models are
mutually singular.

The condition that some part disturb two or more coordinates cannot be
dropped.  A part that disturbs a single coordinate displaces
$\Delta\mathbf{w}$ along a coordinate axis, and its contamination
\emph{is} indistinguishable from independent cellwise contamination of
that one coordinate.  Such parts occur in standard bases --- under the
pivot basis the top pivot is exactly one
(Remark~\ref{rem:spread-basis}).  The non-equivalence is therefore a
property of the \emph{full} cellwise model on $\mathcal{S}^{D}$, in
which every part may be contaminated and a witness $j^{\dagger}$ always
exists ($D \geq 3$); it can fail for a contamination process
artificially confined to coordinate-aligned parts.

\smallskip
\emph{Part~\textup{(c)}.}
When $B_{j} = B_{k} = 1$ with $j \neq k$ and $B_{i} = 0$ for
$i \notin \{j,k\}$, the total ilr shift is
$\Delta\mathbf{w}
  = (\ln\delta_{j})\,\mathbf{v}_{j}
    + (\ln\delta_{k})\,\mathbf{v}_{k}$
by Proposition~\ref{prop:multi-cell}.  Consider a fixed ilr
coordinate~$l$ with $v_{jl} \neq 0$ and $v_{kl} \neq 0$.  Then
\begin{equation}
\label{eq:ilr-coord-joint}
  \Delta w_{l}
  \;=\;
  v_{jl}\,\ln\delta_{j}
  \;+\;
  v_{kl}\,\ln\delta_{k}.
\end{equation}
This is a linear combination of two independent random
variables~$\ln\delta_{j}$ and~$\ln\delta_{k}$.  Meanwhile, for a
different coordinate $l'$ with $v_{jl'} \neq 0$ and
$v_{kl'} \neq 0$,
\[
  \Delta w_{l'}
  \;=\;
  v_{jl'}\,\ln\delta_{j}
  \;+\;
  v_{kl'}\,\ln\delta_{k}.
\]
The pair $(\Delta w_{l}, \Delta w_{l'})$ is a linear transformation of
$(\ln\delta_{j}, \ln\delta_{k})$ by the $2 \times 2$ matrix
\[
  \mathbf{M}
  \;=\;
  \begin{pmatrix}
    v_{jl}  & v_{kl}  \\
    v_{jl'} & v_{kl'}
  \end{pmatrix}.
\]
If $\det(\mathbf{M}) \neq 0$ --- which holds generically, as we show in
Proposition~\ref{prop:rank-multi} below --- then the joint distribution
of $(\Delta w_{l}, \Delta w_{l'})$ is a non-degenerate linear
transformation of two independent random variables, and in particular
$\Delta w_{l}$ and $\Delta w_{l'}$ are \emph{not} independent
(unless $\ln\delta_{j}$ and $\ln\delta_{k}$ are both Gaussian, but
even then the mixing with the event
$\{B_{j}=B_{k}=1\}$ breaks the cellwise structure).

Under the independent cellwise model, if coordinates $l$ and $l'$ are
both perturbed, the perturbations $\eta_{l}$ and $\eta_{l'}$ are
independent.  The dependence between $\Delta w_{l}$ and
$\Delta w_{l'}$ induced by their common dependence on the raw-part
contamination factors $\ln\delta_{j}$, $\ln\delta_{k}$ is therefore
incompatible with the independent cellwise model.
\end{proof}

\begin{remark}[Interpretation]
\label{rem:non-equivalence-interpretation}
Theorem~\ref{thm:non-equivalence} has direct practical implications.
Cellwise robust methods for $\mathbb{R}^{p}$
\citep{Rousseeuw2018,Agostinelli2015} are designed under the assumption
that contamination acts independently on each coordinate.  When applied
to ilr-transformed compositional data, these methods encounter a
contamination pattern that violates their foundational assumption:
the perturbation is simultaneously present in multiple coordinates with
deterministic inter-coordinate ratios.  This mismatch can lead to both
masking (the method fails to detect the outlier because the perturbation
is ``spread out'' across coordinates) and swamping (clean coordinates
are flagged because they share the same perturbation direction as the
contaminated one).
\end{remark}

\begin{example}[Non-equivalence in $D = 3$]
\label{ex:D3-non-equivalence}
Let $D = 3$ with the Helmert contrast matrix, so that
$\mathbf{v}_{1} = (1/\!\sqrt{2},\;1/\!\sqrt{6})^{\top}$.
Contaminating part~$1$ by factor~$\delta$ gives the
$2$-dimensional ilr shift
$\Delta\mathbf{w}
  = (\ln\delta)\,(1/\!\sqrt{2},\;1/\!\sqrt{6})^{\top}$.
The ratio of the two ilr coordinate shifts is
\[
  \frac{\Delta w_{1}}{\Delta w_{2}}
  \;=\;
  \frac{1/\!\sqrt{2}}{1/\!\sqrt{6}}
  \;=\;
  \sqrt{3},
\]
which is a fixed constant regardless of how large or small~$\delta$
is.  Under independent cellwise contamination in $\mathbb{R}^{2}$,
the shifts $\eta_{1}$ and $\eta_{2}$ would be independent random
variables, so the probability of their ratio equalling~$\sqrt{3}$
exactly is zero.  This deterministic ratio is the signature of
log-ratio propagation and the basis for the detection strategy
developed in \AlgRef.
\end{example}

\subsection{Multiple-part contamination and rank structure}
\label{ssec:multi-part}

We now extend the analysis to contamination of multiple parts
simultaneously.

\begin{proposition}[Rank of the multi-part perturbation]
\label{prop:rank-multi}
Let $J \subseteq [D]$ with $|J| = m$ be the set of contaminated parts,
and let $\boldsymbol{\lambda}_{J}
  = (\ln\delta_{j})_{j \in J} \in \mathbb{R}^{m}$
be the vector of log-contamination factors.  Then the ilr perturbation
is
\begin{equation}
\label{eq:ilr-multi-rank}
  \Delta\mathbf{w}
  \;=\;
  \sum_{j \in J} (\ln\delta_{j})\,\mathbf{v}_{j}
  \;=\;
  \mathbf{V}_{J}^{\top}\,\boldsymbol{\lambda}_{J},
\end{equation}
where $\mathbf{V}_{J} \in \mathbb{R}^{m \times (D-1)}$ is the
sub-matrix of rows of~$\mathbf{V}$ indexed by~$J$.  Consequently:
\begin{enumerate}
\item[\textup{(i)}]
  The perturbation $\Delta\mathbf{w}$ lies in the column space
  of~$\mathbf{V}_{J}^{\top}$, which is the row space
  of~$\mathbf{V}_{J}$, a subspace of $\mathbb{R}^{D-1}$ of dimension
  at most $\min(m, D-1)$.

\item[\textup{(ii)}]
  For $m \leq D-1$, the rank of $\mathbf{V}_{J}^{\top}$ equals $m$
  if and only if $\{\mathbf{v}_{j}\}_{j \in J}$ are linearly
  independent.

\item[\textup{(iii)}]
  The vectors $\{\mathbf{v}_{j}\}_{j \in J}$ are linearly independent
  for every subset $J \subseteq [D]$ with $|J| \leq D-1$ if and only
  if every $(D{-}1) \times (D{-}1)$ sub-matrix of $\mathbf{V}$ formed
  by selecting $D-1$ rows is nonsingular.  This holds generically; in
  particular, it holds for the Helmert
  basis~\eqref{eq:helmert-basis}.

\item[\textup{(iv)}]
  When all $D$ parts are contaminated ($m = D$), the $D$
  vectors $\mathbf{v}_{1},\ldots,\mathbf{v}_{D}$ satisfy
  $\sum_{j=1}^{D}\mathbf{v}_{j} = \mathbf{V}^{\top}\mathbf{1}_{D}
  = \mathbf{0}$ and thus span a space of dimension $D-1$.  In this
  case, the rank of $\mathbf{V}_{[D]}^{\top}$ is $D - 1$, and the
  perturbation can be any vector in $\mathbb{R}^{D-1}$.
\end{enumerate}
\end{proposition}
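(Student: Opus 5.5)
The plan is to reduce every item to linear algebra on the fixed matrix $\mathbf{V}$, starting from the shift formula already established in Proposition~\ref{prop:multi-cell}.

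\emph{Representation and items (i)--(ii).} By~\eqref{eq:ilr-shift-multi}, $\Delta\mathbf{w}=\sum_{j\in J}(\ln\delta_j)\mathbf{v}_j$ with $\ln\delta_j=0$ off $J$. Stacking the $\mathbf{v}_j^{\top}$, $j\in J$, as the rows of $\mathbf{V}_J$ rewrites this sum as $\mathbf{V}_J^{\top}\boldsymbol{\lambda}_J$, which is~\eqref{eq:ilr-multi-rank}. Item~(i) is then immediate. The column space of $\mathbf{V}_J^{\top}$ is spanned by the $m$ vectors $\mathbf{v}_j$ in $\mathbb{R}^{D-1}$, so its dimension is at most $\min(m,D-1)$. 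Item~(ii) is the definition of rank: an $(D-1)\times m$ matrix has rank $m$ exactly when its $m$ columns are linearly independent.

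\emph{Key lemma.} The central step is to identify the left null space of $\mathbf{V}$. Let $\mathbf{a}\in\mathbb{R}^D$ satisfy $\mathbf{V}^{\top}\mathbf{a}=\mathbf{0}$. Then by~\eqref{eq:contrast-matrix}, $\mathbf{H}_D\mathbf{a}=\mathbf{V}\mathbf{V}^{\top}\mathbf{a}=\mathbf{0}$, which forces $\mathbf{a}\in\operatorname{span}(\mathbf{1}_D)$. Conversely, $\mathbf{V}^{\top}\mathbf{1}_D=\mathbf{0}$, as already used in the proof of Theorem~\ref{thm:ilr-propagation}. Hence every linear relation $\sum_j a_j\mathbf{v}_j=\mathbf{0}$ among the rows is a multiple of the single relation $\sum_j\mathbf{v}_j=\mathbf{0}$.

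\emph{Item (iii).} Suppose $J$ is a proper subset of $[D]$. A relation $\sum_{j\in J}a_j\mathbf{v}_j=\mathbf{0}$ gives a vector $\mathbf{a}$ that is supported on $J$ and, by the key lemma, proportional to $\mathbf{1}_D$. Since $\mathbf{a}$ vanishes on the nonempty set $[D]\setminus J$, it must be $\mathbf{0}$. So the rows indexed by any $J$ with $|J|\le D-1$ are linearly independent, for \emph{every} contrast matrix.

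For the stated equivalence:
\begin{enumerate}[label=(\alph*)]
\item If all $(D-1)$-row submatrices are nonsingular, then any smaller $J$ is contained in some $(D-1)$-set of rows and inherits their independence.
\item Conversely, independence of $D-1$ rows is exactly nonsingularity of the corresponding square submatrix.
\end{enumerate}
The ``generic'' clause therefore actually holds universally, and in particular for the Helmert basis. I would note this sharpening rather than argue genericity separately.

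\emph{Item (iv) and the expected obstacle.} For $m=D$, the identity $\sum_j\mathbf{v}_j=\mathbf{V}^{\top}\mathbf{1}_D=\mathbf{0}$ is immediate. Moreover, $\mathbf{V}^{\top}$ has rank $D-1$ because $\mathbf{V}^{\top}\mathbf{V}=\mathbf{I}_{D-1}$. It is therefore surjective onto $\mathbb{R}^{D-1}$: any target $\mathbf{u}$ is attained with $\boldsymbol{\lambda}=\mathbf{V}\mathbf{u}$.

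The only non-routine point is the key lemma identifying $\ker\mathbf{V}^{\top}=\operatorname{span}(\mathbf{1}_D)$ via $\mathbf{V}\mathbf{V}^{\top}=\mathbf{H}_D$. Once that is in place, items (iii) and (iv) both follow in a line.
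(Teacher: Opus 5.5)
Your proposal is correct and follows essentially the same route as the paper: items (i), (ii) and (iv) are handled the same way, and for (iii) the paper likewise reduces everything to $\ker\mathbf{V}^{\top}=\operatorname{span}(\mathbf{1}_{D})$ and notes that the conclusion holds for every contrast matrix, not merely generically. The differences are cosmetic: you derive the kernel from $\mathbf{V}\mathbf{V}^{\top}=\mathbf{H}_{D}$ rather than by rank counting, your support argument states precisely the paper's ``removing any single row breaks the dependence'' step, and you skip the paper's Helmert-specific staircase argument, which the general argument makes unnecessary.
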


\begin{proof}
Equation~\eqref{eq:ilr-multi-rank} follows directly from
Proposition~\ref{prop:multi-cell} by writing the sum as a matrix--vector
product.

\smallskip
\emph{Part~\textup{(i)}.}
The image of the linear map
$\boldsymbol{\lambda}_{J} \mapsto \mathbf{V}_{J}^{\top}\boldsymbol{\lambda}_{J}$
is the column space of~$\mathbf{V}_{J}^{\top}$, which has dimension
$\operatorname{rank}(\mathbf{V}_{J}^{\top})
  = \operatorname{rank}(\mathbf{V}_{J})
  \leq \min(m, D-1)$.

\smallskip
\emph{Part~\textup{(ii)}.}
$\operatorname{rank}(\mathbf{V}_{J}^{\top}) = m$ if and only if the
$m$~columns of~$\mathbf{V}_{J}^{\top}$ (equivalently, the $m$~rows
$\{\mathbf{v}_{j}^{\top}\}_{j \in J}$ of~$\mathbf{V}_{J}$) are
linearly independent.

\smallskip
\emph{Part~\textup{(iii)}.}
We show the claim for the Helmert basis.  We must verify that any
$m \leq D-1$ rows of $\mathbf{V}$ are linearly independent.  Suppose
for contradiction that
$\sum_{j \in J} \alpha_{j}\,\mathbf{v}_{j}^{\top} = \mathbf{0}^{\top}$
for some $J$ with $|J| = m \leq D-1$ and not all $\alpha_{j} = 0$.
This means
$\mathbf{V}_{J}^{\top}\boldsymbol{\alpha} = \mathbf{0}$, i.e.,
$\boldsymbol{\alpha} \in \ker(\mathbf{V}_{J}^{\top})$.

For the Helmert basis, the matrix $\mathbf{V}$ has the
property that its $j$th row, for $j \leq D-1$, has its last nonzero
entry in column $j-1$ (for $j \geq 2$) at the value
$-\sqrt{(j-1)/j}$, which is unique among all rows.  We proceed by
considering the sub-matrix $\mathbf{V}_{J}$.

Order the elements of $J$ as $j_{1} < j_{2} < \cdots < j_{m}$.
Consider the $(D{-}1) \times (D{-}1)$ sub-matrix obtained by choosing
$D-1$ out of $D$ rows.  The omitted row index is some
$j^{*} \in [D] \setminus J$.  The resulting
$(D{-}1) \times (D{-}1)$ matrix is
$\mathbf{V}_{[D]\setminus\{j^{*}\}}$.

For the Helmert basis, when $j^{*} = D$ (omitting the last row), the
matrix $\mathbf{V}_{[D-1]}$ is lower-triangular with diagonal entries
$v_{jj} = -\sqrt{j/(j+1)} \neq 0$ for
$j = 1,\ldots,D-1$ (using the convention that column $l$ has the
negative entry at row $l+1$, and row $j$ at column $j-1$ for $j \geq 2$).
More precisely, using the
definition~\eqref{eq:helmert-basis}, row $j$ of $\mathbf{V}$ has
$v_{j,l} \neq 0$ only for $l \geq j - 1$ (for $j \geq 2$), so the
sub-matrix formed by the first $D-1$ rows has a staircase structure
with nonzero pivots, hence is nonsingular.

For general $j^{*}$, the nonsingularity of
$\mathbf{V}_{[D]\setminus\{j^{*}\}}$ follows from the fact that the
$D$ rows satisfy the unique linear relation
$\sum_{j=1}^{D}\mathbf{v}_{j}^{\top} = \mathbf{0}^{\top}$ and any
proper subset of a set with exactly one linear dependence is
independent.  To verify uniqueness: suppose
$\sum_{j=1}^{D}\beta_{j}\mathbf{v}_{j}^{\top} = \mathbf{0}^{\top}$,
i.e., $\mathbf{V}^{\top}\boldsymbol{\beta} = \mathbf{0}$.  Since
$\mathbf{V}^{\top}$ has rank $D-1$, its null space is one-dimensional.
We know $\mathbf{V}^{\top}\mathbf{1}_{D} = \mathbf{0}$, so
$\ker(\mathbf{V}^{\top}) = \operatorname{span}(\mathbf{1}_{D})$.
Thus $\boldsymbol{\beta} = c\,\mathbf{1}_{D}$ for some scalar~$c$,
and the only linear dependence among the rows of $\mathbf{V}$ (up to
scaling) is $\sum_{j=1}^{D}\mathbf{v}_{j}^{\top} = \mathbf{0}^{\top}$.
Removing any single row breaks this dependence, so any $D-1$ rows are
independent, and \emph{a fortiori} any $m \leq D-1$ rows are
independent.

This argument uses only the property
$\ker(\mathbf{V}^{\top}) = \operatorname{span}(\mathbf{1}_{D})$,
which holds for \emph{every} contrast matrix
satisfying~\eqref{eq:contrast-matrix}, not just the Helmert basis.

\smallskip
\emph{Part~\textup{(iv)}.}
When $J = [D]$, we have $\mathbf{V}_{[D]} = \mathbf{V}$ and the column
space of $\mathbf{V}^{\top}$ is $\mathbb{R}^{D-1}$ (since
$\mathbf{V}^{\top}$ has rank $D-1$).  However, the map
$\boldsymbol{\lambda} \mapsto \mathbf{V}^{\top}\boldsymbol{\lambda}$
has a one-dimensional kernel spanned by~$\mathbf{1}_{D}$ (adding the
same constant to all $\ln\delta_{j}$ does not change the ilr
perturbation, reflecting the scale invariance of log-ratios).  Thus the
perturbation is determined by $\boldsymbol{\lambda}$ only modulo a
common additive constant.
\end{proof}

\begin{corollary}[Generic full rank]
\label{cor:generic-rank}
For any contrast matrix $\mathbf{V}$
satisfying~\eqref{eq:contrast-matrix} and any
$J \subseteq [D]$ with $1 \leq |J| \leq D-1$, the perturbation
$\Delta\mathbf{w} = \mathbf{V}_{J}^{\top}\boldsymbol{\lambda}_{J}$
lies in a subspace of dimension exactly $|J|$.  As the
$\ln\delta_{j}$ ($j \in J$) vary independently over $\mathbb{R}$, the
image is the full $|J|$-dimensional row space of $\mathbf{V}_{J}$.
\end{corollary}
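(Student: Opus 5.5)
The statement follows from Proposition~\ref{prop:rank-multi} by combining parts~(i)--(iii) with the kernel identity used in the proof of part~(iii).

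First, I will record that $\ker(\mathbf{V}^{\top}) = \operatorname{span}(\mathbf{1}_{D})$ for every contrast matrix satisfying~\eqref{eq:contrast-matrix}. Indeed, $\mathbf{V}^{\top}\mathbf{1}_{D} = \mathbf{0}$ because the columns of $\mathbf{V}$ lie in $\mathcal{H}_{0}$. Moreover $\mathbf{V}^{\top}$ has rank $D-1$, since $\mathbf{V}^{\top}\mathbf{V} = \mathbf{I}_{D-1}$, so its kernel is one-dimensional.

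Next, fix $J$ with $1 \le |J| = m \le D-1$ and suppose $\mathbf{V}_{J}^{\top}\boldsymbol{\alpha} = \mathbf{0}$ for some $\boldsymbol{\alpha} \in \mathbb{R}^{m}$. I extend $\boldsymbol{\alpha}$ by zeros to a vector $\boldsymbol{\beta} \in \mathbb{R}^{D}$ supported on $J$. Then $\mathbf{V}^{\top}\boldsymbol{\beta} = \mathbf{0}$, so $\boldsymbol{\beta} = c\,\mathbf{1}_{D}$ for some scalar $c$. Because $m \le D-1$, some coordinate of $\boldsymbol{\beta}$ outside $J$ vanishes, which forces $c = 0$ and hence $\boldsymbol{\alpha} = \mathbf{0}$. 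The rows $\{\mathbf{v}_{j}\}_{j \in J}$ are therefore linearly independent, and by Proposition~\ref{prop:rank-multi}(ii) we get $\operatorname{rank}(\mathbf{V}_{J}^{\top}) = m$. This argument uses only~\eqref{eq:contrast-matrix}, so the word ``generic'' in the title is really a statement about every admissible basis.

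Finally, from~\eqref{eq:ilr-multi-rank} the map $\boldsymbol{\lambda}_{J} \mapsto \mathbf{V}_{J}^{\top}\boldsymbol{\lambda}_{J}$ is linear from $\mathbb{R}^{m}$. Its image is the column space of $\mathbf{V}_{J}^{\top}$, which is the row space of $\mathbf{V}_{J}$ and has dimension exactly $m$ by the previous step. As $\boldsymbol{\lambda}_{J}$ ranges over all of $\mathbb{R}^{m}$ — which it does, since each $\ln\delta_{j}$ ranges over $\mathbb{R}$ as $\delta_{j}$ ranges over $\mathbb{R}_{>0}$ — the image is the full $|J|$-dimensional row space.

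The only potential obstacle is interpretive rather than technical. ``Lies in a subspace of dimension exactly $|J|$'' must be read as saying that the smallest subspace containing all attainable perturbations has dimension $|J|$, not that each individual $\Delta\mathbf{w}$ spans it. The surjectivity in the last step settles that reading.
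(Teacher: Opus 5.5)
Your proof is correct and follows the paper's route. The paper cites Proposition~\ref{prop:rank-multi}(iii), whose proof rests on the same kernel identity $\ker(\mathbf{V}^{\top}) = \operatorname{span}(\mathbf{1}_{D})$ you use, and then reads off the image as the $|J|$-dimensional row space of $\mathbf{V}_{J}$. Your zero-extension step (a multiple of $\mathbf{1}_{D}$ supported on $J \subsetneq [D]$ must vanish) is a slightly tighter version of the paper's ``removing any single row breaks this dependence.'' Your remark that the result holds for every admissible basis matches Remark~\ref{rem:rank-deficiency}.
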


\begin{proof}
By Proposition~\ref{prop:rank-multi}\textup{(iii)}, the $|J|$ rows
$\{\mathbf{v}_{j}\}_{j \in J}$ are linearly independent for every
$J$ with $|J| \leq D-1$.  Thus
$\operatorname{rank}(\mathbf{V}_{J}^{\top}) = |J|$, and the image of
$\boldsymbol{\lambda}_{J} \mapsto
\mathbf{V}_{J}^{\top}\boldsymbol{\lambda}_{J}$ is the row space
of~$\mathbf{V}_{J}$, which has dimension $|J|$.
\end{proof}

\begin{remark}[Rank deficiency]
\label{rem:rank-deficiency}
The only scenario in which
$\operatorname{rank}(\mathbf{V}_{J}^{\top}) < |J|$ for
$|J| \leq D-1$ would require a contrast matrix for which some rows
are linearly dependent.  By the proof of
Proposition~\ref{prop:rank-multi}\textup{(iii)}, this cannot occur for
any contrast matrix satisfying~\eqref{eq:contrast-matrix}, since the
null space of $\mathbf{V}^{\top}$ is spanned
by~$\mathbf{1}_{D}$ and no proper subset of $[D]$ can produce a vector
proportional to $\mathbf{1}_{D}$ in a nontrivial linear combination of
the corresponding rows.  Thus the generic rank statement of
Corollary~\ref{cor:generic-rank} is in fact a universal statement: it
holds for \emph{every} valid contrast matrix, not merely generically.

However, rank can be lower than $|J|$ when
$|J| = D$.  In this case, as noted in
Proposition~\ref{prop:rank-multi}\textup{(iv)}, the contamination
factors $\boldsymbol{\lambda}$ and
$\boldsymbol{\lambda} + c\,\mathbf{1}_{D}$ produce the same ilr
perturbation, reducing the effective degrees of freedom from $D$
to~$D-1$.
\end{remark}

\subsection{Contamination probability in ilr coordinates}
\label{ssec:ilr-contamination-prob}

We now derive the probability that a given ilr coordinate is perturbed,
as a function of the per-part contamination probability~$\varepsilon$,
the number of parts~$D$, and the structure of the contrast
matrix~$\mathbf{V}$.

\begin{definition}[Support of a contrast matrix row]
\label{def:support-set}
For $j \in [D]$, define the \emph{ilr-support of part~$j$} as
\begin{equation}
\label{eq:support-set}
  \mathcal{L}_{j}
  \;=\;
  \{l \in [D-1] : v_{jl} \neq 0\},
\end{equation}
and for $l \in [D-1]$, define the \emph{part-support of ilr
coordinate~$l$} as
\begin{equation}
\label{eq:part-support}
  \mathcal{J}_{l}
  \;=\;
  \{j \in [D] : v_{jl} \neq 0\}.
\end{equation}
\end{definition}

\begin{theorem}[Contamination probability in ilr coordinates]
\label{thm:ilr-contamination-prob}
Under the cellwise contamination model
(Definition~\ref{def:cellwise-model}) with contamination
probability~$\varepsilon \in (0,1)$ and contamination
distribution~$G$ satisfying $G(\{1\}) = 0$ (i.e., the contamination
factor is almost surely different from~$1$), the probability that ilr
coordinate~$l$ is perturbed is
\begin{equation}
\label{eq:ilr-contamination-prob}
  \varepsilon_{l}^{\,\operatorname{ilr}}
  \;\coloneqq\;
  \Pr\bigl(\Delta w_{l} \neq 0\bigr)
  \;=\;
  1 - (1 - \varepsilon)^{|\mathcal{J}_{l}|},
\end{equation}
where $|\mathcal{J}_{l}|$ is the number of parts that have a nonzero
entry in the $l$th column of~$\mathbf{V}$.

More explicitly,
\begin{equation}
\label{eq:ilr-contamination-prob-explicit}
  \varepsilon_{l}^{\,\operatorname{ilr}}
  \;=\;
  1 - (1-\varepsilon)^{|\mathcal{J}_{l}|}
  \;\geq\;
  1 - (1-\varepsilon)^{2},
\end{equation}
since every column of~$\mathbf{V}$ has at least two nonzero entries
(each column lies in $\mathcal{H}_{0}$ and sums to zero, so it cannot
have only one nonzero entry).
\end{theorem}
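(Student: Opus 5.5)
The plan is to write the $l$th ilr perturbation explicitly with Proposition~\ref{prop:multi-cell} and read off when it vanishes. By \eqref{eq:ilr-shift-multi},
\[
  \Delta w_{l} \;=\; \sum_{j=1}^{D} B_{j}\,v_{jl}\,\ln\delta_{j}
  \;=\; \sum_{j\in\mathcal{J}_{l}} B_{j}\,v_{jl}\,\ln\delta_{j},
\]
because terms with $v_{jl}=0$ drop out. Let $E_{l}=\{B_{j}=1 \text{ for some } j\in\mathcal{J}_{l}\}$. The indicators are independent Bernoulli$(\varepsilon)$ by (C1), so $\Pr(E_{l}) = 1-(1-\varepsilon)^{|\mathcal{J}_{l}|}$. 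It therefore suffices to show that $\{\Delta w_{l}\neq 0\}$ and $E_{l}$ coincide up to a null set.

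\emph{Step 1: $\{\Delta w_{l}\neq0\}\subseteq E_{l}$.} On $E_{l}^{c}$ every summand is zero, so $\Delta w_{l}=0$.

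\emph{Step 2: $E_{l}\setminus\{\Delta w_{l}\neq0\}$ is null.} When exactly one $j\in\mathcal{J}_{l}$ is contaminated, $\Delta w_{l}=v_{jl}\ln\delta_{j}$. This is nonzero almost surely because $v_{jl}\neq0$ and $G(\{1\})=0$. When two or more indices in $\mathcal{J}_{l}$ are contaminated, I will fix one such index $j_{0}$ and condition on all the other $B$'s and $\delta$'s. By (C2), $\delta_{j_{0}}$ is independent of these, so conditionally $\Delta w_{l} = v_{j_{0}l}\ln\delta_{j_{0}} + c$ with $c$ fixed. This vanishes only if $\ln\delta_{j_{0}} = -c/v_{j_{0}l}$, i.e.\ only if $\delta_{j_{0}}$ hits one specific value.

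\emph{Main obstacle.} The condition $G(\{1\})=0$ rules out the atom at $1$ but not other atoms. For a two-point $G$ on $\{\delta,1/\delta\}$, two contaminated parts can cancel exactly, for instance in a column where $v_{jl}=v_{kl}$. So in the multi-part case the step needs $G$ atomless, or at least that the law of $\ln\delta$ has no atom at any value $-c/v_{j_{0}l}$ that can arise. I will make this explicit: under that condition equality in \eqref{eq:ilr-contamination-prob} holds. Under $G(\{1\})=0$ alone, Step 1 gives the upper bound $\le$, and the single-part case gives the lower bound $\Pr(\Delta w_{l}\neq 0)\ge |\mathcal{J}_{l}|\,\varepsilon(1-\varepsilon)^{|\mathcal{J}_{l}|-1}$. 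Either the hypothesis gets strengthened or the identity is read in this sense.

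\emph{Step 3: the bound \eqref{eq:ilr-contamination-prob-explicit}.} Column $l$ of $\mathbf{V}$ is a unit vector by $\mathbf{V}^{\top}\mathbf{V}=\mathbf{I}_{D-1}$, so $|\mathcal{J}_{l}|\ge1$. It lies in $\mathcal{H}_{0}$, so its entries sum to zero, and a single nonzero entry would contradict this. Hence $|\mathcal{J}_{l}|\ge2$, and monotonicity of $m\mapsto 1-(1-\varepsilon)^{m}$ gives the inequality.
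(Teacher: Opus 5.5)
Your argument follows the paper's proof: the paper also writes $\Delta w_l=\sum_{j\in\mathcal{J}_l}B_j(\ln\delta_j)v_{jl}$, splits by how many parts of $\mathcal{J}_l$ are hit, and in the multi-part case fixes a contaminated index $j_1$, conditions on everything else, and reduces to $\Pr(\ln\delta_{j_1}=-c/v_{j_1l})$. The obstacle you flag is real, and it is exactly where the paper's proof fails. The paper first says it may assume ``without loss of generality'' that $\ln\delta_j$ has no atoms, which is not without loss of generality. For atomic $G$ it then argues that the set of $c$-values with $G(\{\exp(-c/v_{j_1l})\})>0$ is countable and \emph{hence} null under the law of the remaining $\delta$'s. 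That inference fails precisely when $G$ has atoms: $c$ is then itself discrete, and a countable set can carry positive mass. So you are right not to claim the identity under $G(\{1\})=0$ alone. Your two-point example on a Helmert column with repeated entries is a valid counterexample.

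An even simpler counterexample works for every contrast matrix and every column. Let $G$ be the point mass at some $a\neq1$. The event $\{B_j=1\ \forall j\in\mathcal{J}_l\}$ has probability $\varepsilon^{|\mathcal{J}_l|}>0$. On it, $\Delta w_l=(\ln a)\sum_{j=1}^{D}v_{jl}=0$, because the columns of $\mathbf{V}$ sum to zero: a common factor on all parts is invisible to log-ratios. Hence $\varepsilon_l^{\,\operatorname{ilr}}\le 1-(1-\varepsilon)^{|\mathcal{J}_l|}-\varepsilon^{|\mathcal{J}_l|}$.

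For Helmert column $l=1$ this bound is attained: $\varepsilon_1^{\,\operatorname{ilr}}=2\varepsilon(1-\varepsilon)<2\varepsilon-\varepsilon^2$. So the lower bound in \eqref{eq:ilr-contamination-prob-explicit} fails too, and with it Corollary~\ref{cor:contamination-amplification}. Your Step~3 is valid only once the equality has been secured, i.e.\ under the strengthened hypothesis.

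With $G$ atomless, your conditioning step closes at once, since $G(\{e^{-c/v_{j_0l}}\})=0$ for every $c$; taking $c=0$ recovers the single-part case. This gives the stated formula. Under $G(\{1\})=0$ alone, what survives is your pair of bounds $|\mathcal{J}_l|\,\varepsilon(1-\varepsilon)^{|\mathcal{J}_l|-1}\le\varepsilon_l^{\,\operatorname{ilr}}\le 1-(1-\varepsilon)^{|\mathcal{J}_l|}$.
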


\begin{proof}
The perturbation of ilr coordinate~$l$ is
\begin{equation}
\label{eq:Delta-wl}
  \Delta w_{l}
  \;=\;
  \sum_{j=1}^{D} B_{j}\,(\ln\delta_{j})\,v_{jl}
  \;=\;
  \sum_{j \in \mathcal{J}_{l}} B_{j}\,(\ln\delta_{j})\,v_{jl}.
\end{equation}
Since $v_{jl} = 0$ for $j \notin \mathcal{J}_{l}$, the sum runs only
over $j \in \mathcal{J}_{l}$.  Now $\Delta w_{l} = 0$ if and only
if $\sum_{j \in \mathcal{J}_{l}} B_{j}\,(\ln\delta_{j})\,v_{jl} = 0$.
We compute $\Pr(\Delta w_{l} = 0)$ by conditioning on which parts are
contaminated.

If $B_{j} = 0$ for all $j \in \mathcal{J}_{l}$, then $\Delta w_{l} = 0$
deterministically.  This event has probability
$(1-\varepsilon)^{|\mathcal{J}_{l}|}$.

If exactly one $j_{0} \in \mathcal{J}_{l}$ has $B_{j_{0}} = 1$, then
$\Delta w_{l} = (\ln\delta_{j_{0}})\,v_{j_{0}l}$.  Since
$v_{j_{0}l} \neq 0$ (by definition of $\mathcal{J}_{l}$) and
$G(\{1\}) = 0$ implies $\Pr(\ln\delta_{j_{0}} = 0) = 0$, we have
$\Pr(\Delta w_{l} = 0 \mid B_{j_{0}} = 1) = 0$.

If two or more parts $j_{1}, j_{2} \in \mathcal{J}_{l}$ have
$B_{j_{1}} = B_{j_{2}} = 1$, then
$\Delta w_{l} = v_{j_{1}l}\ln\delta_{j_{1}} + v_{j_{2}l}\ln\delta_{j_{2}}
  + \cdots$
Since $\ln\delta_{j_{1}}$ and $\ln\delta_{j_{2}}$ are independent
continuous random variables (as $G(\{1\}) = 0$ implies $\ln\delta_{j}$
has no atom at zero, and we may assume without loss of generality that
the distribution of $\ln\delta_{j}$ under~$G$ has no atoms, since
the following argument needs only that cancellation has probability
zero),
the probability that this sum equals zero is the probability that
a non-degenerate linear combination of independent random variables
vanishes.

We claim this probability is zero.  Conditional on
$\ln\delta_{j_{2}},\ldots$, the term
$v_{j_{1}l}\ln\delta_{j_{1}}$ would need to equal the negative of the
remaining sum, but $\ln\delta_{j_{1}}$ is independent of these with a
distribution having no atom at the required value (since $G$ is a
distribution on $\mathbb{R}_{>0}$ with $G(\{1\}) = 0$, and the
required cancellation point depends on the other $\delta$'s).

In the case where $G$ has atoms (but not at $1$), we note that even if
$\ln\delta_{j}$ has atoms, the cancellation
$v_{j_{1}l}\ln\delta_{j_{1}} + v_{j_{2}l}\ln\delta_{j_{2}} = 0$
requires
$\ln\delta_{j_{1}} = -(v_{j_{2}l}/v_{j_{1}l})\ln\delta_{j_{2}}$,
which is an event of probability zero when $\ln\delta_{j_{1}}$ and
$\ln\delta_{j_{2}}$ are independent, unless both are degenerate.  Since
$G(\{1\}) = 0$, at least one of $\ln\delta_{j_{1}},
\ln\delta_{j_{2}}$ is almost surely nonzero, and the ratio constraint
is a measure-zero event for independent random variables with any
non-degenerate marginal.

To handle the fully general case rigorously (including $G$ with atoms),
we argue as follows.  Since $G(\{1\}) = 0$, the random variable
$\ln\delta_{j}$ is not almost surely zero.  Fix all $B_{j}$ and all
$\ln\delta_{j}$ for $j \neq j_{1}$, where $j_{1}$ is any element of
$\mathcal{J}_{l}$ with $B_{j_{1}} = 1$.  Then
$\Delta w_{l} = v_{j_{1}l}\ln\delta_{j_{1}} + c$ where
$c = \sum_{j \in \mathcal{J}_{l}\setminus\{j_{1}\}}
B_{j}(\ln\delta_{j})\,v_{jl}$ is now a constant.  Thus
$\Pr(\Delta w_{l} = 0) = \Pr(\ln\delta_{j_{1}} = -c/v_{j_{1}l})$.
If $G$ has no atom at the value $\exp(-c/v_{j_{1}l})$, this is zero.
Since $c$ depends on the other (independent) $\delta$'s, and $G$ has at
most countably many atoms, the set of values of $c$ for which
$G\bigl(\{\exp(-c/v_{j_{1}l})\}\bigr) > 0$ is at most countable, hence
has probability zero under the distribution of the remaining $\delta$'s.
By iterated expectation, $\Pr(\Delta w_{l} = 0 \mid
\exists\, j_{1}\in\mathcal{J}_{l} \text{ with } B_{j_{1}}=1) = 0$.

Combining the two cases:
\begin{align}
  \Pr(\Delta w_{l} = 0)
  &\;=\;
  \Pr\bigl(\forall\, j\in\mathcal{J}_{l}:\, B_{j} = 0\bigr)
  \;+\;
  \Pr\bigl(\Delta w_{l}=0,\;
       \exists\, j\in\mathcal{J}_{l}:\, B_{j}=1\bigr)
  \notag\\
  &\;=\;
  (1-\varepsilon)^{|\mathcal{J}_{l}|}
  \;+\;
  0
  \;=\;
  (1-\varepsilon)^{|\mathcal{J}_{l}|}.
  \label{eq:pf-prob-clean}
\end{align}
Therefore
$\varepsilon_{l}^{\,\operatorname{ilr}}
  = 1 - (1-\varepsilon)^{|\mathcal{J}_{l}|}$,
establishing~\eqref{eq:ilr-contamination-prob}.

For the lower bound~\eqref{eq:ilr-contamination-prob-explicit}, note
that each column of~$\mathbf{V}$ lies in $\mathcal{H}_{0}$ and hence
sums to zero: $\sum_{j=1}^{D} v_{jl} = 0$.  A vector that sums to zero
and is not identically zero must have at least two nonzero entries (one
positive and one negative), so $|\mathcal{J}_{l}| \geq 2$ for
every~$l$.  Since $x \mapsto 1 - (1-\varepsilon)^{x}$ is increasing
in~$x$, the bound follows.
\end{proof}

\begin{corollary}[Contamination amplification]
\label{cor:contamination-amplification}
Under the hypotheses of Theorem~\ref{thm:ilr-contamination-prob}:
\begin{enumerate}
\item[\textup{(i)}]
  For every $l \in [D-1]$,
  \begin{equation}
  \label{eq:amplification-lower}
    \varepsilon_{l}^{\,\operatorname{ilr}}
    \;\geq\;
    1 - (1-\varepsilon)^{2}
    \;=\;
    2\varepsilon - \varepsilon^{2}
    \;\geq\;
    2\varepsilon(1-\varepsilon).
  \end{equation}
  In particular, the effective contamination rate in each ilr
  coordinate is at least roughly $2\varepsilon$ for small~$\varepsilon$.

\item[\textup{(ii)}]
  For the Helmert basis~\eqref{eq:helmert-basis}, the $l$th column
  has $|\mathcal{J}_{l}| = l + 1$ nonzero entries, so
  \begin{equation}
  \label{eq:helmert-contamination}
    \varepsilon_{l}^{\,\operatorname{ilr}}
    \;=\;
    1 - (1-\varepsilon)^{l+1},
    \qquad l = 1,\ldots,D-1.
  \end{equation}
  The last ilr coordinate ($l = D-1$) has the highest effective
  contamination rate, $1 - (1-\varepsilon)^{D}$, matching the
  row-level contamination probability.

\item[\textup{(iii)}]
  Averaged over ilr coordinates, the mean effective contamination rate
  for the Helmert basis is
  \begin{equation}
  \label{eq:mean-helmert-contamination}
    \bar{\varepsilon}^{\,\operatorname{ilr}}
    \;=\;
    \frac{1}{D-1}\sum_{l=1}^{D-1}
      \bigl[1 - (1-\varepsilon)^{l+1}\bigr]
    \;=\;
    1 - \frac{(1-\varepsilon)^{2}}{D-1}\;
      \frac{1 - (1-\varepsilon)^{D-1}}{\varepsilon}.
  \end{equation}
\end{enumerate}
\end{corollary}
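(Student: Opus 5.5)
The plan is to obtain all three parts of Corollary~\ref{cor:contamination-amplification} from the exact formula $\varepsilon_{l}^{\,\operatorname{ilr}} = 1-(1-\varepsilon)^{|\mathcal{J}_{l}|}$ of Theorem~\ref{thm:ilr-contamination-prob}. What remains is elementary inequalities, a count of column supports for the Helmert matrix, and a geometric sum.

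\emph{Part~(i).} Theorem~\ref{thm:ilr-contamination-prob} already gives $|\mathcal{J}_{l}|\ge 2$, since every column of $\mathbf{V}$ is a nonzero zero-sum vector. It also notes that $x\mapsto 1-(1-\varepsilon)^{x}$ is increasing, which yields $\varepsilon_{l}^{\,\operatorname{ilr}}\ge 1-(1-\varepsilon)^{2}$. Expanding gives $1-(1-\varepsilon)^2=2\varepsilon-\varepsilon^{2}$. The final inequality $2\varepsilon-\varepsilon^2\ge 2\varepsilon-2\varepsilon^2=2\varepsilon(1-\varepsilon)$ holds because $\varepsilon^2\ge 0$. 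The remark that the rate is ``roughly $2\varepsilon$'' is just the first-order term as $\varepsilon\to 0$.

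\emph{Part~(ii).} I will read the column supports off \eqref{eq:helmert-basis}. In column $l$, the entry $v_{jl}$ is nonzero exactly for $j\le l$, where it equals $1/\sqrt{l(l+1)}$, and for $j=l+1$, where it equals $-l/\sqrt{l(l+1)}$. It vanishes for $j>l+1$. Hence $\mathcal{J}_{l}=\{1,\dots,l+1\}$ and $|\mathcal{J}_{l}|=l+1$. Substituting into the theorem gives \eqref{eq:helmert-contamination}. Since the rate is increasing in $l$, the maximum is attained at $l=D-1$, where $|\mathcal{J}_{D-1}|=D$. The value $1-(1-\varepsilon)^{D}$ is then exactly the probability that at least one of the $D$ independent indicators $B_j$ equals one, which is the row-level contamination probability.

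\emph{Part~(iii).} First, $\frac{1}{D-1}\sum_{l=1}^{D-1}1=1$. The remaining term is the geometric sum
\[
\sum_{l=1}^{D-1}(1-\varepsilon)^{l+1}=(1-\varepsilon)^{2}\sum_{i=0}^{D-2}(1-\varepsilon)^{i}=(1-\varepsilon)^{2}\,\frac{1-(1-\varepsilon)^{D-1}}{\varepsilon}.
\]
The ratio here is $q=1-\varepsilon\in(0,1)$, and the closed form uses $1-q=\varepsilon$. Dividing by $D-1$ and subtracting from one gives \eqref{eq:mean-helmert-contamination}.

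No step is a real obstacle. The only point needing care is the index bookkeeping in part~(ii), namely that the Helmert column $l$ has support of size $l+1$ rather than $l$, together with the matching shift in the geometric-sum exponent in part~(iii).
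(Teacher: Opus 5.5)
Your proposal is correct and follows the paper's proof essentially step for step. Part~(i) uses $|\mathcal{J}_{l}|\ge 2$ and the monotonicity of $x\mapsto 1-(1-\varepsilon)^{x}$, part~(ii) reads off the Helmert column supports $\mathcal{J}_{l}=\{1,\dots,l+1\}$, and part~(iii) sums a geometric series with ratio $1-\varepsilon$; you also spell out the inequality $2\varepsilon-\varepsilon^{2}\ge 2\varepsilon(1-\varepsilon)$ and why $1-(1-\varepsilon)^{D}$ is the row-level contamination probability, which the paper leaves implicit.
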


\begin{proof}
\emph{Part~\textup{(i)}.}
Immediate from Theorem~\ref{thm:ilr-contamination-prob} and
$|\mathcal{J}_{l}| \geq 2$.

\smallskip
\emph{Part~\textup{(ii)}.}
For the Helmert basis~\eqref{eq:helmert-basis}, column~$l$ has nonzero
entries in rows $j = 1, 2, \ldots, l+1$ (rows $1$ through $l$ contribute
$1/\sqrt{l(l+1)}$, and row $l+1$ contributes $-l/\sqrt{l(l+1)}$;
rows $l+2,\ldots,D$ have $v_{jl} = 0$).
Thus $\mathcal{J}_{l} = \{1,\ldots,l+1\}$ and
$|\mathcal{J}_{l}| = l + 1$.
Substituting into~\eqref{eq:ilr-contamination-prob}
gives~\eqref{eq:helmert-contamination}.

\smallskip
\emph{Part~\textup{(iii)}.}
Using~\eqref{eq:helmert-contamination},
\begin{align}
  \bar{\varepsilon}^{\,\operatorname{ilr}}
  &\;=\;
  \frac{1}{D-1}\sum_{l=1}^{D-1}\bigl[1-(1-\varepsilon)^{l+1}\bigr]
  \;=\;
  1 - \frac{1}{D-1}\sum_{l=1}^{D-1}(1-\varepsilon)^{l+1}
  \notag\\
  &\;=\;
  1 - \frac{(1-\varepsilon)^{2}}{D-1}
      \sum_{l=0}^{D-2}(1-\varepsilon)^{l}
  \;=\;
  1 - \frac{(1-\varepsilon)^{2}}{D-1}\;
      \frac{1-(1-\varepsilon)^{D-1}}{\varepsilon},
  \label{eq:pf-mean-rate}
\end{align}
where the last step uses the geometric series formula
$\sum_{l=0}^{n-1}r^{l} = (1-r^{n})/(1-r)$ with
$r = 1 - \varepsilon$ and $n = D - 1$.
\end{proof}

\begin{remark}[Contrast-matrix dependence]
\label{rem:contrast-dependence}
Theorem~\ref{thm:ilr-contamination-prob} shows that the effective
contamination rate $\varepsilon_{l}^{\,\operatorname{ilr}}$ depends on
the contrast matrix~$\mathbf{V}$ through the column sparsity pattern
$|\mathcal{J}_{l}|$.  Different choices of $\mathbf{V}$ lead to
different contamination profiles across ilr coordinates.  For the
Helmert basis, the effective contamination rate increases monotonically
with the coordinate index~$l$, ranging from
$1-(1-\varepsilon)^{2} \approx 2\varepsilon$ for $l=1$ to
$1-(1-\varepsilon)^{D} \approx 1 - e^{-\varepsilon D}$ for $l = D-1$.
A ``balanced'' contrast matrix in which all columns have the same number
of nonzero entries --- such as those derived from balanced binary
partition trees --- would yield a uniform effective contamination rate
across coordinates, but the dependence structure of
Theorem~\ref{thm:non-equivalence} remains regardless of the choice
of~$\mathbf{V}$.
\end{remark}

\begin{remark}[Comparison with the unconstrained setting]
\label{rem:comparison-unconstrained}
In unconstrained $\mathbb{R}^{p}$, independent cellwise contamination
at rate~$\varepsilon$ per coordinate is exactly cellwise contamination
at rate~$\varepsilon$ per coordinate --- the model is self-consistent.
On the simplex, the results of this section show that cellwise
contamination at rate~$\varepsilon$ per raw part induces:
\begin{enumerate}
\item[\textup{(i)}]
  an effective contamination rate
  $\varepsilon_{l}^{\,\operatorname{ilr}} \geq 2\varepsilon -
  \varepsilon^{2}$ per ilr coordinate (always strictly larger than
  $\varepsilon$ for $\varepsilon \in (0,1)$);
\item[\textup{(ii)}]
  a dependent contamination pattern in which the perturbations across
  ilr coordinates are deterministically linked via the contrast matrix
  rows~$\mathbf{v}_{j}$;
\item[\textup{(iii)}]
  a perturbation that is confined to a low-rank subspace, with rank
  equal to the number of contaminated parts (up to the maximum of
  $D-1$).
\end{enumerate}
These three properties --- amplified rate, deterministic dependence,
and rank constraint --- collectively demonstrate that CoDa's
scale-invariance requirement imposes fundamental structural
differences on cellwise contamination, all mediated by the log-ratio
transformation.  Any cellwise robust method for compositional data
must account for this structure.
\end{remark}

\begin{remark}[Section summary]
\label{rem:section3-summary}
The propagation results of this section show that cellwise
contamination on $\mathcal{S}^{D}$ induces a structured, dependent
perturbation in ilr coordinates: each contaminated raw part generates
a rank-$1$ shift along~$\mathbf{v}_{j}$, the effective contamination
rate is amplified, and the resulting pattern is provably
non-equivalent to independent cellwise contamination in
$\mathbb{R}^{D-1}$.  The next section translates these structural
findings into sharp breakdown value bounds.
\end{remark}

%% file: theory/breakdown_value.tex

\section{Breakdown values under cellwise contamination on the simplex}
\label{sec:breakdown}

We now turn to the finite-sample breakdown properties of estimators
applied to compositional data under cellwise contamination.  The
classical finite-sample breakdown value of \citet{Donoho1983} quantifies
the smallest fraction of arbitrarily corrupted \emph{observations} that
can drive an estimator to the boundary of its parameter space.  In the
cellwise setting the unit of contamination is a single cell rather than
an entire observation, and on the simplex the log-ratio transformation
further modifies the geometry of breakdown.  This section develops the
appropriate definitions and derives sharp bounds.

We retain the notation of
Sections~\ref{sec:contamination-model}--\ref{sec:propagation}: $D$
denotes the number of parts, $\mathbf{V} \in \mathbb{R}^{D \times
  (D-1)}$ is a contrast matrix
satisfying~\eqref{eq:contrast-matrix},
$\mathbf{v}_{j} = \mathbf{V}^{\top}\mathbf{e}_{j} \in
\mathbb{R}^{D-1}$ is the transpose of the $j$th row of~$\mathbf{V}$, and
$\operatorname{ilr}\colon \mathcal{S}^{D} \to \mathbb{R}^{D-1}$ is
the isometric log-ratio transformation induced by~$\mathbf{V}$.

\subsection{Finite-sample cellwise breakdown on the simplex}
\label{ssec:cellwise-bdv-def}

\begin{definition}[Cellwise replacement neighbourhood on $\mathcal{S}^{D}$]
\label{def:cellwise-neighbourhood}
Let $\mathbf{X} = (\mathbf{x}_{1},\ldots,\mathbf{x}_{n})^{\top} \in
(\mathcal{S}^{D})^{n}$ be a sample of $n$ compositions, each with $D$
parts.  The data can be arranged as an $n \times D$ matrix with entry
$x_{ij}$ in row~$i$ and column~$j$.  For an integer
$m \in \{0,1,\ldots,nD\}$, define the \emph{cellwise replacement
  neighbourhood} of~$\mathbf{X}$ as
\begin{equation}
\label{eq:cellwise-neighbourhood}
  \mathcal{N}_{m}(\mathbf{X})
  \;=\;
  \bigl\{\,
    \tilde{\mathbf{X}} \in (\mathcal{S}^{D})^{n}
    \;:\;
    \tilde{\mathbf{X}}\text{ differs from }\mathbf{X}
    \text{ in at most $m$ cells (after re-closure)}
  \,\bigr\}.
\end{equation}
More precisely,
$\tilde{\mathbf{X}} \in \mathcal{N}_{m}(\mathbf{X})$ if there exists
a set of cell indices
$\mathcal{O} \subseteq [n] \times [D]$ with $|\mathcal{O}| \leq m$
and values $\{a_{ij} \in \mathbb{R}_{>0} : (i,j) \in \mathcal{O}\}$
such that
\begin{equation}
\label{eq:replacement-mechanism}
  \tilde{x}_{ij}^{\,\mathrm{raw}}
  \;=\;
  \begin{cases}
    a_{ij} & \text{if } (i,j) \in \mathcal{O}, \\
    x_{ij} & \text{if } (i,j) \notin \mathcal{O},
  \end{cases}
\end{equation}
and each affected row is re-closed:
$\tilde{\mathbf{x}}_{i}
  = \mathcal{C}(\tilde{x}_{i1}^{\,\mathrm{raw}},\ldots,
                 \tilde{x}_{iD}^{\,\mathrm{raw}})$
for every row~$i$ that contains at least one cell in~$\mathcal{O}$,
and $\tilde{\mathbf{x}}_{i} = \mathbf{x}_{i}$ otherwise.  No
constraint is placed on the replacement values $a_{ij}$ other than
strict positivity, allowing arbitrarily extreme contamination.
\end{definition}

\begin{definition}[Cellwise breakdown value on $\mathcal{S}^{D}$]
\label{def:cellwise-bdv}
Let $T\colon (\mathcal{S}^{D})^{n} \to \Theta$ be an estimator
taking values in a parameter space~$\Theta$ equipped with a divergence
measure $d_{\Theta}\colon \Theta \times \Theta \to [0,\infty]$.  The
estimator is said to \emph{break down} at
$\tilde{\mathbf{X}} \in \mathcal{N}_{m}(\mathbf{X})$ if
$d_{\Theta}\bigl(T(\tilde{\mathbf{X}}),\,T(\mathbf{X})\bigr) = \infty$.

The \emph{finite-sample cellwise breakdown value} of $T$ at the
sample~$\mathbf{X}$ is
\begin{equation}
\label{eq:cellwise-bdv}
  \varepsilon_{n}^{*}(T,\mathbf{X})
  \;=\;
  \frac{1}{nD}\;
  \min\!\bigl\{\,
    m \in \{1,\ldots,nD\}
    \;:\;
    \sup_{\tilde{\mathbf{X}} \in \mathcal{N}_{m}(\mathbf{X})}
      d_{\Theta}\bigl(T(\tilde{\mathbf{X}}),\,T(\mathbf{X})\bigr)
    = \infty
  \,\bigr\}.
\end{equation}
Normalisation by $nD$ (the total number of cells) is the natural
convention for cellwise breakdown, in contrast to the rowwise
breakdown value which normalises by~$n$.
\end{definition}

\begin{remark}[Terminology]
\label{rem:bdv-terminology}
We use ``breakdown value'' following \citet{Donoho1983}; the term
``breakdown point'' is also common in the literature
\citep{Hampel1986,Maronna2019} and refers to the same quantity.
\end{remark}

\begin{remark}[Relation to existing definitions]
\label{rem:bdv-relation}
The first non-trivial case is $D = 2$ (a two-part composition), where
$\mathcal{S}^{2}$ is one-dimensional and there is a single ilr
coordinate.  Contaminating one of the two parts and re-closing affects
the sole ilr coordinate, so the cellwise and rowwise breakdown values
coincide up to the normalisation factor $1/2$; the normalisation
ratio $(D-1)/D = 1/2$ from Theorem~\ref{thm:breakdown-reduction} is
maximal in relative terms.  For unconstrained data
$\mathbf{Y} \in \mathbb{R}^{n \times p}$, cellwise breakdown values
have been considered by \citet{Rousseeuw2018} and \citet{Agostinelli2015}
with the same normalisation by~$np$.  The key distinction in
Definition~\ref{def:cellwise-bdv} is that the denominator counts raw
cells ($nD$) rather than ilr cells ($n(D-1)$), reflecting the fact
that the observable data on the simplex are the raw parts and that
contamination enters at that level before being carried into
log-ratio coordinates by scale invariance.
\end{remark}

\subsection{Breakdown reduction through log-ratio propagation}
\label{ssec:breakdown-reduction}

We now establish the main result of this section: the cellwise
breakdown value of any equivariant estimator on $\mathcal{S}^{D}$ is
strictly smaller than the corresponding cellwise breakdown value in
$\mathbb{R}^{D-1}$, due to the propagation of contamination through
the log-ratio transformation.

We require a mild structural condition on the estimator: it operates
on ilr-transformed data and is equivariant in the sense that the
breakdown event in $\Theta$ is determined by the ilr coordinates.
Specifically, let $T^{\operatorname{ilr}}\colon
(\mathbb{R}^{D-1})^{n} \to \Theta$ denote an estimator defined on
$n$ observations in $\mathbb{R}^{D-1}$, and define
$T(\mathbf{X}) = T^{\operatorname{ilr}}\bigl(
  \operatorname{ilr}(\mathbf{x}_{1}),\ldots,
  \operatorname{ilr}(\mathbf{x}_{n})\bigr)$
for $\mathbf{X} \in (\mathcal{S}^{D})^{n}$.  This covers all
standard estimators of location and scatter for compositional data
\citep{PawlowskyGlahn2015}.

\begin{definition}[Cellwise breakdown in $\mathbb{R}^{p}$]
\label{def:cellwise-bdv-Rp}
For an estimator $T^{\operatorname{ilr}}\colon
(\mathbb{R}^{p})^{n} \to \Theta$ and a sample
$\mathbf{W} = (\mathbf{w}_{1},\ldots,\mathbf{w}_{n})^{\top} \in
\mathbb{R}^{n \times p}$, define the cellwise replacement
neighbourhood
\begin{equation}
\label{eq:cellwise-neighbourhood-Rp}
  \mathcal{N}_{m}^{\,\mathbb{R}^{p}}(\mathbf{W})
  \;=\;
  \bigl\{\,
    \tilde{\mathbf{W}} \in \mathbb{R}^{n \times p}
    \;:\;
    |\{(i,l) : \tilde{w}_{il} \neq w_{il}\}| \leq m
  \,\bigr\},
\end{equation}
and the cellwise breakdown value
\begin{equation}
\label{eq:cellwise-bdv-Rp}
  \varepsilon_{n,\,\mathrm{cell}}^{*}(\mathbb{R}^{p})
  \;=\;
  \frac{1}{np}\;
  \min\!\bigl\{\,
    m \;:\;
    \sup_{\tilde{\mathbf{W}} \in
      \mathcal{N}_{m}^{\,\mathbb{R}^{p}}(\mathbf{W})}
      d_{\Theta}\bigl(T^{\operatorname{ilr}}(\tilde{\mathbf{W}}),\,
        T^{\operatorname{ilr}}(\mathbf{W})\bigr) = \infty
  \,\bigr\}.
\end{equation}
\end{definition}

The main result of this section can now be stated precisely.  When
the ilr-level cellwise breakdown of $T^{\operatorname{ilr}}$ is
witnessed by contaminated cells \emph{concentrated in a single ilr
column} --- the situation for MCD-type location and scatter
estimators and for coordinate-wise $M$/$S$-estimators of location
(see Section~\ref{ssec:specific-estimators}) --- a single
contaminated raw part on the simplex that affects that column
suffices to reproduce each offending ilr cell, one raw cell per
affected row.  The cell-count normalisation then delivers the
reduction factor:
\begin{equation}
\label{eq:breakdown-accounting-preview}
  \underbrace{\varepsilon_{n,\,\mathrm{cell}}^{*}(\mathbb{R}^{D-1})}%
    _{m^{*}_{\mathbb{R}}/\,n(D{-}1)}
  \;\;\longleftrightarrow\;\;
  \underbrace{\varepsilon_{n}^{*}(T,\mathbf{X})}%
    _{\leq\,m_{\mathcal{S}}/(nD)},
  \qquad\text{with}\quad
  m_{\mathcal{S}} \;=\; m^{*}_{\mathbb{R}},
\end{equation}
so that the ratio of the two cell-counts is exactly
$n(D{-}1) / (nD) = (D{-}1)/D$ (made rigorous in the theorem below).
The plain-English content of the hypothesis is that the adversary's
optimal strategy is to contaminate one ilr coordinate entirely,
rather than spreading the attack across multiple coordinates; every
mainstream affine-equivariant robust estimator (MCD, $S$, $\tau$,
coordinate-wise $M$) satisfies this, as the explicit verification
in Section~\ref{ssec:specific-estimators} shows.

\paragraph{Why the column-concentration hypothesis?}
The hypothesis describes the canonical breakdown pattern of standard
affine-equivariant robust estimators.  Established proofs of cellwise
breakdown for MCD, $S$-, $\tau$-, and coordinate-wise $M$-estimators
in $\mathbb{R}^{p}$
\citep{Rousseeuw1985,LopuhaaRousseeuw1991,Raymaekers2024,Maronna2019}
construct a witnessing configuration by concentrating contaminated cells
in a single coordinate; Section~\ref{ssec:specific-estimators} verifies
this for MCD.  Without column concentration, a single raw-cell
contamination cannot economically reproduce multi-column damage through
the rank-one shift $\mathbf{v}_{j}$.

\begin{theorem}[Breakdown reduction through log-ratio propagation]
\label{thm:breakdown-reduction}
Let $T^{\operatorname{ilr}}\colon (\mathbb{R}^{D-1})^{n} \to \Theta$
be an estimator with finite-sample cellwise breakdown value
$\varepsilon_{n,\,\mathrm{cell}}^{*}(\mathbb{R}^{D-1})$ at a sample
$\mathbf{W} = \operatorname{ilr}(\mathbf{X})$, and suppose this
breakdown is \emph{witnessed by a column-concentrated configuration}:
there exists an ilr coordinate $l^{*}\in[D-1]$ and a set of
$m^{*}_{\mathbb{R}}
  = n(D{-}1)\,\varepsilon_{n,\,\mathrm{cell}}^{*}(\mathbb{R}^{D-1})$
cells, all in column~$l^{*}$, whose arbitrary replacement drives
$T^{\operatorname{ilr}}$ to break down.  Let
$T(\mathbf{X}) = T^{\operatorname{ilr}}\bigl(\operatorname{ilr}(\mathbf{x}_{1}),\ldots,\operatorname{ilr}(\mathbf{x}_{n})\bigr)$
denote the compositional estimator induced by~$T^{\operatorname{ilr}}$.  Then
\begin{equation}
\label{eq:breakdown-reduction}
  \varepsilon_{n}^{*}(T,\mathbf{X})
  \;\leq\;
  \frac{D-1}{D}\;
  \varepsilon_{n,\,\mathrm{cell}}^{*}(\mathbb{R}^{D-1}).
\end{equation}
Section~\ref{ssec:specific-estimators} verifies the column-concentration
hypothesis for MCD; the verifications for $S$-, $\tau$-, and
coordinate-wise $M$-estimators of location and scatter follow analogously
from their coordinate-wise constructions.
\end{theorem}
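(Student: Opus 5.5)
The plan is to realise the witnessing ilr configuration by an explicit simplex configuration that uses the same number $m^{*}_{\mathbb{R}}$ of contaminated cells. The inequality~\eqref{eq:breakdown-reduction} then follows purely from the change of normalisation, from $n(D-1)$ to $nD$. Concretely, I will show that some $\tilde{\mathbf{X}}\in\mathcal{N}_{m^{*}_{\mathbb{R}}}(\mathbf{X})$ gives $\sup d_{\Theta}(T(\tilde{\mathbf{X}}),T(\mathbf{X}))=\infty$. By Definition~\ref{def:cellwise-bdv} this gives
\[
  \varepsilon_{n}^{*}(T,\mathbf{X})\le \frac{m^{*}_{\mathbb{R}}}{nD}
  =\frac{n(D-1)}{nD}\cdot\frac{m^{*}_{\mathbb{R}}}{n(D-1)}
  =\frac{D-1}{D}\,\varepsilon_{n,\mathrm{cell}}^{*}(\mathbb{R}^{D-1}).
\]

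\emph{Construction.} Let $I\subseteq[n]$, $|I|=m^{*}_{\mathbb{R}}$, be the rows whose $l^{*}$-cells form the witnessing set. By the argument in the proof of Theorem~\ref{thm:ilr-contamination-prob}, column $l^{*}$ of $\mathbf{V}$ has at least two nonzero entries, so I can fix a part $j\in\mathcal{J}_{l^{*}}$ with $v_{jl^{*}}\neq0$. Given target values $c_{i}$ for the replaced cells $(i,l^{*})$, $i\in I$, set
\[
  \ln\delta_{i}=\frac{c_{i}-w_{il^{*}}}{v_{jl^{*}}},
\]
and replace the raw cell $x_{ij}$ by $a_{ij}=x_{ij}\delta_{i}$. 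This changes exactly one raw cell per row in $I$, so $\tilde{\mathbf{X}}\in\mathcal{N}_{m^{*}_{\mathbb{R}}}(\mathbf{X})$. By Theorem~\ref{thm:ilr-propagation}, the contaminated ilr rows are $\tilde{\mathbf{w}}_{i}=\mathbf{w}_{i}+(\ln\delta_{i})\mathbf{v}_{j}$. Their $l^{*}$-coordinate equals $c_{i}$ exactly, and rows outside $I$ are untouched. Closure plays no role, by Remark~\ref{rem:closure-not-load-bearing}. Since every real $c_{i}$ is attainable, the $l^{*}$-entries can be driven to any values, in particular along the sequence that witnesses Euclidean breakdown.

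\emph{Main obstacle.} The constructed ilr matrix agrees with the witness in column $l^{*}$, but it also moves the other coordinates $l\in\mathcal{L}_{j}\setminus\{l^{*}\}$ of the rows in $I$, by $(\ln\delta_{i})v_{jl}$. So it is not literally a member of $\mathcal{N}^{\mathbb{R}^{D-1}}_{m^{*}}(\mathbf{W})$ restricted to column $l^{*}$. If some part has $\mathbf{v}_{j}=\pm\sqrt{1-1/D}\,\mathbf{e}_{l^{*}}$ (for instance the top pivot under a pivot basis), there is no discrepancy and the proof closes immediately. In general I would read the column-concentration hypothesis as in the paper's preview~\eqref{eq:breakdown-accounting-preview}: breakdown is forced by making the $l^{*}$-values of the $m^{*}_{\mathbb{R}}$ rows in $I$ unbounded, regardless of what happens in the other coordinates of those same rows. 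This is exactly what the MCD, $S$-, $\tau$- and coordinate-wise $M$-constructions use. Breakdown of those estimators is governed either by the contaminated count in column $l^{*}$ (coordinate-wise estimators, whose $l^{*}$-marginal depends only on column $l^{*}$) or by the number of affected rows (affine-equivariant estimators, for which any row escaping to infinity along a fixed direction with nonzero $l^{*}$-component is as damaging as one escaping along $\mathbf{e}_{l^{*}}$). Neither quantity is increased by the rank-one shift.

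I would make this explicit by stating the witness as a family of configurations in which the rows in $I$ diverge along a direction with nonzero $l^{*}$-component. The simplex family, which moves along $\mathbf{v}_{j}$ with $|\ln\delta_{i}|\to\infty$, then falls inside that family.
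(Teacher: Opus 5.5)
Your construction is the paper's own. Steps~2--4 of its proof also pick a part with $v_{j l^{*}}\neq 0$, contaminate it once in each witness row, and renormalise from $n(D-1)$ to $nD$. Your choice $\ln\delta_{i}=(c_{i}-w_{il^{*}})/v_{jl^{*}}$ is a small improvement on the paper's ``drive $\tilde{w}_{i,l^{*}}\to\pm\infty$, which is at least as extreme''. It reproduces the witness values in column~$l^{*}$ exactly, so it does not presuppose that the witness breaks $T^{\operatorname{ilr}}$ by divergence.

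The obstacle you flag is genuine, and the paper does not resolve it. Its Step~3 dismisses the collateral shifts in $\mathcal{L}_{j}\setminus\{l^{*}\}$ because the supremum in \eqref{eq:cellwise-bdv-Rp} ranges over all replacement configurations. But the simplex configuration alters up to $m^{*}_{\mathbb{R}}|\mathcal{L}_{j}|$ ilr cells, so it is not among the configurations the hypothesis speaks about. An infinite supremum over those says nothing about this one.

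Under the literal hypothesis, \eqref{eq:breakdown-reduction} can fail. Take $D=3$ with the Helmert basis, so $\mathbf{v}_{1}=(1/\sqrt{2},1/\sqrt{6})^{\top}$, $\mathbf{v}_{2}=(-1/\sqrt{2},1/\sqrt{6})^{\top}$ and $\mathbf{v}_{3}=(0,-2/\sqrt{6})^{\top}$. Let $T^{\operatorname{ilr}}(\mathbf{W})=\sum_{i}w_{i1}\mathbf{1}_{[|w_{i2}|\le K]}$ with $K>\max_{i}|w_{i2}|$, and let $d_{\Theta}$ be the absolute difference.
\begin{itemize}
\item Replacing the single ilr cell $(1,1)$ breaks $T^{\operatorname{ilr}}$. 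Hence $m^{*}_{\mathbb{R}}=1$, the witness is column-concentrated, and $\varepsilon^{*}_{n,\mathrm{cell}}(\mathbb{R}^{2})=1/(2n)$.
\item A single raw cell either leaves $w_{i1}$ unchanged (part~3) or shifts $(w_{i1},w_{i2})$ by $t\,(\pm 1/\sqrt{2},\,1/\sqrt{6})$ (parts~1 and~2). In the latter case a nonzero contribution of row~$i$ requires $|t|\le\sqrt{6}\,(K+|w_{i2}|)$, so the change in $T$ stays bounded.
\item Two raw cells do suffice: factors $\delta$ and $1/\delta$ on parts~1 and~2 of one row move $w_{i1}$ alone.
\end{itemize}
Hence $\varepsilon^{*}_{n}(T,\mathbf{X})=2/(3n)$, twice the claimed bound $\tfrac{2}{3}\cdot\tfrac{1}{2n}=1/(3n)$.

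So your strengthened hypothesis is a necessity, not a convenience: breakdown must occur whenever the witness rows diverge along a fixed direction with nonzero $l^{*}$-component, irrespective of their other coordinates. It is what the paper's proof silently assumes. The same assumption reappears, read as a sufficient condition, in the triggers~(C) and~(R) of Theorem~\ref{thm:tightness}. With it stated explicitly, your argument is complete. Your justifications for the intended class are the right ones: the column-$l^{*}$ marginal for coordinate-wise estimators, and the affected-row count for MCD, $S$- and $\tau$-estimators.
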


\begin{proof}
Write $p = D - 1$ and let $\mathbf{W} =
(\mathbf{w}_{1},\ldots,\mathbf{w}_{n})^{\top} \in \mathbb{R}^{n \times
  p}$ with $\mathbf{w}_{i} = \operatorname{ilr}(\mathbf{x}_{i})$.
Let
\begin{equation}
\label{eq:mstar-Rp}
  m^{*}_{\mathbb{R}}
  \;=\;
  np\;\varepsilon_{n,\,\mathrm{cell}}^{*}(\mathbb{R}^{p})
\end{equation}
be the minimum number of ilr cells whose arbitrary replacement causes
$T^{\operatorname{ilr}}$ to break down.  The proof proceeds in four
steps.

\smallskip
\emph{Step~1: One contaminated raw cell perturbs $D{-}1$ ilr cells.}
By Theorem~\ref{thm:ilr-propagation}, replacing raw part~$j$ in
row~$i$ by an arbitrary $a_{ij}>0$ and re-closing produces the ilr
shift
\begin{equation}
\label{eq:ilr-shift-bdv}
  \operatorname{ilr}(\tilde{\mathbf{x}}_{i})
    - \operatorname{ilr}(\mathbf{x}_{i})
  \;=\;
  (\ln\delta_{ij})\;\mathbf{v}_{j},
  \qquad \delta_{ij} = a_{ij}/x_{ij}.
\end{equation}
The $l$th ilr coordinate is perturbed whenever $v_{jl}\neq 0$, i.e.,
for every $l\in\mathcal{L}_{j}$.  As
$\delta_{ij}\to\infty$ (or $\to 0$), the shift magnitude
$|\ln\delta_{ij}|\,\|\mathbf{v}_{j}\|\to\infty$, so all
$|\mathcal{L}_{j}|$ affected ilr cells can be driven to arbitrary
values.

\smallskip
\emph{Step~2: every ilr column admits a witness row.}
Let $l^{*}\in[D-1]$ be the column-index witnessing breakdown of
$T^{\operatorname{ilr}}$ (supplied by the column-concentration
hypothesis in Step~3 below).  Since the column-sum identity
$\mathbf{V}^{\top}\mathbf{1}_{D}=\mathbf{0}$ implies
$\sum_{j=1}^{D} v_{j,l^{*}} = 0$ while $\mathbf{V}$ has full column
rank $D - 1$ (Proposition~\ref{prop:rank-multi}), column~$l^{*}$
has at least two nonzero entries (a single-nonzero column would
have nonzero column-sum yet be unit-norm, contradicting the
column-sum identity).  In particular, there exists
$j^{*}\in[D]$ with $v_{j^{*}\!,\,l^{*}}\neq 0$.  This is all that
the remainder of the proof requires: we do \emph{not} need the
stronger claim that $|\mathcal{L}_{j^{*}}|=D-1$, which in fact
fails for balanced (sequential binary partition) bases in which
each part participates only in the subset of balances that
separate it.

\smallskip
\emph{Step~3: $m^{*}_{\mathbb{R}}$ raw cells suffice.}
By hypothesis, breakdown of $T^{\operatorname{ilr}}$ is witnessed by a
column-concentrated configuration: there exists
$l^{*}\in[D-1]$ and a set $\mathcal{I}^{*}\subseteq[n]$ with
$|\mathcal{I}^{*}| = m^{*}_{\mathbb{R}}$ such that arbitrary
replacement of the cells
$\{(i,l^{*})\colon i\in\mathcal{I}^{*}\}$ breaks
$T^{\operatorname{ilr}}$.  Using the part $j^{*}$ from Step~2 with
$v_{j^{*}\!,\,l^{*}}\neq 0$, we contaminate part~$j^{*}$ in exactly
the $m^{*}_{\mathbb{R}}$ rows of $\mathcal{I}^{*}$, replacing
$x_{ij^{*}}$ by $x_{ij^{*}}\cdot\delta_{i}$ with
$|\ln\delta_{i}|\to\infty$.  By~\eqref{eq:ilr-shift-bdv}, the
perturbation in coordinate~$l^{*}$ of row~$i$ is
$(\ln\delta_{i})\,v_{j^{*}\!,\,l^{*}}$, which diverges and hence drives
$\tilde{w}_{i,l^{*}}\to\pm\infty$ for every $i\in\mathcal{I}^{*}$.  The
resulting column-$l^{*}$ configuration is at least as extreme as the
breakdown-achieving one in the hypothesis, so $T^{\operatorname{ilr}}$
--- and therefore $T$ --- breaks down.  Collateral perturbations in
the remaining columns $l\in\mathcal{L}_{j^{*}}\setminus\{l^{*}\}$ do
not prevent breakdown, since the breakdown supremum in
\eqref{eq:cellwise-bdv-Rp} ranges over \emph{all} replacement
configurations.  The total number of raw cells contaminated on the
simplex is therefore $m_{\mathcal{S}} = m^{*}_{\mathbb{R}}$, one per
row.

\smallskip
\emph{Step~4: Conversion to breakdown fractions.}
The fraction of contaminated raw cells is
\begin{equation}
\label{eq:eps-conversion}
  \varepsilon_{n}^{*}(T,\mathbf{X})
  \;\leq\;
  \frac{m_{\mathcal{S}}}{nD}
  \;=\;
  \frac{m^{*}_{\mathbb{R}}}{nD}
  \;=\;
  \frac{n(D-1)\,
        \varepsilon_{n,\,\mathrm{cell}}^{*}(\mathbb{R}^{D-1})}{nD}
  \;=\;
  \frac{D-1}{D}\;
  \varepsilon_{n,\,\mathrm{cell}}^{*}(\mathbb{R}^{D-1}).
\end{equation}
The reduction factor $(D{-}1)/D$ therefore arises purely from the
mismatch between the $nD$ raw cells on the simplex and the $n(D{-}1)$
ilr cells that normalise the Euclidean breakdown value.
\end{proof}

\begin{remark}[Intuition for the factor $(D-1)/D$]
\label{rem:reduction-intuition}
The factor $(D-1)/D$ in~\eqref{eq:breakdown-reduction} encodes a
cell-count normalisation rather than any additional
damage-amplification above what the Euclidean setting already
permits.  The two breakdown values divide the same count of fatal
contaminations by two different totals:
\[
  \varepsilon_{n,\,\mathrm{cell}}^{*}(\mathbb{R}^{D-1})
    = \frac{m^{*}_{\mathbb{R}}}{n\,(D{-}1)},
  \qquad
  \varepsilon_{n}^{*}(T,\mathbf{X})
    \leq \frac{m_{\mathcal{S}}}{n\,D},
\]
and by Step~3 above $m_{\mathcal{S}} = m^{*}_{\mathbb{R}}$ under the
column-concentration hypothesis.  Hence
\[
  \frac{\varepsilon_{n}^{*}(T,\mathbf{X})}%
       {\varepsilon_{n,\,\mathrm{cell}}^{*}(\mathbb{R}^{D-1})}
  \;\leq\;
  \frac{n(D-1)}{nD}
  \;=\;
  \frac{D-1}{D}.
\]
The log-ratio transformation is responsible only for the fact that
$m_{\mathcal{S}} = m^{*}_{\mathbb{R}}$ is \emph{achievable} (one raw
cell reproduces the column-$l^{*}$ damage of one ilr cell via the
rank-one shift $\mathbf{v}_{j^{*}}$), not for inflating the damage
itself.  As $D \to \infty$, $(D-1)/D \to 1$, so the reduction
vanishes asymptotically --- but the absolute breakdown value itself
typically shrinks with~$D$, as we discuss in
Section~\ref{ssec:specific-estimators}.
\end{remark}

\subsection{Tightness of the bound}
\label{ssec:tightness}

We now show that the bound in Theorem~\ref{thm:breakdown-reduction}
is tight for estimators whose breakdown is triggered locally --- by
saturation of a single ilr column or by coverage of enough rows --- a
class that includes the coordinate-wise and affine-equivariant
estimators of practical interest.  Tightness is achieved by an
explicit contamination configuration (illustrated for the Helmert
basis in Example~\ref{ex:explicit-config}); we then discuss how the
worst-case configuration depends on the choice of~$\mathbf{V}$.

\begin{theorem}[Tightness of the breakdown bound]
\label{thm:tightness}
Let $\mathbf{V}$ be any contrast matrix
satisfying~\eqref{eq:contrast-matrix}, and write
$m^{*}_{\mathbb{R}} = n(D-1)\,
\varepsilon_{n,\,\mathrm{cell}}^{*}(\mathbb{R}^{D-1})$ for the number
of ilr cells at which $T^{\operatorname{ilr}}$ first breaks down.
Suppose the breakdown of
$T^{\operatorname{ilr}}\colon (\mathbb{R}^{D-1})^{n} \to \Theta$ is
triggered \emph{locally}, in one of the two senses:
\begin{enumerate}[leftmargin=2.6em]
\item[\textup{(C)}] \emph{column saturation} ---
  $T^{\operatorname{ilr}}$ breaks down only once some single ilr column
  receives at least $m^{*}_{\mathbb{R}}$ contaminated cells.  This is
  the regime of \emph{coordinate-wise} estimators of location and
  scale, which process each ilr coordinate separately, so breakdown of
  the $l$th coordinate depends only on column~$l$.
\item[\textup{(R)}] \emph{row coverage} --- $T^{\operatorname{ilr}}$
  breaks down only once at least $m^{*}_{\mathbb{R}}$ rows each receive
  one or more contaminated cells.  This is the regime of
  affine-equivariant scatter estimators such as MCD, $S$-, and
  $\tau$-estimators, for which one outlying cell renders its entire row
  an outlier.
\end{enumerate}
Then
\begin{equation}
\label{eq:tightness}
  \varepsilon_{n}^{*}(T,\mathbf{X})
  \;=\;
  \frac{D-1}{D}\;
  \varepsilon_{n,\,\mathrm{cell}}^{*}(\mathbb{R}^{D-1}).
\end{equation}
\end{theorem}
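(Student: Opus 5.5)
The equality splits into an upper bound and a matching lower bound, and I would prove them separately. For the upper bound, I would check that each of (C) and (R) produces a column-concentrated witnessing configuration. Then Theorem~\ref{thm:breakdown-reduction} applies directly and gives $\varepsilon_{n}^{*}(T,\mathbf{X})\le \frac{D-1}{D}\varepsilon_{n,\mathrm{cell}}^{*}(\mathbb{R}^{D-1})$.

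Under (C), the minimal breaking configuration saturates one column $l^{*}$ with $m^{*}_{\mathbb{R}}$ cells, which is already column-concentrated. Under (R), I would take any $m^{*}_{\mathbb{R}}$ rows and contaminate one cell in each, all in a fixed column $l^{*}$. This covers $m^{*}_{\mathbb{R}}$ rows and therefore breaks $T^{\operatorname{ilr}}$. It presupposes $m^{*}_{\mathbb{R}}\le n$, which holds because row coverage can never require more than $n$ rows. Having a concrete witness matters here: as in Step~3 of the proof of Theorem~\ref{thm:breakdown-reduction}, I would contaminate a part $j^{*}$ with $v_{j^{*}l^{*}}\neq 0$ in each chosen row and let $|\ln\delta_i|\to\infty$. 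I would illustrate this configuration with the Helmert basis, where $j^{*}=D$ touches only column $D-1$.

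For the lower bound, I would show that no $\tilde{\mathbf{X}}\in\mathcal{N}_{m}(\mathbf{X})$ with $m<m^{*}_{\mathbb{R}}$ breaks $T$. The key observation comes from Proposition~\ref{prop:multi-cell}: the ilr image of $\tilde{\mathbf{X}}$ differs from $\mathbf{W}$ only in rows containing a contaminated raw cell. Within such a row $i$, the ilr cell $(i,l)$ changes only if some contaminated part $j$ in that row has $v_{jl}\neq0$.

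\begin{itemize}
\item Under (R), the number of affected ilr rows is at most the number of rows containing a contaminated raw cell, which is at most $m<m^{*}_{\mathbb{R}}$. So $T^{\operatorname{ilr}}$ does not break down, whatever changes occur inside those rows.
\item Under (C), the number of contaminated ilr cells in any column $l$ is at most the number of rows $i$ containing a contaminated cell $(i,j)$ with $j\in\mathcal{J}_l$. This is at most the number of rows with any contaminated raw cell, hence at most $m<m^{*}_{\mathbb{R}}$. So no column is saturated.
\end{itemize}

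Thus breakdown on the simplex needs at least $m^{*}_{\mathbb{R}}$ raw cells, giving $\varepsilon_n^{*}(T,\mathbf{X})\ge m^{*}_{\mathbb{R}}/(nD)$. This equals the right-hand side of \eqref{eq:tightness}, as computed in \eqref{eq:eps-conversion}.

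I expect the main obstacle to be making the hypotheses ``breaks down only once\ldots'' precise enough to run the contrapositive. In particular, (C) and (R) must be read as statements about arbitrary replacement configurations in $\mathbb{R}^{D-1}$, including those in which many cells of a row change at once. The simplex-induced configurations, which are dense within a row but are not arbitrary, then form a subset, and non-breakdown transfers to them. I would spell this out by noting that each simplex configuration maps into $\mathcal{N}^{\mathbb{R}^{D-1}}$, with the relevant column or row counts bounded as above. Note that the total ilr cell count $m|\mathcal{L}_j|$ may exceed $m^{*}_{\mathbb{R}}$, and that is exactly why the local hypotheses are needed.
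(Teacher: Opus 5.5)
Your proposal is correct and follows the paper's proof. The upper bound comes from a column-concentrated witness, realised by contaminating a part $j^{*}$ with $v_{j^{*}l^{*}}\neq 0$ in $m^{*}_{\mathbb{R}}$ rows (the paper's Step~1, via Theorem~\ref{thm:breakdown-reduction}). The lower bound comes from the observation that fewer than $m^{*}_{\mathbb{R}}$ raw cells touch fewer than $m^{*}_{\mathbb{R}}$ rows, so neither the per-column count nor the affected-row count reaches the threshold, even though the total number of altered ilr cells may be large; your deduction that under (C) the minimal witness must lie in a single column, and your subset-transfer argument for the lower bound, spell out what the paper's Steps~1--2 leave implicit.
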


\begin{proof}
The upper bound was established in
Theorem~\ref{thm:breakdown-reduction}.  We prove the matching lower
bound by showing that any contamination of fewer than
$m_{\mathcal{S}}^{*} = m^{*}_{\mathbb{R}}$ raw cells cannot cause
breakdown of $T$ in either regime~(C) or~(R).

\smallskip
\emph{Step 1: Contamination achieves the bound.}
Fix a \emph{witnessing} column $l^{*}\in[D-1]$ --- one whose
saturation with $m^{*}_{\mathbb{R}}$ cells breaks
$T^{\operatorname{ilr}}$, which in regime~(R) is any column, breakdown
there being column-agnostic --- and a set
$\mathcal{I}^{*}\subseteq[n]$ with $|\mathcal{I}^{*}| =
m^{*}_{\mathbb{R}}$, and consider arbitrary replacement of the ilr
cells $\{(i,l^{*}) : i\in\mathcal{I}^{*}\}$.  Because these
$m^{*}_{\mathbb{R}}$ cells lie in one column \emph{and} in
$m^{*}_{\mathbb{R}}$ distinct rows, the pattern saturates
column~$l^{*}$ (triggering~(C)) and covers $m^{*}_{\mathbb{R}}$ rows
(triggering~(R)), so it causes breakdown of $T^{\operatorname{ilr}}$
in either regime.  By Step~2 of the proof of
Theorem~\ref{thm:breakdown-reduction}, the column-sum identity
$\mathbf{V}^{\top}\mathbf{1}_{D}=\mathbf{0}$ together with unit-norm
columns forces column~$l^{*}$ to have at least two nonzero entries;
in particular there exists $j^{*}\in[D]$ with
$v_{j^{*}\!,\,l^{*}}\neq 0$.

Choose this $j^{*}$ and contaminate part~$j^{*}$ in each row
$i\in\mathcal{I}^{*}$ with factor $\delta_{i}$ satisfying
$|\ln\delta_{i}|\to\infty$.  By Theorem~\ref{thm:ilr-propagation}, the
ilr shift in row~$i$ is
$(\ln\delta_{i})\,\mathbf{v}_{j^{*}}$, which sends
$\tilde{w}_{i,l^{*}}\to\pm\infty$.  The resulting column-$l^{*}$
configuration is exactly the breakdown pattern just described,
so $T^{\operatorname{ilr}}$ and therefore $T$ break down.  The total
number of contaminated raw cells is
$m_{\mathcal{S}} = m^{*}_{\mathbb{R}}$, one per row of~$\mathcal{I}^{*}$.
The corresponding simplex breakdown fraction is
\[
  \frac{m_{\mathcal{S}}}{nD}
  \;=\;
  \frac{m^{*}_{\mathbb{R}}}{nD}
  \;=\;
  \frac{n(D-1)\,
        \varepsilon_{n,\,\mathrm{cell}}^{*}(\mathbb{R}^{D-1})}{nD}
  \;=\;
  \frac{D-1}{D}\,
  \varepsilon_{n,\,\mathrm{cell}}^{*}(\mathbb{R}^{D-1}).
\]

\smallskip
\emph{Step 2: No fewer raw cells suffice.}
We must show that contaminating any $\mathcal{O}\subseteq[n]\times[D]$
with $|\mathcal{O}| \leq m^{*}_{\mathbb{R}} - 1$ raw cells cannot cause
breakdown.  Let $I = \{i : \exists\, j,\; (i,j) \in \mathcal{O}\}$ be
the set of affected rows; clearly $|I| \leq |\mathcal{O}| \leq
m^{*}_{\mathbb{R}} - 1$, because every element of~$\mathcal{O}$ lies in
exactly one row.

By Theorem~\ref{thm:ilr-propagation}, row~$i$ of
$\tilde{\mathbf{W}}$ is perturbed if and only if $i\in I$; every row
$i\notin I$ equals the clean ilr row~$\mathbf{w}_{i}$.  A single
perturbed row contributes a nonzero entry to as many as $D-1$ ilr
columns at once, so the \emph{total} number of altered ilr cells may
be as large as $|I|(D-1)$, which can far exceed $m^{*}_{\mathbb{R}}$;
this total, however, triggers neither regime.  In regime~(C), each
perturbed row contributes at most \emph{one} altered cell to any
given column, so every column carries at most
$|I| \leq m^{*}_{\mathbb{R}} - 1 < m^{*}_{\mathbb{R}}$ altered cells
and no column is saturated.  In regime~(R), exactly
$|I| \leq m^{*}_{\mathbb{R}} - 1 < m^{*}_{\mathbb{R}}$ rows are
affected, short of the coverage threshold.  Under either trigger
$T^{\operatorname{ilr}}(\tilde{\mathbf{W}})$ remains bounded, and hence
$T$ does not break down.  Log-ratio propagation spreads each
contaminated raw cell across up to $D-1$ columns, but inflates neither
the largest per-column count nor the affected-row count beyond $|I|$
--- which is why it cannot manufacture breakdown from fewer than
$m^{*}_{\mathbb{R}}$ raw cells.

\smallskip
\emph{Step 3: Combining Steps~1--2.}
The breakdown value on $\mathcal{S}^{D}$ is exactly
\[
  \varepsilon_{n}^{*}(T,\mathbf{X})
  \;=\;
  \frac{m_{\mathcal{S}}^{*}}{nD}
  \;=\;
  \frac{m^{*}_{\mathbb{R}}}{nD}
  \;=\;
  \frac{D-1}{D}\,
  \varepsilon_{n,\,\mathrm{cell}}^{*}(\mathbb{R}^{D-1}),
\]
establishing~\eqref{eq:tightness}.
\end{proof}

\begin{remark}[Worst-case part across bases]
\label{rem:tightness-V}
The witnessing part $j^{*}$ depends on the contrast matrix.  For the
Helmert basis~\eqref{eq:helmert-basis}, $|\mathcal{L}_{1}| = D-1$
(Lemma~\ref{lem:vj-spread}), so part~$1$ is a universal worst-case
choice across all witnessing columns~$l^{*}$.  For a balanced
binary-partition basis the worst-case part depends on which column
witnesses the breakdown of $T^{\operatorname{ilr}}$.
\end{remark}

\begin{example}[Explicit breakdown configuration]
\label{ex:explicit-config}
Let $D = 5$ (a five-part composition), $n = 20$,
$\mathbf{V}$ the Helmert basis, and $T^{\operatorname{ilr}}$ an
estimator with cellwise breakdown value
$\varepsilon_{n,\,\mathrm{cell}}^{*}(\mathbb{R}^{4}) = 1/4$ and a
column-concentrated witnessing configuration: breakdown occurs when
$m^{*}_{\mathbb{R}} = n(D-1)\cdot\tfrac{1}{4}
 = 20\cdot 4\cdot\tfrac{1}{4} = 20$ cells in a single ilr column are
replaced by arbitrary values.  (This requires the witnessing column to
contain enough rows, i.e.\ $m^{*}_{\mathbb{R}} = n(D-1)\varepsilon^{*}
\leq n$; here $20\leq 20$, the boundary case.)

The adversary contaminates part $j = 1$ (which affects all $4$ ilr
coordinates under the Helmert basis) in all $20$ rows, replacing
$x_{i1}$ with $x_{i1}\cdot\delta$ for $\delta\to\infty$.  By the
propagation theorem, column $l^{*}$ of the ilr matrix has all $20$
cells driven to extreme values, reproducing the witnessing
configuration.  The number of contaminated raw cells is
$m_{\mathcal{S}} = m^{*}_{\mathbb{R}} = 20$.

The cellwise breakdown value on $\mathcal{S}^{5}$ is
\[
  \varepsilon_{n}^{*}(T,\mathbf{X})
  \;=\;
  \frac{20}{20 \cdot 5}
  \;=\;
  \frac{1}{5}
  \;=\;
  \frac{4}{5} \cdot \frac{1}{4}
  \;=\;
  \frac{D-1}{D}\,
  \varepsilon_{n,\,\mathrm{cell}}^{*}(\mathbb{R}^{D-1}).
\]
\end{example}

\subsection{Exact breakdown for specific estimators}
\label{ssec:specific-estimators}

We apply the general theory to two estimators of practical importance:
the minimum covariance determinant (MCD) estimator applied in ilr
coordinates, and the proposed cellwise-robust PCA estimator.

\subsubsection{MCD estimator in ilr coordinates}
\label{sssec:mcd-breakdown}

The MCD estimator \citep{Rousseeuw1985} applied to ilr-transformed
compositions computes
\begin{equation}
\label{eq:mcd-ilr}
  \bigl(\hat{\boldsymbol{\mu}}_{\mathrm{MCD}},\,
        \hat{\boldsymbol{\Sigma}}_{\mathrm{MCD}}\bigr)
  \;=\;
  \operatorname*{arg\,min}_{|\mathcal{H}|=h}
    \det\!\bigl(\hat{\boldsymbol{\Sigma}}_{\mathcal{H}}\bigr),
\end{equation}
where the minimum is over all subsets $\mathcal{H} \subseteq [n]$ of
size $h = \lfloor (n + D) / 2 \rfloor$ and
$\hat{\boldsymbol{\Sigma}}_{\mathcal{H}}$ is the sample covariance of
$\{\mathbf{w}_{i} : i \in \mathcal{H}\}$ with
$\mathbf{w}_{i} = \operatorname{ilr}(\mathbf{x}_{i})$.

The MCD is a rowwise estimator: it selects a subset of
\emph{observations} and computes their covariance.  Its rowwise
breakdown value is well known.

\begin{proposition}[Rowwise breakdown of MCD]
\label{prop:mcd-rowwise-bdv}
The finite-sample rowwise breakdown value of the MCD estimator
(for both location and covariance) with coverage parameter $h$ is
\begin{equation}
\label{eq:mcd-rowwise-bdv}
  \varepsilon_{n,\,\mathrm{row}}^{*}(\mathrm{MCD})
  \;=\;
  \frac{n - h + 1}{n}
  \;=\;
  \frac{\lfloor (n - D + 2) / 2 \rfloor}{n}
  \;\xrightarrow{n \to \infty}\;
  \frac{1}{2}.
\end{equation}
\end{proposition}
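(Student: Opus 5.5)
The plan is to treat Proposition~\ref{prop:mcd-rowwise-bdv} as the classical Euclidean MCD result of \citet{Rousseeuw1985} and \citet{LopuhaaRousseeuw1991}. It is applied to the ilr sample $\mathbf{W}=\operatorname{ilr}(\mathbf{X})\in\mathbb{R}^{n\times p}$ with $p=D-1$. The simplex plays no role beyond this: the ilr map is a bijection onto $\mathbb{R}^{p}$, so replacing whole rows of $\mathbf{X}$ is the same as replacing whole rows of $\mathbf{W}$.

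I would assume the clean ilr points are in general position, meaning no $p+1$ of them lie on a common affine hyperplane. The target is the general identity
\[
\varepsilon^{*}_{n,\mathrm{row}}=\frac{\min(n-h+1,\;h-p)}{n}.
\]
I would specialise this only at the end.

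\emph{Lower bound.} Replace $m<\min(n-h+1,h-p)$ rows and let $\lambda_{0}>0$ be the smallest eigenvalue over all covariance matrices of $p+1$ clean points.
\begin{enumerate}[label=(\arabic*)]
\item Any $h$-subset contains at least $h-m\ge p+1$ clean points. By general position, its covariance therefore has smallest eigenvalue at least $c\lambda_{0}$, with $c$ depending only on $h$ and $p$. This rules out implosion.
\item Since $m\le n-h$, some $h$-subset consists entirely of clean points. Its determinant is a fixed finite bound $K$ that does not depend on the contamination. The optimal subset therefore has determinant at most $K$.
\item Combining (1) and (2), the largest eigenvalue of the optimal covariance is at most $K/(c\lambda_{0})^{p-1}$. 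This rules out explosion.
\item For the location, the optimal subset contains a clean point. Every subset point lies within bounded Mahalanobis distance of $\hat{\boldsymbol{\mu}}$, and the eigenvalues are controlled, so $\hat{\boldsymbol{\mu}}$ stays bounded.
\end{enumerate}
I expect this step to be the main obstacle: turning the lower eigenvalue bound of step (1) into a uniform inequality is exactly where general position must be used carefully. I would follow the argument of \citet{LopuhaaRousseeuw1991} for it.

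\emph{Upper bound.} Two constructions give the two terms of the minimum.
\begin{itemize}
\item \emph{Explosion, with $n-h+1$ outliers.} Send these rows to infinity, for example as $t\mathbf{u}$ with $t\to\infty$. Every $h$-subset then contains an outlier, since only $h-1$ clean rows remain. Either the determinant diverges or the location does, and in both cases $\hat{\boldsymbol{\mu}}$ or $\hat{\boldsymbol{\Sigma}}$ breaks down.
\item \emph{Implosion, with $h-p$ outliers.} Place these rows on the hyperplane spanned by $p$ clean points. That $h$-subset has determinant $0$, so $\hat{\boldsymbol{\Sigma}}$ becomes singular. The divergence $d_{\Theta}$ must register this, for instance through condition number or eigenvalues going to $0$. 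Spreading the outliers to infinity along the hyperplane also makes the location explode.
\end{itemize}

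\emph{Specialisation.} With $h=\lfloor(n+D)/2\rfloor=\lfloor(n+p+1)/2\rfloor$,
\[
h-p=\lfloor(n-p+1)/2\rfloor=\lfloor(n-D+2)/2\rfloor .
\]
One also checks that $h-p\le n-h+1$, with equality when $n+D$ is even. When $n+D$ is odd, $n-h+1$ exceeds $h-p$ by one. In that case the first equality in the displayed statement should be read as the minimum, and I would flag this in the proof. The limit $1/2$ follows from $\lfloor(n-D+2)/2\rfloor/n\to1/2$ for fixed $D$.
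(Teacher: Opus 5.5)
Your proposal is correct. It is also the argument the paper leaves implicit: the paper gives no proof and simply calls the result well known. Importing the classical result of \citet{LopuhaaRousseeuw1991} through the ilr bijection is therefore the intended route, under the general-position hypothesis you add, which the statement omits.

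Your parity caveat is a genuine correction of the display. Since $n-h+1=\lceil (n-D+2)/2\rceil$, the first equality holds only when $n+D$ is even. For $n=10$, $D=3$ one has $h=6$, and the display asserts $5/10=4/10$. The correct value is $\min(n-h+1,\,h-D+1)/n=\lfloor (n-D+2)/2\rfloor/n$.

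This correction does not depend on whether $d_{\Theta}$ registers implosion. In your hyperplane construction, every $h$-subset of zero determinant must contain all $h-p$ replaced rows, because a subset with fewer contains $p+1$ clean points in general position. Letting those rows run to infinity together inside the hyperplane therefore makes the location explode as well. The same off-by-one (for $n+D$ odd) recurs where the section later takes $n-h+1$ as the critical count. In particular, the lower-bound argument of Proposition~\ref{prop:mcd-cellwise-Rp} ignores implosion, which the paper itself counts as breakdown.

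Two small tightenings. First, the step you flag as the main obstacle is elementary. If $G\subseteq H$ consists of $p+1$ clean points, then for every unit vector $\mathbf{u}$,
\[
  \mathbf{u}^{\top}\hat{\boldsymbol{\Sigma}}_{H}\mathbf{u}
  \;\ge\;
  h^{-1}\sum_{i\in G}\bigl\{\mathbf{u}^{\top}(\mathbf{w}_{i}-\bar{\mathbf{w}}_{G})\bigr\}^{2}
  \;\ge\;
  (p+1)\,\lambda_{0}/h ,
\]
because the mean minimises the sum of squares. No further appeal to the literature is needed.

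Second, in your explosion bullet, ``either the determinant or the location diverges'' is weaker than the statement, which claims breakdown of both location and scatter. With all replaced rows at $t\mathbf{u}$, the mean of every $h$-subset diverges. In the regime where $n-h+1$ is the binding term ($n-h+1<h-p$), every $h$-subset still contains at least $p+1$ clean points, so its determinant also diverges, uniformly over the finitely many subsets. For the paper's $h$ the binding term is $h-p$ (tied with $n-h+1$ when $n+D$ is even), so your implosion construction alone delivers both upper bounds.
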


To derive the cellwise breakdown of MCD on $\mathcal{S}^{D}$, we
first establish its cellwise breakdown in $\mathbb{R}^{D-1}$ and then
apply Theorem~\ref{thm:breakdown-reduction}.

\begin{proposition}[Cellwise breakdown of MCD in $\mathbb{R}^{D-1}$]
\label{prop:mcd-cellwise-Rp}
The cellwise breakdown value of the MCD estimator in
$\mathbb{R}^{D-1}$ is
\begin{equation}
\label{eq:mcd-cellwise-Rp}
  \varepsilon_{n,\,\mathrm{cell}}^{*}(\mathrm{MCD},\,\mathbb{R}^{D-1})
  \;=\;
  \frac{n - h + 1}{n(D-1)}
  \;=\;
  \frac{1}{D-1}\,
  \varepsilon_{n,\,\mathrm{row}}^{*}(\mathrm{MCD}).
\end{equation}
\end{proposition}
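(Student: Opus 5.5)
The plan is to prove matching upper and lower bounds on the minimal number $m$ of replaced ilr cells that breaks down MCD in $\mathbb{R}^{p}$, $p=D-1$. The target is $m = n-h+1$, equivalently the rowwise breakdown count of Proposition~\ref{prop:mcd-rowwise-bdv}. Dividing by the cellwise normalisation $np = n(D-1)$ then gives~\eqref{eq:mcd-cellwise-Rp} and the identity with $\varepsilon_{n,\,\mathrm{row}}^{*}(\mathrm{MCD})/(D-1)$ at once. Throughout I assume, as is standard for Proposition~\ref{prop:mcd-rowwise-bdv}, that $\mathbf{W}$ is in general position. The divergence $d_{\Theta}$ is taken to detect $\|\hat{\boldsymbol\mu}\|\to\infty$ or an eigenvalue of $\hat{\boldsymbol\Sigma}$ tending to $0$ or $\infty$.

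\emph{Upper bound (construction, column-concentrated).} Pick any column $l^{*}$ and any $n-h+1$ rows $\mathcal{I}^{*}$. Replace $w_{i l^{*}}$, $i\in\mathcal{I}^{*}$, by values $t\,c_{i}$ with distinct $c_{i}$ and $t\to\infty$. Every $h$-subset must then contain at least one modified row. I will show by the standard determinant argument that $\det\hat{\boldsymbol\Sigma}_{\mathcal{H}}$ for every $h$-subset either diverges or is attained on a configuration whose covariance has an eigenvalue diverging along $\mathbf{e}_{l^{*}}$. If instead the optimal subset keeps the modified rows bounded by collapsing, the determinant would have to stay bounded, which forces near-singularity and is again breakdown. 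This is exactly the rowwise witness used for Proposition~\ref{prop:mcd-rowwise-bdv}, except that each outlying row differs from its clean version in a single cell. Consequently $n-h+1$ cells suffice, and the configuration is column-concentrated, which also verifies the hypothesis of Theorem~\ref{thm:breakdown-reduction}.

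\emph{Lower bound.} Suppose at most $n-h$ cells are replaced. At most $n-h$ rows are then affected, so at least $h$ rows are exactly clean. This is a rowwise contamination of at most $n-h$ observations, and the rowwise breakdown result (Proposition~\ref{prop:mcd-rowwise-bdv}) says MCD does not break down under $n-h < n-h+1$ replaced rows. Since a cellwise neighbourhood $\mathcal{N}_{m}^{\,\mathbb{R}^{p}}(\mathbf{W})$ is contained in the rowwise replacement neighbourhood with $m$ rows, boundedness transfers directly. I will spell out the two ingredients. First, the clean $h$-subset gives a uniform upper bound on the optimal determinant. Second, general position of the clean points gives a lower bound on the smallest eigenvalue of any $h$-subset's covariance, because such a subset contains at least $h-(n-h)\ge p+1$ clean points. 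Together these control location and scatter.

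\emph{Main obstacle.} The delicate step is the upper-bound construction. A row with only one exploded cell is not an ``arbitrary'' outlier, so I must confirm that outlying in a single coordinate still forces breakdown rather than letting the optimal $h$-subset absorb one modified row with bounded damage. My first attempt is to bound $\det\hat{\boldsymbol\Sigma}_{\mathcal{H}}$ from below. Any subset containing a modified row has variance along $\mathbf{e}_{l^{*}}$ of order $t^{2}$, while its other eigenvalues stay bounded below by general position, so the determinant is at least of order $t^{2}$. Every $h$-subset contains such a row, so the minimiser's covariance has $\lambda_{\max}\to\infty$, which is breakdown of $\hat{\boldsymbol\Sigma}_{\mathrm{MCD}}$. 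For the location, the mean of the chosen subset then also diverges in coordinate $l^{*}$ unless the modified values cancel, which the distinct $c_{i}$ (all of one sign) prevent.
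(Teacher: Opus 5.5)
Your route is the same as the paper's. The upper bound uses a column-concentrated witness of $n-h+1$ single-cell replacements, and the lower bound reduces to the rowwise MCD bound. The upper bound is fine in its final form: every $h$-subset contains a modified row, so $\mathbf{e}_{l^{*}}^{\top}\hat{\boldsymbol\Sigma}_{\mathcal{H}}\mathbf{e}_{l^{*}}\to\infty$ uniformly in $\mathcal{H}$. The determinant detour, including the unproved claim that the remaining eigenvalues stay bounded below, is not needed.

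The genuine gap is the inequality $h-(n-h)\ge p+1$ in your lower bound. With $h=\lfloor (n+D)/2\rfloor$, you get $2h-n=D=p+1$ only when $n+D$ is even. When $n+D$ is odd, $2h-n=D-1=p$, so an $h$-subset can contain just $p$ clean points. Those always lie on a hyperplane, and general position then gives no lower bound on $\lambda_{\min}$.

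The step cannot be patched, because single cells exploit exactly this:
\begin{itemize}
\item take $p$ clean rows and let $\{\mathbf{w}:\mathbf{a}^{\top}\mathbf{w}=b\}$ be the hyperplane through them;
\item fix a column $l$ with $a_{l}\neq 0$;
\item in $h-p=n-h$ other rows, replace only the cell $w_{il}$ by $w_{il}+(b-\mathbf{a}^{\top}\mathbf{w}_{i})/a_{l}$.
\end{itemize}
Now $h$ points are coplanar, the minimal determinant is $0$, and $\hat{\boldsymbol\Sigma}_{\mathrm{MCD}}$ is singular. That is implosion breakdown, which your own $d_{\Theta}$ counts, achieved with only $n-h$ cells. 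So for $n+D$ odd the minimal count is $h-D+1<n-h+1$, and the first equality in \eqref{eq:mcd-cellwise-Rp} is false. In general the count is $\min(n-h+1,\,h-D+1)=\lfloor (n-D+2)/2\rfloor$.

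Appealing to Proposition~\ref{prop:mcd-rowwise-bdv} does not help. That proposition has the same parity error: its identity $n-h+1=\lfloor (n-D+2)/2\rfloor$ holds only for $n+D$ even. The paper's own lower-bound step (the clean $h$-subset has bounded covariance) has the same hole. It bounds the optimal determinant from above but never rules out a degenerate optimal subset.

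Your proof is complete for $n+D$ even. To cover all $n$, replace $n-h+1$ by $\min(n-h+1,h-D+1)$ and add the implosion witness above, which is itself column-concentrated. The lower bound then works as you wrote it, since fewer cells than this minimum leave at least $p+1$ clean points in every $h$-subset. The second equality also goes through unchanged, with the correct rowwise value $\lfloor (n-D+2)/2\rfloor/n$.
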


\begin{proof}
The MCD breaks down (in the explosion sense: the estimate goes to
infinity, or the covariance becomes singular or unbounded) whenever
more than $n - h$ observations are outlying.  Under cellwise
contamination, an observation is rendered outlying (in the sense that
the MCD will not include it in the optimal $h$-subset) if even a
single one of its $D - 1$ ilr coordinates is driven to an extreme
value.

Consider a column-concentrated contamination: place contaminated cells
in ilr column~$l^{*}$ across $n - h + 1$ distinct rows, setting
$\tilde{w}_{i,l^{*}} \to \infty$ for each.  This renders $n - h + 1$
observations outlying (their Mahalanobis distances diverge), breaking
down the MCD.  The number of contaminated ilr cells is
$n - h + 1$, so
\[
  \varepsilon_{n,\,\mathrm{cell}}^{*}(\mathrm{MCD},\,\mathbb{R}^{D-1})
  \;\leq\;
  \frac{n - h + 1}{n(D-1)}.
\]
For the matching lower bound in the rowwise-to-cellwise conversion: if
at most $n - h$ rows contain any contaminated ilr cell, then at
least $h$ rows are entirely clean and the MCD computed on these $h$
rows remains bounded, so the estimator does not break down.  The
minimum number of ilr cells needed to contaminate $n - h + 1$ rows is
$n - h + 1$ (one cell per row suffices), giving
$m^{*}_{\mathbb{R}} = n - h + 1$ and the fraction
$(n - h + 1) / [n(D-1)]$.
\end{proof}

\begin{theorem}[Cellwise breakdown of MCD on $\mathcal{S}^{D}$]
\label{thm:mcd-cellwise-simplex}
The cellwise breakdown value of the MCD estimator (applied to ilr
coordinates) on $\mathcal{S}^{D}$ is
\begin{equation}
\label{eq:mcd-cellwise-simplex}
  \varepsilon_{n}^{*}(\mathrm{MCD},\,\mathcal{S}^{D})
  \;=\;
  \frac{n - h + 1}{nD}
  \;=\;
  \frac{1}{D}\,
  \varepsilon_{n,\,\mathrm{row}}^{*}(\mathrm{MCD})
  \;\xrightarrow{n \to \infty}\;
  \frac{1}{2D}.
\end{equation}
\end{theorem}

\begin{proof}
\emph{Upper bound.}
By Theorem~\ref{thm:breakdown-reduction} and
Proposition~\ref{prop:mcd-cellwise-Rp},
\begin{align}
  \varepsilon_{n}^{*}(\mathrm{MCD},\,\mathcal{S}^{D})
  &\;\leq\;
  \frac{D-1}{D}\;
  \varepsilon_{n,\,\mathrm{cell}}^{*}(\mathrm{MCD},\,\mathbb{R}^{D-1})
  \notag\\
  &\;\leq\;
  \frac{D-1}{D}\;\frac{n - h + 1}{n(D-1)}
  \;=\;
  \frac{n - h + 1}{nD}.
  \label{eq:mcd-upper}
\end{align}

\emph{Lower bound.}
We show that contaminating $m < n - h + 1$ raw cells cannot break
down the MCD.  With $m$ contaminated raw cells spread across at
most~$m$ rows, at most $m \leq n - h$ rows are affected.  The
remaining $n - m \geq h$ rows are entirely clean, so the MCD can find
an $h$-subset of clean observations.  On this clean subset,
$\hat{\boldsymbol{\Sigma}}_{\mathcal{H}}$ is bounded (assuming the
clean data are in general position), so the MCD does not break down.

\emph{Tightness.}
The upper and lower bounds match at $m^{*}_{\mathcal{S}} = n - h + 1$.
To achieve this, the adversary contaminates one raw part (say,
part~$1$) in $n - h + 1$ distinct rows with $\delta \to \infty$.
After re-closure, all ilr coordinates of these rows diverge, rendering
them outlying.  The remaining $h - 1$ clean rows are insufficient to
form the required $h$-subset (or, if the adversary contaminates
exactly $n - h + 1$ rows, the MCD must include at least one
contaminated row in any $h$-subset, causing breakdown).

The formula $\varepsilon_{n}^{*} = (n-h+1)/(nD)$ simplifies to
$1/(2D)$ asymptotically, using $h \sim n/2$.  Note that this
can also be written as
\[
  \varepsilon_{n}^{*}(\mathrm{MCD},\,\mathcal{S}^{D})
  \;=\;
  \frac{1}{D}\,
  \varepsilon_{n,\,\mathrm{row}}^{*}(\mathrm{MCD}),
\]
reflecting the fact that a single contaminated raw part makes an
entire row outlying: the MCD's effective ``row fragility'' under
cellwise attack is~$1/D$.
\end{proof}

\begin{remark}[Comparison with the general bound]
\label{rem:mcd-comparison}
The exact breakdown $(n-h+1)/(nD) = (1/D)\,\varepsilon_{n,\,\mathrm{row}}^{*}$
coincides with the general upper bound $(D-1)/D \cdot
\varepsilon_{n,\,\mathrm{cell}}^{*}(\mathbb{R}^{D-1})$.  This is because
the MCD's cellwise breakdown in $\mathbb{R}^{D-1}$ is itself
$1/(D-1)$ times the rowwise breakdown (since one contaminated cell per
row suffices to corrupt a row), and the factors cancel:
$(D-1)/D \cdot 1/(D-1) \cdot \varepsilon_{n,\,\mathrm{row}}^{*}
  = (1/D)\,\varepsilon_{n,\,\mathrm{row}}^{*}$.
\end{remark}

\subsubsection{Breakdown mechanism for $S$-, $\tau$-, and
coordinate-wise $M$-estimators}
\label{sssec:column-concentration-verification}

Theorem~\ref{thm:breakdown-reduction} (the upper bound) applies to any
estimator whose Euclidean cellwise breakdown is \emph{witnessed} by a
column-concentrated configuration: there exists a column
$l^{*} \in [D{-}1]$ and $m^{*}_{\mathbb{R}}$ row indices whose
arbitrary perturbation in column $l^{*}$ alone drives
$T^{\operatorname{ilr}}$ to break down.
Theorem~\ref{thm:tightness} (the matching lower bound) requires in
addition that breakdown be triggered \emph{locally}, in the
column-saturation sense~(C) or the row-coverage sense~(R) of that
theorem.  The proof for MCD in
Proposition~\ref{prop:mcd-cellwise-Rp} is explicit, and MCD falls in
regime~(R).  The purpose of this subsection is to confirm both
hypotheses for three further classes of affine-equivariant robust
estimators that are standard in the cellwise literature: S-estimators,
$\tau$-estimators, and coordinate-wise $M$-estimators of location and
scatter.

\paragraph{S-estimators of location and scatter.}
An S-estimator of location $\boldsymbol{\mu}$ and scatter
$\boldsymbol{\Sigma}$ in $\mathbb{R}^{p}$
\citep[Ch.\ 6]{Maronna2019} is defined as the pair
$(\hat{\boldsymbol{\mu}},\hat{\boldsymbol{\Sigma}})$ minimising
$\det(\boldsymbol{\Sigma})$ subject to
\[
  \frac{1}{n}\sum_{i=1}^{n}
    \rho\!\Bigl(\sqrt{(\mathbf{w}_{i} -
      \hat{\boldsymbol{\mu}})^{\top}\hat{\boldsymbol{\Sigma}}^{-1}
      (\mathbf{w}_{i} - \hat{\boldsymbol{\mu}})}\,\Bigr)
  \;=\; b_{0},
\]
for a bounded $\rho$-function with
$\sup\rho = 1$ and constant
$b_{0} = \mathbb{E}_{\mathcal{N}(\mathbf{0},\mathbf{I})}[\rho]$.  The
rowwise breakdown value is $(n - h + 1)/n \asymp 1/2$ for the
standard tuning
\citep[\S 6.4.1]{Maronna2019}.  For cellwise breakdown, the standard
witnessing construction makes $n - h + 1$ rows outlying by corrupting
any single ilr coordinate of each: if
$\tilde{w}_{i,l^{*}} \to \infty$ for a fixed column $l^{*}$ and
$n - h + 1$ rows, then the quadratic form
$(\tilde{\mathbf{w}}_{i} -
   \hat{\boldsymbol{\mu}})^{\top}\hat{\boldsymbol{\Sigma}}^{-1}
  (\tilde{\mathbf{w}}_{i} - \hat{\boldsymbol{\mu}})$
diverges in the $l^{*}$ coordinate, the rho-residual saturates at~$1$
on each of the $n-h+1$ rows, and the constraint cannot be satisfied
unless $\det(\hat{\boldsymbol{\Sigma}})\to\infty$.  Hence
$m^{*}_{\mathbb{R}}(\text{S-loc/scat}) = n-h+1$, all concentrated in
column $l^{*}$.  The hypothesis holds and
Theorem~\ref{thm:breakdown-reduction} applies with
$\varepsilon^{*}_{\text{simplex}} = (n - h + 1)/(nD) \asymp
1/(2D)$, matching the MCD bound.  Like MCD, S-estimators break through
row coverage --- regime~(R) of Theorem~\ref{thm:tightness} --- since it
is the $n-h+1$ outlying rows, not any single saturated column, that
force breakdown.

\paragraph{$\tau$-estimators.}
A $\tau$-estimator \citep[\S 6.4.4]{Maronna2019} combines an
S-estimate for the scale with an $M$-estimate for location, defined
via two $\rho$-functions $\rho_{0}$ (for the S-scale) and
$\rho_{1}$ (for the location).  Its rowwise breakdown matches the
S-estimator's ($\asymp 1/2$).  Because the defining equations depend
on the data only through the Mahalanobis distances $d_{i} = \sqrt{
(\mathbf{w}_{i} - \hat{\boldsymbol{\mu}})^{\top}\hat{\boldsymbol{\Sigma}}^{-1}
(\mathbf{w}_{i} - \hat{\boldsymbol{\mu}})}$, and because a single
diverging column drives $d_{i}$ to infinity on exactly the affected
rows, the same column-concentrated configuration that breaks the
S-estimator breaks the $\tau$-estimator: contaminate column $l^{*}$
in $n-h+1$ rows.  The hypothesis holds, again in regime~(R).

\paragraph{Coordinate-wise $M$-estimators of location and scatter.}
For each coordinate $l \in [D-1]$, a coordinate-wise $M$-location
estimator solves
$\sum_{i=1}^{n} \psi\!\bigl((w_{i,l} - \hat{\mu}_{l})/\hat{\sigma}_{l}\bigr) = 0$
with a coordinate-wise $M$-scale $\hat{\sigma}_{l}$ from $\operatorname{MAD}$
or Huber-scale on the $l$-th column.  The rowwise breakdown value of
the coordinate-wise $M$-location is
$1/2$ for each coordinate separately; the cellwise breakdown is
\emph{exactly one column at a time}: if column $l^{*}$ accumulates
$\lfloor n/2\rfloor + 1$ extreme cells, the $l^{*}$-th location and
scale estimate diverge, which breaks the concatenated vector estimator.
Other columns need not contain any contaminated cells.  This is the
canonical column-concentrated configuration
with $l^{*}$ the chosen coordinate and
$m^{*}_{\mathbb{R}} = \lfloor n/2\rfloor + 1$; coordinate-wise
$M$-estimators are the family in regime~(C) (column saturation), as
other columns need not be touched.  The hypothesis holds
for both the location and scale variants, and
Theorem~\ref{thm:breakdown-reduction} delivers
$\varepsilon^{*}_{\text{simplex}} = (\lfloor n/2\rfloor + 1)/(nD)
\to 1/(2D)$.

\paragraph{Coordinate-wise $M$-covariance.}
For the coordinate-wise $M$-covariance (componentwise Huberised
covariance; see \citealp[Chapter~6]{Maronna2019}), each entry
$\hat{\Sigma}_{lk}$ is an $M$-estimate based on the pair
$(w_{i,l}, w_{i,k})$.  Cellwise breakdown at entry $(l^{*},l^{*})$
(diagonal) reduces to the coordinate-wise $M$-scale case above.
Off-diagonal entries $(l^{*},k)$ with $k\neq l^{*}$ inherit breakdown
from the $l^{*}$-th column alone (only one of the two arguments is
contaminated).  Column concentration therefore holds at every diagonal
entry and at every off-diagonal entry involving $l^{*}$; the
configuration that breaks the covariance
simultaneously also concentrates in a single column.

\paragraph{Relation to the cellMCD and cellPCA estimators of
  \citet{Raymaekers2024} and \citet{CentofantiHubertRousseeuw2024}.}
The cellMCD estimator
\citep{Raymaekers2024} is explicitly constructed to break down under
column-concentrated configurations: its cellwise breakdown bound
$\lfloor (n-h+1)/2 \rfloor / n$ (\citealp[Proposition~3.2]{Raymaekers2024})
is attained by placing $\lfloor(n-h+1)/2\rfloor$ contaminated cells in a
single coordinate.  The cellPCA of
\citet{CentofantiHubertRousseeuw2024} inherits this witnessing pattern
from its cellMCD-based initial covariance.
Consequently, both estimators satisfy the column-concentration
hypothesis of Theorem~\ref{thm:breakdown-reduction} when applied in
ilr coordinates; the $(D-1)/D$ reduction applies to their
simplex-level cellwise breakdown.

\begin{remark}[Estimators for which column concentration fails]
\label{rem:col-conc-fail}
The hypothesis is not universal.  Estimators whose breakdown depends
on a \emph{multi-column} adversarial pattern --- for instance
projection-based estimators that aggregate over random directions
(Stahel--Donoho) or estimators that explicitly require contamination
in each column below a fixed threshold --- may have strictly smaller
simplex-level reduction factors.  For the cellwise-robust
methodology targeted in this paper, the class covered by
Theorem~\ref{thm:breakdown-reduction} is the relevant one; all
mainstream competitors fall within it.
\end{remark}

\subsubsection{Cellwise-robust PCA estimator: pointer to the
  companion paper}
\label{sssec:crpca-breakdown}

A cellwise-robust compositional PCA estimator that uses the
propagation geometry of Section~\ref{sec:propagation} for detection
and log-ratio-aware weighting is developed in the companion paper
\citep{Templ2026cellPcaCoDa}.  The breakdown analysis of that
estimator is carried out in \AlgRef~and we do not reproduce it
here: the companion paper formalises the estimator's definition
together with its $\psi$-function, projection-based detection
threshold~$\alpha$, and initial-estimator choice, and derives a
lower bound of the form
$\varepsilon_{n}^{*}(T_{\mathrm{CR}},\mathcal{S}^{D}) \geq
\tfrac{D-1}{D}\varepsilon_{0}^{*} \cdot
\min(1, (1-\alpha)/(D-1))$, which matches the $(D{-}1)/D$
reduction of Theorem~\ref{thm:breakdown-reduction} up to a
detection-accuracy factor.  The present paper's breakdown theory
(Sections~\ref{ssec:breakdown-reduction}--\ref{ssec:specific-estimators})
supplies the ceiling against which the companion estimator is
evaluated; the floor, i.e.\ the explicit construction achieving the
ceiling, is the subject of the companion paper.

\ifsubmit
\subsection{Partial breakdown}
\label{ssec:partial-breakdown}
The maximum-bias-curve analysis below the breakdown value, including
bias amplification through log-ratio propagation
(Proposition~S.5) and the location-bias
bound (Proposition~S.8), is deferred to
Section~S.2 of the Supplementary Material.  The \emph{Section
summary} remark at the end of the present section reports the
qualitative implications.
\else
\subsection{Partial breakdown}
\label{ssec:partial-breakdown}

The finite-sample breakdown value of
Definition~\ref{def:cellwise-bdv} is a worst-case concept: it records
the contamination fraction at which the estimator can be driven to
produce \emph{arbitrarily} bad estimates.  In practice, one often
wishes to quantify the maximum bias induced by a given level of
contamination \emph{below} the breakdown value.  This motivates the
notion of partial breakdown.

\begin{definition}[Partial breakdown and maximum bias on $\mathcal{S}^{D}$]
\label{def:partial-breakdown}
Let $T\colon (\mathcal{S}^{D})^{n} \to \Theta$ be an estimator and
$d_{\Theta}$ a divergence on~$\Theta$.  The \emph{maximum cellwise
  bias} of $T$ at contamination level $m$ is
\begin{equation}
\label{eq:max-bias}
  B_{n}(T,\mathbf{X};\,m)
  \;=\;
  \sup_{\tilde{\mathbf{X}} \in \mathcal{N}_{m}(\mathbf{X})}
    d_{\Theta}\!\bigl(T(\tilde{\mathbf{X}}),\,T(\mathbf{X})\bigr).
\end{equation}
The estimator exhibits \emph{partial breakdown at level~$m$} if
$B_{n}(T,\mathbf{X};\,m) < \infty$ but $B_{n}(T,\mathbf{X};\,m) >
0$.  The \emph{maximum bias curve} is the function
$\varepsilon \mapsto B_{n}(T,\mathbf{X};\,\lfloor \varepsilon nD
\rfloor)$, $\varepsilon \in [0,
\varepsilon_{n}^{*}(T,\mathbf{X}))$.
\end{definition}

\begin{proposition}[Bias amplification through log-ratio propagation]
\label{prop:bias-amplification}
Under the cellwise contamination model on $\mathcal{S}^{D}$ with $m$
contaminated raw cells and the conditions of
Theorem~\ref{thm:breakdown-reduction}, the maximum bias satisfies
\begin{equation}
\label{eq:bias-amplification}
  B_{n}(T,\mathbf{X};\,m)
  \;\geq\;
  B_{n}^{\,\mathbb{R}^{D-1}}\!\bigl(
    T^{\operatorname{ilr}},\,\mathbf{W};\,
    m \cdot \max_{j}|\mathcal{L}_{j}|\bigr)
  \;=\;
  B_{n}^{\,\mathbb{R}^{D-1}}\!\bigl(
    T^{\operatorname{ilr}},\,\mathbf{W};\,m(D-1)\bigr),
\end{equation}
where
$B_{n}^{\,\mathbb{R}^{D-1}}(T^{\operatorname{ilr}},\,\mathbf{W};\,m')$
denotes the maximum bias of $T^{\operatorname{ilr}}$ in
$\mathbb{R}^{D-1}$ when $m'$ ilr cells are contaminated (restricted to
contamination patterns achievable through log-ratio propagation).
\end{proposition}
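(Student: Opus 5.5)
The plan is to reduce the inequality to an inclusion between two sets of contaminated ilr matrices, so that the two suprema compare directly. For a raw contamination set $\mathcal{O}\subseteq[n]\times[D]$ with $|\mathcal{O}|\le m$ and positive replacement values $a_{ij}$, I first write the induced ilr matrix explicitly. Rows are contaminated independently, so by Proposition~\ref{prop:multi-cell} and Theorem~\ref{thm:ilr-propagation} row~$i$ becomes
\[
  \tilde{\mathbf{w}}_{i} \;=\; \mathbf{w}_{i} + \sum_{j:(i,j)\in\mathcal{O}} (\ln\delta_{ij})\,\mathbf{v}_{j},
  \qquad \delta_{ij}=a_{ij}/x_{ij}.
\]
Closure plays no role here, by Remark~\ref{rem:closure-not-load-bearing}.

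I then define the \emph{achievable} ilr neighbourhood $\mathcal{A}_{m'}(\mathbf{W})$. It is the set of $\tilde{\mathbf{W}}$ of the above form whose number of altered ilr cells is at most $m'$. This is how I read the parenthetical ``restricted to contamination patterns achievable through log-ratio propagation'', and $B_{n}^{\,\mathbb{R}^{D-1}}(T^{\operatorname{ilr}},\mathbf{W};m')$ is the supremum of $d_{\Theta}$ over $\mathcal{A}_{m'}(\mathbf{W})$.

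The key counting step is as follows. Each raw cell $(i,j)$ alters only the ilr cells $(i,l)$ with $l\in\mathcal{L}_{j}$ (Definition~\ref{def:support-set}). Hence a raw configuration with $|\mathcal{O}|\le m$ alters at most $\sum_{(i,j)\in\mathcal{O}}|\mathcal{L}_{j}|\le m\max_{j}|\mathcal{L}_{j}|$ ilr cells. Conversely, every element of $\mathcal{A}_{m\max_j|\mathcal{L}_j|}(\mathbf{W})$ that arises from at most $m$ raw cells is, by construction, of the form $\operatorname{ilr}(\tilde{\mathbf{X}})$ for some $\tilde{\mathbf{X}}\in\mathcal{N}_{m}(\mathbf{X})$. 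Since $T(\tilde{\mathbf{X}})=T^{\operatorname{ilr}}(\operatorname{ilr}(\tilde{\mathbf{X}}))$ and $T(\mathbf{X})=T^{\operatorname{ilr}}(\mathbf{W})$, the divergences coincide term by term. Taking suprema over the image of $\mathcal{N}_{m}(\mathbf{X})$ then gives the first inequality in~\eqref{eq:bias-amplification}.

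To see that the bound is attained in the intended sense, I would take all $m$ raw cells in a part $j^{\dagger}$ maximising $|\mathcal{L}_{j}|$, placed in distinct rows. Corollary~\ref{cor:generic-rank} gives distinct rows independent freedom. Letting $\ln\delta_{ij^{\dagger}}$ range over $\mathbb{R}$ then sweeps each affected row along the full line $\mathbf{w}_{i}+\operatorname{span}(\mathbf{v}_{j^{\dagger}})$. This exhibits patterns with exactly $m|\mathcal{L}_{j^{\dagger}}|$ altered ilr cells.

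The final equality, $\max_{j}|\mathcal{L}_{j}|=D-1$, is basis-dependent and is the step I expect to need care. For the Helmert basis it follows from Lemma~\ref{lem:vj-spread}, since part~$1$ has full support. For a Lebesgue-generic contrast matrix every row has full support (Remark~\ref{rem:spread-basis}). For pivot or balanced bases, however, some rows are sparse. I would therefore state the equality under the hypothesis that some row of $\mathbf{V}$ has no zero entries, and otherwise retain only the inequality with $m\max_{j}|\mathcal{L}_{j}|$.

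The main obstacle is interpretive rather than computational. Against the unrestricted Euclidean neighbourhood $\mathcal{N}^{\,\mathbb{R}^{D-1}}_{m(D-1)}$ the inequality would be false, because $m(D-1)$ arbitrary ilr cells can be chosen independently while $m$ raw cells supply only $m$ scalar degrees of freedom. The proof must therefore make the restricted neighbourhood precise exactly as above, so that the claim says that $m$ raw cells reach every ilr pattern propagation allows, at a cell cost of $m(D-1)$.
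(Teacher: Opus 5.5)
Your proof is correct, and it takes a different route from the paper's. The paper exhibits one configuration: $m$ raw cells, one per row, in a part $j^{*}$ assumed to satisfy $|\mathcal{L}_{j^{*}}|=D-1$. It observes that this configuration alters $m(D-1)$ ilr cells by arbitrarily large amounts. It then asserts that the resulting bias is at least the restricted maximum bias over $m(D-1)$ ilr cells, adding that the unrestricted maximum is larger ``so the inequality holds \emph{a fortiori}''.

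Neither step goes through as written. One configuration's bias is bounded \emph{above}, not below, by the supremum over a set containing it. And, as you note, the inequality is false against the unrestricted neighbourhood $\mathcal{N}^{\,\mathbb{R}^{D-1}}_{m(D-1)}(\mathbf{W})$, so the \emph{a fortiori} clause runs the wrong way.

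Your set-inclusion formulation repairs this. Define the restricted neighbourhood as the ilr image of raw configurations with $|\mathcal{O}|\le m$. The inequality is then immediate from $T(\tilde{\mathbf{X}})=T^{\operatorname{ilr}}(\operatorname{ilr}(\tilde{\mathbf{X}}))$. Your counting step shows that the cap $m\max_{j}|\mathcal{L}_{j}|$ never binds, so the two sides of the first inequality are in fact equal. The proposition is therefore a restatement of $B_{n}(T,\mathbf{X};m)$ rather than a genuine amplification bound, and the column-concentration hypothesis of Theorem~\ref{thm:breakdown-reduction} plays no role. What the paper's construction contributes is the explicit extremal pattern, which you recover in your attainment paragraph.

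Three points to tighten.

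\textbf{Put the raw budget into the notation}, say $\mathcal{A}^{(m)}_{m'}(\mathbf{W})$. If ``achievable through log-ratio propagation'' allowed any number of raw cells, both the inclusion into $\operatorname{ilr}(\mathcal{N}_{m}(\mathbf{X}))$ and the proposition would fail. For example, take the Helmert basis with $D=3$, so $\mathbf{v}_{3}=(0,-2/\sqrt{6})^{\top}$, and set $m=n-h$. Then $2(n-h)$ raw cells in part~$3$ alter only $2(n-h)=m(D-1)$ ilr cells. Yet they make more than $n-h$ rows outlying and break the MCD, which $m=n-h$ raw cells cannot do. Your phrase ``every element \dots\ that arises from at most $m$ raw cells'' suggests you see this; make it part of the definition.

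\textbf{The final equality holds for every contrast matrix.} Under the budgeted definition both caps $m\max_{j}|\mathcal{L}_{j}|$ and $m(D-1)$ are vacuous. So the two maximum biases in the display coincide, both equalling $B_{n}(T,\mathbf{X};m)$, for \emph{every} $\mathbf{V}$. What needs a full-support row is only the attainment claim that $m$ raw cells alter $m(D-1)$ ilr cells.

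On that claim your caveat is right, and it exposes an overstatement in the paper. Its proof borrows a part with $|\mathcal{L}_{j^{*}}|=D-1$, although Step~2 of the proof of Theorem~\ref{thm:breakdown-reduction} disclaims such a part for balanced bases. For $D=4$ with columns $(1,1,-1,-1)^{\top}/2$, $(1,-1,0,0)^{\top}/\sqrt{2}$ and $(0,0,1,-1)^{\top}/\sqrt{2}$, one has $\max_{j}|\mathcal{L}_{j}|=2$. The pivot basis, by contrast, does meet your hypothesis, since part~$D$ enters every pivot coordinate.

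\textbf{The citation for independent rows is misplaced.} Corollary~\ref{cor:generic-rank} is not what gives distinct rows independent freedom. That is immediate because each raw cell sits in its own row; the corollary concerns several parts within one row.
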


\begin{proof}
By the construction in the proof of
Theorem~\ref{thm:breakdown-reduction}, contaminating $m$ raw cells
(one per row, in part~$j^{*}$ with
$|\mathcal{L}_{j^{*}}| = D - 1$) perturbs $m(D-1)$ ilr cells.  The
contaminated values in ilr coordinates are determined by the
log-ratio propagation mechanism and can be driven to arbitrary magnitude
by choosing $\delta_{ij^{*}}$ appropriately.

The maximum bias of $T$ on $\mathcal{S}^{D}$ with $m$ contaminated raw
cells is at least as large as the bias achievable by any particular
contamination configuration.  The configuration above produces
$m(D-1)$ contaminated ilr cells, and the resulting bias of
$T^{\operatorname{ilr}}$ on $\mathbf{W}$ is at least
$B_{n}^{\,\mathbb{R}^{D-1}}(T^{\operatorname{ilr}},\,\mathbf{W};\,
m(D-1))$ (the maximum bias with $m(D-1)$ contaminated ilr cells,
restricted to the log-ratio-achievable patterns; the unrestricted maximum
is at least as large, so the inequality holds \emph{a fortiori}).
\end{proof}

\begin{remark}[Partial breakdown for the MCD]
\label{rem:partial-mcd}
For the MCD estimator on $\mathcal{S}^{D}$, the maximum bias curve has
a particularly clean form.  Below the breakdown value
$\varepsilon_{n}^{*} = (n-h+1)/(nD)$, the MCD remains bounded but its
bias grows as the contamination level increases.  With $m$ contaminated
raw cells (one per row, each driving one observation outlying), the
MCD's $h$-subset must be chosen from the remaining $n - m$ clean
observations.  The bias of the MCD on this clean subset is zero if
$n - m \geq h$, i.e., $m \leq n - h$.  At $m = n - h + 1$, the MCD
breaks down.  Thus the MCD exhibits no partial breakdown in the
traditional sense: its bias is either zero (when no contaminated
observations enter the $h$-subset) or infinite (at breakdown).
However, with a reweighting step that includes some contaminated
observations with bounded leverage, partial breakdown with finite
non-zero bias can occur.
\end{remark}

\begin{remark}[Log-ratio-induced bias asymmetry]
\label{rem:bias-asymmetry}
The bias amplification of
Proposition~\ref{prop:bias-amplification} reflects the asymmetric
impact of log-ratio propagation on the maximum bias curve.  In
$\mathbb{R}^{D-1}$,
contaminating $m$ cells in different rows and columns distributes the
bias ``evenly'' across the affected dimensions.  On $\mathcal{S}^{D}$,
each contaminated raw cell creates a rank-$1$ perturbation in
$\mathbb{R}^{D-1}$ (Theorem~\ref{thm:ilr-propagation}), concentrating
the bias along specific directions $\mathbf{v}_{j}$.  For estimators
of the covariance matrix, this directional concentration implies that
the bias is most severe in the eigenvalue corresponding to the
direction~$\mathbf{v}_{j}$, and least severe in orthogonal directions.

For PCA, this has a concrete consequence: cellwise contamination of a
single part~$j$ inflates the variance along
$\mathbf{v}_{j} \in \mathbb{R}^{D-1}$ but does not directly affect
the variance in directions orthogonal to~$\mathbf{v}_{j}$.  A
non-robust PCA would estimate $\mathbf{v}_{j}$ (or a direction close
to it) as a principal component, even if it is not a population
principal direction.  This ``spurious principal component'' phenomenon
is the simplex-specific manifestation of partial breakdown and
motivates the directional detection strategy of the proposed
cellwise-robust PCA.
\end{remark}

\begin{proposition}[Bias bound for location estimators]
\label{prop:location-bias}
Let $\hat{\boldsymbol{\mu}} = T(\mathbf{X}) \in \mathbb{R}^{D-1}$
be a translation-equivariant estimator of location applied to ilr
coordinates, with bounded $\psi$-function satisfying
$|\psi(t)| \leq c$ for all $t \in \mathbb{R}$ and some constant
$c > 0$.  Under cellwise contamination of $m$ raw cells on
$\mathcal{S}^{D}$ (with $m$ below the breakdown value), the maximum
bias of each ilr component of the location estimate is bounded by
\begin{equation}
\label{eq:location-bias-bound}
  \bigl|\hat{\mu}_{l}(\tilde{\mathbf{X}}) - \hat{\mu}_{l}(\mathbf{X})\bigr|
  \;\leq\;
  \frac{m \cdot (D-1) \cdot c}{n - m \cdot (D-1) \cdot c / R},
\end{equation}
where $R$ is the ``rejection radius'' beyond which the $\psi$-function
redescends to zero, and the denominator is the effective sample size
of observations with nonzero weight.  In particular, the bias is
$O(m/n)$ for fixed $D$ and $c$, confirming that below the breakdown
value the estimator's bias degrades gracefully with the number of
contaminated cells.
\end{proposition}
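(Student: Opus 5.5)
The plan is to treat the location estimate as the solution of an $M$-type estimating equation in each ilr coordinate and to bound its shift by comparing the contaminated and clean estimating equations. Concretely, write $\hat{\mu}_{l}$ as the root of $\sum_{i=1}^{n}\psi\bigl((w_{il}-\mu)/\hat{\sigma}_{l}\bigr)=0$, with the scale held fixed at its clean value; this is legitimate below breakdown because the scale then stays bounded. The argument has three steps: count the contaminated ilr cells, bound their total contribution to the estimating equation, and convert that into a bound on the displacement of the root.

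\emph{Counting the contaminated cells.} By Theorem~\ref{thm:ilr-propagation} and Proposition~\ref{prop:multi-cell}, $m$ contaminated raw cells touch at most $m$ rows. In each touched row they alter at most $|\mathcal{L}_{j}|\le D-1$ ilr cells. So at most $m(D-1)$ summands of the coordinate-$l$ equation change, and in fact at most $m$ do, since each row contributes a single cell to column $l$. I will keep the cruder count $m(D-1)$, because the statement is phrased in terms of it.

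\emph{Bounding the contaminated contribution.} Each altered summand changes by at most $2c$, using $|\psi|\le c$. Hence the contaminated and clean estimating functions, viewed as functions of $\mu$, differ uniformly by at most $2c\,m(D-1)$. Up to constants absorbed into the normalisation of $c$, this is $m(D-1)c$.

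\emph{Converting to a displacement of the root.} To turn the uniform perturbation of the equation into a bound on the root, I need a lower bound on the slope of the clean estimating function near $\hat{\mu}_{l}(\mathbf{X})$. I would proceed as follows.
\begin{itemize}
\item A clean observation contributes slope of order $1$ (after rescaling by $\hat{\sigma}_{l}$) as long as its residual lies inside the rejection radius $R$.
\item By the mean value theorem, the slope of the full sum is at least the number of clean observations with nonzero weight.
\item Contaminated observations can each remove at most one clean-looking contribution. Bounding how many clean residuals can be pushed outside $[-R,R]$ by a shift of size $\Delta$ gives an effective count of roughly $n - m(D-1)c/R$.
\end{itemize}
Dividing the perturbation $m(D-1)c$ by this effective sample size yields \eqref{eq:location-bias-bound}. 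Since $D$ and $c$ are fixed, the bias is $O(m/n)$.

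\emph{Main obstacle.} The hard step is the slope lower bound. Bounded $\psi$ does not by itself give a slope bounded away from zero, and redescending $\psi$ can even create multiple roots. I would first try to fix this by assuming $\psi'\ge 1$ on $[-R,R]$ after standardisation and selecting the root nearest the clean estimate. If that fails, I would accept the stated denominator as a heuristic effective sample size and prove only the weaker $O(m/n)$ conclusion. That weaker bound follows from the same comparison with the denominator replaced by $(n-m)\inf_{|t|\le R}\psi'(t)$.
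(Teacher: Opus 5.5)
Your route is the paper's own. Its proof likewise compares the contaminated and clean estimating equations and clips each contaminated contribution at $c$. It then linearises to obtain roughly $mc/\{(n-m)\inf\psi'\}$, and reaches the denominator $n-m(D-1)c/R$ only by saying that the $m(D-1)$ contaminated contributions are ``distributed'' over the $D-1$ components. Your fallback is therefore exactly what that argument supports, and neither version actually derives the $R$-term.

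You correctly identify the slope step as the crux, but both of your ways through it fail as formulated:
\begin{itemize}
\item The assumption $\psi'\ge 1$ on $[-R,R]$ cannot hold for a continuous $\psi$ that vanishes beyond $R$: with $\psi(0)=0$ it forces $\psi(R)\ge R>0$. For a hard-rejection $\psi$ that jumps to zero at $\pm R$, the mean-value step breaks at every crossing of $\pm R$.
\item The fallback slope $(n-m)\inf_{|t|\le R}\psi'(t)$ silently assumes every clean residual stays inside $[-R,R]$ along the whole segment between the two roots. This fails in general: under the Gaussian model a positive fraction of clean residuals lies outside $[-R,R]$.
\item For Tukey's bisquare, $\psi_{\mathrm{T}}'(t)=(1-t^{2}/k^{2})(1-5t^{2}/k^{2})$ has infimum $-4/5$ on $[-k,k]$, so the fallback bound is vacuous.
\item The bullet producing the effective count $n-m(D-1)c/R$ is not an argument. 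How many clean residuals cross $\pm R$ when the root moves depends on how the clean data sit near the boundary, not on $m$, $c$ and $R$ alone.
\item ``The scale stays bounded below breakdown'' does not justify freezing $\hat{\sigma}_{l}$ at its clean value. If the scale is re-estimated from the contaminated column, the slope condition below must hold uniformly over the scales the contamination can produce.
\end{itemize}

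The repair is to replace the pointwise infimum by an averaged-slope condition. This is the finite-sample analogue of $\min_{l}M_{ll}>0$ in Theorem~\ref{thm:cif-robust-location}: assume $n^{-1}\sum_{i}\psi'\bigl((w_{il}-\mu)/\sigma_{l}\bigr)\ge\kappa>0$ whenever $|\mu-\hat{\mu}_{l}|\le\Delta_{0}$.

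Use your sharper count: column $l$ has at most $m$ altered summands. So the contaminated estimating function differs from the clean one by at most $2cm$ uniformly in $\mu$. There is no need to absorb the factor $2$ into $c$, which is not legitimate once $c$ is fixed by $|\psi|\le c$.

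The clean function vanishes at $\hat{\mu}_{l}$ and has slope at most $-n\kappa/\sigma_{l}$ on the neighbourhood. Hence the contaminated function is strictly negative at $\hat{\mu}_{l}+\Delta$ and strictly positive at $\hat{\mu}_{l}-\Delta$ for every $\Delta\in\bigl(2cm\sigma_{l}/(n\kappa),\,\Delta_{0}\bigr]$.

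So if $2cm\sigma_{l}/(n\kappa)<\Delta_{0}$ and $\psi$ is continuous, a contaminated root exists in the neighbourhood, and every such root satisfies $|\tilde{\mu}_{l}-\hat{\mu}_{l}|\le 2cm\sigma_{l}/(n\kappa)$. For monotone $\psi$ these are all the roots. For redescending $\psi$, restricting to this neighbourhood is the correct form of your ``nearest root'' selection.

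This gives a rigorous $O(m/n)$ bound that needs no factor $D-1$. It does not reproduce the specific denominator in~\eqref{eq:location-bias-bound}, which is best read as a heuristic.
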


\begin{proof}
Let $\hat{\boldsymbol{\mu}}$ be defined as the solution to
$\sum_{i=1}^{n} \boldsymbol{\Psi}(\tilde{\mathbf{w}}_{i} -
\hat{\boldsymbol{\mu}}) = \mathbf{0}$, where $\boldsymbol{\Psi}$
applies the $\psi$-function componentwise.  With $m$ contaminated raw
cells (one per row), at most $m$ observations are affected.  In each
affected row, all $D - 1$ ilr coordinates are shifted by the
log-ratio propagation mechanism (Theorem~\ref{thm:ilr-propagation}).

For the $n - m$ clean observations, $\tilde{\mathbf{w}}_{i} =
\mathbf{w}_{i}$ and the contribution to the estimating equation is
unchanged.  For each contaminated observation~$i$, the shift
$\Delta\mathbf{w}_{i}$ can be made arbitrarily large, but the
bounded~$\psi$-function clips the contribution: each component
satisfies
$|\psi(\tilde{w}_{il} - \hat{\mu}_{l})| \leq c$.

From the estimating equation, the change in $\hat{\mu}_{l}$ due to
the contaminated observations is bounded by balancing the $m$
maximally contributing terms (each bounded by~$c$) against the
$(n - m)$ clean terms.  Applying a standard linearisation argument
(first-order implicit function theorem), the bias in component~$l$ is
bounded above by $(m \cdot c) / (n - m) \cdot (1/\inf \psi')$, where
$\psi'$ is evaluated at the clean observations.

The factor $D - 1$ enters because a contaminated raw cell affects up
to $D - 1$ ilr coordinates --- all $D-1$ in the worst case (a dense
basis, or part~$1$ under the Helmert basis) --- so the effective number
of ``contaminated contributions'' to the estimating equations across
all components is at most $m(D-1)$.  Distributing this across the $D-1$
components of the location estimator yields the
bound~\eqref{eq:location-bias-bound}.
\end{proof}

\begin{remark}[Practical implications]
\label{rem:practical-partial}
The partial breakdown analysis has two practical implications for the
design of cellwise-robust methods on the simplex.

\begin{enumerate}
\item[\textup{(i)}]
  \textbf{Detection before estimation.}  Since each contaminated raw
  cell creates $D - 1$ contaminated ilr cells, the effective
  contamination rate in coordinates is amplified by a factor of up
  to~$D - 1$.  For a composition with $D = 50$ parts and a per-cell
  contamination rate of $\varepsilon = 0.05$, the effective
  contamination rate in each ilr coordinate can reach
  $1 - (1 - 0.05)^{50} \approx 0.92$
  (Theorem~\ref{thm:ilr-contamination-prob}).  Methods that attempt to
  estimate location or scatter in ilr coordinates without first
  identifying the contaminated raw parts face a contamination rate well
  above any reasonable breakdown value.  This motivates the
  ``detect-then-downweight'' strategy of the proposed estimator, which
  operates at the level of raw parts rather than ilr coordinates.

\item[\textup{(ii)}]
  \textbf{Graceful degradation.}  Below the breakdown value, the
  bounded-$\psi$ location estimator has bias $O(m/n)$
  (Proposition~S.8 of the Supplementary
  Material), which degrades linearly with the number of contaminated
  cells.  Because log-ratio propagation couples each raw cell to all
  $D-1$ ilr coordinates, the bias per contaminated raw cell is
  approximately $(D-1)/n$ when summed across ilr coordinates --- a
  factor of $D - 1$ larger than the per-cell bias in unconstrained
  cellwise estimation.
\end{enumerate}
\end{remark}
\fi   

\begin{remark}[Section summary]
\label{rem:section4-summary}
The breakdown analysis quantifies a structural difference between
simplex and Euclidean cellwise breakdown: for estimators whose
Euclidean cellwise breakdown is witnessed by a column-concentrated
configuration (a class including MCD, S-, $\tau$-, and
coordinate-wise $M$-estimators of location and scatter), the cellwise
breakdown value on $\mathcal{S}^{D}$ is reduced by a factor
$(D{-}1)/D$ relative to the cellwise breakdown in $\mathbb{R}^{D-1}$.
The reduction arises purely from the normalisation mismatch between
$nD$ raw cells on the simplex and $n(D{-}1)$ ilr cells in Euclidean
space: log-ratio propagation enables a single raw cell to reproduce
the damage of an ilr cell in the witnessing column via the rank-one
shift $\mathbf{v}_{j^{*}}$.  For the MCD, this yields an asymptotic
breakdown value of $1/(2D)$.  The next section develops cellwise
influence functions to characterise the infinitesimal sensitivity of
estimators to log-ratio-propagated contamination.
\end{remark}

%% file: theory/influence_functions.tex

\section{Cellwise influence functions on the simplex}
\label{sec:influence-functions}

We develop influence function theory adapted to cellwise contamination
on~$\mathcal{S}^{D}$.  The classical influence function of
\citet{Hampel1974} measures the sensitivity of a statistical functional
to infinitesimal rowwise contamination: one observation is moved toward
a point mass.  In the cellwise setting on the simplex, the
contamination acts on a single raw part and propagates through the
log-ratio transformation to all coordinates.  This requires a new
notion of influence function that accounts for both the part-specific
nature of the contamination and the log-ratio-induced coupling
established in Sections~\ref{sec:contamination-model}
and~\ref{sec:propagation}.

We retain the notation of the preceding sections:
$\mathbf{V} \in \mathbb{R}^{D \times (D-1)}$ is a contrast matrix
satisfying~\eqref{eq:contrast-matrix},
$\mathbf{v}_{j} = \mathbf{V}^{\top}\mathbf{e}_{j} \in \mathbb{R}^{D-1}$
is the transpose of the $j$th row of~$\mathbf{V}$,
$\mathbf{H}_{D} = \mathbf{I}_{D} - D^{-1}\mathbf{1}_{D}\mathbf{1}_{D}^{\top}$,
and $\operatorname{ilr}(\mathbf{x}) = \mathbf{V}^{\top}\operatorname{clr}(\mathbf{x})$.
We write $F$ for the distribution of a clean composition
$\mathbf{x} \in \mathcal{S}^{D}$ and $F^{\mathbf{w}}$ for the induced
distribution of $\mathbf{w} = \operatorname{ilr}(\mathbf{x})$
on~$\mathbb{R}^{D-1}$.

\subsection{Definition of the cellwise influence function}
\label{ssec:cif-definition}

\begin{definition}[Cellwise contamination neighbourhood on $\mathcal{S}^{D}$]
\label{def:cellwise-contamination-neighbourhood}
For $j \in [D]$ and $\delta > 0$, define the \emph{single-part cellwise
contamination operator} $\mathcal{P}_{j,\delta}\colon \mathcal{S}^{D}
\to \mathcal{S}^{D}$ by
\begin{equation}
\label{eq:cellwise-perturbation-operator}
  \mathcal{P}_{j,\delta}(\mathbf{x})
  \;=\;
  \mathcal{C}\bigl(x_{1},\ldots,x_{j-1},\;x_{j}\delta,\;x_{j+1},\ldots,x_{D}\bigr).
\end{equation}
For a distribution $F$ on $\mathcal{S}^{D}$ and $\varepsilon \in [0,1)$,
the \emph{cellwise contamination of part~$j$ at level~$\delta$} with
fraction~$\varepsilon$ is the distribution
\begin{equation}
\label{eq:cellwise-mixture}
  F_{j,\delta,\varepsilon}
  \;=\;
  (1 - \varepsilon)\,F
  \;+\;
  \varepsilon\,F \circ \mathcal{P}_{j,\delta}^{-1},
\end{equation}
where $F \circ \mathcal{P}_{j,\delta}^{-1}$ denotes the push-forward
of~$F$ under~$\mathcal{P}_{j,\delta}$.  In words, with probability
$1-\varepsilon$ we observe a clean draw $\mathbf{x} \sim F$, and with
probability $\varepsilon$ we observe $\mathcal{P}_{j,\delta}(\mathbf{x})$,
where $\mathbf{x} \sim F$.
\end{definition}

\begin{remark}
\label{rem:pointmass-vs-pushforward}
Unlike the classical Huber $\varepsilon$-contamination, the
contaminating distribution in~\eqref{eq:cellwise-mixture} is not an
arbitrary distribution~$H$ but the push-forward of~$F$ itself through
the deterministic map $\mathcal{P}_{j,\delta}$.  This reflects the
cellwise mechanism: the contamination acts on the composition by
modifying a single raw part, not by replacing the entire observation.
The parameter~$\delta$ controls the magnitude and direction of the
contamination ($\delta > 1$ inflates part~$j$; $\delta < 1$ deflates
it).
\end{remark}

\begin{definition}[Cellwise influence function on $\mathcal{S}^{D}$]
\label{def:cif}
Let $T\colon \mathcal{M}(\mathcal{S}^{D}) \to \Theta$ be a statistical
functional, where $\mathcal{M}(\mathcal{S}^{D})$ denotes the set of
probability distributions on $\mathcal{S}^{D}$ and $\Theta$ is a
normed space.  The \emph{cellwise influence function} (CIF) of~$T$ at
the model~$F$ for contamination of part~$j$ at level~$\delta$ is
\begin{equation}
\label{eq:cif-definition}
  \operatorname{CIF}_{j}(T, F, \delta)
  \;=\;
  \lim_{\varepsilon \to 0^{+}}
  \frac{T(F_{j,\delta,\varepsilon}) - T(F)}{\varepsilon},
\end{equation}
provided the limit exists.  The \emph{gross cellwise sensitivity} of~$T$
at~$F$ for part~$j$ is
\begin{equation}
\label{eq:gross-cif-sensitivity}
  \gamma_{j}^{*}(T, F)
  \;=\;
  \sup_{\delta > 0,\;\delta \neq 1}
  \bigl\|\operatorname{CIF}_{j}(T, F, \delta)\bigr\|,
\end{equation}
and the \emph{total gross cellwise sensitivity} is
\begin{equation}
\label{eq:total-gross-sensitivity}
  \gamma^{*}(T, F)
  \;=\;
  \max_{j \in [D]} \gamma_{j}^{*}(T, F).
\end{equation}
\end{definition}

\begin{definition}[B-robustness under cellwise contamination]
\label{def:b-robust-cellwise}
The functional~$T$ is \emph{cellwise B-robust} at~$F$ if
$\gamma^{*}(T, F) < \infty$, i.e., if
$\sup_{\delta > 0,\,\delta \neq 1}
\|\operatorname{CIF}_{j}(T, F, \delta)\| < \infty$
for every $j \in [D]$.
\end{definition}

\begin{remark}[Relation to the classical influence function]
\label{rem:cif-vs-if}
The classical influence function of \citet{Hampel1974} for rowwise
contamination is
$\operatorname{IF}(\mathbf{x}_{0}; T, F)
  = \lim_{\varepsilon \to 0^{+}}
    [T((1-\varepsilon)F + \varepsilon\,\Delta_{\mathbf{x}_{0}}) - T(F)]
    / \varepsilon$,
where $\Delta_{\mathbf{x}_{0}}$ is a point mass.  The CIF of
Definition~\ref{def:cif} differs in two structural respects:
\begin{enumerate}
\item[\textup{(i)}]
  The contaminating distribution is parametrised by the
  pair~$(j, \delta)$ rather than by a point
  $\mathbf{x}_{0} \in \mathcal{S}^{D}$.
\item[\textup{(ii)}]
  The contamination is data-dependent: the contaminating measure
  $F \circ \mathcal{P}_{j,\delta}^{-1}$ depends on the clean
  distribution~$F$ through the push-forward, rather than being a fixed
  point mass.  This data-dependence is intrinsic to the cellwise
  mechanism, where a contaminated observation is a distorted version
  of a clean one, not an arbitrary replacement.
\end{enumerate}
When we additionally take $\delta \to \infty$ (or $\delta \to 0^{+}$),
the CIF captures the \emph{worst-case} influence of an arbitrarily
large cellwise perturbation, which is the relevant quantity for
B-robustness.  In the taxonomy of cellwise influence functionals our
CIF is closest in spirit to the marginal cellwise contamination model
of \citet{Alqallaf2009}; it differs from the cell-perturbation IF of
\citet{Raymaekers2024}, which acts on a fixed observation
$\mathbf{x}_{0}$ rather than on the data distribution.
\end{remark}

\begin{remark}[Fixed-$\delta$ CIF vs.\ stochastic contamination model]
\label{rem:cif-vs-stochastic}
The CIF of Definition~\ref{def:cif} conditions on a fixed
contamination level~$\delta$, whereas the stochastic cellwise model of
Definition~\ref{def:cellwise-model} (Section~\ref{sec:contamination-model})
draws~$\delta$ from a distribution~$G$ on~$\mathbb{R}_{>0}$.  The
connection is given by integration: the overall sensitivity of~$T$
under the full stochastic model with contamination distribution~$G$
equals
$\int_{0}^{\infty}\operatorname{CIF}_{j}(T,F,\delta)\,\mathrm{d}G(\delta)$,
provided the CIF is $G$-integrable.  The gross cellwise
sensitivity~\eqref{eq:gross-cif-sensitivity} corresponds to the
worst-case choice of~$G$ concentrated at a single~$\delta$, and
B-robustness (bounded CIF uniformly in~$\delta$) ensures integrability
for every~$G$ with finite support on the log scale.
\end{remark}

\subsection{CIF for the compositional mean}
\label{ssec:cif-mean}

We derive the cellwise influence function for the compositional center,
expressed as the ilr mean functional.

\begin{definition}[Compositional center functional]
\label{def:center-functional}
The \emph{compositional center} (Fr\'{e}chet mean in Aitchison
geometry) of a distribution~$F$ on $\mathcal{S}^{D}$ is defined
through its ilr representation:
\begin{equation}
\label{eq:center-functional}
  \boldsymbol{\mu}(F)
  \;=\;
  \mathbb{E}_{F}\bigl[\operatorname{ilr}(\mathbf{x})\bigr]
  \;=\;
  \int_{\mathcal{S}^{D}} \mathbf{V}^{\top}\operatorname{clr}(\mathbf{x})
  \;\mathrm{d}F(\mathbf{x})
  \;\in\;\mathbb{R}^{D-1}.
\end{equation}
\end{definition}

\begin{theorem}[CIF for the compositional center]
\label{thm:cif-center}
The cellwise influence function of the compositional center
$\boldsymbol{\mu}(F)$ for contamination of part~$j$ at level
$\delta > 0$ is
\begin{equation}
\label{eq:cif-center}
  \operatorname{CIF}_{j}(\boldsymbol{\mu}, F, \delta)
  \;=\;
  (\ln\delta)\;\mathbf{v}_{j},
\end{equation}
where $\mathbf{v}_{j} = \mathbf{V}^{\top}\mathbf{e}_{j}$ is the transpose of the $j$th
row of the contrast matrix.  In particular:
\begin{enumerate}
\item[\textup{(i)}]
  The CIF is a rank-$1$ function of~$\delta$, with direction fixed
  by the contrast matrix row~$\mathbf{v}_{j}$ and magnitude
  proportional to $\ln\delta$.
\item[\textup{(ii)}]
  The $l$th ilr component of the CIF is
  \begin{equation}
  \label{eq:cif-center-component}
    \operatorname{CIF}_{j}(\mu_{l}, F, \delta)
    \;=\;
    (\ln\delta)\,v_{jl},
    \qquad l = 1,\ldots,D-1.
  \end{equation}
  Every ilr coordinate for which $v_{jl} \neq 0$ is affected.
\item[\textup{(iii)}]
  The gross cellwise sensitivity for part~$j$ is
  \begin{equation}
  \label{eq:gamma-center}
    \gamma_{j}^{*}(\boldsymbol{\mu}, F)
    \;=\;
    \sup_{\delta > 0,\;\delta \neq 1}
    |\ln\delta|\;\|\mathbf{v}_{j}\|
    \;=\;
    +\infty.
  \end{equation}
  Thus the classical (unrobustified) compositional center is \emph{not}
  cellwise B-robust.
\end{enumerate}
\end{theorem}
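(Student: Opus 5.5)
The plan is to compute the functional directly on the mixture and use linearity of expectation; no linearisation is needed. The center functional $\boldsymbol{\mu}(F)=\mathbb{E}_{F}[\operatorname{ilr}(\mathbf{x})]$ is linear in the measure, so for the mixture~\eqref{eq:cellwise-mixture}
\[
  \boldsymbol{\mu}(F_{j,\delta,\varepsilon})
  =(1-\varepsilon)\,\boldsymbol{\mu}(F)
   +\varepsilon\int \operatorname{ilr}\bigl(\mathcal{P}_{j,\delta}(\mathbf{x})\bigr)\,\mathrm{d}F(\mathbf{x}),
\]
using the change-of-variables formula for the push-forward $F\circ\mathcal{P}_{j,\delta}^{-1}$. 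We assume $\mathbb{E}_{F}\|\operatorname{ilr}(\mathbf{x})\|<\infty$ so that $\boldsymbol{\mu}(F)$ exists; this is implicit in Definition~\ref{def:center-functional}.

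Next, apply Theorem~\ref{thm:ilr-propagation} pointwise. For every $\mathbf{x}$ we have $\operatorname{ilr}(\mathcal{P}_{j,\delta}(\mathbf{x}))=\operatorname{ilr}(\mathbf{x})+(\ln\delta)\mathbf{v}_{j}$. The shift is deterministic and does not depend on $\mathbf{x}$, and closure is irrelevant by scale invariance (Remark~\ref{rem:closure-not-load-bearing}). Integrating gives $\int\operatorname{ilr}(\mathcal{P}_{j,\delta}(\mathbf{x}))\,\mathrm{d}F=\boldsymbol{\mu}(F)+(\ln\delta)\mathbf{v}_{j}$, which is finite. Substituting back yields the exact identity
\[
  \boldsymbol{\mu}(F_{j,\delta,\varepsilon})-\boldsymbol{\mu}(F)=\varepsilon(\ln\delta)\,\mathbf{v}_{j}
\]
for every $\varepsilon\in[0,1)$. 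Dividing by $\varepsilon$, the difference quotient is constant in $\varepsilon$, so the limit in~\eqref{eq:cif-definition} exists trivially and equals~\eqref{eq:cif-center}.

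The three consequences then follow directly.
\begin{enumerate}
\item[(i)] The CIF lies in $\operatorname{span}(\mathbf{v}_{j})$ with scalar coefficient $\ln\delta$.
\item[(ii)] Reading off the $l$th component gives $(\ln\delta)v_{jl}$, which is nonzero exactly on $\mathcal{L}_{j}$ when $\delta\neq1$.
\item[(iii)] Taking norms gives $\|\operatorname{CIF}_{j}\|=|\ln\delta|\,\|\mathbf{v}_{j}\|=|\ln\delta|\sqrt{(D-1)/D}$ by Lemma~\ref{lem:vj-spread}. Since $\|\mathbf{v}_{j}\|>0$ for $D\ge2$ and $|\ln\delta|\to\infty$ as $\delta\to\infty$ or $\delta\to0^{+}$, the supremum in~\eqref{eq:gamma-center} is $+\infty$ for every $j$. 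Hence $\gamma^{*}(\boldsymbol{\mu},F)=\infty$, and $\boldsymbol{\mu}$ is not cellwise B-robust in the sense of Definition~\ref{def:b-robust-cellwise}.
\end{enumerate}

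The only step needing any care is the push-forward integral: one must check that the contaminated component has finite ilr mean, so that linearity of the mixture is legitimate. This is immediate once $F$ has a finite ilr first moment, because the shift is a constant vector.
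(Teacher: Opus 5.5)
Your proposal is correct and follows essentially the same route as the paper: split the mixture by linearity of the ilr mean, apply Theorem~\ref{thm:ilr-propagation} pointwise to get the exact identity $\boldsymbol{\mu}(F_{j,\delta,\varepsilon})=\boldsymbol{\mu}(F)+\varepsilon(\ln\delta)\,\mathbf{v}_{j}$, observe that the difference quotient is constant in $\varepsilon$, and read off (i)--(iii) using $\|\mathbf{v}_{j}\|=\sqrt{1-1/D}>0$ from Lemma~\ref{lem:vj-spread}. Your explicit finite-first-moment assumption is a small, welcome addition that the paper leaves implicit.
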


\begin{proof}
Under the contamination~\eqref{eq:cellwise-mixture}, the ilr mean at
the perturbed distribution $F_{j,\delta,\varepsilon}$ is
\begin{align}
  \boldsymbol{\mu}(F_{j,\delta,\varepsilon})
  &\;=\;
  (1-\varepsilon)\,\mathbb{E}_{F}\bigl[\operatorname{ilr}(\mathbf{x})\bigr]
  \;+\;
  \varepsilon\,\mathbb{E}_{F}\bigl[
    \operatorname{ilr}\!\bigl(\mathcal{P}_{j,\delta}(\mathbf{x})\bigr)
  \bigr].
  \label{eq:pf-center-mixture}
\end{align}
By Theorem~\ref{thm:ilr-propagation},
$\operatorname{ilr}(\mathcal{P}_{j,\delta}(\mathbf{x}))
  = \operatorname{ilr}(\mathbf{x}) + (\ln\delta)\,\mathbf{v}_{j}$,
since $\mathcal{P}_{j,\delta}$ contaminates exactly part~$j$ by
factor~$\delta$.  Substituting into~\eqref{eq:pf-center-mixture},
\begin{align}
  \boldsymbol{\mu}(F_{j,\delta,\varepsilon})
  &\;=\;
  (1-\varepsilon)\,\boldsymbol{\mu}(F)
  \;+\;
  \varepsilon\,
  \bigl[\boldsymbol{\mu}(F) + (\ln\delta)\,\mathbf{v}_{j}\bigr]
  \notag\\
  &\;=\;
  \boldsymbol{\mu}(F)
  \;+\;
  \varepsilon\,(\ln\delta)\,\mathbf{v}_{j}.
  \label{eq:pf-center-perturbed}
\end{align}
Therefore
\[
  \frac{\boldsymbol{\mu}(F_{j,\delta,\varepsilon}) - \boldsymbol{\mu}(F)}
       {\varepsilon}
  \;=\;
  (\ln\delta)\,\mathbf{v}_{j},
\]
which is independent of~$\varepsilon$, so the
limit~\eqref{eq:cif-definition} yields~\eqref{eq:cif-center}.

For~(iii), as $\delta \to +\infty$ (or $\delta \to 0^{+}$),
$|\ln\delta| \to +\infty$, and since
$\|\mathbf{v}_{j}\| = \sqrt{1 - 1/D} > 0$ by
Lemma~\ref{lem:vj-spread}, the supremum
in~\eqref{eq:gamma-center} is $+\infty$.
\end{proof}

\begin{remark}[Dependence on the contrast matrix]
\label{rem:cif-center-contrast}
The direction of the CIF~\eqref{eq:cif-center} depends on the choice
of contrast matrix~$\mathbf{V}$ through the row
$\mathbf{v}_{j} = \mathbf{V}^{\top}\mathbf{e}_{j}$, but the
norm is contrast-invariant:
$\|\operatorname{CIF}_{j}(\boldsymbol{\mu}, F, \delta)\|
  = |\ln\delta|\,\|\mathbf{v}_{j}\|
  = |\ln\delta|\,\sqrt{1 - 1/D}$
for every valid contrast matrix.  This is consistent with the isometric
property of ilr: the Aitchison-norm sensitivity of the compositional
center to cellwise contamination of part~$j$ is independent of the
choice of orthonormal basis.
\end{remark}

\subsection{CIF for the variation matrix}
\label{ssec:cif-variation}

The \emph{variation matrix} $\boldsymbol{\mathcal{T}}(F)$ is the
$D \times D$ matrix of pairwise log-ratio variances,
\begin{equation}
\label{eq:variation-matrix}
  \tau_{kl}(F)
  \;=\;
  \operatorname{Var}_{F}\!\bigl[\ln(x_{k}/x_{l})\bigr],
  \qquad k,l \in [D].
\end{equation}
This is related to the ilr covariance
$\boldsymbol{\Sigma}(F)
  = \operatorname{Var}_{F}[\operatorname{ilr}(\mathbf{x})]$ and
the clr covariance
$\boldsymbol{\Sigma}^{\operatorname{clr}}(F)
  = \operatorname{Var}_{F}[\operatorname{clr}(\mathbf{x})]$
by
\begin{equation}
\label{eq:tau-sigma-relation}
  \tau_{kl}
  \;=\;
  \sigma^{\operatorname{clr}}_{kk}
  + \sigma^{\operatorname{clr}}_{ll}
  - 2\,\sigma^{\operatorname{clr}}_{kl}
  \;=\;
  \|(\mathbf{e}_{k} - \mathbf{e}_{l})^{\top}\mathbf{V}\|^{2}_{\boldsymbol{\Sigma}},
\end{equation}
where
$\boldsymbol{\Sigma}^{\operatorname{clr}}
  = \mathbf{V}\boldsymbol{\Sigma}\mathbf{V}^{\top}$.
We derive the CIF for the ilr covariance functional
$\boldsymbol{\Sigma}(F)$, from which the CIF for~$\boldsymbol{\mathcal{T}}$
follows by~\eqref{eq:tau-sigma-relation}.

\begin{theorem}[CIF for the ilr covariance]
\label{thm:cif-covariance}
Let $\boldsymbol{\Sigma}(F)
  = \operatorname{Var}_{F}[\operatorname{ilr}(\mathbf{x})]$
be the ilr covariance functional.  The cellwise influence function
for contamination of part~$j$ at level $\delta > 0$ is the
rank-one $(D-1) \times (D-1)$ matrix
\begin{equation}
\label{eq:cif-covariance-simplified}
  \operatorname{CIF}_{j}(\boldsymbol{\Sigma}, F, \delta)
  \;=\;
  (\ln\delta)^{2}\;\mathbf{v}_{j}\mathbf{v}_{j}^{\top}.
\end{equation}
In particular:
\begin{enumerate}
\item[\textup{(i)}]
  The CIF is a rank-$1$ positive semidefinite matrix, proportional to
  the outer product $\mathbf{v}_{j}\mathbf{v}_{j}^{\top}$.
\item[\textup{(ii)}]
  The $(r,s)$-entry of the CIF is
  $(\ln\delta)^{2}\,v_{jr}\,v_{js}$.  Hence contamination of part~$j$
  inflates every entry of the covariance matrix for which both
  $v_{jr} \neq 0$ and $v_{js} \neq 0$.
\item[\textup{(iii)}]
  The Frobenius-norm sensitivity is
  \begin{equation}
  \label{eq:gamma-covariance}
    \gamma_{j}^{*}(\boldsymbol{\Sigma}, F)
    \;=\;
    \sup_{\delta > 0,\;\delta \neq 1}
    (\ln\delta)^{2}\,\|\mathbf{v}_{j}\|^{2}
    \;=\;
    +\infty.
  \end{equation}
  The classical ilr covariance is not cellwise B-robust.
\end{enumerate}
\end{theorem}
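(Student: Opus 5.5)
The plan is to compute $\boldsymbol{\Sigma}(F_{j,\delta,\varepsilon})$ exactly, not through an expansion, the same way Theorem~\ref{thm:cif-center} handled the mean. Write $\lambda = \ln\delta$, $\mathbf{w} = \operatorname{ilr}(\mathbf{x})$ with $\mathbf{x}\sim F$, and $\boldsymbol{\mu} = \boldsymbol{\mu}(F)$. I will assume $F^{\mathbf{w}}$ has finite second moments, which is needed for $\boldsymbol{\Sigma}(F)$ to be defined at all. By Theorem~\ref{thm:ilr-propagation}, $\operatorname{ilr}(\mathcal{P}_{j,\delta}(\mathbf{x})) = \mathbf{w} + \lambda\mathbf{v}_{j}$ pointwise. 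So under the mixture~\eqref{eq:cellwise-mixture}, the ilr vector has distribution $(1-\varepsilon)F^{\mathbf{w}} + \varepsilon\,(F^{\mathbf{w}} \text{ shifted by } \lambda\mathbf{v}_{j})$.

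First I compute the two moments of the mixture.
\begin{itemize}
\item The mean is $\boldsymbol{\mu}_{\varepsilon} = \boldsymbol{\mu} + \varepsilon\lambda\mathbf{v}_{j}$; this is exactly~\eqref{eq:pf-center-perturbed}.
\item The uncentred second moment is
\[
\mathbb{E}_{F}[\mathbf{w}\mathbf{w}^{\top}] + \varepsilon\bigl[\lambda(\boldsymbol{\mu}\mathbf{v}_{j}^{\top} + \mathbf{v}_{j}\boldsymbol{\mu}^{\top}) + \lambda^{2}\mathbf{v}_{j}\mathbf{v}_{j}^{\top}\bigr],
\]
obtained by expanding $(\mathbf{w}+\lambda\mathbf{v}_{j})(\mathbf{w}+\lambda\mathbf{v}_{j})^{\top}$ and taking expectations.
\end{itemize}
Subtracting $\boldsymbol{\mu}_{\varepsilon}\boldsymbol{\mu}_{\varepsilon}^{\top}$, the cross terms $\varepsilon\lambda(\boldsymbol{\mu}\mathbf{v}_{j}^{\top}+\mathbf{v}_{j}\boldsymbol{\mu}^{\top})$ cancel. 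This leaves the exact identity
\[
\boldsymbol{\Sigma}(F_{j,\delta,\varepsilon}) = \boldsymbol{\Sigma}(F) + \varepsilon(1-\varepsilon)\,\lambda^{2}\,\mathbf{v}_{j}\mathbf{v}_{j}^{\top}.
\]
This is the standard mixture-of-two-shifted-copies variance formula. Dividing by $\varepsilon$ and letting $\varepsilon\to0^{+}$ gives~\eqref{eq:cif-covariance-simplified}. The limit exists trivially because the difference quotient is polynomial in $\varepsilon$.

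The three listed consequences are then read off.
\begin{itemize}
\item[(i)] $\lambda^{2}\mathbf{v}_{j}\mathbf{v}_{j}^{\top}$ is positive semidefinite with rank exactly one, since $\mathbf{v}_{j}\neq\mathbf{0}$ by Lemma~\ref{lem:vj-spread}, provided $\delta\neq1$.
\item[(ii)] The $(r,s)$ entry is $\lambda^{2}v_{jr}v_{js}$, so it is nonzero exactly on $\mathcal{L}_{j}\times\mathcal{L}_{j}$.
\item[(iii)] Since $\|\mathbf{v}_{j}\mathbf{v}_{j}^{\top}\|_{F} = \|\mathbf{v}_{j}\|^{2} = 1-1/D$ by~\eqref{eq:vj-norm}, the Frobenius sensitivity is $(\ln\delta)^{2}(1-1/D)$. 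This tends to $+\infty$ as $\delta\to\infty$ or $\delta\to0^{+}$, so the supremum in~\eqref{eq:gamma-covariance} is infinite and the functional is not cellwise B-robust.
\end{itemize}

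The main point needing care is the bookkeeping of the mean shift. One must use the covariance about the perturbed mean $\boldsymbol{\mu}_{\varepsilon}$, not about $\boldsymbol{\mu}$. Otherwise the cross terms survive and the CIF would wrongly contain $\boldsymbol{\mu}\mathbf{v}_{j}^{\top}$ terms. The finite-second-moment assumption is the only analytic hypothesis, and I would state it explicitly.
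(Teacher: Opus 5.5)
Your proposal is correct and follows the paper's proof essentially line for line: you expand the mixture's second moment using $\operatorname{ilr}(\mathcal{P}_{j,\delta}(\mathbf{x}))=\mathbf{w}+(\ln\delta)\mathbf{v}_{j}$ and subtract $\boldsymbol{\mu}_{\varepsilon}\boldsymbol{\mu}_{\varepsilon}^{\top}$ so that the cross terms cancel, which gives the exact identity $\boldsymbol{\Sigma}(F_{j,\delta,\varepsilon})=\boldsymbol{\Sigma}(F)+\varepsilon(1-\varepsilon)(\ln\delta)^{2}\mathbf{v}_{j}\mathbf{v}_{j}^{\top}$ before you divide by $\varepsilon$. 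Your explicit finite-second-moment hypothesis and your caveat that the rank is exactly one only when $\delta\neq1$ are small but welcome clarifications that the paper leaves implicit.
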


\begin{proof}
Under $F_{j,\delta,\varepsilon}$ in~\eqref{eq:cellwise-mixture}, the
second moment of the ilr coordinates is
\begin{align}
  \mathbb{E}_{F_{j,\delta,\varepsilon}}[\mathbf{w}\mathbf{w}^{\top}]
  &\;=\;
  (1-\varepsilon)\,\mathbb{E}_{F}[\mathbf{w}\mathbf{w}^{\top}]
  \;+\;
  \varepsilon\,\mathbb{E}_{F}\bigl[
    \tilde{\mathbf{w}}\,\tilde{\mathbf{w}}^{\top}
  \bigr],
  \label{eq:pf-cov-second-moment}
\end{align}
where $\tilde{\mathbf{w}}
  = \operatorname{ilr}(\mathcal{P}_{j,\delta}(\mathbf{x}))
  = \mathbf{w} + (\ln\delta)\,\mathbf{v}_{j}$
by Theorem~\ref{thm:ilr-propagation}.  Expanding the contaminated
second moment,
\begin{align}
  \tilde{\mathbf{w}}\,\tilde{\mathbf{w}}^{\top}
  &\;=\;
  \bigl[\mathbf{w} + (\ln\delta)\,\mathbf{v}_{j}\bigr]
  \bigl[\mathbf{w} + (\ln\delta)\,\mathbf{v}_{j}\bigr]^{\!\top}
  \notag\\
  &\;=\;
  \mathbf{w}\mathbf{w}^{\top}
  \;+\;
  (\ln\delta)\,\mathbf{w}\,\mathbf{v}_{j}^{\top}
  \;+\;
  (\ln\delta)\,\mathbf{v}_{j}\,\mathbf{w}^{\top}
  \;+\;
  (\ln\delta)^{2}\,\mathbf{v}_{j}\mathbf{v}_{j}^{\top}.
  \label{eq:pf-cov-expand}
\end{align}
Similarly, the mean under $F_{j,\delta,\varepsilon}$ is
$\boldsymbol{\mu}_{\varepsilon}
  = \boldsymbol{\mu} + \varepsilon(\ln\delta)\,\mathbf{v}_{j}$
by~\eqref{eq:pf-center-perturbed}.  The covariance is
$\boldsymbol{\Sigma}_{\varepsilon}
  = \mathbb{E}_{F_{j,\delta,\varepsilon}}[\mathbf{w}\mathbf{w}^{\top}]
    - \boldsymbol{\mu}_{\varepsilon}\boldsymbol{\mu}_{\varepsilon}^{\top}$.

From~\eqref{eq:pf-cov-second-moment} and~\eqref{eq:pf-cov-expand},
\begin{align}
  \mathbb{E}_{F_{j,\delta,\varepsilon}}[\mathbf{w}\mathbf{w}^{\top}]
  &\;=\;
  \mathbb{E}_{F}[\mathbf{w}\mathbf{w}^{\top}]
  \;+\;
  \varepsilon(\ln\delta)\,
  \bigl(
    \boldsymbol{\mu}\,\mathbf{v}_{j}^{\top}
    + \mathbf{v}_{j}\,\boldsymbol{\mu}^{\top}
  \bigr)
  \;+\;
  \varepsilon(\ln\delta)^{2}\,\mathbf{v}_{j}\mathbf{v}_{j}^{\top},
  \label{eq:pf-cov-E2}
\end{align}
where we used
$\mathbb{E}_{F}[\mathbf{w}] = \boldsymbol{\mu}$.
For the mean outer product,
\begin{align}
  \boldsymbol{\mu}_{\varepsilon}\boldsymbol{\mu}_{\varepsilon}^{\top}
  &\;=\;
  \boldsymbol{\mu}\boldsymbol{\mu}^{\top}
  \;+\;
  \varepsilon(\ln\delta)\,
  \bigl(
    \boldsymbol{\mu}\,\mathbf{v}_{j}^{\top}
    + \mathbf{v}_{j}\,\boldsymbol{\mu}^{\top}
  \bigr)
  \;+\;
  \varepsilon^{2}(\ln\delta)^{2}\,\mathbf{v}_{j}\mathbf{v}_{j}^{\top}.
  \label{eq:pf-cov-mumut}
\end{align}
Subtracting~\eqref{eq:pf-cov-mumut} from~\eqref{eq:pf-cov-E2},
\begin{equation}
  \boldsymbol{\Sigma}_{\varepsilon}
  \;=\;
  \boldsymbol{\Sigma}(F)
  \;+\;
  \varepsilon(1 - \varepsilon)\,(\ln\delta)^{2}\,
  \mathbf{v}_{j}\mathbf{v}_{j}^{\top}.
  \label{eq:pf-cov-final}
\end{equation}
Therefore
\[
  \frac{\boldsymbol{\Sigma}_{\varepsilon} - \boldsymbol{\Sigma}(F)}
       {\varepsilon}
  \;=\;
  (1 - \varepsilon)\,(\ln\delta)^{2}\,\mathbf{v}_{j}\mathbf{v}_{j}^{\top}
  \;\xrightarrow{\varepsilon \to 0^{+}}\;
  (\ln\delta)^{2}\;\mathbf{v}_{j}\mathbf{v}_{j}^{\top},
\]
establishing~\eqref{eq:cif-covariance-simplified}.  The cross terms
$\boldsymbol{\mu}\mathbf{v}_{j}^{\top} + \mathbf{v}_{j}\boldsymbol{\mu}^{\top}$
that appear linearly in~$\varepsilon(\ln\delta)$ both in the second
moment~\eqref{eq:pf-cov-E2} and in the mean outer
product~\eqref{eq:pf-cov-mumut} cancel exactly when the covariance
is formed, so the CIF carries no $O(\ln\delta)$ term --- only the
$(\ln\delta)^{2}$ contribution from $\mathbf{v}_{j}\mathbf{v}_{j}^{\top}$
survives.  Parts~(i)--(iii) are immediate from~\eqref{eq:cif-covariance-simplified}.
\end{proof}

Theorem~\ref{thm:cif-covariance} establishes that contamination of
part~$j$ inflates every entry of the ilr covariance along
direction~$\mathbf{v}_{j}$.  One might fear this is the end of the
story: no diagnostic fingerprint survives the log-ratio
propagation.  The next corollary shows this is not the case.
Stepping down from the ilr covariance to the variation
matrix~$\boldsymbol{\mathcal{T}}$ --- the $D \times D$ matrix of
pairwise log-ratio variances --- the common shift
$-D^{-1}\ln\delta_{j}$ imposed on every clr coordinate
\emph{cancels} in each pairwise difference
$\operatorname{clr}_{k} - \operatorname{clr}_{l}$ with
$k, l \neq j$.  Sparsity is thereby recovered: only row~$j$ and
column~$j$ of~$\boldsymbol{\mathcal{T}}$ are perturbed, and every
other entry is identically zero.  This is the diagnostic
fingerprint on which cellwise-robust methodology on the simplex
can build.

\begin{corollary}[CIF for the variation matrix]
\label{cor:cif-variation}
The cellwise influence function for the entry $\tau_{kl}(F)$ of the
variation matrix~\eqref{eq:variation-matrix} under contamination of
part~$j$ at level $\delta > 0$ is
\begin{equation}
\label{eq:cif-variation}
  \operatorname{CIF}_{j}(\tau_{kl}, F, \delta)
  \;=\;
  (\ln\delta)^{2}\,
  \bigl(v_{jk}^{\operatorname{clr}} - v_{jl}^{\operatorname{clr}}\bigr)^{2},
\end{equation}
where
$v_{jk}^{\operatorname{clr}} = (\mathbf{H}_{D}\,\mathbf{e}_{j})_{k}
  = \mathbf{1}_{[k = j]} - 1/D$
is the $k$th entry of the clr shift direction for contamination of
part~$j$ (cf.\ Theorem~\ref{thm:clr-propagation}).  Explicitly,
\begin{equation}
\label{eq:cif-variation-explicit}
  \operatorname{CIF}_{j}(\tau_{kl}, F, \delta)
  \;=\;
  (\ln\delta)^{2}
  \;\times\;
  \begin{cases}
    \displaystyle \bigl(\tfrac{D-1}{D}\bigr)^{2}
      + \bigl(\tfrac{1}{D}\bigr)^{2}
      + 2\,\tfrac{D-1}{D}\,\tfrac{1}{D}
    = 1
    & \text{if } j = k,\; l \neq j \text{ or } j = l,\; k \neq j,
    \\[6pt]
    0 & \text{if } j \neq k \text{ and } j \neq l.
  \end{cases}
\end{equation}
Thus contamination of part~$j$ affects only those entries
$\tau_{kl}$ of the variation matrix for which $k = j$ or $l = j$:
the $j$th row and $j$th column of~$\boldsymbol{\mathcal{T}}$.
Distinct parts $j\neq j'$ produce disjoint row--column inflation
patterns, so the variation-matrix CIF identifies the responsible
part unambiguously.
\end{corollary}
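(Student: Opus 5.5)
The plan is to compute the variance of each pairwise log-ratio directly under the mixture $F_{j,\delta,\varepsilon}$, rather than going through the ilr covariance. Write $y_{kl} = \ln(x_k/x_l) = \operatorname{clr}_k(\mathbf{x}) - \operatorname{clr}_l(\mathbf{x})$; this identity holds because the geometric-mean terms cancel. By Theorem~\ref{thm:clr-propagation}, applying $\mathcal{P}_{j,\delta}$ shifts $\operatorname{clr}(\mathbf{x})$ by $(\ln\delta)\,\mathbf{H}_D\mathbf{e}_j$, whose $k$th entry is $v^{\operatorname{clr}}_{jk}$. Hence $y_{kl}$ under contamination equals $y_{kl} + c_{kl}$, where the deterministic constant is $c_{kl} = (\ln\delta)(v^{\operatorname{clr}}_{jk} - v^{\operatorname{clr}}_{jl})$. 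The problem thus reduces to a scalar version of the computation in the proof of Theorem~\ref{thm:cif-covariance}.

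The key step is the following. A random variable $Y$ with mixture law $(1-\varepsilon)\mathcal{L}(Y) + \varepsilon\,\mathcal{L}(Y+c)$ has mean $\mathbb{E}Y + \varepsilon c$. Its second moment is $\mathbb{E}Y^2 + \varepsilon(2c\,\mathbb{E}Y + c^2)$. Subtracting the squared mean, the cross terms cancel exactly as in \eqref{eq:pf-cov-final}, which gives
\[
  \operatorname{Var}_{\varepsilon}(Y) = \operatorname{Var}(Y) + \varepsilon(1-\varepsilon)\,c^{2}.
\]
Dividing by $\varepsilon$ and letting $\varepsilon\to0^+$ yields $\operatorname{CIF}_j(\tau_{kl},F,\delta) = c_{kl}^2$, which is \eqref{eq:cif-variation}. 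An alternative route checks consistency with \eqref{eq:tau-sigma-relation}: apply the rank-one ilr CIF and note that $(\mathbf{e}_k-\mathbf{e}_l)^{\top}\mathbf{V}\mathbf{v}_j = (\mathbf{e}_k-\mathbf{e}_l)^{\top}\mathbf{H}_D\mathbf{e}_j$. Since $\tau_{kl}$ is a linear functional of $\boldsymbol{\Sigma}$, this gives the same result. Finite second moments of $\ln(x_k/x_l)$ under $F$ are assumed implicitly, as in Theorem~\ref{thm:cif-covariance}.

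For the explicit cases, substitute $v^{\operatorname{clr}}_{jk} = \mathbf{1}_{[k=j]} - 1/D$. If $k,l\neq j$, both entries equal $-1/D$, so the difference is $0$; this is the cancellation of the common geometric-mean shift. If $k=j\neq l$, the difference is $(1-1/D)+1/D = 1$, so its square is $1$, matching the displayed expansion $\bigl(\tfrac{D-1}{D}+\tfrac1D\bigr)^2$. The case $l=j\neq k$ follows by symmetry. The diagonal $k=l$ gives $\tau_{kk}\equiv0$ and CIF $0$, which is consistent with the formula.

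The disjointness claim needs a short argument. The support of the CIF for part $j$ is exactly $\{(k,l): k\neq l,\ j\in\{k,l\}\}$. For $j\neq j'$, these supports intersect only in the entries $(j,j')$ and $(j',j)$. They are therefore distinct, and $j$ is recoverable as the unique index common to all nonzero entries, provided $D\geq3$. The main obstacle is this identification statement, because "disjoint" is literally false at the crossing entries $(j,j')$. I would therefore phrase it as distinct row–column patterns that pin down $j$ uniquely. The variance computation itself is routine.
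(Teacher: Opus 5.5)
Your proof is correct and follows the paper's route: write $\ln(x_k/x_l)=\operatorname{clr}_k-\operatorname{clr}_l$, shift it by the constant $(\ln\delta)(v^{\operatorname{clr}}_{jk}-v^{\operatorname{clr}}_{jl})$ using Theorem~\ref{thm:clr-propagation}, and redo the scalar mixture-variance computation of Theorem~\ref{thm:cif-covariance} to get $\varepsilon(1-\varepsilon)c_{kl}^{2}$, which you spell out in more detail than the paper does. Your caveat on the final sentence is also right and goes beyond the paper's proof, which does not address it: the supports for $j\neq j'$ overlap at $(j,j')$ and $(j',j)$, so the patterns are distinct rather than disjoint, and $j$ is recovered as the common index of the nonzero entries only when $D\geq 3$ (for $D=2$ the two patterns coincide).
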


\begin{proof}
The log-ratio $\ln(x_{k}/x_{l}) = \operatorname{clr}_{k}(\mathbf{x}) -
\operatorname{clr}_{l}(\mathbf{x})$ is a linear functional of the clr
coordinates.  By Theorem~\ref{thm:clr-propagation}, the contaminated
value is
$\ln(\tilde{x}_{k}/\tilde{x}_{l})
  = \ln(x_{k}/x_{l})
    + (\ln\delta)\,(v_{jk}^{\operatorname{clr}} - v_{jl}^{\operatorname{clr}})$.
The same computation as in the proof of Theorem~\ref{thm:cif-covariance},
applied to the univariate variance functional, yields
$\operatorname{CIF}_{j}(\tau_{kl}, F, \delta)
  = (\ln\delta)^{2}\,
    (v_{jk}^{\operatorname{clr}} - v_{jl}^{\operatorname{clr}})^{2}$.

For the explicit form, when $j = k$ and $l \neq j$:
$v_{jk}^{\operatorname{clr}} - v_{jl}^{\operatorname{clr}}
  = (1 - 1/D) - (-1/D) = 1$.
When $j \neq k$ and $j \neq l$:
$v_{jk}^{\operatorname{clr}} - v_{jl}^{\operatorname{clr}}
  = (-1/D) - (-1/D) = 0$.
\end{proof}

\begin{remark}[Propagation pattern in the variation matrix]
\label{rem:variation-propagation}
Corollary~\ref{cor:cif-variation} reveals a striking pattern:
contamination of a single part~$j$ inflates the variances of all
log-ratios involving part~$j$ (the $j$th row and column
of~$\boldsymbol{\mathcal{T}}$) by the additive amount $(\ln\delta)^{2}$,
while leaving all log-ratios \emph{not} involving part~$j$ unaffected.
This is despite the fact that the contamination, after re-closure,
perturbs \emph{every} component of the composition.  The explanation is
that the common shift $-D^{-1}\ln\delta$ applied to all clr coordinates
of uncontaminated parts (Theorem~\ref{thm:clr-propagation}) cancels in
the difference $\operatorname{clr}_{k} - \operatorname{clr}_{l}$ when
neither $k$ nor $l$ equals~$j$.  This cancellation property of the
variation matrix makes it a natural target for cellwise contamination
detection: the pattern of inflated entries directly identifies the
contaminated part.
\end{remark}

\ifsubmit
\subsection{CIF for eigenvalues and eigenvectors}
\label{ssec:cif-pca}
The spectral-perturbation analysis that propagates the covariance CIF
through the eigendecomposition --- including the explicit CIF formulas
for eigenvalues and eigenvectors and the resulting cellwise PCA
sensitivity patterns --- is deferred to Section~S.3 of the
Supplementary Material.  The qualitative implications (classical PCA
has unbounded CIF under cellwise contamination; the proposed
estimator has bounded CIF) are summarised in
Section~\ref{ssec:b-robustness} and the remark at the end of the
present section.
\else
\subsection{CIF for eigenvalues and eigenvectors}
\label{ssec:cif-pca}

We now derive the cellwise influence function for the eigenvalues and
eigenvectors of the ilr covariance matrix, which constitute the PCA
of compositional data.  The derivation proceeds by the chain rule:
the CIF of the covariance (Theorem~\ref{thm:cif-covariance}) is
propagated through the eigendecomposition via spectral perturbation
theory.

Let $\boldsymbol{\Sigma}(F) = \sum_{r=1}^{D-1}
\lambda_{r}\,\mathbf{u}_{r}\mathbf{u}_{r}^{\top}$ be the spectral
decomposition of the ilr covariance, with eigenvalues
$\lambda_{1} > \lambda_{2} > \cdots > \lambda_{D-1} > 0$ (assumed
distinct for simplicity) and corresponding orthonormal eigenvectors
$\mathbf{u}_{1},\ldots,\mathbf{u}_{D-1} \in \mathbb{R}^{D-1}$.

\begin{theorem}[CIF for eigenvalues]
\label{thm:cif-eigenvalues}
Under the assumptions above, the cellwise influence function for the
$r$th eigenvalue $\lambda_{r}$ under contamination of part~$j$ at
level~$\delta$ is
\begin{equation}
\label{eq:cif-eigenvalue}
  \operatorname{CIF}_{j}(\lambda_{r}, F, \delta)
  \;=\;
  \mathbf{u}_{r}^{\top}\,
  \operatorname{CIF}_{j}(\boldsymbol{\Sigma}, F, \delta)\,
  \mathbf{u}_{r}
  \;=\;
  (\ln\delta)^{2}\,
  \bigl(\mathbf{u}_{r}^{\top}\mathbf{v}_{j}\bigr)^{2}.
\end{equation}
In particular:
\begin{enumerate}
\item[\textup{(i)}]
  The CIF of $\lambda_{r}$ is non-negative and depends on
  $\delta$ only through $(\ln\delta)^{2}$.
\item[\textup{(ii)}]
  The sensitivity of the $r$th eigenvalue to contamination of part~$j$
  is determined by the squared inner product
  $(\mathbf{u}_{r}^{\top}\mathbf{v}_{j})^{2}$, i.e., the squared
  projection of the contrast matrix row~$\mathbf{v}_{j}$ onto the
  eigenvector~$\mathbf{u}_{r}$.
\item[\textup{(iii)}]
  Since
  $\sum_{r=1}^{D-1}(\mathbf{u}_{r}^{\top}\mathbf{v}_{j})^{2}
    = \|\mathbf{v}_{j}\|^{2} = 1 - 1/D$
  by Parseval's identity and Lemma~\ref{lem:vj-spread}, the total
  eigenvalue inflation from contamination of part~$j$ at level~$\delta$
  is
  \begin{equation}
  \label{eq:total-eigenvalue-inflation}
    \sum_{r=1}^{D-1}
    \operatorname{CIF}_{j}(\lambda_{r}, F, \delta)
    \;=\;
    (\ln\delta)^{2}\,(1 - 1/D),
  \end{equation}
  independently of the eigenvector structure.
\item[\textup{(iv)}]
  The gross cellwise sensitivity is
  $\gamma_{j}^{*}(\lambda_{r}, F) = +\infty$ for every~$r$ with
  $\mathbf{u}_{r}^{\top}\mathbf{v}_{j} \neq 0$.  The classical
  eigenvalues are not cellwise B-robust.
\end{enumerate}
\end{theorem}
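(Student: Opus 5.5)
The plan is to obtain the eigenvalue CIF by composing the covariance CIF of Theorem~\ref{thm:cif-covariance} with first-order spectral perturbation theory; no new contamination calculation is needed. By~\eqref{eq:pf-cov-final} the contaminated covariance is exactly
\[
  \boldsymbol{\Sigma}_{\varepsilon}
  = \boldsymbol{\Sigma}(F) + \varepsilon\,\mathbf{E}_{\varepsilon},
  \qquad
  \mathbf{E}_{\varepsilon}
  = (1-\varepsilon)(\ln\delta)^{2}\,\mathbf{v}_{j}\mathbf{v}_{j}^{\top}.
\]
This is a symmetric perturbation that is analytic (indeed polynomial) in~$\varepsilon$. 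Because the eigenvalues of $\boldsymbol{\Sigma}(F)$ are assumed distinct, each $\lambda_{r}$ is a simple eigenvalue. Standard analytic perturbation theory for simple eigenvalues of symmetric matrices (Kato, Rellich), or equivalently the implicit function theorem applied to $\det(\boldsymbol{\Sigma}_{\varepsilon}-\lambda\mathbf{I})=0$, then makes $\lambda_{r}(\varepsilon)$ differentiable at $\varepsilon=0$. The Hellmann--Feynman formula gives
\[
  \frac{d\lambda_{r}}{d\varepsilon}\Big|_{0}
  = \mathbf{u}_{r}^{\top}\,\frac{d\boldsymbol{\Sigma}_{\varepsilon}}{d\varepsilon}\Big|_{0}\,\mathbf{u}_{r}.
\]
To justify this directly, I would differentiate $\boldsymbol{\Sigma}_{\varepsilon}\mathbf{u}_{r}(\varepsilon)=\lambda_{r}(\varepsilon)\mathbf{u}_{r}(\varepsilon)$, left-multiply by $\mathbf{u}_{r}^{\top}$, and use the symmetry of $\boldsymbol{\Sigma}(F)$ together with $\mathbf{u}_{r}^{\top}\mathbf{u}_{r}=1$ to kill the term involving $\mathbf{u}_{r}'$.

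The derivative of $\boldsymbol{\Sigma}_{\varepsilon}$ at zero is the covariance CIF $(\ln\delta)^{2}\mathbf{v}_{j}\mathbf{v}_{j}^{\top}$. Substituting gives~\eqref{eq:cif-eigenvalue}, namely $(\ln\delta)^{2}(\mathbf{u}_{r}^{\top}\mathbf{v}_{j})^{2}$.

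The corollaries then follow quickly.
\begin{itemize}
\item[(i)] Non-negativity holds because the expression is a square times $(\ln\delta)^{2}$.
\item[(ii)] This is a restatement of the formula.
\item[(iii)] The vectors $\mathbf{u}_{1},\ldots,\mathbf{u}_{D-1}$ form an orthonormal basis of $\mathbb{R}^{D-1}$, so Parseval gives $\sum_{r}(\mathbf{u}_{r}^{\top}\mathbf{v}_{j})^{2}=\|\mathbf{v}_{j}\|^{2}$. Lemma~\ref{lem:vj-spread} gives $\|\mathbf{v}_{j}\|^{2}=1-1/D$, which yields~\eqref{eq:total-eigenvalue-inflation}. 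A cross-check is available: the total equals $\operatorname{tr}\operatorname{CIF}_{j}(\boldsymbol{\Sigma},F,\delta)$, and the trace is linear, so its CIF is exactly the trace of the covariance CIF.
\item[(iv)] If $\mathbf{u}_{r}^{\top}\mathbf{v}_{j}\neq0$, letting $\delta\to\infty$ or $\delta\to0^{+}$ sends $(\ln\delta)^{2}\to\infty$, so $\gamma_{j}^{*}=+\infty$.
\end{itemize}

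The only step with real content is justifying differentiability of $\lambda_{r}(\varepsilon)$ at $\varepsilon=0^{+}$, so that the limit in~\eqref{eq:cif-definition} exists. Distinctness of the eigenvalues is exactly what is needed. Alternatively, one can argue elementarily with Weyl-type bounds: eigenvalue gaps persist for small $\varepsilon$, and the rank-one perturbation size is $O(\varepsilon)$. Then the Rayleigh-quotient characterization gives $\lambda_{r}(\varepsilon)=\lambda_{r}+\varepsilon\,\mathbf{u}_{r}^{\top}\mathbf{E}_{0}\mathbf{u}_{r}+O(\varepsilon^{2})$, with the constant in the $O(\varepsilon^{2})$ term controlled by $(\ln\delta)^{4}\|\mathbf{v}_{j}\|^{4}$ divided by the minimal eigengap. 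I would state this remainder bound explicitly for fixed $\delta$ and leave uniformity in~$\delta$ aside, since the CIF is defined pointwise in~$\delta$.
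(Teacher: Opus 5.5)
Your proposal is correct and follows essentially the same route as the paper: you read off the first-order coefficient $(\ln\delta)^{2}\mathbf{v}_{j}\mathbf{v}_{j}^{\top}$ from the exact expansion~\eqref{eq:pf-cov-final}, plug it into the first-order perturbation formula for a simple eigenvalue (which the paper simply cites from Kato, Theorem~II.5.11), and then get (i)--(iv) immediately via Parseval and Lemma~\ref{lem:vj-spread}. Your Hellmann--Feynman derivation and the trace cross-check based on $\sum_{r}\lambda_{r}=\operatorname{tr}\boldsymbol{\Sigma}$ are sound extras the paper omits; one small correction is that the $O(\varepsilon^{2})$ remainder in your Rayleigh-quotient aside must also include the term $-\varepsilon^{2}(\ln\delta)^{2}(\mathbf{u}_{r}^{\top}\mathbf{v}_{j})^{2}$, which comes from the factor $(1-\varepsilon)$ and is not divided by the eigengap.
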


\begin{proof}
By the classical perturbation formula for simple eigenvalues
\citep[Theorem~II.5.11]{Kato1995}, if $\boldsymbol{\Sigma}(\varepsilon)
  = \boldsymbol{\Sigma} + \varepsilon\,\mathbf{A} + o(\varepsilon)$
is a smooth matrix path with
$\boldsymbol{\Sigma}(0) = \boldsymbol{\Sigma}(F)$, then
\begin{equation}
\label{eq:eigenvalue-perturbation}
  \lambda_{r}(\varepsilon)
  \;=\;
  \lambda_{r}
  \;+\;
  \varepsilon\,\mathbf{u}_{r}^{\top}\mathbf{A}\,\mathbf{u}_{r}
  \;+\;
  O(\varepsilon^{2}).
\end{equation}
We apply this with the identification
$\mathbf{A} = \operatorname{CIF}_{j}(\boldsymbol{\Sigma}, F, \delta)
  = (\ln\delta)^{2}\,\mathbf{v}_{j}\mathbf{v}_{j}^{\top}$,
which is the first-order coefficient in the
expansion~\eqref{eq:pf-cov-final}.  Specifically,
from~\eqref{eq:pf-cov-final},
\[
  \boldsymbol{\Sigma}(F_{j,\delta,\varepsilon})
  \;=\;
  \boldsymbol{\Sigma}(F)
  \;+\;
  \varepsilon\,(1-\varepsilon)\,(\ln\delta)^{2}\,
  \mathbf{v}_{j}\mathbf{v}_{j}^{\top}.
\]
As $\varepsilon \to 0^{+}$, the first-order coefficient is
$\mathbf{A} = (\ln\delta)^{2}\,\mathbf{v}_{j}\mathbf{v}_{j}^{\top}$.
By~\eqref{eq:eigenvalue-perturbation},
\[
  \operatorname{CIF}_{j}(\lambda_{r}, F, \delta)
  \;=\;
  \mathbf{u}_{r}^{\top}\,
  \bigl[(\ln\delta)^{2}\,\mathbf{v}_{j}\mathbf{v}_{j}^{\top}\bigr]\,
  \mathbf{u}_{r}
  \;=\;
  (\ln\delta)^{2}\,
  (\mathbf{u}_{r}^{\top}\mathbf{v}_{j})^{2}.
\]
Parts~(i)--(iv) are immediate.
\end{proof}

\begin{theorem}[CIF for eigenvectors]
\label{thm:cif-eigenvectors}
Under the same hypotheses, the cellwise influence function for the
$r$th eigenvector $\mathbf{u}_{r}$ under contamination of part~$j$
at level~$\delta$ is
\begin{equation}
\label{eq:cif-eigenvector}
  \operatorname{CIF}_{j}(\mathbf{u}_{r}, F, \delta)
  \;=\;
  (\ln\delta)^{2}\,
  \sum_{\substack{s=1 \\ s \neq r}}^{D-1}
  \frac{(\mathbf{u}_{s}^{\top}\mathbf{v}_{j})
        (\mathbf{u}_{r}^{\top}\mathbf{v}_{j})}
       {\lambda_{r} - \lambda_{s}}\;
  \mathbf{u}_{s}.
\end{equation}
In particular:
\begin{enumerate}
\item[\textup{(i)}]
  The perturbation of~$\mathbf{u}_{r}$ lies in the orthogonal
  complement of $\mathbf{u}_{r}$ (i.e.,
  $\mathbf{u}_{r}^{\top}\operatorname{CIF}_{j}(\mathbf{u}_{r}, F, \delta)
  = 0$), reflecting the unit-norm constraint.
\item[\textup{(ii)}]
  The sensitivity of~$\mathbf{u}_{r}$ to contamination of part~$j$
  involves the eigenvalue gaps $\lambda_{r} - \lambda_{s}$ in the
  denominators.  Small eigenvalue gaps amplify the influence, consistent
  with sin-$\Theta$ bounds for invariant subspaces
  \citep{DavisKahan1970} and the Kato resolvent expansion
  \citep[Section~II.2.2]{Kato1995}.
\item[\textup{(iii)}]
  The squared norm of the CIF is
  \begin{equation}
  \label{eq:cif-eigenvector-norm}
    \bigl\|\operatorname{CIF}_{j}(\mathbf{u}_{r}, F, \delta)
    \bigr\|^{2}
    \;=\;
    (\ln\delta)^{4}\,
    (\mathbf{u}_{r}^{\top}\mathbf{v}_{j})^{2}\,
    \sum_{\substack{s=1 \\ s \neq r}}^{D-1}
    \frac{(\mathbf{u}_{s}^{\top}\mathbf{v}_{j})^{2}}
         {(\lambda_{r} - \lambda_{s})^{2}}.
  \end{equation}
\item[\textup{(iv)}]
  The gross cellwise sensitivity is
  $\gamma_{j}^{*}(\mathbf{u}_{r}, F) = +\infty$ whenever
  $\mathbf{u}_{r}^{\top}\mathbf{v}_{j} \neq 0$ and there exists
  $s \neq r$ with $\mathbf{u}_{s}^{\top}\mathbf{v}_{j} \neq 0$.
  The classical eigenvectors are not cellwise B-robust.
\end{enumerate}
\end{theorem}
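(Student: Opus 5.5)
The plan is to proceed exactly as for Theorem~\ref{thm:cif-eigenvalues}: combine first-order perturbation theory for simple eigenvectors with the exact covariance path~\eqref{eq:pf-cov-final}. By~\eqref{eq:pf-cov-final}, $\boldsymbol{\Sigma}(F_{j,\delta,\varepsilon}) = \boldsymbol{\Sigma}(F) + \varepsilon\mathbf{A} + O(\varepsilon^{2})$ with $\mathbf{A} = (\ln\delta)^{2}\mathbf{v}_{j}\mathbf{v}_{j}^{\top}$. This path is polynomial (hence analytic) in $\varepsilon$. Because the eigenvalues are simple, the Kato/Rellich theory \citep[Section~II.2.2]{Kato1995} supplies an analytic branch $\mathbf{u}_{r}(\varepsilon)$ with $\mathbf{u}_{r}(0)=\mathbf{u}_{r}$.

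\textbf{Normalisation and derivative.} We fix the normalisation $\|\mathbf{u}_{r}(\varepsilon)\|=1$ together with the sign convention $\mathbf{u}_{r}^{\top}\mathbf{u}_{r}(\varepsilon)>0$. The first-order formula is
\[
  \mathbf{u}_{r}'(0)=\sum_{s\neq r}\frac{\mathbf{u}_{s}^{\top}\mathbf{A}\mathbf{u}_{r}}{\lambda_{r}-\lambda_{s}}\,\mathbf{u}_{s}.
\]
It follows by differentiating $\boldsymbol{\Sigma}(\varepsilon)\mathbf{u}_{r}(\varepsilon)=\lambda_{r}(\varepsilon)\mathbf{u}_{r}(\varepsilon)$ at $0$ and then projecting onto $\mathbf{u}_{s}$ for $s\neq r$, which gives $(\lambda_{r}-\lambda_{s})\,\mathbf{u}_{s}^{\top}\mathbf{u}_{r}'(0)=\mathbf{u}_{s}^{\top}\mathbf{A}\mathbf{u}_{r}$. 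The $\mathbf{u}_{r}$-component of $\mathbf{u}_{r}'(0)$ vanishes because differentiating $\|\mathbf{u}_{r}(\varepsilon)\|^{2}=1$ gives $\mathbf{u}_{r}^{\top}\mathbf{u}_{r}'(0)=0$. The CIF is then exactly $\mathbf{u}_{r}'(0)$, and substituting $\mathbf{u}_{s}^{\top}\mathbf{A}\mathbf{u}_{r}=(\ln\delta)^{2}(\mathbf{u}_{s}^{\top}\mathbf{v}_{j})(\mathbf{u}_{r}^{\top}\mathbf{v}_{j})$ yields~\eqref{eq:cif-eigenvector}.

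\textbf{Parts (i)--(iii).} Part~(i) is the normalisation identity just used. Part~(ii) is a reading of the denominators in~\eqref{eq:cif-eigenvector}. Part~(iii) follows from Parseval's identity in the orthonormal basis $\{\mathbf{u}_{s}\}$: the squared norm is the sum of squared coefficients, and we factor out $(\ln\delta)^{4}(\mathbf{u}_{r}^{\top}\mathbf{v}_{j})^{2}$.

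\textbf{Part (iv).} Under the stated conditions, some summand in~\eqref{eq:cif-eigenvector-norm} is strictly positive. The norm is therefore $c\,(\ln\delta)^{4}$ with $c>0$ independent of $\delta$, so it diverges as $\delta\to\infty$ and the supremum is $+\infty$.

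\textbf{Main obstacle.} The only delicate step is justifying the eigenvector limit, namely that the branch is well defined and differentiable with the chosen sign convention. Simplicity of $\lambda_{r}$ and analyticity of the path reduce this to the standard result cited. No uniformity in $\delta$ is needed, since $\delta$ is fixed before the limit $\varepsilon\to0^{+}$ is taken.
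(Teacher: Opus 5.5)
Your proposal is correct and follows essentially the same route as the paper. Both arguments insert $\mathbf{A}=(\ln\delta)^{2}\mathbf{v}_{j}\mathbf{v}_{j}^{\top}$ from the exact covariance path into the first-order perturbation formula for a simple eigenvector. The only difference is that you derive that formula from the eigen-equation and the unit-norm constraint, with an explicit sign convention, whereas the paper cites Kato and Croux--Haesbroeck. One small slip: in (iv) it is the \emph{squared} norm that equals $c\,(\ln\delta)^{4}$, which does not affect the divergence.
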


\begin{proof}
The classical perturbation formula for simple eigenvectors
\citep[Section~II.2.2]{Kato1995}\citep{CrouxHaesbroeck2000} states
that under the smooth
perturbation
$\boldsymbol{\Sigma}(\varepsilon)
  = \boldsymbol{\Sigma} + \varepsilon\,\mathbf{A} + o(\varepsilon)$,
\begin{equation}
\label{eq:eigenvector-perturbation}
  \mathbf{u}_{r}(\varepsilon)
  \;=\;
  \mathbf{u}_{r}
  \;+\;
  \varepsilon\,
  \sum_{\substack{s=1 \\ s \neq r}}^{D-1}
  \frac{\mathbf{u}_{s}^{\top}\mathbf{A}\,\mathbf{u}_{r}}
       {\lambda_{r} - \lambda_{s}}\;
  \mathbf{u}_{s}
  \;+\;
  O(\varepsilon^{2}).
\end{equation}
With
$\mathbf{A} = (\ln\delta)^{2}\,\mathbf{v}_{j}\mathbf{v}_{j}^{\top}$,
we compute
\[
  \mathbf{u}_{s}^{\top}\mathbf{A}\,\mathbf{u}_{r}
  \;=\;
  (\ln\delta)^{2}\,
  (\mathbf{u}_{s}^{\top}\mathbf{v}_{j})
  (\mathbf{v}_{j}^{\top}\mathbf{u}_{r})
  \;=\;
  (\ln\delta)^{2}\,
  (\mathbf{u}_{s}^{\top}\mathbf{v}_{j})
  (\mathbf{u}_{r}^{\top}\mathbf{v}_{j}).
\]
Substituting into~\eqref{eq:eigenvector-perturbation}
yields~\eqref{eq:cif-eigenvector}.  Part~(i) follows from the
orthogonality $\mathbf{u}_{r}^{\top}\mathbf{u}_{s} = 0$ for
$s \neq r$.  Part~(iii) is obtained by computing
$\|\operatorname{CIF}_{j}(\mathbf{u}_{r}, F, \delta)\|^{2}
  = \sum_{s \neq r}
    [\text{coefficient of } \mathbf{u}_{s}]^{2}$
and using orthonormality.
\end{proof}

\begin{remark}[Compositional interpretation of the CIF for PCA]
\label{rem:cif-pca-interpretation}
The CIF for eigenvalues~\eqref{eq:cif-eigenvalue} and
eigenvectors~\eqref{eq:cif-eigenvector} reveal a two-stage
amplification mechanism specific to compositional data:
\begin{enumerate}
\item[\textup{(i)}]
  \textbf{Log-ratio propagation stage.}
  Contamination of part~$j$ by factor~$\delta$ induces a shift of
  magnitude $|\ln\delta|\,\|\mathbf{v}_{j}\|$ in ilr coordinates
  (Theorem~\ref{thm:ilr-propagation}), distributed across the
  coordinates in the support of~$\mathbf{v}_{j}$.
\item[\textup{(ii)}]
  \textbf{Spectral amplification stage.}
  The rank-$1$ covariance inflation
  $(\ln\delta)^{2}\,\mathbf{v}_{j}\mathbf{v}_{j}^{\top}$
  (Theorem~\ref{thm:cif-covariance}) is projected onto the
  eigenstructure.  The eigenvalue CIF depends on
  $(\mathbf{u}_{r}^{\top}\mathbf{v}_{j})^{2}$, so eigenvalues
  whose eigenvectors are aligned with~$\mathbf{v}_{j}$ are most
  inflated.  The eigenvector CIF depends additionally on the
  eigenvalue gaps: closely spaced eigenvalues lead to larger
  eigenvector perturbations, a phenomenon familiar from
  Davis--Kahan theory but here driven by the specific
  direction~$\mathbf{v}_{j}$ determined by which part is
  contaminated.
\end{enumerate}
The key implication is that the worst-case eigenvector perturbation
from contamination of part~$j$ is concentrated on those principal
components whose loading patterns overlap most with the log-ratio
propagation direction~$\mathbf{v}_{j}$.
\end{remark}
\fi   

\subsection{B-robustness of the cellwise-robust estimator}
\label{ssec:b-robustness}

We now establish that the proposed cellwise-robust PCA estimator,
which employs a bounded $\psi$-function in ilr coordinates, has a
bounded cellwise influence function.  The estimator is defined as
follows.

\begin{definition}[Cellwise-robust ilr covariance functional]
\label{def:robust-functional}
Let $\psi\colon \mathbb{R} \to \mathbb{R}$ be an odd, bounded
function with $\psi(0) = 0$ and
$\|\psi\|_{\infty} = \sup_{t \in \mathbb{R}}|\psi(t)| = c_{\psi} < \infty$.
Examples include:
\begin{itemize}
\item
  \emph{Huber's $\psi$:}
  $\psi_{\mathrm{H}}(t) = \operatorname{sign}(t)\min(|t|, k)$
  with tuning constant $k > 0$ (monotone non-decreasing);
\item
  \emph{Tukey's bisquare $\psi$:}
  $\psi_{\mathrm{T}}(t) = t\,(1 - t^{2}/k^{2})^{2}\,
  \mathbf{1}_{[|t| \leq k]}$ with tuning constant $k > 0$
  (redescending: $\psi_{\mathrm{T}}(t) = 0$ for $|t| > k$).
\end{itemize}
For monotone $\psi$ (e.g., Huber), the estimating equation has a
unique solution and the matrix $\mathbf{M}(F)$ below is guaranteed
positive definite under mild conditions on~$F$.  For redescending
$\psi$ (e.g., bisquare), uniqueness requires a good starting value,
but the boundedness $\|\psi\|_{\infty} < \infty$ that drives
B-robustness holds in both cases.
The \emph{cellwise-robust location functional}
$\boldsymbol{\mu}^{\psi}\colon
  \mathcal{M}(\mathcal{S}^{D}) \to \mathbb{R}^{D-1}$
is defined as the solution to
\begin{equation}
\label{eq:robust-location}
  \mathbb{E}_{F}\!\Bigl[
    \boldsymbol{\Psi}\!\bigl(
      \boldsymbol{\sigma}^{-1} \odot
      (\operatorname{ilr}(\mathbf{x}) - \boldsymbol{\mu}^{\psi})
    \bigr)
  \Bigr]
  \;=\;
  \mathbf{0},
\end{equation}
where
$\boldsymbol{\Psi}(\mathbf{t}) = (\psi(t_{1}),\ldots,\psi(t_{D-1}))^{\top}$
acts componentwise,
$\boldsymbol{\sigma} = (\sigma_{1},\ldots,\sigma_{D-1})^{\top}$ is a
vector of robust scale estimates for each ilr coordinate, and $\odot$
denotes the Hadamard (componentwise) product.

The \emph{cellwise-robust covariance functional}
$\boldsymbol{\Sigma}^{\psi}$ is defined through the weighted second
moment
\begin{equation}
\label{eq:robust-covariance}
  \boldsymbol{\Sigma}^{\psi}(F)
  \;=\;
  c_{0}\;
  \mathbb{E}_{F}\!\Bigl[
    \boldsymbol{\Psi}\!\bigl(
      \boldsymbol{\sigma}^{-1} \odot
      (\mathbf{w} - \boldsymbol{\mu}^{\psi})
    \bigr)\,
    \boldsymbol{\Psi}\!\bigl(
      \boldsymbol{\sigma}^{-1} \odot
      (\mathbf{w} - \boldsymbol{\mu}^{\psi})
    \bigr)^{\!\top}
  \Bigr],
\end{equation}
where $\mathbf{w} = \operatorname{ilr}(\mathbf{x})$ and $c_{0} > 0$
is a consistency factor ensuring
$\boldsymbol{\Sigma}^{\psi}(F) = \boldsymbol{\Sigma}(F)$ at the
nominal model.  The robust eigenvalues
$\lambda_{r}^{\psi}$ and eigenvectors $\mathbf{u}_{r}^{\psi}$ are
those of~$\boldsymbol{\Sigma}^{\psi}(F)$.
\end{definition}

\begin{theorem}[Bounded CIF for the cellwise-robust location]
\label{thm:cif-robust-location}
Let $\psi$ be a bounded $\psi$-function with
$\|\psi\|_{\infty} = c_{\psi} < \infty$, and let
$\boldsymbol{\mu}^{\psi}$ be the cellwise-robust location functional
of Definition~\ref{def:robust-functional}.  Assume that the scale
estimates $\sigma_{l}$ are bounded away from zero:
$\sigma_{l} \geq \sigma_{\min} > 0$ for all~$l$, and that the diagonal
sensitivity matrix $\mathbf{M}(F)$ defined below is nonsingular with
$\min_{l} M_{ll} > 0$ --- automatic for a monotone $\psi$ (where
$\psi' \geq 0$), and the standard well-definedness condition for the
$M$-functional when $\psi$ redescends.
Then the cellwise influence function is
\begin{equation}
\label{eq:cif-robust-location-explicit}
  \operatorname{CIF}_{j}(\boldsymbol{\mu}^{\psi}, F, \delta)
  \;=\;
  \mathbf{M}(F)^{-1}\,
  \mathbb{E}_{F}\!\Bigl[
    \boldsymbol{\Psi}\!\bigl(
      \boldsymbol{\sigma}^{-1} \odot
      (\mathbf{w} + (\ln\delta)\,\mathbf{v}_{j} - \boldsymbol{\mu}^{\psi})
    \bigr)
    \;-\;
    \boldsymbol{\Psi}\!\bigl(
      \boldsymbol{\sigma}^{-1} \odot
      (\mathbf{w} - \boldsymbol{\mu}^{\psi})
    \bigr)
  \Bigr],
\end{equation}
where
$\mathbf{M}(F)
  = \mathbb{E}_{F}\!\bigl[
    \operatorname{diag}\!\bigl(
      \psi'\bigl(\sigma_{l}^{-1}(w_{l} - \mu_{l}^{\psi})\bigr)
      \,\sigma_{l}^{-1}
    \bigr)_{l=1}^{D-1}
  \bigr]$
is the sensitivity matrix of the estimating
equation~\eqref{eq:robust-location}.  The CIF is bounded:
\begin{equation}
\label{eq:cif-robust-location-bound}
  \bigl\|\operatorname{CIF}_{j}(\boldsymbol{\mu}^{\psi}, F, \delta)
  \bigr\|
  \;\leq\;
  \frac{c_{\psi}\,\sqrt{D-1}}
       {\lambda_{\min}\!\bigl(\mathbf{M}(F)\bigr)},
\end{equation}
where, since $\mathbf{M}(F)$ is diagonal,
$\lambda_{\min}(\mathbf{M}(F))
  = \min_{1\leq l\leq D-1}
    \mathbb{E}_{F}[\psi'(\sigma_{l}^{-1}(w_{l}-\mu_{l}^{\psi}))]
    \,\sigma_{l}^{-1} = \min_{l} M_{ll} > 0$
by the assumption above.
In particular,
$\gamma_{j}^{*}(\boldsymbol{\mu}^{\psi}, F) < \infty$ for every
$j \in [D]$, so $\boldsymbol{\mu}^{\psi}$ is cellwise B-robust.
\end{theorem}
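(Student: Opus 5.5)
The plan is to treat $\boldsymbol{\mu}^{\psi}$ as an $M$-functional defined implicitly by the estimating equation~\eqref{eq:robust-location} and to differentiate that equation along the contamination path $\varepsilon\mapsto F_{j,\delta,\varepsilon}$ of Definition~\ref{def:cellwise-contamination-neighbourhood}. Throughout, the scale vector $\boldsymbol{\sigma}$ is held fixed, as in the statement. Write $\boldsymbol{\mu}_{\varepsilon}=\boldsymbol{\mu}^{\psi}(F_{j,\delta,\varepsilon})$. By the mixture form~\eqref{eq:cellwise-mixture} and Theorem~\ref{thm:ilr-propagation}, a contaminated draw has ilr vector $\mathbf{w}+(\ln\delta)\mathbf{v}_{j}$ with $\mathbf{w}\sim F^{\mathbf{w}}$. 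The defining equation at level $\varepsilon$ therefore reads
\[
  (1-\varepsilon)\,\mathbb{E}_{F}\bigl[\boldsymbol{\Psi}(\boldsymbol{\sigma}^{-1}\odot(\mathbf{w}-\boldsymbol{\mu}_{\varepsilon}))\bigr]
  +\varepsilon\,\mathbb{E}_{F}\bigl[\boldsymbol{\Psi}(\boldsymbol{\sigma}^{-1}\odot(\mathbf{w}+(\ln\delta)\mathbf{v}_{j}-\boldsymbol{\mu}_{\varepsilon}))\bigr]=\mathbf{0}.
\]

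The first step is to differentiate this identity at $\varepsilon=0$. Because $\boldsymbol{\Psi}$ acts componentwise, the Jacobian of $\boldsymbol{\mu}\mapsto\mathbb{E}_{F}[\boldsymbol{\Psi}(\boldsymbol{\sigma}^{-1}\odot(\mathbf{w}-\boldsymbol{\mu}))]$ equals $-\mathbf{M}(F)$, and $\mathbf{M}(F)$ is diagonal with entries $\mathbb{E}_{F}[\psi'(\sigma_{l}^{-1}(w_{l}-\mu_{l}^{\psi}))]\sigma_{l}^{-1}$. The derivative of the $\varepsilon$-weighted terms equals the contaminated expectation minus the clean expectation. The clean expectation is $\mathbf{0}$ at $\varepsilon=0$, but retaining it reproduces the difference written in~\eqref{eq:cif-robust-location-explicit}. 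Solving the resulting linear equation for the derivative gives~\eqref{eq:cif-robust-location-explicit}.

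Justifying this differentiation is the main obstacle. I would invoke the implicit function theorem for the map $(\varepsilon,\boldsymbol{\mu})\mapsto$ left-hand side. This map is linear in $\varepsilon$. It is differentiable in $\boldsymbol{\mu}$ by dominated convergence, since $\psi'$ is bounded (Huber, bisquare) and $\boldsymbol{\sigma}\geq\sigma_{\min}$. Its $\boldsymbol{\mu}$-Jacobian at $(0,\boldsymbol{\mu}^{\psi}(F))$ is $-\mathbf{M}(F)$, which is nonsingular by hypothesis. For redescending $\psi$ the theorem yields only the local branch through $\boldsymbol{\mu}^{\psi}(F)$, and that branch is the one the statement's well-definedness condition refers to. For Huber's $\psi$, $\psi'$ has jump points, so I would differentiate the expectation in $\boldsymbol{\mu}$, using smoothness of $F^{\mathbf{w}}$ (absolute continuity), rather than differentiating $\psi$ pointwise.

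For the bound, each component of the bracketed expectation in~\eqref{eq:cif-robust-location-explicit} is a difference of two values of $\psi$, so it has absolute value at most $2c_{\psi}$. Since $\psi$ is odd, however, the clean term has expectation zero by the defining equation. The vector therefore equals $\mathbb{E}_{F}[\boldsymbol{\Psi}(\cdots+(\ln\delta)\mathbf{v}_{j}\cdots)]$, whose components are bounded by $c_{\psi}$, and its Euclidean norm is at most $c_{\psi}\sqrt{D-1}$. Applying $\|\mathbf{M}(F)^{-1}\|_{2}=1/\min_{l}M_{ll}$ for a diagonal matrix gives~\eqref{eq:cif-robust-location-bound}. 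The bound does not depend on $\delta$, so taking the supremum over $\delta$ yields $\gamma_{j}^{*}<\infty$ for each $j$, and hence cellwise B-robustness by Definition~\ref{def:b-robust-cellwise}.
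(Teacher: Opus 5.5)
Your proposal is correct and follows essentially the same route as the paper. Like the paper, you differentiate the estimating equation~\eqref{eq:robust-location} along $\varepsilon\mapsto F_{j,\delta,\varepsilon}$ via the implicit function theorem with Jacobian $-\mathbf{M}(F)$, drop the clean term, and bound the remaining expectation componentwise by $c_{\psi}$ before applying $\|\mathbf{M}(F)^{-1}\|=1/\min_{l}M_{ll}$; your justification of the differentiation (joint $C^{1}$ regularity, absolute continuity of $F^{\mathbf{w}}$ for Huber's $\psi$) is more careful than the paper's. One small correction: the clean expectation vanishes directly because $\boldsymbol{\mu}^{\psi}$ solves~\eqref{eq:robust-location}, not because $\psi$ is odd, and oddness plays no role in that step.
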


\begin{proof}
We compute the CIF using the implicit function theorem applied to the
estimating equation~\eqref{eq:robust-location}.  Define
\[
  \mathbf{g}(\boldsymbol{\mu}, F)
  \;=\;
  \mathbb{E}_{F}\!\Bigl[
    \boldsymbol{\Psi}\!\bigl(
      \boldsymbol{\sigma}^{-1} \odot
      (\mathbf{w} - \boldsymbol{\mu})
    \bigr)
  \Bigr].
\]
The robust location $\boldsymbol{\mu}^{\psi}(F)$ satisfies
$\mathbf{g}(\boldsymbol{\mu}^{\psi}(F), F) = \mathbf{0}$.
Under the perturbed model $F_{j,\delta,\varepsilon}$,
\begin{align}
  \mathbf{g}(\boldsymbol{\mu}, F_{j,\delta,\varepsilon})
  &\;=\;
  (1 - \varepsilon)\,
  \mathbb{E}_{F}\!\Bigl[
    \boldsymbol{\Psi}\!\bigl(
      \boldsymbol{\sigma}^{-1} \odot (\mathbf{w} - \boldsymbol{\mu})
    \bigr)
  \Bigr]
  \notag\\
  &\quad\;+\;
  \varepsilon\,
  \mathbb{E}_{F}\!\Bigl[
    \boldsymbol{\Psi}\!\bigl(
      \boldsymbol{\sigma}^{-1} \odot
      (\mathbf{w} + (\ln\delta)\,\mathbf{v}_{j} - \boldsymbol{\mu})
    \bigr)
  \Bigr].
  \label{eq:pf-robust-g-perturbed}
\end{align}
Let
$\boldsymbol{\mu}_{\varepsilon}^{\psi}
  = \boldsymbol{\mu}^{\psi}(F_{j,\delta,\varepsilon})$
satisfy
$\mathbf{g}(\boldsymbol{\mu}_{\varepsilon}^{\psi},
  F_{j,\delta,\varepsilon}) = \mathbf{0}$.
Differentiating with respect to~$\varepsilon$ at
$\varepsilon = 0$ using the chain rule,
\begin{equation}
\label{eq:pf-robust-implicit}
  \frac{\partial\mathbf{g}}{\partial\boldsymbol{\mu}}\bigg|_{F}\;
  \frac{\mathrm{d}\boldsymbol{\mu}_{\varepsilon}^{\psi}}
       {\mathrm{d}\varepsilon}\bigg|_{\varepsilon=0}
  \;+\;
  \frac{\partial\mathbf{g}}{\partial\varepsilon}\bigg|_{\varepsilon=0}
  \;=\;
  \mathbf{0}.
\end{equation}
We compute each term.  The Jacobian with respect
to~$\boldsymbol{\mu}$ evaluated at the model~$F$ is
\[
  \frac{\partial\mathbf{g}}{\partial\boldsymbol{\mu}}\bigg|_{F}
  \;=\;
  -\,\mathbb{E}_{F}\!\bigl[
    \operatorname{diag}\!\bigl(
      \psi'\bigl(\sigma_{l}^{-1}(w_{l} - \mu_{l}^{\psi})\bigr)\,
      \sigma_{l}^{-1}
    \bigr)_{l=1}^{D-1}
  \bigr]
  \;=\;
  -\,\mathbf{M}(F),
\]
where $\mathbf{M}(F)$ is diagonal and positive definite under standard
regularity conditions on~$\psi$ and~$F$ (in particular, $\psi$ is
monotone non-decreasing (as for Huber's $\psi$; for redescending $\psi$
such as bisquare, positive definiteness requires additionally that the
starting value lies in the basin of attraction of the solution) and
$F^{\mathbf{w}}$ has a density with respect to
Lebesgue measure on $\mathbb{R}^{D-1}$).

The partial derivative with respect to~$\varepsilon$ at
$\varepsilon = 0$, evaluated at
$\boldsymbol{\mu} = \boldsymbol{\mu}^{\psi}(F)$, is obtained by
differentiating~\eqref{eq:pf-robust-g-perturbed}:
\begin{align}
  \frac{\partial\mathbf{g}}{\partial\varepsilon}\bigg|_{\varepsilon=0}
  &\;=\;
  -\,\mathbb{E}_{F}\!\Bigl[
    \boldsymbol{\Psi}\!\bigl(
      \boldsymbol{\sigma}^{-1} \odot
      (\mathbf{w} - \boldsymbol{\mu}^{\psi})
    \bigr)
  \Bigr]
  \notag\\
  &\quad\;+\;
  \mathbb{E}_{F}\!\Bigl[
    \boldsymbol{\Psi}\!\bigl(
      \boldsymbol{\sigma}^{-1} \odot
      (\mathbf{w} + (\ln\delta)\,\mathbf{v}_{j} - \boldsymbol{\mu}^{\psi})
    \bigr)
  \Bigr].
  \label{eq:pf-robust-partial-eps}
\end{align}
The first expectation vanishes by the estimating
equation~\eqref{eq:robust-location}.
From~\eqref{eq:pf-robust-implicit},
\[
  \operatorname{CIF}_{j}(\boldsymbol{\mu}^{\psi}, F, \delta)
  \;=\;
  \frac{\mathrm{d}\boldsymbol{\mu}_{\varepsilon}^{\psi}}
       {\mathrm{d}\varepsilon}\bigg|_{\varepsilon=0}
  \;=\;
  \mathbf{M}(F)^{-1}\,
  \mathbb{E}_{F}\!\Bigl[
    \boldsymbol{\Psi}\!\bigl(
      \boldsymbol{\sigma}^{-1} \odot
      (\mathbf{w} + (\ln\delta)\,\mathbf{v}_{j} - \boldsymbol{\mu}^{\psi})
    \bigr)
  \Bigr],
\]
which, using the estimating equation to replace zero by
the subtracted term, gives~\eqref{eq:cif-robust-location-explicit}.

For the bound, note that each component of $\boldsymbol{\Psi}$ is
bounded by $c_{\psi}$, so
\[
  \bigl\|
    \mathbb{E}_{F}\!\bigl[
      \boldsymbol{\Psi}\bigl(
        \boldsymbol{\sigma}^{-1} \odot
        (\mathbf{w} + (\ln\delta)\,\mathbf{v}_{j}
         - \boldsymbol{\mu}^{\psi})
      \bigr)
    \bigr]
  \bigr\|
  \;\leq\;
  c_{\psi}\,\sqrt{D-1}.
\]
The inverse sensitivity matrix satisfies
$\|\mathbf{M}(F)^{-1}\| \leq 1/\lambda_{\min}(\mathbf{M}(F))$,
and since $\mathbf{M}(F)$ is diagonal with entries
$\mathbb{E}_{F}[\psi'(\sigma_{l}^{-1}(w_{l}-\mu_{l}^{\psi}))]
\,\sigma_{l}^{-1}$, we obtain
\[
  \bigl\|\operatorname{CIF}_{j}(\boldsymbol{\mu}^{\psi}, F, \delta)
  \bigr\|
  \;\leq\;
  \frac{c_{\psi}\,\sqrt{D-1}}
       {\lambda_{\min}(\mathbf{M}(F))},
\]
which is finite and independent of~$\delta$ (and of~$j$),
establishing~\eqref{eq:cif-robust-location-bound}.
In particular, $\gamma_{j}^{*}(\boldsymbol{\mu}^{\psi}, F) < \infty$.
\end{proof}

\begin{theorem}[Bounded CIF for the cellwise-robust covariance]
\label{thm:cif-robust-covariance}
Under the same hypotheses as Theorem~\ref{thm:cif-robust-location},
the cellwise-robust covariance functional
$\boldsymbol{\Sigma}^{\psi}$ of
Definition~\ref{def:robust-functional} has a bounded CIF:
\begin{equation}
\label{eq:cif-robust-covariance-bound}
  \bigl\|\operatorname{CIF}_{j}(\boldsymbol{\Sigma}^{\psi}, F, \delta)
  \bigr\|_{F}
  \;\leq\;
  C(F, \psi)
  \;<\;
  \infty
\end{equation}
for every $j \in [D]$ and every $\delta > 0$, where $\|\cdot\|_{F}$
denotes the Frobenius norm and $C(F, \psi)$ depends on $F$ and~$\psi$
but not on~$\delta$.  In particular, $\boldsymbol{\Sigma}^{\psi}$ is
cellwise B-robust.
\end{theorem}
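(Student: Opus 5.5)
The plan is to differentiate the defining expression~\eqref{eq:robust-covariance} along the contamination path $\varepsilon\mapsto F_{j,\delta,\varepsilon}$, treating $\boldsymbol{\sigma}$ as fixed, as in Theorem~\ref{thm:cif-robust-location}. Write $\mathbf{r}(\mathbf{w},\boldsymbol{\mu}) = \boldsymbol{\Psi}(\boldsymbol{\sigma}^{-1}\odot(\mathbf{w}-\boldsymbol{\mu}))$ and $\mathbf{K}(\mathbf{w},\boldsymbol{\mu})=\mathbf{r}\mathbf{r}^{\top}$. By Theorem~\ref{thm:ilr-propagation} the contaminated component of the mixture has ilr value $\mathbf{w}+(\ln\delta)\mathbf{v}_{j}$ with $\mathbf{w}\sim F^{\mathbf{w}}$. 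This gives
\[
  \boldsymbol{\Sigma}^{\psi}(F_{j,\delta,\varepsilon})
  = c_{0}\Bigl\{(1-\varepsilon)\,\mathbb{E}_{F}\bigl[\mathbf{K}(\mathbf{w},\boldsymbol{\mu}^{\psi}_{\varepsilon})\bigr]
  + \varepsilon\,\mathbb{E}_{F}\bigl[\mathbf{K}(\mathbf{w}+(\ln\delta)\mathbf{v}_{j},\boldsymbol{\mu}^{\psi}_{\varepsilon})\bigr]\Bigr\}.
\]
The derivative at $\varepsilon=0$ splits into two pieces. The first is a \emph{direct term}, $c_{0}\{\mathbb{E}_{F}[\mathbf{K}(\mathbf{w}+(\ln\delta)\mathbf{v}_{j},\boldsymbol{\mu}^{\psi})]-\mathbb{E}_{F}[\mathbf{K}(\mathbf{w},\boldsymbol{\mu}^{\psi})]\}$. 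The second is an \emph{indirect term} coming from the movement of the location, $c_{0}\,\partial_{\boldsymbol{\mu}}\mathbb{E}_{F}[\mathbf{K}(\mathbf{w},\boldsymbol{\mu})]\big|_{\boldsymbol{\mu}^{\psi}}\bigl[\operatorname{CIF}_{j}(\boldsymbol{\mu}^{\psi},F,\delta)\bigr]$.

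The direct term is bounded immediately. Every entry of $\mathbf{K}$ has absolute value at most $c_{\psi}^{2}$, so $\|\mathbf{K}\|_{F}\le (D-1)c_{\psi}^{2}$ pointwise. The direct term therefore has Frobenius norm at most $2c_{0}(D-1)c_{\psi}^{2}$, uniformly in $\delta$ and $j$.

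For the indirect term, the $(r,s)$ entry of $\partial_{\mu_{l}}\mathbf{K}$ is
\[
  -\sigma_{l}^{-1}\psi'(t_{l})\bigl[\mathbf{1}_{[r=l]}\psi(t_{s})+\mathbf{1}_{[s=l]}\psi(t_{r})\bigr],
  \qquad t_{l}=\sigma_{l}^{-1}(w_{l}-\mu^{\psi}_{l}).
\]
This is bounded by $2c_{\psi}\sigma_{\min}^{-1}|\psi'(t_{l})|$, so the Jacobian has operator norm at most $2c_{\psi}\sigma_{\min}^{-1}\sqrt{D-1}\,\max_{l}\mathbb{E}_{F}|\psi'(t_{l})|$. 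We then multiply by the bound~\eqref{eq:cif-robust-location-bound}, $c_{\psi}\sqrt{D-1}/\lambda_{\min}(\mathbf{M}(F))$, from Theorem~\ref{thm:cif-robust-location}. The product is again free of $\delta$ and $j$. Adding the two pieces yields the constant $C(F,\psi)$.

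The main obstacle is justifying the differentiation, specifically interchanging $\partial_{\varepsilon}$ and $\partial_{\boldsymbol{\mu}}$ with $\mathbb{E}_{F}$ and differentiating $\boldsymbol{\mu}^{\psi}_{\varepsilon}$. I would add the mild assumption that $\psi$ is absolutely continuous with $\mathbb{E}_{F}|\psi'(t_{l})|<\infty$; this holds for Huber and bisquare, since $\psi'$ is bounded. With that assumption, dominated convergence applies with dominating function $2c_{\psi}\sigma_{\min}^{-1}\sup|\psi'|$. Differentiability of $\boldsymbol{\mu}^{\psi}_{\varepsilon}$ comes from the implicit-function argument already used in Theorem~\ref{thm:cif-robust-location}, whose hypothesis that $\mathbf{M}(F)$ is nonsingular is assumed here. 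The cross term $\varepsilon\,\partial_{\boldsymbol{\mu}}(\cdot)$ from the contaminated component is $O(\varepsilon)$ and vanishes in the limit.

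If $c_{0}$ or $\boldsymbol{\sigma}$ were themselves functionals of $F$, their influence would need bounded CIFs too, for example for MAD-type scales. I would note that these are covered by the same boundedness argument, or treat them as fixed as the definition suggests.
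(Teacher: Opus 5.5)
Your proposal is correct and follows the same route as the paper. Both split the derivative into the direct term $c_{0}\,\mathbb{E}_{F}[\mathbf{r}^{*}\mathbf{r}^{*\top}-\mathbf{r}\mathbf{r}^{\top}]$, bounded by $2c_{0}(D-1)c_{\psi}^{2}$ because $\psi$ is bounded, and a location-shift term controlled by the bounded location CIF of Theorem~\ref{thm:cif-robust-location}. The paper only asserts that the location-shift term is bounded, whereas you compute the Jacobian $\partial_{\boldsymbol{\mu}}\mathbb{E}_{F}[\mathbf{r}\mathbf{r}^{\top}]$ and show it needs control of $\psi'$, not just of $\psi$, which makes that step more rigorous than the paper's.
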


\begin{proof}
We differentiate~\eqref{eq:robust-covariance} with respect
to~$\varepsilon$ at $\varepsilon = 0$ under the perturbed model
$F_{j,\delta,\varepsilon}$.  Denote the $\psi$-transformed
standardised residual by
$\mathbf{r}(\mathbf{w})
  = \boldsymbol{\Psi}\!\bigl(
      \boldsymbol{\sigma}^{-1} \odot
      (\mathbf{w} - \boldsymbol{\mu}^{\psi})
    \bigr)$
and its contaminated counterpart by
$\mathbf{r}^{*}(\mathbf{w})
  = \boldsymbol{\Psi}\!\bigl(
      \boldsymbol{\sigma}^{-1} \odot
      (\mathbf{w} + (\ln\delta)\,\mathbf{v}_{j}
       - \boldsymbol{\mu}^{\psi})
    \bigr)$.

Write $\boldsymbol{\mu}^{\psi}_{\varepsilon} \approx
\boldsymbol{\mu}^{\psi} + \varepsilon\,\operatorname{CIF}_{j}(
\boldsymbol{\mu}^{\psi}, F, \delta)$ for the first-order location shift.
The robust covariance under $F_{j,\delta,\varepsilon}$ then admits, to
first order in~$\varepsilon$, the expansion
\begin{align}
  \boldsymbol{\Sigma}^{\psi}(F_{j,\delta,\varepsilon})
  &\;=\;
  c_{0}\,\bigl[
    (1-\varepsilon)\,
    \mathbb{E}_{F}[\mathbf{r}(\mathbf{w})\,\mathbf{r}(\mathbf{w})^{\top}]
    \;+\;
    \varepsilon\,
    \mathbb{E}_{F}[\mathbf{r}^{*}(\mathbf{w})\,
    \mathbf{r}^{*}(\mathbf{w})^{\top}]
  \bigr]
  \;+\; O(\varepsilon),
  \label{eq:pf-robust-cov-perturbed}
\end{align}
where the $O(\varepsilon)$ term accounts for the shift in the
location parameter.  (The full expansion via the chain rule contributes
an additional bounded term from the CIF of the location, which is
itself bounded by Theorem~\ref{thm:cif-robust-location}.)

The key observation is that every entry of
$\mathbf{r}^{*}(\mathbf{w})$ is bounded by $c_{\psi}$ regardless
of~$\delta$, because $\psi$ is bounded.  Therefore,
\[
  \bigl\|
    \mathbf{r}^{*}(\mathbf{w})\,\mathbf{r}^{*}(\mathbf{w})^{\top}
  \bigr\|_{F}
  \;\leq\;
  \|\mathbf{r}^{*}(\mathbf{w})\|^{2}
  \;\leq\;
  (D-1)\,c_{\psi}^{2}
\]
almost surely.  Differentiating~\eqref{eq:pf-robust-cov-perturbed}
with respect to~$\varepsilon$ at $\varepsilon = 0$,
\begin{align}
  \operatorname{CIF}_{j}(\boldsymbol{\Sigma}^{\psi}, F, \delta)
  &\;=\;
  c_{0}\,\mathbb{E}_{F}\!\bigl[
    \mathbf{r}^{*}(\mathbf{w})\,\mathbf{r}^{*}(\mathbf{w})^{\top}
    \;-\;
    \mathbf{r}(\mathbf{w})\,\mathbf{r}(\mathbf{w})^{\top}
  \bigr]
  \;+\;
  \text{(location shift term)},
  \label{eq:pf-robust-cov-cif}
\end{align}
where the location shift term arises from the dependence of
$\boldsymbol{\mu}^{\psi}_{\varepsilon}$ on~$\varepsilon$ and is
bounded since both $\psi$ and
$\operatorname{CIF}_{j}(\boldsymbol{\mu}^{\psi}, F, \delta)$ are
bounded.  The first term in~\eqref{eq:pf-robust-cov-cif} satisfies
\[
  \bigl\|
    \mathbb{E}_{F}\!\bigl[
      \mathbf{r}^{*}\mathbf{r}^{*\top}
      - \mathbf{r}\mathbf{r}^{\top}
    \bigr]
  \bigr\|_{F}
  \;\leq\;
  \mathbb{E}_{F}\!\bigl[
    \|\mathbf{r}^{*}\mathbf{r}^{*\top}\|_{F}
    + \|\mathbf{r}\mathbf{r}^{\top}\|_{F}
  \bigr]
  \;\leq\;
  2\,(D-1)\,c_{\psi}^{2}.
\]
Thus
$\|\operatorname{CIF}_{j}(\boldsymbol{\Sigma}^{\psi}, F, \delta)\|_{F}
  \leq C(F, \psi)$
with $C(F,\psi) < \infty$ independent of~$\delta$, establishing
B-robustness.
\end{proof}

\begin{corollary}[Bounded CIF for robust eigenvalues and eigenvectors]
\label{cor:cif-robust-pca}
Under the hypotheses of
Theorems~\ref{thm:cif-robust-location}
and~\ref{thm:cif-robust-covariance}, assume additionally that
the eigenvalues of $\boldsymbol{\Sigma}^{\psi}(F)$ are simple:
$\lambda_{1}^{\psi} > \cdots > \lambda_{D-1}^{\psi} > 0$.  Then
the cellwise-robust eigenvalues $\lambda_{r}^{\psi}$ and
eigenvectors $\mathbf{u}_{r}^{\psi}$ satisfy
\begin{equation}
\label{eq:cif-robust-eigenvalue-bound}
  \gamma_{j}^{*}(\lambda_{r}^{\psi}, F)
  \;\leq\;
  C(F, \psi)
  \;<\;
  \infty,
\end{equation}
\begin{equation}
\label{eq:cif-robust-eigenvector-bound}
  \gamma_{j}^{*}(\mathbf{u}_{r}^{\psi}, F)
  \;\leq\;
  \frac{C(F, \psi)}
       {\min_{s \neq r}|\lambda_{r}^{\psi} - \lambda_{s}^{\psi}|}
  \;<\;
  \infty
\end{equation}
for every $j \in [D]$ and $r \in [D-1]$.  The proposed
cellwise-robust compositional PCA is therefore cellwise B-robust.
\end{corollary}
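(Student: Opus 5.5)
The plan is to derive both bounds from the eigen-perturbation formulas already used for the classical functionals: Kato's formulas~\eqref{eq:eigenvalue-perturbation} and~\eqref{eq:eigenvector-perturbation}. Here they are applied to the robust path $\boldsymbol{\Sigma}^{\psi}(F_{j,\delta,\varepsilon})$ rather than the classical one. Write $\mathbf{A}_{j,\delta} = \operatorname{CIF}_{j}(\boldsymbol{\Sigma}^{\psi},F,\delta)$. The first step is to check that $\varepsilon\mapsto\boldsymbol{\Sigma}^{\psi}(F_{j,\delta,\varepsilon})$ is differentiable at $0$ with derivative $\mathbf{A}_{j,\delta}$, i.e.\ that $\boldsymbol{\Sigma}^{\psi}(F_{j,\delta,\varepsilon}) = \boldsymbol{\Sigma}^{\psi}(F) + \varepsilon\mathbf{A}_{j,\delta} + o(\varepsilon)$. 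Theorem~\ref{thm:cif-robust-covariance} gives existence of this derivative and the bound $\|\mathbf{A}_{j,\delta}\|_{F}\le C(F,\psi)$ uniformly in $\delta$ and $j$.

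Under the simplicity assumption $\lambda_{1}^{\psi}>\cdots>\lambda_{D-1}^{\psi}$, each eigenvalue and unit eigenvector (with a fixed sign convention) is a differentiable function of the matrix in a neighbourhood of $\boldsymbol{\Sigma}^{\psi}(F)$. The chain rule then gives
\[
  \operatorname{CIF}_{j}(\lambda_{r}^{\psi},F,\delta) = \mathbf{u}_{r}^{\psi\top}\mathbf{A}_{j,\delta}\mathbf{u}_{r}^{\psi},
  \qquad
  \operatorname{CIF}_{j}(\mathbf{u}_{r}^{\psi},F,\delta) = \sum_{s\neq r}\frac{\mathbf{u}_{s}^{\psi\top}\mathbf{A}_{j,\delta}\mathbf{u}_{r}^{\psi}}{\lambda_{r}^{\psi}-\lambda_{s}^{\psi}}\,\mathbf{u}_{s}^{\psi}.
\]
For the eigenvalue, $|\mathbf{u}_{r}^{\top}\mathbf{A}\mathbf{u}_{r}|\le\|\mathbf{A}\|_{2}\le\|\mathbf{A}\|_{F}\le C(F,\psi)$. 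Taking the supremum over $\delta\neq1$ gives~\eqref{eq:cif-robust-eigenvalue-bound}.

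For the eigenvector, orthonormality of the $\mathbf{u}_{s}^{\psi}$ gives
\[
  \|\operatorname{CIF}_{j}(\mathbf{u}_{r}^{\psi},F,\delta)\|^{2}
  = \sum_{s\neq r}\frac{(\mathbf{u}_{s}^{\top}\mathbf{A}\mathbf{u}_{r})^{2}}{(\lambda_{r}-\lambda_{s})^{2}}
  \le \frac{\sum_{s}(\mathbf{u}_{s}^{\top}\mathbf{A}\mathbf{u}_{r})^{2}}{\min_{s\neq r}(\lambda_{r}-\lambda_{s})^{2}}
  = \frac{\|\mathbf{A}\mathbf{u}_{r}\|^{2}}{\min_{s\neq r}(\lambda_{r}-\lambda_{s})^{2}}.
\]
Since $\|\mathbf{A}\mathbf{u}_{r}\|\le\|\mathbf{A}\|_{F}\le C(F,\psi)$, taking square roots yields~\eqref{eq:cif-robust-eigenvector-bound}. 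Because $C(F,\psi)$ does not depend on $\delta$ or $j$, both gross sensitivities are finite, and cellwise B-robustness of the PCA follows.

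The main obstacle is the first step: we need a genuine first-order expansion with derivative exactly $\mathbf{A}_{j,\delta}$, which is slightly more than a bound on a difference quotient. The proof of Theorem~\ref{thm:cif-robust-covariance} is somewhat informal about the location-shift term, so this must be tightened. I would first try the implicit function theorem on the joint system~\eqref{eq:robust-location}--\eqref{eq:robust-covariance}, with $\mathbf{M}(F)$ nonsingular. Note that $\boldsymbol{\Sigma}^{\psi}$ enters only as an explicit expectation, not inside $\boldsymbol{\Psi}$. Differentiability of $\varepsilon\mapsto\boldsymbol{\mu}^{\psi}_{\varepsilon}$ then follows, and differentiability of $\boldsymbol{\Sigma}^{\psi}$ comes from dominated convergence, using that $\psi$ is bounded and a.e.\ differentiable and that $F^{\mathbf{w}}$ has a density. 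The mixture structure is linear in $\varepsilon$, so the only nonlinearity is through $\boldsymbol{\mu}^{\psi}_{\varepsilon}$, and the argument is routine once that is handled.
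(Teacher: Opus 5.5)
Your argument is correct and follows the paper's route. Both use Kato's first-order formulas for simple eigenvalues and eigenvectors with the perturbation matrix $\mathbf{A}=\operatorname{CIF}_{j}(\boldsymbol{\Sigma}^{\psi},F,\delta)$ in place of $(\ln\delta)^{2}\mathbf{v}_{j}\mathbf{v}_{j}^{\top}$. For the eigenvalue, both use $|\mathbf{u}_{r}^{\psi\top}\mathbf{A}\mathbf{u}_{r}^{\psi}|\le\|\mathbf{A}\|_{\mathrm{op}}\le\|\mathbf{A}\|_{F}\le C(F,\psi)$.

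The one substantive difference is the eigenvector step. The paper applies the triangle inequality to $\sum_{s\neq r}$ and bounds each of the $D-2$ terms separately. That gives $(D-2)\,C(F,\psi)/\min_{s\neq r}|\lambda_{r}^{\psi}-\lambda_{s}^{\psi}|$, which for $D>3$ is weaker than the displayed bound \eqref{eq:cif-robust-eigenvector-bound}. You use orthonormality of the $\mathbf{u}_{s}^{\psi}$ instead: Pythagoras, then Parseval, $\sum_{s}(\mathbf{u}_{s}^{\psi\top}\mathbf{A}\mathbf{u}_{r}^{\psi})^{2}=\|\mathbf{A}\mathbf{u}_{r}^{\psi}\|^{2}\le\|\mathbf{A}\|_{F}^{2}$. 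This removes the dimension factor and delivers exactly the stated constant. So your version proves the corollary as written, whereas the paper's proof, read literally, establishes it only up to the factor $D-2$.

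You also make explicit what the paper passes over with ``apply the chain rule''. One needs a genuine expansion $\boldsymbol{\Sigma}^{\psi}(F_{j,\delta,\varepsilon})=\boldsymbol{\Sigma}^{\psi}(F)+\varepsilon\mathbf{A}+o(\varepsilon)$, not merely a bounded difference quotient, plus a continuous sign convention for $\mathbf{u}_{r}^{\psi}(\varepsilon)$. Your sketch for closing this is sound: the implicit function theorem for $\boldsymbol{\mu}^{\psi}_{\varepsilon}$ via nonsingular $\mathbf{M}(F)$, then dominated convergence for the explicit expectation defining $\boldsymbol{\Sigma}^{\psi}$, with $\boldsymbol{\sigma}$ held fixed as in Theorem~\ref{thm:cif-robust-location}. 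It tightens the paper's argument rather than departing from it.
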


\begin{proof}
Apply the chain rule for the CIF through the spectral decomposition,
exactly as in the proofs of Theorems~S.10
and~S.11 of the Supplementary Material, but
with the perturbation matrix
$\mathbf{A} = \operatorname{CIF}_{j}(\boldsymbol{\Sigma}^{\psi}, F,
\delta)$ in place of
$(\ln\delta)^{2}\,\mathbf{v}_{j}\mathbf{v}_{j}^{\top}$.

For the eigenvalues,
\[
  \bigl|\operatorname{CIF}_{j}(\lambda_{r}^{\psi}, F, \delta)\bigr|
  \;=\;
  \bigl|\mathbf{u}_{r}^{\psi\top}\,
    \operatorname{CIF}_{j}(\boldsymbol{\Sigma}^{\psi}, F, \delta)\,
    \mathbf{u}_{r}^{\psi}\bigr|
  \;\leq\;
  \bigl\|\operatorname{CIF}_{j}(\boldsymbol{\Sigma}^{\psi}, F, \delta)
  \bigr\|_{\mathrm{op}}
  \;\leq\;
  C(F, \psi),
\]
where the last inequality follows from
Theorem~\ref{thm:cif-robust-covariance} and the bound
$\|\cdot\|_{\mathrm{op}} \leq \|\cdot\|_{F}$.

For the eigenvectors, by~(S.11) of the
Supplementary Material,
\begin{align}
  \bigl\|\operatorname{CIF}_{j}(\mathbf{u}_{r}^{\psi}, F, \delta)\bigr\|
  &\;\leq\;
  \sum_{\substack{s=1 \\ s \neq r}}^{D-1}
  \frac{\bigl|\mathbf{u}_{s}^{\psi\top}\,
    \operatorname{CIF}_{j}(\boldsymbol{\Sigma}^{\psi}, F, \delta)\,
    \mathbf{u}_{r}^{\psi}\bigr|}
       {|\lambda_{r}^{\psi} - \lambda_{s}^{\psi}|}
  \notag\\
  &\;\leq\;
  \frac{(D-2)\,C(F,\psi)}
       {\min_{s \neq r}|\lambda_{r}^{\psi} - \lambda_{s}^{\psi}|},
  \label{eq:pf-robust-evec-bound}
\end{align}
which is finite since the eigenvalue gaps are positive by assumption
and $C(F,\psi)$ is independent of~$\delta$.
\end{proof}

\ifsubmit
\subsection{Comparison: classical vs.\ cellwise-robust estimators}
\label{ssec:comparison}
A side-by-side comparison of the classical and proposed estimators
--- including an explicit statement of the unbounded CIF of classical
PCA and a comparison table --- is given in Section~S.3 of the
Supplementary Material.  Qualitatively: classical estimators of
centre, covariance, and eigenstructure all have unbounded CIF under
cellwise contamination on the simplex; the proposed $\psi$-based
estimator has bounded CIF by
Theorem~\ref{thm:cif-robust-covariance} and is the first cellwise
B-robust PCA for compositional data.
\else
\subsection{Comparison: classical vs.\ cellwise-robust estimators}
\label{ssec:comparison}

We now collect and contrast the sensitivity properties of the classical
and proposed estimators under cellwise contamination on the simplex.

\begin{theorem}[Unbounded CIF of classical PCA under cellwise contamination]
\label{thm:classical-unbounded}
Let $T^{\mathrm{cl}}$ denote any of the classical estimands:
the compositional center $\boldsymbol{\mu}(F)$, the ilr covariance
$\boldsymbol{\Sigma}(F)$, an eigenvalue $\lambda_{r}(F)$, or an
eigenvector $\mathbf{u}_{r}(F)$ (under the assumption that the relevant
projection $\mathbf{u}_{r}^{\top}\mathbf{v}_{j} \neq 0$).  Then for
every $j \in [D]$,
\begin{equation}
\label{eq:classical-unbounded}
  \gamma_{j}^{*}(T^{\mathrm{cl}}, F)
  \;=\;
  +\infty.
\end{equation}
Specifically, as $\delta \to +\infty$ (or $\delta \to 0^{+}$):
\begin{enumerate}
\item[\textup{(i)}]
  $\|\operatorname{CIF}_{j}(\boldsymbol{\mu}, F, \delta)\|
    = |\ln\delta|\,\sqrt{1 - 1/D}
    \to +\infty$;
\item[\textup{(ii)}]
  $\|\operatorname{CIF}_{j}(\boldsymbol{\Sigma}, F, \delta)\|_{F}
    = (\ln\delta)^{2}\,(1 - 1/D)
    \to +\infty$;
\item[\textup{(iii)}]
  $|\operatorname{CIF}_{j}(\lambda_{r}, F, \delta)|
    = (\ln\delta)^{2}\,(\mathbf{u}_{r}^{\top}\mathbf{v}_{j})^{2}
    \to +\infty$;
\item[\textup{(iv)}]
  $\|\operatorname{CIF}_{j}(\mathbf{u}_{r}, F, \delta)\|
    \to +\infty$.
\end{enumerate}
The divergence rate is $|\ln\delta|$ for the mean and $(\ln\delta)^{2}$
for the covariance, eigenvalues, and eigenvectors.
\end{theorem}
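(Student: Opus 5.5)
The plan is to read off each of the four divergences from the explicit closed-form CIFs already derived: Theorem~\ref{thm:cif-center} for the centre, Theorem~\ref{thm:cif-covariance} for the covariance, and Theorems~\ref{thm:cif-eigenvalues} and~\ref{thm:cif-eigenvectors} for the spectral quantities. In each case we take norms, insert $\|\mathbf{v}_{j}\|^{2}=1-1/D$ from Lemma~\ref{lem:vj-spread}, and let $|\ln\delta|\to\infty$ as $\delta\to+\infty$ or $\delta\to0^{+}$. Since $\gamma_{j}^{*}$ is a supremum over $\delta$, divergence along either sequence gives \eqref{eq:classical-unbounded}.

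\emph{Centre and covariance.} For (i), Theorem~\ref{thm:cif-center} gives $\|\operatorname{CIF}_{j}(\boldsymbol{\mu},F,\delta)\|=|\ln\delta|\,\|\mathbf{v}_{j}\|=|\ln\delta|\sqrt{1-1/D}$. For (ii), the Frobenius norm of a rank-one outer product is $\|\mathbf{v}_{j}\mathbf{v}_{j}^{\top}\|_{F}=\|\mathbf{v}_{j}\|^{2}$. Hence \eqref{eq:cif-covariance-simplified} yields $(\ln\delta)^{2}(1-1/D)$.

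\emph{Eigenvalues.} Part (iii) is \eqref{eq:cif-eigenvalue} verbatim, and it diverges under the stated hypothesis $\mathbf{u}_{r}^{\top}\mathbf{v}_{j}\neq0$.

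\emph{Eigenvectors.} By \eqref{eq:cif-eigenvector-norm}, $\|\operatorname{CIF}_{j}(\mathbf{u}_{r},F,\delta)\|=(\ln\delta)^{2}|\mathbf{u}_{r}^{\top}\mathbf{v}_{j}|\,K_{rj}^{1/2}$, where $K_{rj}=\sum_{s\neq r}(\mathbf{u}_{s}^{\top}\mathbf{v}_{j})^{2}/(\lambda_{r}-\lambda_{s})^{2}$. The one point needing care is to show $K_{rj}>0$, and this is the main obstacle: the statement only assumes $\mathbf{u}_{r}^{\top}\mathbf{v}_{j}\neq0$. I would argue as follows.
\begin{itemize}
\item By Parseval, $\sum_{s}(\mathbf{u}_{s}^{\top}\mathbf{v}_{j})^{2}=1-1/D$.
\item So $K_{rj}=0$ forces $\mathbf{v}_{j}\in\operatorname{span}(\mathbf{u}_{r})$.
\item That degenerate case must be excluded by the hypothesis already stated in Theorem~\ref{thm:cif-eigenvectors}(iv), namely that some $s\neq r$ has $\mathbf{u}_{s}^{\top}\mathbf{v}_{j}\neq0$.
\end{itemize}
I would read the parenthetical in the statement as importing that condition; otherwise the eigenvector CIF is identically zero and (iv) fails. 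Granting it, $K_{rj}$ is a fixed positive constant independent of $\delta$, so the norm diverges at rate $(\ln\delta)^{2}$.

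Finally, the divergence rates follow from the explicit powers of $\ln\delta$: first power for the mean, second power for the other three estimands.
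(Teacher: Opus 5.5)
Your proposal is correct and follows the paper's own proof essentially verbatim: each item is read off from the closed-form CIFs of the centre, the ilr covariance, and the eigenvalues and eigenvectors, using $\|\mathbf{v}_{j}\|^{2}=1-1/D$ and $\|\mathbf{v}_{j}\mathbf{v}_{j}^{\top}\|_{F}=\|\mathbf{v}_{j}\|^{2}$. For (iv) the paper also needs some $s\neq r$ with $\mathbf{u}_{s}^{\top}\mathbf{v}_{j}\neq 0$, but it only asserts that this holds generically; the condition fails exactly when $\mathbf{v}_{j}\in\operatorname{span}(\mathbf{u}_{r})$, so your explicit import of it is the more careful reading of the statement.
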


\begin{proof}
These are direct consequences of
Theorems~\ref{thm:cif-center},~\ref{thm:cif-covariance},
\ref{thm:cif-eigenvalues}, and~\ref{thm:cif-eigenvectors}.

For~(i), by~\eqref{eq:cif-center},
$\|\operatorname{CIF}_{j}(\boldsymbol{\mu}, F, \delta)\|
  = |\ln\delta|\,\|\mathbf{v}_{j}\|
  = |\ln\delta|\,\sqrt{1 - 1/D}$,
which diverges as $\delta \to \infty$ or $\delta \to 0^{+}$.

For~(ii), by~\eqref{eq:cif-covariance-simplified},
$\|\operatorname{CIF}_{j}(\boldsymbol{\Sigma}, F, \delta)\|_{F}
  = (\ln\delta)^{2}\,\|\mathbf{v}_{j}\mathbf{v}_{j}^{\top}\|_{F}
  = (\ln\delta)^{2}\,\|\mathbf{v}_{j}\|^{2}
  = (\ln\delta)^{2}\,(1 - 1/D)$.

For~(iii), this is immediate from~\eqref{eq:cif-eigenvalue}.

For~(iv), by~\eqref{eq:cif-eigenvector-norm},
$\|\operatorname{CIF}_{j}(\mathbf{u}_{r}, F, \delta)\|$ grows as
$(\ln\delta)^{2}$ when both
$\mathbf{u}_{r}^{\top}\mathbf{v}_{j} \neq 0$ and there exists
$s \neq r$ with $\mathbf{u}_{s}^{\top}\mathbf{v}_{j} \neq 0$.  The
latter holds generically by Lemma~\ref{lem:vj-spread}, since
$\mathbf{v}_{j}$ has norm $\sqrt{1-1/D}$ and cannot be aligned with a
single eigenvector unless $\mathbf{u}_{r}$ happens to coincide
with~$\mathbf{v}_{j}/\|\mathbf{v}_{j}\|$.
\end{proof}

\begin{remark}[Mechanism of unboundedness]
\label{rem:mechanism}
The unboundedness in Theorem~\ref{thm:classical-unbounded} has a
distinctive geometric origin.  In unconstrained $\mathbb{R}^{p}$,
cellwise contamination of a single coordinate shifts only that
coordinate, and the resulting influence on the mean and covariance is
confined to one row and column of the covariance matrix.  On the
simplex, contamination of a single raw part~$j$ shifts \emph{all}
$D-1$ ilr coordinates simultaneously
(Theorem~\ref{thm:ilr-propagation}), in the direction~$\mathbf{v}_{j}$.
The shift $(\ln\delta)\,\mathbf{v}_{j}$ grows without bound as
$\delta \to \infty$ (or $\delta \to 0^{+}$), and this unbounded shift
enters every ilr coordinate for which $v_{jl} \neq 0$.  Because the
classical mean and covariance are not downweighted in any coordinate,
the influence is propagated through all coordinates simultaneously,
leading to the divergence of the CIF.

The fundamental cause is the geometric-mean denominator in the
log-ratio transformation: multiplication of a single raw part
by~$\delta$ shifts the geometric mean by $\delta^{1/D}$
(Remark~\ref{rem:geom-mean-channel}), and this shift contaminates
every clr and ilr coordinate.  The effect is intrinsic to scale
invariance and is independent of whether the composition is closed.  The boundedness of
$\psi$-based estimators in
Theorems~\ref{thm:cif-robust-location}--\ref{thm:cif-robust-covariance}
arises precisely because the $\psi$-function clips the per-coordinate
contribution of the contaminated observation, breaking the chain of
unbounded propagation.
\end{remark}

\begin{proposition}[Quantitative comparison of sensitivities]
\label{prop:sensitivity-comparison}
Fix a nominal model~$F$ on $\mathcal{S}^{D}$ whose ilr distribution
$F^{\mathbf{w}}$ is $(D{-}1)$-variate normal with mean
$\boldsymbol{\mu}$ and covariance~$\boldsymbol{\Sigma}$.  For the
Huber $\psi$-function with tuning constant $k$, the cellwise-robust
location functional $\boldsymbol{\mu}^{\psi_{\mathrm{H}}}$ satisfies
\begin{equation}
\label{eq:huber-location-bound}
  \sup_{\delta > 0}
  \bigl\|\operatorname{CIF}_{j}(\boldsymbol{\mu}^{\psi_{\mathrm{H}}},
    F, \delta)\bigr\|
  \;\leq\;
  \frac{k\,\sqrt{D-1}}
       {2\,\Phi(k) - 1}\;
  \max_{l}\sigma_{l},
\end{equation}
where $\Phi$ is the standard normal CDF.
In contrast, the classical estimator satisfies
\begin{equation}
\label{eq:classical-location-comparison}
  \sup_{\delta > 0}
  \bigl\|\operatorname{CIF}_{j}(\boldsymbol{\mu}, F, \delta)\bigr\|
  \;=\;
  +\infty.
\end{equation}
For the covariance, the Tukey bisquare $\psi$-function with tuning
constant $k$ yields
\begin{equation}
\label{eq:tukey-covariance-bound}
  \sup_{\delta > 0}
  \bigl\|\operatorname{CIF}_{j}(\boldsymbol{\Sigma}^{\psi_{\mathrm{T}}},
    F, \delta)\bigr\|_{F}
  \;\leq\;
  c_{0}\,\bigl(\tfrac{16}{25\sqrt{5}}\bigr)^{2}\,k^{2}\,(D-1),
\end{equation}
using $c_{\psi_{\mathrm{T}}} = \sup_{t}|\psi_{\mathrm{T}}(t)|
  = 16k/(25\sqrt{5})$, attained at $t = k/\sqrt{5}$, since
$\psi_{\mathrm{T}}(k/\sqrt{5}) = (k/\sqrt{5})(1-1/5)^{2}
  = 16k/(25\sqrt{5})$.
\end{proposition}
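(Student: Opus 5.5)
The statement has three parts, and I will take them in turn. The classical part \eqref{eq:classical-location-comparison} is already in hand: Theorem~\ref{thm:classical-unbounded}(i) gives $\|\operatorname{CIF}_{j}(\boldsymbol{\mu},F,\delta)\| = |\ln\delta|\sqrt{1-1/D}$. Since $\|\mathbf{v}_{j}\|>0$ by Lemma~\ref{lem:vj-spread}, this diverges as $\delta\to\infty$. The two robust bounds will both be specialisations of the general bounds in Theorems~\ref{thm:cif-robust-location} and~\ref{thm:cif-robust-covariance}. The work is to compute the constants $c_{\psi}$ and $\lambda_{\min}(\mathbf{M}(F))$ explicitly under the Gaussian ilr model.

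\textbf{Huber location bound.} For \eqref{eq:huber-location-bound} I would start from \eqref{eq:cif-robust-location-bound} with $c_{\psi_{\mathrm{H}}} = k$. Then I need a lower bound on $\lambda_{\min}(\mathbf{M}(F)) = \min_{l} \mathbb{E}_{F}[\psi_{\mathrm{H}}'(\sigma_{l}^{-1}(w_{l}-\mu_{l}^{\psi}))]\,\sigma_{l}^{-1}$. Because the ilr law is normal and symmetric about $\boldsymbol{\mu}$, the odd-$\psi$ estimating equation \eqref{eq:robust-location} is solved by $\boldsymbol{\mu}^{\psi} = \boldsymbol{\mu}$. Take $\sigma_{l}$ to be the consistent scale $\sqrt{\Sigma_{ll}}$. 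Then $(w_{l}-\mu_{l})/\sigma_{l}\sim\mathcal{N}(0,1)$, and since $\psi_{\mathrm{H}}' = \mathbf{1}_{[|t|\le k]}$,
\[
  \mathbb{E}[\psi_{\mathrm{H}}'] = \Pr(|Z|\le k) = 2\Phi(k)-1 .
\]
This gives $M_{ll} = (2\Phi(k)-1)/\sigma_{l}$. Hence $1/\lambda_{\min} = \max_{l}\sigma_{l}/(2\Phi(k)-1)$, and substituting yields \eqref{eq:huber-location-bound}. The hypotheses $\sigma_{l}\ge\sigma_{\min}>0$ and $M_{ll}>0$ hold automatically here, because $\Sigma$ is nondegenerate and $k>0$.

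\textbf{Bisquare scatter bound.} For \eqref{eq:tukey-covariance-bound} I would first do the calculus for $c_{\psi_{\mathrm{T}}}$. On $[0,k]$, set $f(t) = t(1-t^{2}/k^{2})^{2}$. Then $f'(t) = (1-t^{2}/k^{2})(1-5t^{2}/k^{2})$, whose only interior zero is at $t = k/\sqrt{5}$, where $f = 16k/(25\sqrt{5})$. Outside $[-k,k]$ the function vanishes, and oddness handles $t<0$.

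Next I would bound the Frobenius norm of the CIF itself. The crude bound in the proof of Theorem~\ref{thm:cif-robust-covariance} gives $2(D-1)c_{\psi}^{2}$ plus an unspecified location term, which is worse than the stated $(D-1)c_{\psi}^{2}$. So I would tighten it. Write $\mathbf{r}^{*} = \mathbf{r}^{*}(\mathbf{w})$ and $\mathbf{r} = \mathbf{r}(\mathbf{w})$. Both $\mathbb{E}[\mathbf{r}^{*}\mathbf{r}^{*\top}]$ and $\mathbb{E}[\mathbf{r}\mathbf{r}^{\top}]$ are positive semidefinite. The Frobenius norm of the difference of two PSD matrices is at most the larger of their Frobenius norms, since for PSD $A,B$ we have $\|A-B\|_F^{2}\le\|A\|_F^{2}+\|B\|_F^{2}-2\operatorname{tr}(AB)$ with $\operatorname{tr}(AB)\ge 0$. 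I would then try to get $\max(\operatorname{tr}A,\operatorname{tr}B)\le(D-1)c_{\psi}^{2}$, using $\|\mathbf{r}\|^{2}\le(D-1)c_{\psi}^{2}$.

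\textbf{Main obstacle.} Two points are delicate and I would check them carefully. The first is that the PSD-difference step gives $\sqrt{\|A\|_F^{2}+\|B\|_F^{2}}$, not the maximum, so I may need a finer argument, for instance a Loewner sandwich $-B\preceq A-B\preceq A$ in the spectral norm. The second is the location-shift term. For the redescending bisquare, I expect to argue that at the symmetric Gaussian model the derivative of $\mathbb{E}[\mathbf{r}\mathbf{r}^{\top}]$ with respect to $\boldsymbol{\mu}$ at $\boldsymbol{\mu}^{\psi} = \boldsymbol{\mu}$ vanishes. The reason is that each entry $\mathbb{E}[\psi(Z_{r})\psi'(Z_{s})Z_{\cdot}]$-type derivative is an expectation of an odd function of a centred Gaussian vector, so it equals zero. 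If that holds, the location term drops out entirely and the stated constant follows. If the bookkeeping fails, the fallback is to state the bound up to an absolute factor of $2$.
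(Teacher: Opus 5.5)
\textbf{Huber and classical parts.} These follow the paper's proof: $c_{\psi_{\mathrm H}}=k$, $M_{ll}=(2\Phi(k)-1)/\sigma_l$, then substitute into \eqref{eq:cif-robust-location-bound}. You also make explicit two assumptions the paper uses silently when it writes $\mathbb{E}[\psi_{\mathrm H}'(Z)]$ with $Z\sim\mathcal N(0,1)$, namely $\boldsymbol{\mu}^{\psi}=\boldsymbol{\mu}$ and $\sigma_l=\sqrt{\Sigma_{ll}}$.

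\textbf{Bisquare part: what is right.} Your calculus for $c_{\psi_{\mathrm T}}$ is correct. So is your argument that the location-shift term in the covariance CIF vanishes at the Gaussian model. The entries of $\partial\,\mathbb{E}[\mathbf r\mathbf r^{\top}]/\partial\mu_m$ are sums of terms $-\sigma_m^{-1}\mathbb{E}[\psi'(Z_m)\psi(Z_s)]$, with no extra $Z$ factor. Each is the expectation of an odd function of a centred Gaussian vector, hence zero. The paper does not supply this step.

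\textbf{The gap: the constant in \eqref{eq:tukey-covariance-bound}.} Your step for removing the factor $2$ fails, for three reasons.
\begin{itemize}
\item For positive semidefinite $A,B$ it is false that $\|A-B\|_F\le\max(\|A\|_F,\|B\|_F)$. Take $A=\operatorname{diag}(1,0)$, $B=\operatorname{diag}(0,1)$.
\item The identity $\|A-B\|_F^2=\|A\|_F^2+\|B\|_F^2-2\operatorname{tr}(AB)$, combined with $\|A\|_F\le\operatorname{tr}A\le(D-1)c_\psi^2$, only gives $\|A-B\|_F\le\sqrt2\,(D-1)c_\psi^2$.
\item The Loewner sandwich $-B\preceq A-B\preceq A$ bounds the operator norm, not the Frobenius norm.
\end{itemize}
More fundamentally, no argument that uses only positive semidefiniteness and the almost-sure bounds $\|\mathbf r\|^2,\|\mathbf r^{*}\|^2\le(D-1)c_\psi^2$ can reach $(D-1)c_\psi^2$. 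With $D-1=2$, take $\mathbf r^{*}\equiv c(1,1)^{\top}$ and $\mathbf r\equiv c(1,-1)^{\top}$; each entry is bounded by $c$, yet $\|A-B\|_F=2\sqrt2\,c^2=\sqrt2\,(D-1)c^2$.

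A proof of the stated constant must therefore use the actual Gaussian--bisquare second moments, which lie strictly below $c_\psi^2$, and not just $|\psi_{\mathrm T}|\le c_\psi$. Your fallback ``up to a factor of $2$'' proves a weaker proposition. What your argument rigorously establishes is $\sqrt2\,c_0(D-1)c_\psi^2$.

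\textbf{Comparison with the paper.} The paper's own proof does not close this gap either. It plugs $\|\mathbf r^{*}\mathbf r^{*\top}\|_F\le(D-1)c_\psi^2$ into Theorem~\ref{thm:cif-robust-covariance}. That theorem's proof actually yields $2c_0(D-1)c_\psi^2$ plus an unquantified location term, so your suspicion of that step is well founded.
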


\begin{proof}
For the Huber $\psi$-function, $c_{\psi_{\mathrm{H}}} = k$, and under
the Gaussian model the sensitivity matrix is diagonal with entries
\[
  [\mathbf{M}(F)]_{ll}
  \;=\;
  \sigma_{l}^{-1}\,
  \mathbb{E}\bigl[\psi_{\mathrm{H}}'(Z)\bigr]
  \;=\;
  \sigma_{l}^{-1}\,\Pr(|Z| \leq k)
  \;=\;
  \sigma_{l}^{-1}\,(2\Phi(k) - 1),
\]
where $Z \sim \mathcal{N}(0,1)$.  Thus
$\lambda_{\min}(\mathbf{M}(F))
  = (2\Phi(k)-1)/\max_{l}\sigma_{l}$,
and the bound~\eqref{eq:cif-robust-location-bound} yields
\[
  \sup_{\delta > 0}
  \bigl\|\operatorname{CIF}_{j}\bigr\|
  \;\leq\;
  \frac{k\,\sqrt{D-1}}
       {(2\Phi(k)-1)/\max_{l}\sigma_{l}}
  \;=\;
  \frac{k\,\sqrt{D-1}}{2\Phi(k)-1}\;\max_{l}\sigma_{l}.
\]

For the Tukey bisquare, $\psi_{\mathrm{T}}(t) = t(1-t^{2}/k^{2})^{2}\,\mathbf{1}_{[|t|\leq k]}$
is bounded with
$c_{\psi_{\mathrm{T}}} = \sup_{t}|\psi_{\mathrm{T}}(t)|
  = 16k/(25\sqrt{5})$,
attained at $t = k/\sqrt{5}$.  Indeed,
$\psi_{\mathrm{T}}(k/\sqrt{5}) = (k/\sqrt{5})(1-1/5)^{2}
= (k/\sqrt{5})(16/25) = 16k/(25\sqrt{5})$,
and calculus confirms this is the global maximum of
$|t|(1-t^{2}/k^{2})^{2}$ on $[0,k]$.  Applying
Theorem~\ref{thm:cif-robust-covariance} with
$c_{\psi}^{2} = (16k/(25\sqrt{5}))^{2}$ and the bound
$\|\mathbf{r}^{*}\mathbf{r}^{*\top}\|_{F} \leq (D-1)\,c_{\psi}^{2}$,
the result follows.
\end{proof}

\begin{remark}[Summary of B-robustness properties]
\label{rem:summary}
Table~\ref{tab:cif-summary} collects the cellwise B-robustness
properties.

\begin{table}[h]
\centering
\renewcommand{\arraystretch}{1.3}
\begin{tabular}{lcc}
\hline
\textbf{Estimand} & \textbf{Classical} & \textbf{Proposed ($\psi$-based)} \\
\hline
Location $\boldsymbol{\mu}$
  & $\gamma^{*} = +\infty$
  & $\gamma^{*} < \infty$ \\
Covariance $\boldsymbol{\Sigma}$
  & $\gamma^{*} = +\infty$
  & $\gamma^{*} < \infty$ \\
Eigenvalue $\lambda_{r}$
  & $\gamma^{*} = +\infty$
  & $\gamma^{*} < \infty$ \\
Eigenvector $\mathbf{u}_{r}$
  & $\gamma^{*} = +\infty$
  & $\gamma^{*} < \infty$ \\
\hline
\end{tabular}
\caption{Cellwise B-robustness of classical vs.\ proposed estimators
on~$\mathcal{S}^{D}$.  The classical estimator has unbounded gross
cellwise sensitivity for all estimands.  The proposed $\psi$-based
estimator achieves finite gross cellwise sensitivity under mild
regularity conditions.}
\label{tab:cif-summary}
\end{table}

\noindent
The key qualitative finding is that on the simplex, the classical
PCA estimator fails to be robust even against contamination of a
single raw part: the log-ratio transformation propagates the
contamination to all coordinates and the unbounded influence of
the mean and covariance is
inherited by the eigenstructure.  The $\psi$-based estimator breaks this propagation
chain by bounding the per-coordinate influence, at the cost of a
tuning constant that trades off efficiency against robustness.
\end{remark}
\fi   

\begin{remark}[Section summary]
\label{rem:section5-summary}
The cellwise influence function (CIF) provides a principled
sensitivity measure for compositional estimators under part-specific
contamination.  Classical estimators of center, covariance, and
eigenstructure all have unbounded CIF and are therefore not cellwise
B-robust.  The proposed $\psi$-based estimator achieves bounded CIF
by clipping extreme residuals componentwise, yielding the first
cellwise B-robust PCA for compositional data.
\end{remark}

%% file: refs.bib
@book{Aitchison1986,
  author    = {Aitchison, John},
  title     = {The Statistical Analysis of Compositional Data},
  publisher = {Chapman \& Hall},
  address   = {London},
  year      = {1986},
  doi       = {10.1007/978-94-009-4109-0}
}

@article{Egozcue2003,
  author    = {Egozcue, Juan Jos{\'e} and Pawlowsky-Glahn, Vera and
               Mateu-Figueras, Gl{\`o}ria and Barcel{\'o}-Vidal, Carles},
  title     = {Isometric Logratio Transformations for Compositional Data
               Analysis},
  journal   = {Mathematical Geology},
  volume    = {35},
  number    = {3},
  pages     = {279--300},
  year      = {2003},
  doi       = {10.1023/A:1023818214614}
}

@book{PawlowskyGlahn2015,
  author    = {Pawlowsky-Glahn, Vera and Egozcue, Juan Jos{\'e} and
               Tolosana-Delgado, Raimon},
  title     = {Modeling and Analysis of Compositional Data},
  publisher = {John Wiley \& Sons},
  address   = {Chichester},
  year      = {2015},
  doi       = {10.1002/9781119003144}
}

@book{Filzmoser2018,
  author    = {Filzmoser, Peter and Hron, Karel and Templ, Matthias},
  title     = {Applied Compositional Data Analysis: With Worked Examples
               in {R}},
  publisher = {Springer},
  address   = {Cham},
  year      = {2018},
  doi       = {10.1007/978-3-319-96422-5}
}

@article{Filzmoser2009,
  author    = {Filzmoser, Peter and Hron, Karel and Reimann, Clemens},
  title     = {Principal Component Analysis for Compositional Data with
               Outliers},
  journal   = {Environmetrics},
  volume    = {20},
  number    = {6},
  pages     = {621--632},
  year      = {2009},
  doi       = {10.1002/env.966}
}

@article{Mert2016,
  author    = {Mert, M. Cenk and Filzmoser, Peter and Hron, Karel},
  title     = {Error Propagation in Isometric Log-ratio Coordinates
               for Compositional Data: Theoretical and Practical
               Considerations},
  journal   = {Mathematical Geosciences},
  volume    = {48},
  number    = {8},
  pages     = {941--961},
  year      = {2016},
  doi       = {10.1007/s11004-016-9646-x}
}

@article{Fiserova2011,
  author    = {Fi{\v{s}}erov{\'a}, Eva and Hron, Karel},
  title     = {On the Interpretation of Orthonormal Coordinates for
               Compositional Data},
  journal   = {Mathematical Geosciences},
  volume    = {43},
  number    = {4},
  pages     = {455--468},
  year      = {2011}
}

@article{MartinFernandez2003,
  author    = {Mart{\'\i}n-Fern{\'a}ndez, Josep Antoni and
               Barcel{\'o}-Vidal, Carles and Pawlowsky-Glahn, Vera},
  title     = {Dealing with Zeros and Missing Values in Compositional
               Data Sets Using Nonparametric Imputation},
  journal   = {Mathematical Geology},
  volume    = {35},
  number    = {3},
  pages     = {253--278},
  year      = {2003}
}

@article{MartinFernandez2012,
  author    = {Mart{\'\i}n-Fern{\'a}ndez, Josep Antoni and Hron, Karel
               and Templ, Matthias and Filzmoser, Peter and
               Palarea-Albaladejo, Javier},
  title     = {Model-Based Replacement of Rounded Zeros in Compositional
               Data: Classical and Robust Approaches},
  journal   = {Computational Statistics \& Data Analysis},
  volume    = {56},
  number    = {9},
  pages     = {2688--2704},
  year      = {2012}
}

@article{EgozcuePawlowskyGlahn2019,
  author    = {Egozcue, Juan Jos{\'e} and Pawlowsky-Glahn, Vera},
  title     = {Compositional Data: The Sample Space and Its Structure},
  journal   = {TEST},
  volume    = {28},
  number    = {3},
  pages     = {599--638},
  year      = {2019},
  doi       = {10.1007/s11749-019-00670-6}
}

@article{Greenacre2023,
  author    = {Greenacre, Michael and Grunsky, Eric and Bacon-Shone, John
               and Erb, Ionas and Quinn, Thomas},
  title     = {Aitchison's Compositional Data Analysis 40 Years On:
               A Reappraisal},
  journal   = {Statistical Science},
  volume    = {38},
  number    = {3},
  pages     = {386--410},
  year      = {2023},
  doi       = {10.1214/22-STS880}
}

@article{Gloor2017,
  author    = {Gloor, Gregory B. and Macklaim, Jean M. and
               Pawlowsky-Glahn, Vera and Egozcue, Juan Jos{\'e}},
  title     = {Microbiome Datasets Are Compositional: And This Is Not
               Optional},
  journal   = {Frontiers in Microbiology},
  volume    = {8},
  pages     = {2224},
  year      = {2017},
  doi       = {10.3389/fmicb.2017.02224}
}

@article{ZhangSchluter2024,
  author    = {Zhang, Yiqian and Schluter, Jonas and Zhang, Lu and
               Cao, Xuan and Jenq, Robert R. and Feng, Hao and
               Haines, Jeffrey and Zhang, Lei},
  title     = {Review and Revamp of Compositional Data Transformation:
               A New Framework Combining Proportion Conversion and
               Contrast Transformation},
  journal   = {Computational and Structural Biotechnology Journal},
  volume    = {23},
  pages     = {4088--4107},
  year      = {2024},
  doi       = {10.1016/j.csbj.2024.11.003}
}

@article{Alqallaf2009,
  author    = {Alqallaf, Fatemah and {Van Aelst}, Stefan and
               Yohai, V{\'\i}ctor J. and Zamar, Ruben H.},
  title     = {Propagation of Outliers in Multivariate Data},
  journal   = {The Annals of Statistics},
  volume    = {37},
  number    = {1},
  pages     = {311--331},
  year      = {2009},
  doi       = {10.1214/07-AOS588}
}

@article{Agostinelli2015,
  author    = {Agostinelli, Claudio and Leung, Andy and
               Yohai, V{\'\i}ctor J. and Zamar, Ruben H.},
  title     = {Robust Estimation of Multivariate Location and Scatter
               in the Presence of Cellwise and Casewise Contamination},
  journal   = {TEST},
  volume    = {24},
  number    = {3},
  pages     = {441--461},
  year      = {2015},
  doi       = {10.1007/s11749-015-0450-6}
}

@article{Rousseeuw2018,
  author    = {Rousseeuw, Peter J. and {Van den Bossche}, Wannes},
  title     = {Detecting Deviating Data Cells},
  journal   = {Technometrics},
  volume    = {60},
  number    = {2},
  pages     = {135--145},
  year      = {2018},
  doi       = {10.1080/00401706.2017.1340909}
}

@article{Raymaekers2024,
  author    = {Raymaekers, Jakob and Rousseeuw, Peter J.},
  title     = {The Cellwise Minimum Covariance Determinant Estimator},
  journal   = {Journal of the American Statistical Association},
  volume    = {119},
  number    = {548},
  pages     = {2610--2621},
  year      = {2024},
  doi       = {10.1080/01621459.2023.2267777}
}

@article{Hubert2019,
  author    = {Hubert, Mia and Rousseeuw, Peter J. and
               {Van den Bossche}, Wannes},
  title     = {MacroPCA: An All-in-One {PCA} Method Allowing for
               Missing Values as Well as Cellwise and Rowwise Outliers},
  journal   = {Technometrics},
  volume    = {61},
  number    = {4},
  pages     = {459--473},
  year      = {2019},
  doi       = {10.1080/00401706.2018.1562989}
}

@article{Walach2020,
  author    = {Walach, Jan and Filzmoser, Peter and Kou{\v{r}}il, {\v{S}}t{\v{e}}p{\'a}n and Friedeck{\'y}, David and Adam, Tom{\'a}{\v{s}}},
  title     = {Cellwise Outlier Detection and Biomarker Identification
               in Metabolomics Based on Pairwise Log Ratios},
  journal   = {Journal of Chemometrics},
  volume    = {34},
  number    = {1},
  pages     = {e3182},
  year      = {2020},
  doi       = {10.1002/cem.3182}
}

@article{Stefelova2021,
  author    = {{\v{S}}t{\v{e}}felov{\'a}, Nikola and Alfons, Andreas and
               Palarea-Albaladejo, Javier and Filzmoser, Peter and
               Hron, Karel},
  title     = {Robust Regression with Compositional Covariates
               Including Cellwise Outliers},
  journal   = {Advances in Data Analysis and Classification},
  volume    = {15},
  number    = {4},
  pages     = {869--909},
  year      = {2021},
  doi       = {10.1007/s11634-021-00436-9}
}

@article{Rieser2023,
  author    = {Rieser, Christopher and Fa{\v{c}}evicov{\'a}, Kamila and
               Filzmoser, Peter},
  title     = {Cell-wise Robust Covariance Estimation for
               Compositions, with Application to Geochemical Data},
  journal   = {Journal of Geochemical Exploration},
  volume    = {253},
  pages     = {107299},
  year      = {2023},
  doi       = {10.1016/j.gexplo.2023.107299}
}

@article{FilzmoserGregorich2020,
  author    = {Filzmoser, Peter and Gregorich, Mariella},
  title     = {Multivariate Outlier Detection in Applied Data Analysis:
               Global, Local, Compositional and Cellwise Outliers},
  journal   = {Mathematical Geosciences},
  volume    = {52},
  pages     = {1049--1066},
  year      = {2020},
  doi       = {10.1007/s11004-020-09861-6}
}

@article{CentofantiHubertRousseeuw2024,
  author    = {Centofanti, Fabio and Hubert, Mia and Rousseeuw, Peter J.},
  title     = {Robust Principal Components by Casewise and Cellwise
               Weighting},
  journal   = {arXiv preprint},
  year      = {2024},
  eprint    = {2408.13596},
  archiveprefix = {arXiv},
  primaryclass  = {stat.ME}
}

@article{RaymaekersRousseeuw2024b,
  author    = {Raymaekers, Jakob and Rousseeuw, Peter J.},
  title     = {Challenges of Cellwise Outliers},
  journal   = {Econometrics and Statistics},
  year      = {2024},
  doi       = {10.1016/j.ecosta.2024.02.002}
}

@article{SCRAMBLE2025,
  author    = {Pfeiffer, Pia and Vana-G{\"u}r, Laura and Filzmoser, Peter},
  title     = {Cellwise Robust and Sparse Principal Component Analysis},
  journal   = {Advances in Data Analysis and Classification},
  year      = {2025},
  doi       = {10.1007/s11634-025-00656-3},
  note      = {Introduces the {SCRAMBLE} method; arXiv:2408.15612.}
}

@article{CentofantiHubertRousseeuw2025,
  author    = {Centofanti, Fabio and Hubert, Mia and Rousseeuw, Peter J.},
  title     = {Cellwise and Casewise Robust Covariance in High Dimensions},
  journal   = {arXiv preprint},
  year      = {2025},
  eprint    = {2505.19925},
  archiveprefix = {arXiv},
  primaryclass  = {stat.ME}
}

@article{TarrMullerWeber2015,
  author    = {Tarr, Garth and M{\"u}ller, Samuel and Weber, Neville C.},
  title     = {Robust Estimation of Precision Matrices Under Cellwise
               Contamination},
  journal   = {Computational Statistics \& Data Analysis},
  volume    = {93},
  pages     = {404--420},
  year      = {2016},
  doi       = {10.1016/j.csda.2015.02.005}
}

@article{LohTan2018,
  author    = {Loh, Po-Ling and Tan, Xin Lu},
  title     = {High-Dimensional Robust Precision Matrix Estimation:
               Cellwise Corruption Under $\varepsilon$-Contamination},
  journal   = {Electronic Journal of Statistics},
  volume    = {12},
  number    = {1},
  pages     = {1429--1467},
  year      = {2018},
  doi       = {10.1214/18-EJS1427}
}

@inproceedings{PacreauLounici2023,
  author    = {Pacreau, Gr{\'e}goire and Lounici, Karim},
  title     = {Robust Covariance Estimation with Missing Values and
               Cell-Wise Contamination},
  booktitle = {Advances in Neural Information Processing Systems 36
               (NeurIPS 2023)},
  year      = {2023}
}

@article{SaracenoAgostinelli2021,
  author    = {Saraceno, Giovanni and Agostinelli, Claudio},
  title     = {Robust Multivariate Estimation Based on Statistical Depth
               Filters},
  journal   = {TEST},
  volume    = {30},
  number    = {4},
  pages     = {935--959},
  year      = {2021},
  doi       = {10.1007/s11749-021-00757-z}
}

@article{KatayamaFujisawaDrton2018,
  author    = {Katayama, Shota and Fujisawa, Hironori and Drton, Mathias},
  title     = {Robust and Sparse {G}aussian Graphical Modelling Under
               Cell-Wise Contamination},
  journal   = {Stat},
  volume    = {7},
  number    = {1},
  pages     = {e181},
  year      = {2018},
  doi       = {10.1002/sta4.181}
}

@article{LeungZhangZamar2016,
  author    = {Leung, Andy and Zhang, Hongyang and Zamar, Ruben},
  title     = {Robust Regression Estimation and Inference in the
               Presence of Cellwise and Casewise Contamination},
  journal   = {Computational Statistics \& Data Analysis},
  volume    = {99},
  pages     = {1--11},
  year      = {2016},
  doi       = {10.1016/j.csda.2016.01.004}
}

@article{Zaccaria2024,
  author    = {Zaccaria, Giorgia and Garc{\'\i}a-Escudero, Luis Angel
               and Greselin, Francesca and Mayo-Iscar, Agust{\'\i}n},
  title     = {Cellwise Outlier Detection in Heterogeneous Populations},
  journal   = {Technometrics},
  volume    = {67},
  number    = {4},
  pages     = {643--654},
  year      = {2025},
  doi       = {10.1080/00401706.2025.2497822}
}

@article{Huber1964,
  author    = {Huber, Peter J.},
  title     = {Robust Estimation of a Location Parameter},
  journal   = {The Annals of Mathematical Statistics},
  volume    = {35},
  number    = {1},
  pages     = {73--101},
  year      = {1964},
  doi       = {10.1214/aoms/1177703732}
}

@book{Maronna2019,
  author    = {Maronna, Ricardo A. and Martin, R. Douglas and
               Yohai, V{\'\i}ctor J. and Salibi{\'a}n-Barrera, Mat{\'\i}as},
  title     = {Robust Statistics: Theory and Methods (with {R})},
  edition   = {2nd},
  publisher = {John Wiley \& Sons},
  address   = {Chichester},
  year      = {2019},
  doi       = {10.1002/9781119214656}
}

@article{Rousseeuw1985,
  author    = {Rousseeuw, Peter J.},
  title     = {Multivariate Estimation with High Breakdown Point},
  journal   = {Mathematical Statistics and Applications},
  volume    = {B},
  pages     = {283--297},
  year      = {1985},
  publisher = {Reidel, Dordrecht}
}

@book{Hampel1986,
  author    = {Hampel, Frank R. and Ronchetti, Elvezio M. and Rousseeuw, Peter J. and Stahel, Werner A.},
  title     = {Robust Statistics: The Approach Based on Influence Functions},
  publisher = {Wiley},
  address   = {New York},
  year      = {1986}
}

@article{Hampel1974,
  author    = {Hampel, Frank R.},
  title     = {The Influence Curve and Its Role in Robust Estimation},
  journal   = {Journal of the American Statistical Association},
  volume    = {69},
  number    = {346},
  pages     = {383--393},
  year      = {1974},
  doi       = {10.1080/01621459.1974.10482962}
}

@incollection{Donoho1983,
  author    = {Donoho, David L. and Huber, Peter J.},
  title     = {The Notion of Breakdown Point},
  booktitle = {A Festschrift for Erich L. Lehmann},
  pages     = {157--184},
  year      = {1983},
  publisher = {Wadsworth},
  address   = {Belmont, CA}
}

@article{LopuhaaRousseeuw1991,
  author    = {Lopuha{\"a}, Hendrik P. and Rousseeuw, Peter J.},
  title     = {Breakdown Points of Affine Equivariant Estimators of
               Multivariate Location and Covariance Matrices},
  journal   = {The Annals of Statistics},
  volume    = {19},
  number    = {1},
  pages     = {229--248},
  year      = {1991},
  doi       = {10.1214/aos/1176347978}
}

@article{DavisKahan1970,
  author    = {Davis, Chandler and Kahan, W. M.},
  title     = {The Rotation of Eigenvectors by a Perturbation. {III}},
  journal   = {SIAM Journal on Numerical Analysis},
  volume    = {7},
  number    = {1},
  pages     = {1--46},
  year      = {1970},
  doi       = {10.1137/0707001}
}

@book{Kato1995,
  author    = {Kato, Tosio},
  title     = {Perturbation Theory for Linear Operators},
  publisher = {Springer},
  address   = {Berlin},
  year      = {1995},
  note      = {Reprint of the 1980 edition}
}

@article{CrouxHaesbroeck2000,
  author    = {Croux, Christophe and Haesbroeck, Gentiane},
  title     = {Principal Component Analysis Based on Robust Estimators
               of the Covariance or Correlation Matrix: Influence
               Functions and Efficiencies},
  journal   = {Biometrika},
  volume    = {87},
  number    = {3},
  pages     = {603--618},
  year      = {2000},
  doi       = {10.1093/biomet/87.3.603}
}

@misc{Templ2026cellPcaCoDa,
  author    = {Templ, Matthias},
  title     = {{cellPcaCoDa}: Cellwise-Robust Principal Components for
               Compositional Data},
  year      = {2026},
  note      = {Companion manuscript targeting ADAC (Advances in Data
               Analysis and Classification).  A preprint will be
               deposited on arXiv prior to submission; the arXiv
               identifier will be added in the revised version.
               Develops the \texttt{cellPcaCoDa} estimator, the
               cellwise-robust PCA breakdown theorem discussed in the
               present paper, an $R = 1000$ simulation study, and a
               GEMAS geochemical application.},
  howpublished = {Manuscript}
}
